\pdfoutput=1
\documentclass[11pt]{article}
\usepackage[OT1]{fontenc}
\usepackage{smile}
\usepackage{subcaption}
\usepackage[margin=1in]{geometry}
\usepackage[most]{tcolorbox}
\usepackage{tikz}
\usetikzlibrary{arrows.meta,positioning,fit,calc}

\newcommand{\bbR}{\mathbb{R}}
\newcommand{\bbE}{\mathbb{E}}

\renewcommand{\d}{\mathrm{d}}
\newcommand{\wh}{\widehat}

\newcommand{\wt}{\widetilde}
\newcommand{\ol}{\overline}

\title{\bf Stable by Construction: Variational Latent Markov Operators for Long-Horizon PDE Prediction}

\author{%
  Junyi Liao$^\dagger$,\ \ Johann Guilleminot$^\ddagger$, Vahid Tarokh$^\dagger$\\ ~\\ $^\dagger$\small\it Department of Electrical and Computer Engineering, Duke University \\ $^\ddagger$\small\it Department of Mechanical Engineering and Materials Science, Duke University \\ \small\sf \{junyi.liao, johann.guilleminot, vahid.tarokh\}@duke.edu
}

\date{}
\begin{document}
\maketitle

\begin{abstract}
Neural PDE solvers provide efficient surrogates for time-dependent physical systems, but autoregressive prediction over long horizons remains challenging because local errors can induce distribution shift and accumulate under recursive deployment. We develop a variational approach to this problem by introducing latent Markov dynamics in which physical states are represented by latent distributions and evolved through probabilistic transitions. The framework is formulated directly on function spaces and specialized to functional Gaussian models, where structured latent perturbations induce a spectral geometry and variational transition alignment regularizes the learned dynamics. We further analyze how these mechanisms affect autoregressive error propagation, providing a theoretical connection between variational training and long-horizon prediction. We instantiate the framework as the Variational Autoencoding Markov Operator (VAMO), which combines spatially resolved latent fields, structured Gaussian perturbations, and a neural-operator transition. Empirically, we demonstrate the effectiveness of VAMO on several fluid-dynamics benchmarks with prediction horizons extending substantially beyond those represented during training, where it consistently reduces error accumulation and improves rollout stability over several deterministic and noise-injection baselines. Overall, these results highlight variational modeling as a complementary approach to robust long-horizon neural PDE dynamics.
\end{abstract}

\section{Introduction}
Partial differential equations (PDEs) provide a fundamental mathematical formulation of physical systems across science and engineering, but repeated numerical simulation can be costly when solutions are required across many initial conditions, coefficients, or forcing terms. This has motivated data-driven PDE solvers, ranging from physics-informed neural networks \citep{raissi2019physics, pang2019fpinns} to operator-learning methods such as DeepONet, Fourier neural operator (FNO), OFormer, and PiT (\citealp{lu2021learning,li2020fourier}; \citealp{li2022transformer,chen2024positional}). Operator-learning methods seek to approximate mappings between function spaces across families of PDE instances, enabling rapid surrogate evaluation after training \citep{kovachki2023neural}. Depending on the underlying architecture, neural operators can further support properties such as transfer across spatial discretizations, nonlocal modeling of spatial interactions, or adaptation to irregular computational domains \citep{li2020fourier,kovachki2023neural,li2023geo}.

Despite these advances, long-horizon prediction of time-dependent PDEs remains challenging, particularly when a learned local evolution operator is deployed recursively beyond the temporal regime represented during training. Autoregressive models are typically trained on ground-truth one-step transitions but recursively consume their own predictions at inference, so small local errors can shift the rollout away from the training distribution and accumulate over time \citep{sanchez2020learning,brandstetter2022message, lippe2023pde}. This difficulty is further exacerbated when training trajectories cover only a limited temporal horizon, while deployment requires evolving the system substantially beyond that horizon \citep{yin2023continuous,michalowska2024neural, diab2025temporal}. Existing remedies take several forms. Recurrent or sequence-based temporal architectures augment neural operators with learned temporal dynamics to capture longer-range dependencies \citep{li2022transformer,michalowska2024neural}. Multi-step or curriculum-based training more directly mitigates the mismatch between teacher-forced training and autoregressive deployment by progressively exposing the model to longer rollouts or model-generated states \citep{takamoto2023learning,li2022transformer,hagnberger2025calm}. A different strategy avoids recursive prediction over a prescribed interval by learning a direct space-time solution map \citep{li2020fourier,wang2021learning}. Although these strategies mitigate long-horizon prediction errors in some settings, their effectiveness can remain closely tied to the training regime and temporal range represented in the data. Recurrent or sequence-based architectures better capture temporal dependencies but can still suffer from error accumulation under the distribution shift induced by recursive deployment; curriculum or multi-step training reduces the train-test mismatch but may remain tied to the rollout horizons encountered during training; and direct space-time prediction avoids recursion over a prescribed interval but does not naturally extrapolate beyond it. Thus, reliable generalization to substantially longer temporal horizons remains challenging.

At the same time, most neural PDE solvers and operator-learning methods formulate the learned solution map or temporal evolution deterministically, while probabilistic approaches remain comparatively less explored. Existing work includes Bayesian operator learning and uncertainty quantification \citep{yang2022scalable,lin2023bdeeponet,garg2023vbdeeponet}, generative operator models \citep{rahman2022generative}, and more recent diffusion-based approaches for PDE modeling and temporal prediction \citep{lippe2023pde,serrano2024aroma}. More closely related to our formulation, Variational Autoencoding Neural Operators (VANO) extend variational autoencoders to functional data \citep{seidman2023variational}, while variational autoencoders have also been developed directly on function spaces with well-posed infinite-dimensional objectives \citep{bunker2025autoencoders}. These variational formulations, however, primarily concern static function representations or input-output operator mappings rather than recursively evolving latent dynamics. A variational formulation is particularly appealing in the dynamic setting: representing each physical state by a distribution in latent space allows the learned transition to be trained over neighborhoods of the observed trajectory, while distributional alignment between predicted and encoded next states provides an additional mechanism for regularizing the latent evolution. These properties suggest that variational modeling may offer a principled way to improve robustness to the perturbations encountered during autoregressive rollout. However, how such probabilistic latent dynamics should be formulated for PDE evolution, and how their variational structure relates to long-horizon deterministic prediction, remain insufficiently understood.

In this work, we study variational modeling of time-dependent PDE dynamics for long-horizon autoregressive prediction. We formulate PDE evolution through probabilistic latent Markov dynamics, where physical states are encoded into latent distributions, propagated through a stochastic transition model, and decoded back to physical space. Stochasticity is used during training to regularize the latent evolution, while inference follows deterministic mean dynamics. We instantiate this framework as the Variational Autoencoding Markov Operator (VAMO), which combines spatially resolved latent fields, structured Gaussian perturbations, and a neural-operator transition. Our main contributions are:

\begin{itemize}[topsep=0pt,itemsep=0pt]
\item\textit{Variational latent dynamics.} We develop a function-space variational formulation for learning temporal PDE evolution through latent Markov dynamics.
\item\textit{Rollout analysis.} We characterize how latent perturbations and variational transition alignment enter the error propagation of deterministic autoregressive rollouts.
\item\textit{Long-horizon evaluation.} We instantiate the framework as VAMO and evaluate it on three fluid-dynamics benchmarks, where it improves long-horizon rollout stability over deterministic latent, direct neural-operator, and noise-injection baselines.
\end{itemize}

Overall, our study positions variational modeling as a complementary approach to improving the robustness of learned PDE dynamics, rather than merely as a mechanism for probabilistic prediction. By connecting variational training with autoregressive evolution, we aim to broaden the role of probabilistic methods in neural PDE solvers toward long-horizon dynamical modeling.

\subsection{Related Work}
\paragraph{Neural PDE solvers and operator learning.} Neural networks have been widely used for PDE approximation through either physics-informed objectives or data-driven surrogate modeling. Physics-informed neural networks (PINNs) enforce governing equations and boundary or initial conditions through the training loss \citep{raissi2019physics, pang2019fpinns}, while operator-learning methods learn mappings between function spaces across families of PDE instances \citep{lu2022comprehensive,hoop2022cost,kovachki2023neural}. Representative approaches include DeepONet \citep{lu2021learning}, model-reduction-based operator learning \citep{bhattacharya2021model}, graph-based and general neural operators \citep{li2020neural,li2020multipole,kovachki2023neural}, and the Fourier neural operator (FNO) \citep{li2020fourier}. Subsequent extensions consider multiple-input operators, deeper architectures,
nonlinear manifold representations, physics-informed training, alternative spectral
representations, and complex geometries
\citep{jin2022mionet,rahman2023uno,seidman2022nomad,
li2024physics,gupta2021multiwavelet,li2023geo}. Another prominent line develops attention-based operator architectures, including the Galerkin Transformer \citep{cao2021choose}, LOCA \citep{kissas2022learning}, OFormer \citep{li2022transformer}, GNOT \citep{hao2023gnot}, ONO \citep{xiao2023improved}, PiT \citep{chen2024positional}, Transolver \citep{wu2024transolver}, and LNO \citep{wang2024latent}. Our work builds on this operator-learning paradigm and focuses on stable temporal evolution.

\paragraph{Learning physical dynamics.} A related line of work focuses explicitly on learning the temporal evolution of physical systems. Autoregressive simulators commonly learn a local update rule and repeatedly apply it to advance the physical state, including graph-based physical simulators, mesh-based models, neural PDE solvers, and convolutional encoder-decoder surrogates \citep{sanchez2020learning,pfaff2020learning,brandstetter2022message,stachenfeld2021learned,geneva2020modeling}. Since repeated prediction can amplify local errors and shift the model away from the state distribution encountered during training, prior works have explored perturbing training states, multi-step or curriculum training, adaptive temporal
stepping, and explicitly stabilized learned dynamics to improve rollout robustness \citep{sanchez2020learning,li2022transformer,hagnberger2025calm, wu2026tante,linot2023stabilized,lippe2023pde}. Another prominent direction evolves the system in a learned latent space, using recurrent models, continuous latent dynamics, Koopman-inspired representations, or learned latent operators \citep{wiewel2019latent,lusch2018deep,morton2018deep, pan2020physics,yin2023continuous,kontolati2024learning}. Along this direction, \cite{geneva2022transformers} further combines Koopman-informed embeddings with Transformers for temporal prediction, while OFormer \citep{li2022transformer} performs recurrent time-marching on spatially resolved latent representations. More recent approaches such as LNO and CALM-PDE similarly model time-dependent PDEs through compressed latent representations \citep{wang2024latent,hagnberger2025calm}. Our work is most closely related to this latent-dynamics perspective, but differs in formulating the latent evolution variationally.

\paragraph{Probabilistic and variational modeling.} Probabilistic PDE surrogates have been studied primarily for uncertainty quantification. Early work includes Bayesian convolutional encoder-decoder models for stochastic PDEs and physics-constrained generative surrogates \citep{zhu2018bayesian,zhu2019physics}. Similar ideas have subsequently been extended to operator learning through Bayesian, probabilistic, and variational neural-operator formulations \citep{yang2022scalable,lin2023bdeeponet,garg2023vbdeeponet,bulte2025probabilistic}. ore closely related to our formulation, several generative approaches model distributions over function-valued data. Generative Adversarial Neural Operators (GANO) learn distributions directly on function spaces using neural operators \citep{rahman2022generative}. Variational Autoencoding Neural Operators (VANO) extend variational autoencoders to functional data \citep{seidman2023variational}, while autoencoders have also been formulated directly on function spaces \citep{bunker2025autoencoders}. Probabilistic learning of physical systems has additionally been developed directly at the field level, including information-field-theoretic approaches for uncertainty-aware modeling \citep{alberts2023physics}. Diffusion-based methods provide another probabilistic generative approach to PDE modeling: diffusion models have been formulated as probabilistic neural operators \citep{haitsiukevich2024diffusion}, while Physics-Informed Diffusion Models incorporate PDE constraints into diffusion training \citep{bastek2025physics}. For temporal prediction, PDE-Refiner uses diffusion-inspired denoising for long autoregressive rollouts \citep{lippe2023pde}, and AROMA employs diffusion-based latent dynamics \citep{serrano2024latent,serrano2024aroma}.

\subsection{Roadmap}
\label{subsec:roadmap}

The remainder of the paper is organized as follows. \S\ref{sec:problem} introduces the time-dependent PDE learning problem, the long-horizon autoregressive prediction setting, and the variational autoencoder preliminaries used throughout the paper. \S\ref{sec:var_latent_dyn} develops variational latent dynamics directly on function spaces, first establishing the abstract variational formulation, then specializing to functional Gaussian models and analyzing the propagation of errors under deterministic mean rollout. \S\ref{sec:vamo} introduces the Variational Autoencoding Markov Operator (VAMO), including its spatially resolved architecture, structured Gaussian latent perturbations, and training objective. \S\ref{sec:experiments} evaluates the resulting framework on long-horizon fluid-dynamics benchmarks. 

\paragraph{Notation.}
We use calligraphic letters such as $\mathcal U$ and $\mathcal Z$ for physical and latent state spaces, and bold uppercase letters $\mathbf U$ and $\mathbf Z$ for the corresponding function-valued states. Lowercase bold letters $\mathbf u$ and $\mathbf z$ denote their finite-dimensional discretizations when needed. For a measurable space $\mathcal X$, $\Delta(\mathcal X)$ denotes the set of probability measures on $\mathcal X$. For two probability measures $P$ and $Q$, we write $P\ll Q$ for absolute continuity, $D_{\mathrm{KL}}(P\Vert Q)$ for the Kullback-Leibler divergence, and $\mathsf N(m,K)$ for a Gaussian measure with mean $m$ and covariance operator $K$.
For a linear operator $K$, let $\|K\|_{\mathrm{op}}$ and $\operatorname{Tr}(K)$ denote its operator norm and trace, respectively. Expectations are written as $\mathbb E$, with subscripts indicating the corresponding distribution when needed.

\section{Problem Setting and Preliminaries}
\label{sec:problem}

We first introduce the time-dependent PDE learning problem considered throughout this work and formalize the autoregressive prediction setting. We then briefly review the variational autoencoder framework that motivates the latent probabilistic formulation developed in \S\ref{sec:var_latent_dyn}.

\subsection{PDE Evolution and Flow Maps}
\label{subsec:pde_flow}
We consider a class of time-dependent partial differential equations (PDEs) defined on a spatial domain $\Omega \subset \mathbb{R}^{d_x}$. Let
\[
    \mathbf{U}(t):\Omega\rightarrow\mathbb{R}^{c}
\]
denote the physical state at time $t$, where $c$ is the number of physical channels. We abstract the governing dynamics, conditional on fixed problem-specific quantities, as an autonomous evolution equation
\begin{equation}
    \partial_t \mathbf{U}(t)=\mathcal{F}(\mathbf{U}(t)),
    \label{eq:pde_evolution}
\end{equation}
where $\mathcal{F}$ is a generally nonlinear differential operator. In concrete PDEs, $\mathcal{F}$ may depend on spatial derivatives of $\mathbf{U}$, physical parameters, forcing fields, boundary conditions, or other problem-specific quantities. When these quantities vary across trajectories, we treat them as additional conditioning inputs and suppress them from the notation for simplicity.

For a fixed time increment $h>0$, the PDE induces a solution operator, or flow map,
\begin{equation}
    \mathcal{S}_h: \mathbf{U}(t)\mapsto \mathbf{U}(t+h).
    \label{eq:flow_map}
\end{equation}
Given discrete times $t_n=nh$, we write $\mathbf{U}_n:=\mathbf{U}(t_n)$, so that the exact discrete-time evolution satisfies
\begin{equation}
    \mathbf{U}_{n+1}=\mathcal{S}_h(\mathbf{U}_n).
    \label{eq:discrete_flow}
\end{equation}
Thus, learning the temporal PDE dynamics can be viewed as learning an approximation to the local
flow map $\mathcal{S}_h$ from observed trajectories.

\paragraph{Autoregressive rollout.}
Let the training data consist of trajectories
\[
    \mathcal{D}=\left\{
        \left(\mathbf{U}_0^{(i)},\mathbf{U}_1^{(i)},\ldots,
        \mathbf{U}_{N_{\rm train}}^{(i)}\right)
    \right\}_{i=1}^{N_{\rm data}},
\]
possibly together with the problem-dependent conditions suppressed above. A local data-driven solver learns the transition $\mathbf{U}_n\mapsto \mathbf{U}_{n+1}$
from adjacent states in these trajectories. Denoting the learned local flow map by $\widehat{\mathcal{S}}_h$, prediction from an initial state $\mathbf{U}_0$ is performed autoregressively as
\begin{equation}
    \widehat{\mathbf{U}}_0=\mathbf{U}_0,
    \qquad
    \widehat{\mathbf{U}}_{n}
    =\widehat{\mathcal{S}}_h(\widehat{\mathbf{U}}_{n-1}),
    \qquad n=1,2,\cdots,N_{\rm test}.
    \label{eq:autoregressive_rollout}
\end{equation}
This creates an important distinction between training and deployment. During supervised training on adjacent state pairs, $\widehat{\mathcal{S}}_h$ is evaluated on ground-truth states $\mathbf{U}_n$, whereas during autoregressive rollout it is repeatedly evaluated on its own predictions $\widehat{\mathbf{U}}_n$. Consequently, local prediction errors can move the rollout away from the state distribution represented in the training trajectories and propagate through subsequent predictions.

Our primary interest is temporal extrapolation, where the available training trajectories cover only a finite horizon while prediction extends substantially beyond it,
\[
    N_{\rm test}>N_{\rm train}.
\]
This setting has also been studied in recent work on long-time PDE forecasting and temporal extrapolation \citep{yin2023continuous,michalowska2024neural,diab2025temporal}. In our experiments, the training and test initial conditions are sampled from the same problem-specific distributions, so the principal extrapolation occurs along the temporal direction. The goal is therefore not only to approximate the local flow map accurately, but also to maintain stable predictions under repeated autoregressive application.

\subsection{Variational Autoencoders}
\label{subsec:vae_review}

We briefly review the variational autoencoder (VAE) framework \citep{kingma2013auto, seidman2023variational} in a form that will be used throughout the paper. Let $\mathcal U$ denote the data space and $\mathcal Z$ the latent space. We denote a data sample by $\mathbf U\in\mathcal U$ and its latent representation by $\mathbf Z\in\mathcal Z$. A variational autoencoder consists of two main probabilistic components. The \emph{encoder} $Q_\phi$ defines a conditional distribution
\[
    \mathcal{U}\ni\mathbf U\,\mapsto\, Q_\phi(\d\mathbf Z\,|\,\mathbf U)\in\Delta(\mathcal Z)
\]
over latent representations $\mathbf Z\in\mathcal Z$ given an observation $\mathbf U\in\mathcal U$. Similarly, the \emph{decoder} $P_\psi$ is a conditional kernel
\[
    \mathcal{Z}\ni\mathbf Z\,\mapsto\, P_\psi(\d\mathbf U\,|\,\mathbf{Z})\in\Delta(\mathcal U)
\]
over observations conditioned on the latent representation. Here $\phi\in\Phi$ and $\psi\in\Psi$ denote the encoder and decoder parameters, respectively.

The generative model is completed by prescribing a reference latent distribution $\mathsf P_{\mathcal Z}\in\Delta(\mathcal Z)$. A latent sample is first drawn according to $\mathbf Z\sim\mathsf P_{\mathcal Z}$, and the corresponding observation is generated from $\mathbf U\sim P_\psi(\cdot\mid\mathbf Z)$. The encoder $Q_\phi(\cdot\mid\mathbf U)$ provides a tractable variational approximation to the generally intractable posterior distribution of $\mathbf Z$ conditioned on $\mathbf U$.

To train the model, one maximizes a variational lower bound on the data likelihood. Let $\mathsf W_{\mathcal U}$ be a reference measure on $\mathcal U$, and suppose that the decoder likelihood is dominated by $\mathsf W_{\mathcal U}$. Under the usual absolute-continuity conditions, the evidence lower bound (ELBO) takes the form
\begin{equation}
\begin{aligned}
    \log
    \frac{\d P_\psi^{\,\mathcal U}}
         {\d\mathsf W_{\mathcal U}}(\mathbf U)
    \geq
    &\;
    \mathbb E_{\mathbf Z\sim Q_\phi(\cdot\mid\mathbf U)}
    \left[
        \log
        \frac{\d P_\psi(\cdot\mid\mathbf Z)}
             {\d\mathsf W_{\mathcal U}}(\mathbf U)
    \right]-
    D_{\mathrm{KL}}
    \left(
        Q_\phi(\cdot\mid\mathbf U)
        \,\Vert\,
        \mathsf P_{\mathcal Z}
    \right),
\end{aligned}
\label{eq:vae_elbo}
\end{equation}
where
\[
P_\psi^{\,\mathcal U}(\d\mathbf U)=\int_{\mathcal Z} P_\psi(\d\mathbf U\mid\mathbf Z)\,\mathsf P_{\mathcal Z}(\d\mathbf Z)
\]
is the marginal distribution induced by the generative model.

The two terms in \eqref{eq:vae_elbo} play complementary roles. The first encourages latent samples produced by the encoder to retain sufficient information for reconstruction through the decoder, while the KL term regularizes the encoded distribution toward the prescribed latent prior. Thus, a variational autoencoder combines probabilistic autoencoding with distributional regularization of the latent representation.

While the standard VAE formulation above regularizes the latent representation against a fixed reference distribution \(\mathsf P_{\mathcal Z}\), corresponding to a static latent prior, variational latent-variable models have also been extended to sequential and dynamical settings through learned latent dynamics, including state-space models \citep{krishnan2015deep,krishnan2017structured,rangapuram2018deep}, latent ordinary differential equations \citep{rubanova2019latent,yildiz2019ode2vae}, and latent stochastic differential equations \citep{ha2018adaptive,tzen2019neural,hasan2021identifying}. Motivated by these perspectives, we replace the static latent prior of a standard VAE with a conditional transition kernel
\[
P_\theta(d\mathbf Z_{n+1}\mid \mathbf Z_n),
\]
so that variational regularization is imposed on the evolution of the latent state. Our focus differs from these finite-dimensional latent-dynamics models in developing the resulting latent Markov formulation directly on function spaces for time-dependent PDE evolution. In \S\ref{sec:var_latent_dyn}, we give a detailed construction of this framework.

\section{Variational Latent Dynamics on Function Spaces}\label{sec:var_latent_dyn}
In this section, we develop the theoretical foundation of variational latent dynamics for time-dependent PDEs. We first formulate the model abstractly on physical and latent function spaces and characterize the variational objectives governing latent representation and transition alignment in \S\ref{sec:abstract_vld}. Then in \S\ref{sec:functional_gaussian_latent_markov}, we specialize to functional Gaussian distributions, where Cameron-Martin theory provides tractable likelihood representations and the covariance structure induces a spectral geometry on latent perturbations and residuals. Finally, in \S\ref{sec:err_var_dyn}, we analyze long-horizon error propagation under deterministic mean rollout, highlighting two complementary mechanisms of variational training: noise injection promotes local stability, while variational alignment provides direct control of latent transition errors and their accumulation over the rollout horizon.

\subsection{Abstract Variational Latent Dynamics}\label{sec:abstract_vld}
We formulate the latent dynamics directly at the level of function spaces \citep{bogachev1998gaussian}, before introducing a particular Gaussian parameterization or spatial discretization. Let $\mathcal U$ denote the space of physical PDE states and $\mathcal Z$ the corresponding latent state space.

\begin{assumption}[State spaces]
\label{assump:state_spaces}
The physical state space $\mathcal U$ and latent state space $\mathcal Z$ are real separable Banach spaces, equipped with their Borel $\sigma$-algebras.
\end{assumption}

This setting is sufficiently general to include many state spaces arising in PDE modeling \citep{stuart2010inverse,dashti2013bayesian}, while providing the standard measurable structure required to define conditional probability distributions on $\mathcal U$ and $\mathcal Z$. In particular, separability ensures that these spaces have the regularity needed for the probabilistic constructions below.

Built on this setting, we study a variational latent formulation of the PDE flow map. Instead of directly learning a deterministic propagator in the physical space, we introduce a latent representation $\mathbf{Z}_n\in \mathcal Z$ for each physical state $\mathbf{U}_n\in \mathcal U$ and model the one-step evolution through the latent pathway
\begin{equation*}
    \mathbf{U}_n\xrightarrow{\text{encode}}
    \mathbf{Z}_n\xrightarrow{\text{evolve}}
    \mathbf{Z}_{n+1}\xrightarrow{\text{decode}}
    \mathbf{U}_{n+1}.
\end{equation*}
Along this pathway, an encoder maps the physical field to a latent representation, a latent transition model propagates this representation in time, and a decoder maps the propagated latent state back to the physical space. This formulation provides a probabilistic description of the learned dynamics while allowing the evolution to be regularized in a structured latent space.

Concretely, for the pathway above, we introduce encoder, transition and decoder kernels, parameterized by $\phi\in\Phi,\theta\in\Theta$ and $\psi\in\Psi$, respectively:
\begin{equation*}
\begin{aligned}
\text{(encoder)}\qquad &\mathcal{U}\ni\mathbf{U}_n\,\mapsto \,Q_\phi(\d\mathbf{Z}_n \,|\, \mathbf{U}_n)\in\Delta\left(\mathcal{Z}\right),\\
\text{(transition)}\qquad &\mathcal{Z}\ni\mathbf{Z}_n\,\mapsto\, P_{\,\theta}(\d\mathbf{Z}_{n+1} \,|\, \mathbf{Z}_n)\in\Delta(\mathcal{Z}),\\
\text{(decoder)}\qquad &\mathcal{Z}\ni\mathbf{Z}_n\,\mapsto \,P_\psi(\d\mathbf{U}_{n+1}\,|\, \mathbf{Z}_{n+1})\in\Delta(\mathcal{U}).
\end{aligned}
\end{equation*}
Together, these components define a probabilistic approximation of the one-step PDE flow map and yield a variational latent Markov model for time-dependent PDEs.

The formulation is independent of the particular physical variables used to represent the underlying PDE state. For example, $\mathbf U_n$ may denote the vorticity field in incompressible Navier-Stokes or the density, velocity, and pressure fields in compressible Euler. The same latent probabilistic framework therefore applies across different PDE systems through the corresponding physical-state space and data distribution.

Before specializing the model to functional Gaussian kernels, we first establish the variational foundation of the latent Markov formulation at the level of general function-space probability measures. We begin by deriving lower bounds on the one-step predictive likelihood, including a conditional functional ELBO that motivates the reconstruction and latent transition consistency terms. We then study the dynamics term at the population level and characterize the latent Markov transition targeted by its minimization. These results do not rely on any particular parameterization and therefore apply to the general formulation above.

\begin{proposition}[Functional lower bounds]
\label{prop:conditional_functional_elbo}
Let $\mathcal U$ and $\mathcal Z$ be Banach spaces. Let
$\mathsf W_{\mathcal U}$ be a fixed reference measure on $\mathcal U$. For a consecutive pair $(\mathbf U_n,\mathbf U_{n+1})\in\mathcal{U}\times\mathcal{U}$, suppose that the encoder, latent transition, and decoder are given by Markov kernels
\[
    Q_\phi(\d\mathbf Z_n\,|\, \mathbf U_n)\in\Delta(\mathcal{Z}),\quad
    P_\theta(\d\mathbf Z_{n+1}\,|\, \mathbf Z_n)\in\Delta(\mathcal{Z}),\quad
    P_\psi(\d\mathbf U_{n+1}\,|\, \mathbf Z_{n+1})\in\Delta(\mathcal{U}).
\]
Assume that the decoder kernel is absolutely continuous with respect to $\mathsf W_{\mathcal U}$:
\[
    P_\psi(\cdot\mid \mathbf Z)\ll \mathsf W_{\mathcal U}.
\]
Then the following lower bounds hold.
\begin{itemize}
\item[(a)] (Predictive decoder likelihood bound).
\begin{equation}
    \log \frac{\d P_{\theta,\psi,\phi}(\cdot\,|\,\mathbf{U}_n)}{\d\mathsf{W}_\mathcal{U}}(\mathbf{U}_{n+1})\geq\bbE_{\mathbf{Z}_{n+1}\sim P_\theta(\cdot|\mathbf{Z}_n),\mathbf{Z}_n\sim Q_\phi(\cdot|\mathbf{U}_n)}\left[\log \frac{\d P_\psi(\cdot\,|\,\mathbf{Z}_{n+1})}{\d\mathsf{W}_\mathcal{U}}(\mathbf{U}_{n+1})\right].\label{eq:predlb}
\end{equation}
\item[(b)] (Conditional functional ELBO). Assume that the relevant likelihood densities exist and that
\[
    Q_\phi(\cdot\,|\, \mathbf U_{n+1})
    \ll
    P_\theta(\cdot\,|\, \mathbf Z_n)
\]
for $Q_\phi(\cdot\,|\, \mathbf U_n)$-almost every $\mathbf Z_n$. Then
\[
\begin{aligned}
    \log \frac{\d P_{\theta,\psi,\phi}(\cdot\,|\,\mathbf{U}_n)}{\d\mathsf{W}_\mathcal{U}}(\mathbf{U}_{n+1})
    &\geq\mathbb{E}_{\mathbf Z_{n+1}\sim Q_\phi(\cdot\,|\, \mathbf U_{n+1})}
    \left[
        \log\frac{\d P_\psi(\cdot\,|\, \mathbf Z_{n+1})}{\d\mathsf{W}_\mathcal{U}}(\mathbf U_{n+1})
    \right]
    \\
    &\quad -\mathbb E_{\mathbf Z_n\sim Q_\phi(\cdot\,|\, \mathbf U_n)}
    \left[D_{\mathrm{KL}}\left(
    Q_\phi(\cdot\,|\, \mathbf U_{n+1})\,\Vert\,P_\theta(\cdot\,|\, \mathbf Z_n)
    \right)\right].
\end{aligned}
\]
\end{itemize}
\end{proposition}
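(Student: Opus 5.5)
The plan is to write the predictive measure as a mixture and then apply Jensen's inequality, in two different ways for (a) and (b). By construction, the one-step predictive measure is the composition of the three kernels,
\[
P_{\theta,\psi,\phi}(\d\mathbf U_{n+1}\mid\mathbf U_n)=\int_{\mathcal Z}\int_{\mathcal Z}P_\psi(\d\mathbf U_{n+1}\mid\mathbf Z_{n+1})\,P_\theta(\d\mathbf Z_{n+1}\mid\mathbf Z_n)\,Q_\phi(\d\mathbf Z_n\mid\mathbf U_n).
\]
Since each $P_\psi(\cdot\mid\mathbf Z)\ll\mathsf W_{\mathcal U}$, this mixture is also dominated by $\mathsf W_{\mathcal U}$. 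I will identify its Radon--Nikodym density pointwise as the mixture of the decoder densities:
\[
p(\mathbf U_{n+1}\mid\mathbf U_n)=\mathbb E_{\mathbf Z_n\sim Q_\phi,\ \mathbf Z_{n+1}\sim P_\theta(\cdot\mid\mathbf Z_n)}\left[\frac{\d P_\psi(\cdot\mid\mathbf Z_{n+1})}{\d\mathsf W_{\mathcal U}}(\mathbf U_{n+1})\right].
\]
This follows from Tonelli's theorem: integrate against an arbitrary Borel set $A$ and exchange the order of integration. Measurability of $(\mathbf Z,\mathbf U)\mapsto \frac{\d P_\psi(\cdot\mid\mathbf Z)}{\d\mathsf W_{\mathcal U}}(\mathbf U)$ holds because separability makes the Borel $\sigma$-algebras countably generated, so a jointly measurable version of the kernel density exists by the standard Doob-type argument. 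This identification is the main technical point, and I will state it as an assumption-level lemma. Part (a) then follows immediately by applying Jensen's inequality for the concave $\log$ to this expectation.

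For (b), I fix $\mathbf Z_n$ and write $q=Q_\phi(\cdot\mid\mathbf U_{n+1})$, $p_\theta=P_\theta(\cdot\mid\mathbf Z_n)$, and $g(\mathbf Z)=\frac{\d P_\psi(\cdot\mid\mathbf Z)}{\d\mathsf W_{\mathcal U}}(\mathbf U_{n+1})$. Under the hypothesis $q\ll p_\theta$, the Radon--Nikodym derivative $\d q/\d p_\theta$ exists. On the set where it is positive I perform the change of measure
\[
\mathbb E_{p_\theta}[g]\ge \mathbb E_{q}\!\left[g\,\Big(\tfrac{\d q}{\d p_\theta}\Big)^{-1}\right].
\]
The inequality appears because the part of $p_\theta$ not charged by $q$ is dropped, and $g\ge0$. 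Jensen's inequality under $q$ then gives
\[
\log\mathbb E_{p_\theta}[g]\ge\mathbb E_q[\log g]-D_{\mathrm{KL}}(q\Vert p_\theta).
\]
If $D_{\mathrm{KL}}=\infty$, the bound is trivial.

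The last step is to remove the dependence on the fixed $\mathbf Z_n$. From the mixture formula, $\log p(\mathbf U_{n+1}\mid\mathbf U_n)=\log\mathbb E_{\mathbf Z_n\sim Q_\phi(\cdot\mid\mathbf U_n)}\big[\mathbb E_{p_\theta}[g]\big]$. A second application of Jensen over $\mathbf Z_n$ gives
\[
\log p(\mathbf U_{n+1}\mid\mathbf U_n)\ge\mathbb E_{\mathbf Z_n}\big[\log\mathbb E_{p_\theta}[g]\big].
\]
Inserting the previous bound then yields (b). The reconstruction term does not depend on $\mathbf Z_n$, so it exits the outer expectation, while the KL term remains averaged over $\mathbf Z_n$.

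The remaining care is with integrability. I must ensure the quasi-integrability of $\log g$ under $q$ and of $\log\mathbb E_{p_\theta}[g]$ under $Q_\phi$. I will handle this with the usual convention that the bound holds in $[-\infty,\infty)$ whenever the right-hand side is well defined, which covers the phrase ``the relevant likelihood densities exist.''
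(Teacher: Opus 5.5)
Your proposal is correct and follows essentially the same route as the paper. For (a), you identify the predictive density as the kernel mixture of decoder densities and apply Jensen. For (b), you fix $\mathbf Z_n$, reweight from $P_\theta(\cdot\mid\mathbf Z_n)$ to $Q_\phi(\cdot\mid\mathbf U_{n+1})$, apply Jensen to get the reconstruction-minus-KL bound, and finish with a second Jensen step over $\mathbf Z_n\sim Q_\phi(\cdot\mid\mathbf U_n)$.

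Your differences from the paper are only minor refinements: you work directly with $\d Q_\phi(\cdot\mid\mathbf U_{n+1})/\d P_\theta(\cdot\mid\mathbf Z_n)$ instead of the paper's common dominating measure $\Lambda$, you state explicitly that the reweighting step is an inequality (the paper writes it as an equality), and you add a measurability justification.
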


\begin{remark}
Equivalently, the first bound gives the predictive negative log-likelihood objective
\begin{equation}
    \mathcal L_{\mathrm{NLL}}^{\mathrm{pred}}(\mathbf U_n,\mathbf U_{n+1})=\bbE_{\mathbf{Z}_{n+1}\sim P_\theta(\cdot|\mathbf{Z}_n),\mathbf{Z}_n\sim Q_\phi(\cdot|\mathbf{U}_n)}\left[-\log \frac{\d P_\psi(\cdot\,|\,\mathbf{Z}_{n+1})}{\d\mathsf{W}_{\mathcal U}}(\mathbf{U}_{n+1})\right].\label{eq:pred_nll}
\end{equation}
Similarly, the negative conditional ELBO is
\begin{equation}
\begin{aligned}
    \mathcal L_{\mathrm{ELBO}}(\mathbf U_n,\mathbf U_{n+1})
    =
    &-\mathbb E_{\mathbf Z_{n+1}\sim Q_\phi(\cdot\,|\, \mathbf U_{n+1})}
    \left[
        \log \frac{\d P_\psi(\cdot\,|\, \mathbf Z_{n+1})}{\d\mathsf W_{\mathcal U}}(\mathbf U_{n+1})
    \right]
    \\
    &+
    \mathbb E_{\mathbf Z_n\sim Q_\phi(\cdot\,|\, \mathbf U_n)}
    \left[
    D_{\mathrm{KL}}
    \left(
        Q_\phi(\cdot\,|\, \mathbf U_{n+1})
        \,\Vert\,
        P_\theta(\cdot\,|\, \mathbf Z_n)
    \right)
    \right].
\end{aligned}\label{eq:condnegelbo}
\end{equation}
Notably, the ELBO can be understood in the extended-real sense. If $Q_\phi(\cdot\mid \mathbf U_{n+1})$ is not absolutely continuous with respect to $P_\theta(\cdot\mid\mathbf Z_n)$, the corresponding KL divergence is infinite and the lower bound becomes trivial. A finite, nontrivial ELBO therefore requires
\[
    Q_\phi(\cdot\mid \mathbf U_{n+1})
    \ll
    P_\theta(\cdot\mid \mathbf Z_n)
\]
for $Q_\phi(\cdot\mid \mathbf U_n)$-almost every $\mathbf Z_n$.
\end{remark}

The conditional ELBO in \eqref{eq:condnegelbo} separates two complementary learning objectives. The reconstruction term
\[
    -
    \mathbb E_{\mathbf Z_{n+1}\sim Q_\phi(\cdot\,|\,\mathbf U_{n+1})}
    \big[\log P_\psi(\mathbf U_{n+1}\,|\,\mathbf Z_{n+1})\big],
\]
encourages latent samples drawn from $Q_\phi(\cdot\mid\mathbf U_{n+1})$ to retain the information needed to reconstruct the physical state $\mathbf U_{n+1}$. It therefore promotes consistency between the encoder and decoder and ensures that the latent variables remain informative representations of the underlying PDE states. The second term,
\[
    \mathbb E_{\mathbf Z_n\sim Q_\phi(\cdot\,|\,\mathbf U_n)}
    \left[D_{\mathrm{KL}}
    \left(
        Q_\phi(\cdot\,|\,\mathbf U_{n+1})\,\Vert\,P_\theta(\cdot\,|\,\mathbf Z_n)
    \right)\right],
\]
enforces consistency of the latent dynamics. Given a latent state $\mathbf Z_n$ encoded from $\mathbf U_n$, the transition kernel $P_\theta(\cdot\mid\mathbf Z_n)$ predicts a distribution over the next latent state. The KL divergence aligns this prediction with the posterior distribution obtained by encoding the true next state $\mathbf U_{n+1}$. Thus, the variational objective couples two complementary requirements: \textit{physical reconstruction} and \textit{latent transition alignment}. The preceding ELBO gives a sample-level variational objective. We next characterize its latent dynamics term at the population level and identify the transition kernel targeted by its minimization.

\begin{proposition}[Population decomposition of the ELBO dynamics term]
\label{prop:population_elbo_transition_consistency}
Let $\rho$ be a probability measure on consecutive PDE states $(\mathbf U,\mathbf U^+)\in\mathcal U\times\mathcal U$, and let $Q_\phi(\cdot\mid\mathbf U)$ be an encoder kernel. For a transition kernel $P$ from $\mathcal Z$ to $\mathcal Z$, define the population dynamics loss
\[
\mathcal R(P)=\mathbb E_{(\mathbf U,\mathbf U^+)\sim\rho} \mathbb E_{\mathbf Z\sim Q_\phi(\cdot\mid\mathbf U)} \left[D_{\mathrm{KL}}\left(Q_\phi(\cdot\mid\mathbf U^+)
\,\Vert\,P(\cdot\mid\mathbf Z)\right)\right].
\]
Let $\Gamma_\phi$ be the joint law of $(\mathbf{U},\mathbf{U}^+,\mathbf{Z},\mathbf{Z}^+)$ induced by
\[
    (\mathbf U,\mathbf U^+)\sim\rho,\qquad
    \mathbf Z\sim Q_\phi(\cdot\mid\mathbf U),\qquad
    \mathbf Z^+\sim Q_\phi(\cdot\mid\mathbf U^+),
\]
and let $\Pi_\phi(\cdot\mid\mathbf Z)$ be the conditional law of
$\mathbf Z^+$ given $\mathbf Z$ under $\Gamma_\phi$. Then
\begin{equation}
\mathcal R(P)=\bbE_{(\mathbf{Z},\mathbf{U}^+)\sim\Gamma_\phi}\left[D_{\mathrm{KL}}
    \left(Q_\phi(\cdot\,|\,\mathbf U^+)\,\Vert\,\Pi_\phi(\cdot\,|\,\mathbf Z)
    \right)\right]+
\bbE_{\mathbf{Z}\sim\Gamma_\phi}\left[D_{\mathrm{KL}}
    \left(\Pi_\phi(\cdot\,|\,\mathbf Z)\,\Vert\,P(\cdot\,|\,\mathbf Z)
    \right)\right].\label{eq:elbokldecomp}
\end{equation}
Consequently, among all Markov kernels
$P$, the population dynamics loss is minimized by
\[
    P^\star(\cdot\,|\,\mathbf Z)=\Pi_\phi(\cdot\,|\,\mathbf Z)
\]
for $\Gamma_\phi$-almost every $\mathbf Z$. 
\end{proposition}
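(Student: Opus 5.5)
The plan is to prove \eqref{eq:elbokldecomp} as a chain rule for KL divergence. The idea is to insert the intermediate kernel $\Pi_\phi(\cdot\mid\mathbf Z)$ between $Q_\phi(\cdot\mid\mathbf U^+)$ and $P(\cdot\mid\mathbf Z)$, and then observe that the cross term collapses because $\Pi_\phi$ is the conditional average of $Q_\phi(\cdot\mid\mathbf U^+)$ given $\mathbf Z$.

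First, I will rewrite $\mathcal R(P)$ as an expectation under $\Gamma_\phi$. Since $\mathbf Z$ and $\mathbf Z^+$ are sampled independently given $(\mathbf U,\mathbf U^+)$, and the integrand depends only on $(\mathbf U^+,\mathbf Z)$, we get
\[
\mathcal R(P)=\mathbb E_{(\mathbf Z,\mathbf U^+)\sim\Gamma_\phi}\bigl[D_{\mathrm{KL}}(Q_\phi(\cdot\mid\mathbf U^+)\Vert P(\cdot\mid\mathbf Z))\bigr].
\]
Next, I will record the key structural fact. Under $\Gamma_\phi$, the conditional law of $\mathbf Z^+$ given $(\mathbf Z,\mathbf U^+)$ is $Q_\phi(\cdot\mid\mathbf U^+)$. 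This holds because $\mathbf Z^+$ depends on $(\mathbf U,\mathbf U^+,\mathbf Z)$ only through $\mathbf U^+$. Consequently,
\[
\Pi_\phi(\cdot\mid\mathbf Z)=\mathbb E\bigl[Q_\phi(\cdot\mid\mathbf U^+)\,\big|\,\mathbf Z\bigr]
\]
as a mixture. The existence of regular conditional distributions $\Pi_\phi$ and of the law of $\mathbf U^+$ given $\mathbf Z$ follows from Assumption~\ref{assump:state_spaces}, since separable Banach spaces are Polish.

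The main step is the decomposition itself, which I will carry out via the chain rule for KL on the joint space $\mathcal U\times\mathcal Z$. Fix $\mathbf Z$ and compare two joint laws of $(\mathbf U^+,\mathbf Z^+)$. The first is $A_{\mathbf Z}=\mathrm{Law}(\mathbf U^+\mid\mathbf Z)\otimes Q_\phi(\cdot\mid\mathbf U^+)$. The second is $B_{\mathbf Z}=\mathrm{Law}(\mathbf U^+\mid\mathbf Z)\otimes P(\cdot\mid\mathbf Z)$, under which $\mathbf Z^+$ is independent of $\mathbf U^+$. I will apply the chain rule in two ways.
\begin{itemize}
\item Disintegrating on $\mathbf U^+$ first, the marginals agree, so $D_{\mathrm{KL}}(A_{\mathbf Z}\Vert B_{\mathbf Z})$ equals the expected KL of the left side of the desired identity.
\item Disintegrating on $\mathbf Z^+$ first, the $\mathbf Z^+$-marginals are $\Pi_\phi(\cdot\mid\mathbf Z)$ and $P(\cdot\mid\mathbf Z)$. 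The conditionals of $\mathbf U^+$ given $\mathbf Z^+$ are the Bayes posterior under $A_{\mathbf Z}$ and the prior $\mathrm{Law}(\mathbf U^+\mid\mathbf Z)$ under $B_{\mathbf Z}$.
\end{itemize}
The resulting conditional term is a mutual-information quantity, which equals $\mathbb E_{\mathbf U^+\mid\mathbf Z}D_{\mathrm{KL}}(Q_\phi(\cdot\mid\mathbf U^+)\Vert\Pi_\phi(\cdot\mid\mathbf Z))$ by the same chain rule applied to $A_{\mathbf Z}$ versus $\mathrm{Law}(\mathbf U^+\mid\mathbf Z)\otimes\Pi_\phi(\cdot\mid\mathbf Z)$. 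Equating the two expansions and integrating over $\mathbf Z$ gives \eqref{eq:elbokldecomp}.

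A more elementary alternative, when densities exist, avoids the joint-space argument. One writes $\log\frac{dQ}{dP}=\log\frac{dQ}{d\Pi}+\log\frac{d\Pi}{dP}$ and uses the fact that the second log-ratio is a function of $(\mathbf Z,\mathbf Z^+)$ only. Integrating it against $Q_\phi(\cdot\mid\mathbf U^+)$ and then over $\mathbf U^+\mid\mathbf Z$ yields an integral against $\Pi_\phi(\cdot\mid\mathbf Z)$, which is exactly the KL term.

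I expect the main obstacle to be measure-theoretic bookkeeping in infinite dimensions, with no Lebesgue densities available. First, I must ensure $Q_\phi(\cdot\mid\mathbf U^+)\ll\Pi_\phi(\cdot\mid\mathbf Z)$ for a.e.\ $(\mathbf Z,\mathbf U^+)$. I will verify this directly, since $\Pi_\phi$ is a mixture containing $Q_\phi(\cdot\mid\mathbf U^+)$ in a conditional-a.e.\ sense. Second, I must handle the case where the KL terms are infinite. Here I will rely on the extended-real form of the chain rule, valid for nonnegative divergences on Polish spaces, so that both sides are $+\infty$ simultaneously.

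Minimality then follows immediately. The first term of \eqref{eq:elbokldecomp} is independent of $P$, and the second is nonnegative, vanishing iff $P(\cdot\mid\mathbf Z)=\Pi_\phi(\cdot\mid\mathbf Z)$ for $\Gamma_\phi$-a.e.\ $\mathbf Z$. This assumes the first term is finite; otherwise every $P$ attains the value $+\infty$ and the claim holds trivially.
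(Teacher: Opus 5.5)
Your main argument is correct, and your ``more elementary alternative'' is exactly the paper's proof. The paper fixes a common dominating measure and splits $\log\frac{\d Q_\phi(\cdot\mid\mathbf U^+)}{\d P(\cdot\mid\mathbf Z)}=\log\frac{\d Q_\phi(\cdot\mid\mathbf U^+)}{\d\Pi_\phi(\cdot\mid\mathbf Z)}+\log\frac{\d\Pi_\phi(\cdot\mid\mathbf Z)}{\d P(\cdot\mid\mathbf Z)}$ under $\Gamma_\phi$. It evaluates the first piece by disintegrating over $(\mathbf Z,\mathbf U^+)$ with $Q_\phi(\d\mathbf Z^+\mid\mathbf U^+)$, and the second via $\Gamma_\phi(\d\mathbf Z,\d\mathbf Z^+)=\Pi_\phi(\d\mathbf Z^+\mid\mathbf Z)\,\Gamma_\phi(\d\mathbf Z)$.

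Your primary route is different. You obtain the compensation identity $\mathbb E_{\mathbf U^+\mid\mathbf Z}D_{\mathrm{KL}}(Q_\phi(\cdot\mid\mathbf U^+)\Vert P)=D_{\mathrm{KL}}(\Pi_\phi\Vert P)+I(\mathbf U^+;\mathbf Z^+\mid\mathbf Z)$ by expanding $D_{\mathrm{KL}}(A_{\mathbf Z}\Vert B_{\mathbf Z})$ with the chain rule in both orders. You then identify the mutual information with the encoder-dispersion term through the comparison with $\mathrm{Law}(\mathbf U^+\mid\mathbf Z)\otimes\Pi_\phi(\cdot\mid\mathbf Z)$.

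The paper's version is shorter but formal. Splitting the log-ratio and the expectation tacitly requires $Q_\phi(\cdot\mid\mathbf U^+)\ll\Pi_\phi(\cdot\mid\mathbf Z)\ll P(\cdot\mid\mathbf Z)$ and enough integrability to rule out $\infty-\infty$. Your chain-rule version needs none of this. Every term is a nonnegative divergence, so the identity holds in $[0,\infty]$. It therefore also makes the infinite cases and the minimality claim rigorous, including your correct remark that if the first term is infinite, every $P$ is a minimizer.

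One correction: drop the plan to ``verify directly'' that $Q_\phi(\cdot\mid\mathbf U^+)\ll\Pi_\phi(\cdot\mid\mathbf Z)$ almost everywhere, because this is false in general; a mixture need not dominate its components. For example, take $Q_\phi(\cdot\mid\mathbf U^+)=\delta_{m_\phi(\mathbf U^+)}$ with $m_\phi(\mathbf U^+)$ having a non-atomic conditional law given $\mathbf Z$. Then $\Pi_\phi(\cdot\mid\mathbf Z)$ has no atoms, and no component is absolutely continuous with respect to it.

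The same failure occurs in the paper's own infinite-dimensional Gaussian setting. For $v\notin\mathcal H_K$, the mixture $\int_0^1\mathsf N(uv,K)\,\d u$ is singular to each $\mathsf N(u_0v,K)$, because a measurable linear functional recovers $u$ almost surely under every component. In such cases the first term of \eqref{eq:elbokldecomp} is simply $+\infty$. Your extended-real chain-rule argument already absorbs this, so nothing else needs to change. The density-based alternative, and the paper's proof with it, is valid only under the absolute-continuity and integrability conditions above.
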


Proposition~\ref{prop:population_elbo_transition_consistency} separates the population dynamics loss into an encoder-induced term and a transition-model mismatch:
\begin{itemize}
\item The first term in \eqref{eq:elbokldecomp} depends only on the encoder and the distribution of consecutive physical states. It quantifies the dispersion of the encoded next-state posterior $Q_\phi(\cdot\mid\mathbf U^+)$ around the aggregated conditional latent transition $\Pi_\phi(\cdot\mid\mathbf Z)$ and is therefore irreducible when optimizing over the transition kernel $P$.

\item The second term is the only component that depends on $P$. It measures the discrepancy between the learned Markov transition $P(\cdot\mid\mathbf Z)$ and the encoder-induced conditional transition $\Pi_\phi(\cdot\mid\mathbf Z)$. Hence, for a fixed encoder, minimizing the population dynamics objective is equivalent to matching the learned transition to $\Pi_\phi$ almost surely under the latent marginal distribution. In particular, if the model class $\{P_\theta\}_{\theta\in\Theta}$ contains $\Pi_\phi$, any global population minimizer satisfies
\[
    P_{\theta^\star}(\cdot\mid\mathbf Z)
    =
    \Pi_\phi(\cdot\mid\mathbf Z)
\]
up to $\Gamma_\phi$-null sets. Therefore, rather than identifying the original PDE generator $\mathcal F$ directly, the ELBO dynamics term encourages the model to learn the Markov transition induced by the encoder on latent representations of true PDE trajectories.
\end{itemize}

The preceding results characterize the variational latent dynamics without imposing a specific form on the underlying kernels. We now specialize this framework to
functional Gaussian kernels and study the covariance-induced geometry of the latent dynamics through their Cameron-Martin spaces and spectral structure.

\subsection{Functional Gaussian Latent Markov Model}
\label{sec:functional_gaussian_latent_markov}

We now specialize the abstract variational formulation to Gaussian distributions on function spaces \citep{bogachev1998gaussian,kuo2006gaussian,da2014stochastic,bunker2025autoencoders}. This specialization serves two purposes. First, the Cameron-Martin theorem converts the Radon-Nikodym derivatives in the functional variational objective into tractable reconstruction and transition terms. Second, the covariance operators induce a geometry on the latent space that determines how perturbations and transition residuals are weighted across different functional directions. 

We briefly review the necessary background on Gaussian measures and Cameron-Martin spaces in Appendix~\S\ref{app:gaussian_measures_banach}. Recall that a functional Gaussian measure $\mathsf N(m,K)$ is characterized by its mean element $m$ and covariance operator $K$. Accordingly, we consider the functional Gaussian parameterization
\begin{equation*}
\begin{aligned}
    \text{(encoder)}\qquad Q_\phi(\cdot\mid\mathbf U)
    &=\mathsf N\bigl(m_\phi(\mathbf U),K_\phi(\mathbf U)\bigr),
    \\
    \text{(transition)}\ \qquad P_\theta(\cdot\mid\mathbf Z)
    &=\mathsf N\bigl(T_\theta(\mathbf Z),\alpha_\theta(\mathbf Z)^2 K\bigr),
    \\
    \text{(decoder)\,}\qquad P_\psi(\cdot\mid\mathbf Z)
    &=\mathsf N\bigl(D_\psi(\mathbf Z), K_\psi\bigr).
\end{aligned}
\end{equation*}
Here $m_\phi$ denotes the encoder mean, $T_\theta$ the mean latent transition, and $D_\psi$ the decoder mean. The state-dependent covariance $K_\phi(\mathbf U)$ describes uncertainty in the encoded representation, while $K$ specifies the spatial structure of the latent transition noise and $\alpha_\theta(\mathbf Z)>0$ controls its amplitude. The decoder covariance $K_\psi$ is fixed.

These Gaussian distributions are defined on the underlying function spaces rather than on a particular spatial discretization. The finite-dimensional parameterizations used in computation are introduced later in \S\ref{subsec:structured_latent_perturbations}. Here we focus on the function-space structure induced by the corresponding Gaussian measures.

\paragraph{Cameron-Martin representation of the likelihood.}
The likelihood terms in the functional variational bounds of Proposition~\ref{prop:conditional_functional_elbo} are expressed through Radon-Nikodym derivatives with respect to reference measures. For Gaussian kernels, the Cameron-Martin theorem \citep{stroock2010probability} gives an explicit representation of these derivatives in terms of the geometry induced by the covariance operator. We first record this consequence for the decoder likelihood.

\begin{proposition}[Cameron-Martin form of the decoder likelihood]
\label{prop:cm_decoder_likelihood}
Let $\mathcal U$ be a separable Banach space and let $\mathsf W_{\mathcal U}$ be a centered Gaussian measure on $\mathcal U$. Denote its Cameron-Martin space by $\mathcal H_{\mathcal U}$, continuously embedded in $\mathcal U$. Suppose the decoder kernel is given by the
Cameron-Martin shift
\[
    P_\psi(\d\mathbf U\mid \mathbf Z)
    =
    \mathsf W_{\mathcal U}^{D_\psi(\mathbf Z)}(\d\mathbf U),
\]
where $D_\psi(\mathbf Z)\in \mathcal H_{\mathcal U}$.
Then
\[
    P_\psi(\cdot\mid \mathbf Z)\ll \mathsf W_{\mathcal U},
\]
and the Radon-Nikodym derivative is
\begin{equation}
\frac{\d P_\psi(\cdot\mid \mathbf Z)}{\d\mathsf W_{\mathcal U}
}(\mathbf U)
=\exp\left(\langle D_\psi(\mathbf Z),\mathbf U\rangle^\sim
-\frac12\|D_\psi(\mathbf Z)\|_{\mathcal H_{\mathcal U}}^2
\right),\label{eq:cm_decoder_rn}
\end{equation}
where $\langle\cdot,\cdot\rangle^\sim$ denotes the Paley-Wiener map associated with $\mathsf W_{\mathcal U}$. Consequently, the decoder negative log-likelihood with respect to
$\mathsf W_{\mathcal U}$ is
\begin{equation}
-\log\frac{\d P_\psi(\cdot\mid \mathbf Z)}{\d\mathsf W_{\mathcal U}}(\mathbf U)
    =
    \frac12
    \|D_\psi(\mathbf Z)\|_{\mathcal H_{\mathcal U}}^2
    -
    \langle D_\psi(\mathbf Z),\mathbf U\rangle^\sim.
    \label{eq:cm_decoder_nll}
\end{equation}
In particular, if $\mathbf U\in \mathcal H_{\mathcal U}$, then
\[
    \langle D_\psi(\mathbf Z),\mathbf U\rangle^\sim=
    \langle D_\psi(\mathbf Z),\mathbf U\rangle_{\mathcal H_{\mathcal U}},
\]
and therefore
\begin{equation}
-\log\frac{\d P_\psi(\cdot\mid \mathbf Z)}{\d\mathsf W_{\mathcal U}}(\mathbf U)
=
\frac12\|\mathbf U-D_\psi(\mathbf Z)\|_{\mathcal H_{\mathcal U}}^2-\frac12\|\mathbf U\|_{\mathcal H_{\mathcal U}}^2.
\label{eq:cm_decoder_squared_norm}
\end{equation}
Thus, up to a term depending only on the observed field $\mathbf U$, the Gaussian decoder negative log-likelihood is equivalent to the squared Cameron-Martin reconstruction error.
\end{proposition}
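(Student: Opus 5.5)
The plan is to obtain the statement as a direct consequence of the Cameron--Martin theorem for Gaussian measures on separable Banach spaces, recalled in Appendix~\S\ref{app:gaussian_measures_banach}. Write $\mathsf W_{\mathcal U}^{h}$ for the law of $\mathbf U+h$ when $\mathbf U\sim\mathsf W_{\mathcal U}$. The first step is to recall the precise form of that theorem: for $h\in\mathcal H_{\mathcal U}$, the shifted measure $\mathsf W_{\mathcal U}^h$ is equivalent to $\mathsf W_{\mathcal U}$, and its density is
\[
\frac{\d \mathsf W_{\mathcal U}^h}{\d\mathsf W_{\mathcal U}}(\mathbf U)=\exp\Bigl(\langle h,\mathbf U\rangle^\sim-\tfrac12\|h\|_{\mathcal H_{\mathcal U}}^2\Bigr).
\]
Here $\langle h,\cdot\rangle^\sim$ is the Paley--Wiener map, defined as the $L^2(\mathsf W_{\mathcal U})$-limit of $\ell_k(\cdot)$ for $\ell_k\in\mathcal U^*$ with $R\ell_k\to h$ in $\mathcal H_{\mathcal U}$, where $R$ is the covariance operator. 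Taking $h=D_\psi(\mathbf Z)$, which lies in $\mathcal H_{\mathcal U}$ by hypothesis, gives at once the absolute continuity $P_\psi(\cdot\mid\mathbf Z)\ll\mathsf W_{\mathcal U}$ and \eqref{eq:cm_decoder_rn}. Taking $-\log$, which is legitimate because the density is strictly positive, yields \eqref{eq:cm_decoder_nll}.

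If I prefer not to quote the theorem as a black box, I will verify it through characteristic functionals. For $\ell\in\mathcal U^*$, I compute $\int e^{i\ell(\mathbf U)}\exp(\langle h,\mathbf U\rangle^\sim-\frac12\|h\|^2)\,\d\mathsf W_{\mathcal U}$. This uses the fact that $(\ell(\mathbf U),\langle h,\mathbf U\rangle^\sim)$ is a centered Gaussian vector with covariance entries $\langle R\ell,R\ell\rangle_{\mathcal H}$, $\langle R\ell,h\rangle_{\mathcal H}=\ell(h)$, and $\|h\|_{\mathcal H}^2$. The integral equals $\exp(i\ell(h)-\frac12\ell(R\ell))$, which is the characteristic functional of $\mathsf W_{\mathcal U}^h$. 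Since characteristic functionals determine Borel measures on a separable Banach space, the density formula follows.

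For the final claim, assume in addition $\mathbf U\in\mathcal H_{\mathcal U}$. I need to show that $\langle h,\mathbf U\rangle^\sim=\langle h,\mathbf U\rangle_{\mathcal H_{\mathcal U}}$. This is the main subtle point, because the Paley--Wiener map is only an $L^2$ equivalence class, and $\mathcal H_{\mathcal U}$ is $\mathsf W_{\mathcal U}$-null when $\dim\mathcal U=\infty$. So the identity has to be read through a canonical representative. I will fix the version given by the everywhere-defined linear extension: on the Borel-linear subspace where $\lim_k\ell_k(\mathbf U)$ exists, for a fixed approximating sequence $R\ell_k\to h$, set $\langle h,\mathbf U\rangle^\sim$ equal to this limit. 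For $\mathbf U\in\mathcal H_{\mathcal U}$ the reproducing property $\ell_k(\mathbf U)=\langle R\ell_k,\mathbf U\rangle_{\mathcal H_{\mathcal U}}$ gives convergence to $\langle h,\mathbf U\rangle_{\mathcal H_{\mathcal U}}$. Alternatively, when $h$ is itself of the form $R\ell$, no limiting procedure is needed at all.

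With this identification, I complete the square,
\[
\tfrac12\|h\|_{\mathcal H}^2-\langle h,\mathbf U\rangle_{\mathcal H}=\tfrac12\|\mathbf U-h\|_{\mathcal H}^2-\tfrac12\|\mathbf U\|_{\mathcal H}^2,
\]
which gives \eqref{eq:cm_decoder_squared_norm}. The concluding sentence of the statement is then immediate, since $\frac12\|\mathbf U\|_{\mathcal H_{\mathcal U}}^2$ does not depend on $\psi$ or $\mathbf Z$.
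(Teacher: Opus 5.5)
Your proposal is correct and follows the paper's route. You apply the Cameron--Martin theorem with $h=D_\psi(\mathbf Z)\in\mathcal H_{\mathcal U}$, take $-\log$, identify $\langle h,\mathbf U\rangle^\sim$ with $\langle h,\mathbf U\rangle_{\mathcal H_{\mathcal U}}$ for $\mathbf U\in\mathcal H_{\mathcal U}$, and complete the square; the paper only asserts that identification, so your characteristic-functional check and explicit choice of version add rigor.

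One small repair is needed in that extra step. You must choose the approximating sequence with, say, $\sum_k\|R\ell_k-h\|_{\mathcal H_{\mathcal U}}<\infty$, which forces $\ell_k(\mathbf U)\to\langle h,\mathbf U\rangle^\sim$ $\mathsf W_{\mathcal U}$-a.s. An arbitrary sequence with $R\ell_k\to h$ gives only $L^2$ convergence, its convergence set can then be $\mathsf W_{\mathcal U}$-null, and your pointwise limit would not be a version of the Paley--Wiener functional.
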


\begin{remark}[Deterministic decoder mean]
The probabilistic decoder is parameterized through the deterministic mean map $D_\psi$, while the observation uncertainty is represented by the fixed Gaussian reference measure. Proposition~\ref{prop:cm_decoder_likelihood}
therefore connects the functional Gaussian likelihood to the deterministic decoder used in computation. In particular, after finite-dimensional discretization with isotropic covariance, the Cameron-Martin reconstruction
term reduces to the usual squared Euclidean  reconstruction loss.
\end{remark}

\paragraph{Spectral geometry of the latent transition.}
The same Gaussian structure also determines how latent transition errors are measured. We now assume that $\mathcal Z$ is a separable Hilbert space and consider a positive, self-adjoint, trace-class covariance operator, which admits a spectral decomposition and the corresponding Karhunen-Lo\`eve representation for Gaussian random elements \citep{da2014stochastic,bogachev1998gaussian}. The resulting Cameron-Martin norm reveals how the covariance spectrum assigns different weights to different latent directions.

\begin{proposition}[Spectral geometry of the latent transition]
\label{prop:general_spectral_residual_control}
Let $\mathcal Z$ be a separable Hilbert space and let linear operator $K:\mathcal Z\to\mathcal Z$ be self-adjoint, positive-definite, and trace-class. Let $\{(\lambda_i,\mathbf E_i)\}_{i\geq1}$ be an eigensystem of $K$, where
$\{\mathbf E_i\}_{i\geq1}$ is an orthonormal basis of $\mathcal Z$, eigenvalues $\lambda_i>0$, and $\sum_{i=1}^{\infty}\lambda_i<\infty$. Fix
\[
    \mathsf W_{\mathcal Z}
    =
    \mathsf N(0,K)
\]
and consider the latent transition
\[
    P_\theta(\cdot\mid\mathbf Z)
    =
    \mathsf N
    \bigl(
        T_\theta(\mathbf Z),
        \alpha_\theta(\mathbf Z)^2K
    \bigr),
\]
where $T_\theta(\mathbf Z)\in\mathcal H_K$ and
$\alpha_\theta(\mathbf Z)>0$. Then:

\begin{itemize}[topsep=0pt,itemsep=2pt]
\item[(a)]
The Cameron-Martin space associated with $\mathsf W_{\mathcal Z}$ is
\[
    \mathcal H_K
    =
    \left\{
        \mathbf R=\sum_{i=1}^{\infty}r_i\mathbf E_i:
        \sum_{i=1}^{\infty}\frac{|r_i|^2}{\lambda_i}<\infty
    \right\},
\]
with norm
\begin{equation}
    \|\mathbf R\|_{\mathcal H_K}^2
    =
    \sum_{i=1}^{\infty}
    \frac{|r_i|^2}{\lambda_i}.
    \label{eq:cm_spectral_norm}
\end{equation}
\item[(b)]
The transition admits the Karhunen--Lo\`eve representation
\begin{equation}
    \mathbf Z^+
    =
    T_\theta(\mathbf Z)
    +
    \alpha_\theta(\mathbf Z)
    \sum_{i=1}^{\infty}
        \sqrt{\lambda_i}\,\xi_i\mathbf E_i,
    \qquad
    \xi_i\overset{\mathrm{i.i.d.}}{\sim}\mathsf N(0,1),
    \label{eq:transition_kl_expansion}
\end{equation}
where the series converges in $L^2(\Omega;\mathcal Z)$ and almost surely in $\mathcal Z$. Moreover, for $\mathbf Z^+\in\mathcal H_K$,
\begin{equation}
\begin{aligned}
    -\log\frac{\d P_\theta(\cdot\mid\mathbf Z)}{\d\mathsf N(0,\alpha_\theta(\mathbf Z)^2K)}(\mathbf Z^+)
    &=\frac{1}{2\alpha_\theta(\mathbf Z)^2}\|\mathbf Z^+ -T_\theta(\mathbf Z)\|_{\mathcal H_K}^2 -\frac{1}{2\alpha_\theta(\mathbf Z)^2}\|\mathbf Z^+\|_{\mathcal H_K}^2 .
\end{aligned}
\label{eq:transition_cm_density}
\end{equation}
\end{itemize}
\end{proposition}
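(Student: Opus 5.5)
The plan is to reduce everything to the standard description of Gaussian measures on a separable Hilbert space, working throughout in the eigenbasis $\{\mathbf E_i\}$ of $K$, and then to apply the Cameron--Martin theorem (as used in Proposition~\ref{prop:cm_decoder_likelihood}) to the rescaled reference measure $\mathsf N(0,\alpha_\theta(\mathbf Z)^2K)$.

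\textbf{Part (a).} I will use the Hilbert-space characterization of the Cameron--Martin space recalled in Appendix~\S\ref{app:gaussian_measures_banach}: for $\mathsf N(0,K)$ with $K$ trace-class, self-adjoint and positive, $\mathcal H_K=K^{1/2}(\mathcal Z)$ with $\|\mathbf R\|_{\mathcal H_K}=\|K^{-1/2}\mathbf R\|_{\mathcal Z}$. Here $K^{-1/2}$ is the pseudo-inverse, which is a genuine inverse on the range because $\lambda_i>0$. If the appendix instead defines $\mathcal H_K$ through $\|h\|_{\mathcal H}=\sup\{\ell(h):\ell\in\mathcal Z^*,\ \mathbb E\,\ell^2\le1\}$, I will first check that this supremum equals $\|K^{-1/2}h\|$. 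This follows from Riesz representation together with $\mathbb E\langle \mathbf a,\mathbf Z\rangle^2=\langle K\mathbf a,\mathbf a\rangle$ and Cauchy--Schwarz in the form
\[
\langle \mathbf a,h\rangle=\langle K^{1/2}\mathbf a,K^{-1/2}h\rangle .
\]
Writing $\mathbf R=\sum_i r_i\mathbf E_i$, we have $K^{-1/2}\mathbf R=\sum_i \lambda_i^{-1/2}r_i\mathbf E_i$. Parseval then gives \eqref{eq:cm_spectral_norm} and the stated description of $\mathcal H_K$ as the set where this sum is finite.

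\textbf{Part (b), Karhunen--Lo\`eve expansion.} Set $S_N=\sum_{i\le N}\sqrt{\lambda_i}\,\xi_i\mathbf E_i$. By orthonormality of the $\mathbf E_i$ and independence of the $\xi_i$,
\[
\mathbb E\|S_N-S_M\|^2=\sum_{i=M+1}^N\lambda_i .
\]
Since $K$ is trace-class, this tends to $0$, so $(S_N)$ is Cauchy in $L^2(\Omega;\mathcal Z)$. Almost-sure convergence follows because $(S_N)$ is a $\mathcal Z$-valued martingale bounded in $L^2$, since $\sup_N\mathbb E\|S_N\|^2=\operatorname{Tr}K<\infty$. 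Alternatively, one can note that $\|S_N\|^2$ is a sum of independent nonnegative terms with summable expectations, and apply Doob's maximal inequality to the tails. I will then identify the law of the limit $S$ through its characteristic functional:
\[
\mathbb E\, e^{i\langle \mathbf a,S\rangle}=\lim_N\prod_{i\le N}e^{-\lambda_i\langle \mathbf a,\mathbf E_i\rangle^2/2}=e^{-\langle K\mathbf a,\mathbf a\rangle/2}.
\]
Hence $S\sim\mathsf N(0,K)$. An affine change, scaling by $\alpha_\theta(\mathbf Z)$ and shifting by $T_\theta(\mathbf Z)$, yields \eqref{eq:transition_kl_expansion}.

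\textbf{Part (b), density formula.} Let $\mu=\mathsf N(0,\alpha^2K)$ with $\alpha=\alpha_\theta(\mathbf Z)$. By part (a) applied to $\alpha^2K$, its Cameron--Martin space is the same set $\mathcal H_K$, with norm $\|\cdot\|_{\mathcal H_{\alpha^2K}}^2=\alpha^{-2}\|\cdot\|_{\mathcal H_K}^2$. Since $T_\theta(\mathbf Z)\in\mathcal H_K$, the Cameron--Martin theorem gives
\[
\frac{\d P_\theta(\cdot\mid\mathbf Z)}{\d\mu}(\mathbf Z^+)=\exp\Bigl(\langle T_\theta(\mathbf Z),\mathbf Z^+\rangle^\sim_{\mu}-\tfrac{1}{2\alpha^2}\|T_\theta(\mathbf Z)\|_{\mathcal H_K}^2\Bigr).
\]
For $\mathbf Z^+\in\mathcal H_K$, the Paley--Wiener term equals the inner product $\alpha^{-2}\langle T_\theta(\mathbf Z),\mathbf Z^+\rangle_{\mathcal H_K}$, exactly as in Proposition~\ref{prop:cm_decoder_likelihood}. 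Completing the square,
\[
\|T\|^2-2\langle T,\mathbf Z^+\rangle=\|\mathbf Z^+-T\|^2-\|\mathbf Z^+\|^2 ,
\]
which gives \eqref{eq:transition_cm_density}.

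\textbf{Main obstacle.} The delicate point is the meaning of the formula on $\mathcal H_K$, which is a $\mu$-null set, since $\mu(\mathcal H_K)=0$ in infinite dimensions. The Radon--Nikodym derivative is only defined $\mu$-a.e., so the statement must be read for the canonical version given by the Paley--Wiener map. On $\mathcal H_K$ this map is fixed by continuity of $\mathbf R\mapsto\langle T,\mathbf R\rangle_{\mathcal H_K}$, and this choice is what justifies the identification above. I will state this convention explicitly when writing the proof.
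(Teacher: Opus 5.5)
Your proposal is correct and follows essentially the same route as the paper. Both arguments identify $\mathcal H_K=\operatorname{Range}(K^{1/2})$ spectrally, obtain the Karhunen--Lo\`eve series by $L^2$-Cauchy plus $L^2$-bounded martingale convergence, and then apply the Cameron--Martin theorem to $\mathsf N(0,\alpha_\theta(\mathbf Z)^2K)$ with the rescaled norm, the Paley--Wiener term replaced by the $\mathcal H_K$ inner product, and completion of the square. Your two small additions, identifying the law via the characteristic functional and spelling out which version of the Radon--Nikodym derivative is meant on the $\mu$-null set $\mathcal H_K$, make explicit what the paper leaves tacit.
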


\begin{remark}[Spectral-coordinate interpretation]
Equivalently, writing
\[
    T_\theta(\mathbf Z)=\sum_{i=1}^{\infty}t_i(\mathbf Z)\mathbf E_i,\qquad\mathbf Z^+=\sum_{i=1}^{\infty}z_i^+\mathbf E_i,
\]
the representation \eqref{eq:transition_kl_expansion} gives
\[
    z_i^+=t_i(\mathbf Z)+\alpha_\theta(\mathbf Z)\sqrt{\lambda_i}\,\xi_i,\qquad
    \operatorname{Var}(z_i^+\mid\mathbf Z)=\alpha_\theta(\mathbf Z)^2\lambda_i.
\]
Hence the covariance spectrum simultaneously determines the directions and scales of the injected transition noise and the geometry of the corresponding Cameron-Martin residual. Directions with smaller $\lambda_i$ receive less stochastic variation and a larger weight $1/\lambda_i$ in the residual norm, whereas directions with larger $\lambda_i$ are perturbed more strongly and penalized less.
\end{remark}

A particularly relevant specialization arises when the latent space is a periodic function space and the covariance operator is diagonal in the Fourier basis, as for Laplacian-resolvent covariances \citep{stuart2010inverse,dashti2013bayesian}. In this setting, the spectral weighting induced by the Cameron-Martin norm admits a direct Sobolev interpretation.

\begin{corollary}[Sobolev geometry of latent residuals]
\label{cor:sobolev_residual_control}
Under the setting of Proposition~\ref{prop:general_spectral_residual_control},
suppose
\[
    \mathcal Z=L^2(\mathbb T^{d_x};\mathbb R^{d_z})
\]
and
\[
    K=(I-\ell^2\Delta)^{-s},\qquad \ell>0, \qquad s>\frac{d_x}{2},
\]
where the Laplacian acts componentwise. With the Fourier basis
\[
    \phi_m(x)=e^{2\pi i m\cdot x},
    \qquad
    m\in\mathbb Z^{d_x},
\]
the covariance eigenvalues are
\[
    \lambda_m
    =
    \left(
        1+4\pi^2\ell^2|m|^2
    \right)^{-s}.
\]
Consequently,
\begin{equation}
\begin{aligned}
    \left\|
        \mathbf Z^+
        -
        T_\theta(\mathbf Z)
    \right\|_{\mathcal H_K}^2
    &=
    \sum_{m\in\mathbb Z^{d_x}}
    \sum_{a=1}^{d_z}
    \left(
        1+4\pi^2\ell^2|m|^2
    \right)^s
    \left|
        \widehat Z_a^+(m)
        -
        \widehat{T_\theta(\mathbf Z)}_a(m)
    \right|^2 .
\end{aligned}
\label{eq:sobolev_latent_residual}
\end{equation}
Thus the Cameron-Martin space $\mathcal H_K$ is $H^s(\mathbb T^{d_x};\mathbb R^{d_z})$ as a set, with an equivalent length-scale-weighted Sobolev norm. In particular, higher-frequency latent residuals receive increasingly large weight as the covariance spectrum decays.
\end{corollary}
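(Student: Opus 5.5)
The plan is to reduce everything to Proposition~\ref{prop:general_spectral_residual_control}(a): once we have an orthonormal eigenbasis of $K$ with eigenvalues $\lambda_i$, the Cameron-Martin norm is $\sum_i |r_i|^2/\lambda_i$. So the work is to (i) exhibit the spectral decomposition of $K=(I-\ell^2\Delta)^{-s}$ on $\mathcal Z=L^2(\mathbb T^{d_x};\mathbb R^{d_z})$, (ii) check the hypotheses (self-adjoint, positive definite, trace class), and (iii) substitute into \eqref{eq:cm_spectral_norm} and compare with the $H^s$ norm.

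For (i), I will use the fact that the Fourier modes $\phi_m$ diagonalize the Laplacian: $\Delta\phi_m=-4\pi^2|m|^2\phi_m$. Hence $(I-\ell^2\Delta)^{-s}\phi_m=(1+4\pi^2\ell^2|m|^2)^{-s}\phi_m$, which gives the stated $\lambda_m$. Because the Laplacian acts componentwise, the eigensystem on the vector-valued space is $\{(\lambda_m,\phi_m e_a)\}_{m\in\mathbb Z^{d_x},\,1\le a\le d_z}$, where $e_a$ are the standard basis vectors of $\mathbb R^{d_z}$. Each $\lambda_m$ then appears with multiplicity $d_z$.

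One technical point is that the proposition is stated on a real Hilbert space, while the $\phi_m$ are complex. I will handle this by passing to the real orthonormal basis of cosines and sines, $\{1,\sqrt2\cos(2\pi m\cdot x),\sqrt2\sin(2\pi m\cdot x)\}$ indexed over half-lattice representatives. Equivalently, I can note that for real $\mathbf R$ the Fourier coefficients satisfy $\widehat R(-m)=\overline{\widehat R(m)}$, and pair $\pm m$. Since $\lambda_m$ depends only on $|m|$, the real cosine and sine modes in each pair share the same eigenvalue. Parseval's identity then converts the sum over the real basis back into the sum over $m\in\mathbb Z^{d_x}$ of $|\widehat R_a(m)|^2$ weighted by $\lambda_m^{-1}$.

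For (ii), positivity and self-adjointness are immediate from the real, positive eigenvalues of a diagonal operator. For the trace class property I need
\[
d_z\sum_{m\in\mathbb Z^{d_x}}(1+4\pi^2\ell^2|m|^2)^{-s}<\infty .
\]
This holds by comparison with $\int_{\mathbb R^{d_x}}(1+|\xi|^2)^{-s}\,\d\xi$, or equivalently by counting $\#\{m:|m|\approx r\}\sim r^{d_x-1}$. Both converge exactly when $2s>d_x$, which is the assumption $s>d_x/2$. This is the only step where the hypothesis on $s$ is used, and I expect it to be the main point to state carefully rather than a real obstacle.

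For (iii), I apply \eqref{eq:cm_spectral_norm} with $\mathbf R=\mathbf Z^+-T_\theta(\mathbf Z)$. Its coordinates are $\widehat Z^+_a(m)-\widehat{T_\theta(\mathbf Z)}_a(m)$, and $1/\lambda_m=(1+4\pi^2\ell^2|m|^2)^s$, which yields \eqref{eq:sobolev_latent_residual}.

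To identify $\mathcal H_K$ with $H^s$, I use the standard Fourier characterization
\[
\|\mathbf R\|_{H^s}^2=\sum_{m,a}(1+|m|^2)^s|\widehat R_a(m)|^2 .
\]
I then bound the weights uniformly:
\[
\min(1,4\pi^2\ell^2)\,(1+|m|^2)\le 1+4\pi^2\ell^2|m|^2\le \max(1,4\pi^2\ell^2)\,(1+|m|^2).
\]
Raising these inequalities to the power $s$ gives two-sided constants between the two norms. This shows set equality and norm equivalence.

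Finally, the remark about high frequencies follows because the weight $(1+4\pi^2\ell^2|m|^2)^s$ grows monotonically in $|m|$.
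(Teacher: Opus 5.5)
Your proposal is correct and follows essentially the same route as the paper, which packages your steps (i)--(iii) into Lemma~\ref{lem:laplacian_resolvent_torus}: the Fourier eigenbasis with $\lambda_m=(1+4\pi^2\ell^2|m|^2)^{-s}$, the trace-class criterion $s>d_x/2$ via the lattice sum, and the Cameron--Martin norm as the weighted Fourier sum, which is equivalent to the $H^s$ norm because $(1+4\pi^2\ell^2|m|^2)^s\asymp(1+4\pi^2|m|^2)^s$. The paper then substitutes the residual into Proposition~\ref{prop:general_spectral_residual_control}. Two of your steps are somewhat more careful than the paper's: your passage to the real cosine/sine basis paired over $\pm m$, where the paper instead appeals to the complexification and conjugate symmetry, and your explicit $\min/\max$ constants, which make the norm equivalence fully quantitative.
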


\subsection{Error Propagation in Variational Latent Dynamics}
\label{sec:err_var_dyn}
We now study how the variational training objective affects the stability of long-horizon latent rollout. We first make explicit the one-step objective analyzed below. Let $\mu_n$ denote the joint distribution of consecutive physical states $(\mathbf U_n,\mathbf U_{n+1})$, and consider the Gaussian encoder and transition
\[
    \widetilde{\mathbf Z}_n=m_\phi(\mathbf U_n)+\Xi_n,
    \qquad
    \Xi_n \sim \mathsf N(0,K_\phi(\mathbf U_n)),
\]
and
\[
    \widetilde{\mathbf Z}_n^+=T_\theta(\widetilde{\mathbf Z}_n)+\alpha_\theta(\widetilde{\mathbf Z}_n)\mathbf G_n,
    \qquad
    \mathbf G_n\sim\mathsf N(0,K).
\]
For notational convenience, define the decoder negative log-likelihood
\[
    \ell_\psi(\mathbf U;\mathbf Z)
    := -\log\frac{\d P_\psi(\cdot\mid\mathbf Z)}{\d\mathsf W_{\mathcal U}}(\mathbf U).
\]
For a consecutive pair $(\mathbf U_n,\mathbf U_{n+1})$, define
\begin{align*}
    \mathcal L_{\mathrm{pred}}(\mathbf U_n,\mathbf U_{n+1})
    &:=\mathbb E_{\Xi_n,\mathbf G_n}
    \left[\ell_\psi(\mathbf U_{n+1};\widetilde{\mathbf Z}_n^+)\right],
    \\
    \mathcal L_{\mathrm{KL}}(\mathbf U_n,\mathbf U_{n+1})
    &:=\mathbb E_{\widetilde{\mathbf Z}_n\sim
        Q_\phi(\cdot\mid\mathbf U_n)}
    \left[D_{\mathrm{KL}}\left(
        Q_\phi(\cdot\mid\mathbf U_{n+1})\,\Vert\,P_\theta(\cdot\mid\widetilde{\mathbf Z}_n)
    \right)\right],
    \\
    \mathcal L_{\mathrm{rec}}(\mathbf U_n)
    &:=\mathbb E_{\widetilde{\mathbf Z}_n\sim Q_\phi(\cdot\mid\mathbf U_n)}
    \left[
        \ell_\psi(\mathbf U_n;\widetilde{\mathbf Z}_n)
    \right].
\end{align*}
The population training objective takes the form
\begin{equation}
    \mathcal L^{(n)}(\phi,\theta,\psi)
    =\mathbb E_{(\mathbf U_n,\mathbf U_{n+1})\sim\mu_n}
    \left[
        \mathcal L_{\mathrm{pred}}(\mathbf U_n,\mathbf U_{n+1})
        +\beta\mathcal L_{\mathrm{KL}}(\mathbf U_n,\mathbf U_{n+1})
        +\lambda_{\mathrm{rec}}\mathcal L_{\mathrm{rec}}(\mathbf U_{n+1})
    \right],
    \label{eq:functional_training_objective}
\end{equation}
with the empirical objective obtained by averaging over observed one-step pairs.

By Proposition~\ref{prop:cm_decoder_likelihood}, whenever the corresponding physical state belongs to $\mathcal H_{\mathcal U}$,
\[
    \ell_\psi(\mathbf U;\mathbf Z)
    =\frac12
    \|\mathbf U-D_\psi(\mathbf Z)\|_{\mathcal H_{\mathcal U}}^2
    + c(\mathbf U),
\]
where $c(\mathbf U)$ is independent of the model parameters. Hence the decoder likelihood terms are equivalent, for optimization purposes, to squared Cameron-Martin reconstruction errors, up to additive data-dependent constants.

The objective \eqref{eq:functional_training_objective} can be viewed as a weighted combination of the two variational bounds introduced in \S\ref{sec:abstract_vld}. The predictive term corresponds to the negative decoder-likelihood bound in
Proposition~\ref{prop:conditional_functional_elbo}(a), while the reconstruction and KL terms correspond to the two components of the negative conditional ELBO in
Proposition~\ref{prop:conditional_functional_elbo}(b). The coefficients $\beta$ and $\lambda_{\mathrm{rec}}$ allow for relative weighting of these terms, including the scaling induced by the fixed decoder covariance.

For convenience, our subsequent rollout analysis is stated instead in the ambient physical-state norm $\|\cdot\|_{\mathcal U}$. The two geometries are compatible because the Cameron-Martin space is continuously embedded in $\mathcal U$:
\[
    \mathcal H_{\mathcal U}\hookrightarrow\mathcal U,
    \qquad
    \|\mathbf v\|_{\mathcal U}
    \leq
    C_{\mathcal U}
    \|\mathbf v\|_{\mathcal H_{\mathcal U}},
    \quad
    \mathbf v\in\mathcal H_{\mathcal U},
\]
for some $C_{\mathcal U}>0$. Thus, whenever the relevant residual lies in $\mathcal H_{\mathcal U}$, likelihood control also yields control in the ambient norm used for rollout errors. After finite-dimensional discretization, the isotropic decoder covariance used in our implementation reduces these likelihood terms to the squared Euclidean losses described in \S\ref{subsec:vamo_training}.

\paragraph{Deterministic rollout.} At inference time, we deploy the deterministic mean dynamics. Given an initial physical state $\mathbf U_0$, the rollout is initialized and propagated according to
\begin{equation}
    \widehat{\mathbf Z}_0=m_\phi(\mathbf U_0),
    \quad
    \widehat{\mathbf Z}_n=T_\theta(\widehat{\mathbf Z}_{n-1}),
    \quad
    \widehat{\mathbf U}_n=D_\psi(\widehat{\mathbf Z}_n),\qquad n=1,2,\cdots,T/\Delta t.
    \label{eq:deterministic_mean_rollout}
\end{equation}
Hence the stochasticity in \eqref{eq:functional_training_objective} is used during training, whereas the deployed solver follows the deterministic mean latent dynamics. Our analysis below explains how this stochastic training objective can nevertheless control the error accumulated by the deterministic rollout.

We identify two complementary mechanisms. First, sampling from the encoder and transition distributions replaces pointwise fitting by training over Gaussian neighborhoods in latent space, providing a noise-injection regularization effect. Second, the variational KL term directly controls the mismatch between the learned transition and the encoded distribution of the true next state. We study these two mechanisms separately below.

\subsubsection{Noise Injection and Local Stability}
\label{subsubsec:noise_injection}
For the analysis below, let $(\mathbf U_0^\star,\ldots,\mathbf U_N^\star)\sim \mu$ be a reference trajectory, and define its encoder-mean latent representation by
\[
    \mathbf Z_n^\star:=m_\phi(\mathbf U_n^\star).
\]
Specializing the stochastic pathway above to this trajectory, let
\[
    \widetilde{\mathbf Z}_n
    =\mathbf Z_n^\star+\Xi_n,
    \qquad
    \widetilde{\mathbf Z}_n^+
    =T_\theta(\widetilde{\mathbf Z}_n)
      +\alpha_\theta(\widetilde{\mathbf Z}_n)G_n,
\]
where
\[
    \Xi_n\sim\mathsf N(0,K_\phi(\mathbf U_n^\star)),
    \qquad
    G_n\sim\mathsf N(0,K),
\]
are conditionally independent given the reference trajectory.

We first quantify the local sensitivity probed by stochastic sampling in the predictive pathway. Noise injection during training has long been associated
with regularization of the learned mapping \citep{bishop1995training}, and has also been used in learned physical simulators to improve robustness under autoregressive rollout \citep{sanchez2020learning}. In our setting, the stochastic objective evaluates the transition and decoder over Gaussian neighborhoods of the reference latent trajectory. To characterize this effect, define the \textit{noisy one-step prediction error}
\begin{equation}
    \epsilon_n
    :=
    \left(
        \mathbb E
        \left[
            \left\|
                D_\psi(\widetilde{\mathbf Z}_n^+)
                -
                \mathbf U_{n+1}^\star
            \right\|_{\mathcal U}^2
        \right]
    \right)^{1/2},
    \label{eq:noisy_prediction_error}
\end{equation}
the \textit{encoder-noise sensitivity} of the latent transition
\begin{equation}
    \eta_n
    :=
    \left(
        \mathbb E
        \left[
            \left\|
                T_\theta(\widetilde{\mathbf Z}_n)
                -
                T_\theta(\mathbf Z_n^\star)
            \right\|_{\mathcal Z}^2
        \right]
    \right)^{1/2},
    \label{eq:encoder_noise_sensitivity}
\end{equation}
and the \textit{transition-noise sensitivity} of the decoder
\begin{equation}
    \rho_n
    :=
    \left(
        \mathbb E
        \left[
            \left\|
                D_\psi(\widetilde{\mathbf Z}_n^+)
                -
                D_\psi(T_\theta(\widetilde{\mathbf Z}_n))
            \right\|_{\mathcal U}^2
        \right]
    \right)^{1/2}.
    \label{eq:transition_noise_robustness}
\end{equation}
Here the expectations are taken jointly over the reference trajectory and the corresponding Gaussian perturbations. The quantity $\eta_n$ directly measures the sensitivity of the latent transition to perturbations of its encoded input, whereas $\rho_n$ measures the sensitivity of the decoder to perturbations around the predicted latent state.

The following result makes this local-stability interpretation explicit by bounding the two sensitivity terms in terms of the noise magnitude and the regularity of the learned maps.

\begin{proposition}[Gaussian sensitivity bounds]
\label{prop:gaussian_sensitivity_bounds}
Let the setting and notation of \S\ref{subsubsec:noise_injection} hold.
Assume that
\[
    \operatorname{Tr}\bigl(K_\phi(\mathbf U)\bigr)
    \leq \varsigma,
    \qquad
    \alpha_\theta(\mathbf Z)\leq \bar\alpha
\]
on the relevant regions, and suppose that $T_\theta:\mathcal Z\to\mathcal Z$
and $D_\psi:\mathcal Z\to\mathcal U$ are Lipschitz with constants $\Lambda_T$ and $\Lambda_D$, respectively. Then
\begin{equation}
    \eta_n
    \leq
    \Lambda_T\sqrt{\varsigma},
    \qquad
    \rho_n
    \leq
    \Lambda_D\bar\alpha
    \sqrt{\operatorname{Tr}(K)}.
    \label{eq:lipschitz_sensitivity_bounds}
\end{equation}
Suppose, in addition, that $T_\theta$ is Fr\'echet differentiable on the relevant encoder-mean neighborhoods and satisfies the uniform second-order remainder bound
\[
    \left\|
        T_\theta(\mathbf z+\mathbf h)
        -
        T_\theta(\mathbf z)
        -
        J_{T_\theta}(\mathbf z)\mathbf h
    \right\|_{\mathcal Z}
    \leq
    \frac{M_T}{2}\|\mathbf h\|_{\mathcal Z}^2.
\]
Then
\begin{equation}
\begin{aligned}
    \eta_n
    \leq\left(\bbE\left[\Vert J_{T_\theta}(\mathbf Z^\star_n)\Vert_\mathrm{op}^2\right]\varsigma\right)^{1/2}+\frac{\sqrt{3}M_T}{2}\varsigma.
    \label{eq:tau_jacobian_bound}
\end{aligned}
\end{equation}
Likewise, suppose that $D_\psi$ is Fr\'echet differentiable on the relevant transition-noise neighborhoods and satisfies
\[
    \left\|
        D_\psi(\mathbf z+\mathbf h)
        -
        D_\psi(\mathbf z)
        -
        J_{D_\psi}(\mathbf z)\mathbf h
    \right\|_{\mathcal U}
    \leq
    \frac{M_D}{2}\|\mathbf h\|_{\mathcal Z}^2.
\]
Then
\begin{equation}
\rho_n\leq\ol\alpha\left(\bbE\left[\Vert J_{D_\psi}(T_\theta(\wt{\mathbf Z}_n))\Vert_\mathrm{op}^2\right]\right)^{1/2}\sqrt{\operatorname{Tr} K}+\frac{\sqrt{3}M_D\ol\alpha^2}{2}\operatorname{Tr} K.
    \label{eq:rho_jacobian_banach}
\end{equation}
\end{proposition}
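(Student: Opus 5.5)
The plan is to reduce everything to Gaussian moment identities, conditioning on the reference trajectory and using the tower property. The basic facts are these. For a centered Gaussian $\Xi\sim\mathsf N(0,C)$ on a separable Hilbert space, $\mathbb E\|\Xi\|^2=\operatorname{Tr}(C)$ (for instance via the Karhunen--Lo\`eve expansion of Proposition~\ref{prop:general_spectral_residual_control}(b)). Moreover, $\mathbb E\|\Xi\|^4\le 3(\operatorname{Tr}C)^2$. To see the fourth-moment bound, expand $\|\Xi\|^2=\sum_i\lambda_i\xi_i^2$, so that
\[
\mathbb E\|\Xi\|^4=\sum_{i,j}\lambda_i\lambda_j\mathbb E[\xi_i^2\xi_j^2]=(\operatorname{Tr}C)^2+2\operatorname{Tr}(C^2)\le 3(\operatorname{Tr}C)^2 .
\]
The constant $\sqrt3$ in \eqref{eq:tau_jacobian_bound} and \eqref{eq:rho_jacobian_banach} comes from $(\mathbb E\|\mathbf h\|^4)^{1/2}\le\sqrt3\,\operatorname{Tr}C$. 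These facts are used for $\Xi_n$ with $C=K_\phi(\mathbf U_n^\star)$, where $\operatorname{Tr}\le\varsigma$, and for $G_n$ with $C=K$.

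\textbf{Lipschitz bounds.} For $\eta_n$, bound
\[
\|T_\theta(\mathbf Z_n^\star+\Xi_n)-T_\theta(\mathbf Z_n^\star)\|\le\Lambda_T\|\Xi_n\|,
\]
square, take the conditional expectation to get $\Lambda_T^2\operatorname{Tr}K_\phi(\mathbf U_n^\star)\le\Lambda_T^2\varsigma$, then average over the trajectory and take square roots. For $\rho_n$, note that $\widetilde{\mathbf Z}_n^+-T_\theta(\widetilde{\mathbf Z}_n)=\alpha_\theta(\widetilde{\mathbf Z}_n)G_n$. Hence
\[
\|D_\psi(\widetilde{\mathbf Z}_n^+)-D_\psi(T_\theta(\widetilde{\mathbf Z}_n))\|\le\Lambda_D\bar\alpha\|G_n\|.
\]
Since $G_n$ is independent of $\Xi_n$ and of the trajectory, $\mathbb E\|G_n\|^2=\operatorname{Tr}K$, which gives the second bound in \eqref{eq:lipschitz_sensitivity_bounds}.

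\textbf{Second-order bounds.} Decompose
\[
T_\theta(\mathbf Z_n^\star+\Xi_n)-T_\theta(\mathbf Z_n^\star)=J_{T_\theta}(\mathbf Z_n^\star)\Xi_n+R_n,\qquad \|R_n\|\le\tfrac{M_T}{2}\|\Xi_n\|^2 .
\]
By Minkowski's inequality in $L^2$, $\eta_n\le(\mathbb E\|J\Xi_n\|^2)^{1/2}+(\mathbb E\|R_n\|^2)^{1/2}$.

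For the linear term, condition on the trajectory. Then
\[
\mathbb E[\|J\Xi_n\|^2\mid\cdot]\le\|J_{T_\theta}(\mathbf Z_n^\star)\|_{\mathrm{op}}^2\operatorname{Tr}K_\phi\le\|J_{T_\theta}(\mathbf Z_n^\star)\|_{\mathrm{op}}^2\varsigma ,
\]
and averaging gives the first term of \eqref{eq:tau_jacobian_bound}. For the remainder,
\[
\mathbb E\|R_n\|^2\le\tfrac{M_T^2}{4}\,\mathbb E\|\Xi_n\|^4\le\tfrac{3M_T^2}{4}\varsigma^2 ,
\]
whose square root is $\tfrac{\sqrt3 M_T}{2}\varsigma$.

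The decoder bound follows the same pattern with base point $\mathbf z=T_\theta(\widetilde{\mathbf Z}_n)$ and increment $\mathbf h=\alpha_\theta(\widetilde{\mathbf Z}_n)G_n$. Condition on $(\text{trajectory},\Xi_n)$; then $G_n$ is still $\mathsf N(0,K)$ and $\alpha\le\bar\alpha$ is fixed. The linear term gives
\[
\bar\alpha^2\|J_{D_\psi}(T_\theta(\widetilde{\mathbf Z}_n))\|_{\mathrm{op}}^2\operatorname{Tr}K .
\]
The remainder gives $\tfrac{M_D^2}{4}\bar\alpha^4\,\mathbb E\|G_n\|^4\le\tfrac{3M_D^2\bar\alpha^4}{4}(\operatorname{Tr}K)^2$. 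Minkowski's inequality then yields \eqref{eq:rho_jacobian_banach}.

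\textbf{Main obstacle.} The delicate step is the Banach-valued decoder setting. $\mathcal U$ is only Banach, but the increment lives in $\mathcal Z$, a Hilbert space. So the bound $\|J\mathbf h\|_{\mathcal U}\le\|J\|_{\mathrm{op}}\|\mathbf h\|_{\mathcal Z}$ needs nothing beyond the operator norm, and all Gaussian moments are computed in $\mathcal Z$. Two technical points also need care. First, measurability and integrability of $\|J(\cdot)\|_{\mathrm{op}}^2$ should be assumed, or handled by allowing $+\infty$. Second, the remainder bounds are only assumed on the "relevant neighborhoods", so I would state that the Gaussian perturbations are taken to remain in these regions, or that the bounds hold globally, as the statement's phrasing implicitly does.
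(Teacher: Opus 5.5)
Your proof is correct and follows the paper's skeleton. The Lipschitz bounds are identical. The second-order bounds use the same linearization, the same conditioning (on the trajectory, and on $\Xi_n$ for the decoder), and the same fourth-moment estimate $\mathbb E\|\mathbf G\|^4\le 3(\operatorname{Tr}K)^2$; you derive it directly from $\sum_{i,j}\lambda_i\lambda_j\mathbb E[\xi_i^2\xi_j^2]$ rather than through cumulants. The one genuine difference is how you combine the linear and remainder terms, and there your route is the better one.

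The paper goes through Lemma~\ref{lem:local_jacobian_decoder_robustness}, which bounds the \emph{squared} sensitivity via $\|x+y\|^2\le 2\|x\|^2+2\|y\|^2$ and then drops the factor $2$ in its last line. As stated, that lemma fails in general. For example, take $F(h)=h+\frac{M}{2}h|h|$ on $\mathbb R$ at $\mathbf z=0$: the remainder hypothesis holds with equality, but the cross term $M a^3\mathbb E|G|^3>0$ pushes the second moment above the lemma's bound. If you repair the lemma and then apply $\sqrt{x+y}\le\sqrt x+\sqrt y$, you only obtain the proposition with an extra factor $\sqrt2$.

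Your Minkowski step, $(\mathbb E\|J\mathbf h+R\|^2)^{1/2}\le(\mathbb E\|J\mathbf h\|^2)^{1/2}+(\mathbb E\|R\|^2)^{1/2}$, gives exactly the constants in \eqref{eq:tau_jacobian_bound} and \eqref{eq:rho_jacobian_banach}. It also removes the lemma's unneeded Hilbert assumption on $\mathcal U$, since you only use $\|J\mathbf h\|_{\mathcal U}\le\|J\|_{\mathrm{op}}\|\mathbf h\|_{\mathcal Z}$ together with the real-valued Minkowski inequality.
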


Proposition~\ref{prop:gaussian_sensitivity_bounds} gives two complementary views of the sensitivity induced by Gaussian perturbations. The Lipschitz bounds in \eqref{eq:lipschitz_sensitivity_bounds} provide global control in terms of the overall noise magnitude, whereas the local estimates \eqref{eq:tau_jacobian_bound}-\eqref{eq:rho_jacobian_banach} show that, for small perturbations, the sensitivities are governed primarily by the local Fr\'echet derivatives of the transition and decoder. In particular, $\eta_n$ measures the response of $T_\theta$ to encoder perturbations, while $\rho_n$ measures the response of $D_\psi$ to perturbations around the predicted latent state. The higher-order terms vanish with the corresponding noise scales.

We can interpret how stochastic training acts on these sensitivity terms from an optimization perspective. The predictive objective evaluates
\[
    D_\psi\left(
        T_\theta(\mathbf Z_n^\star+\Xi_n)
        +
        \alpha_\theta(\widetilde{\mathbf Z}_n)\mathbf G_n
    \right)
\]
over Gaussian neighborhoods of the nominal latent trajectory rather than only at their centers. Large local variations of $T_\theta$ in the encoder-noise directions or of $D_\psi$ in the transition-noise directions can therefore increase the noisy prediction error. Through the reparameterized samples, back-propagation updates the model using errors evaluated throughout these neighborhoods. Thus, although the sensitivity terms are not explicitly penalized, noise-injection training implicitly encourages local stability in the regions and directions explored by the Gaussian perturbations.

\subsubsection{Variational Alignment and Rollout Control}\label{subsubsec:variational_kl_alignment}
The previous subsection studies how stochastic sampling regularizes the local sensitivity of the learned transition and decoder. We now turn to the variational KL term. At the distributional level, Proposition~\ref{prop:population_elbo_transition_consistency} shows that this term aligns the learned latent transition with the encoder-induced dynamics of true PDE trajectories. Under the Gaussian parameterization, this probabilistic alignment further yields quantitative control of the corresponding latent means. The following lemma makes this connection precise by bounding the discrepancy between the transition mean and the mean of a target latent distribution in terms of their KL divergence.

\begin{lemma}[KL control of latent means]
\label{lem:kl_latent_mean_control}
Let
\[
P_\theta(\cdot\mid\mathbf{Z})=\mathsf N\left(T_\theta(\mathbf{Z}),\alpha_\theta(\mathbf{Z})^2K\right),
\]
where $K$ is positive, self-adjoint, and trace-class on $\mathcal Z$. Let $Q$ be any probability measure on $\mathcal Z$ with finite second moment and mean
\[
    m_Q:=\int_{\mathcal Z} y\,\d Q(y).
\]
Assume that
\[
D_{\mathrm{KL}}\left(Q\,\Vert\,P_\theta(\cdot\,|\,\mathbf{Z})\right)<\infty .
\]
Then
\[
\left\|m_Q-T_\theta(\mathbf{Z})\right\|_{\mathcal Z}
\le\alpha_\theta(\mathbf{Z})\sqrt{
    2\|K\|_{\mathrm{op}} D_{\mathrm{KL}}\left(Q\,\Vert\,P_\theta(\cdot\,|\,\mathbf{Z})\right)
}.
\]
\end{lemma}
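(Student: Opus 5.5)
Write $P:=P_\theta(\cdot\mid\mathbf Z)=\mathsf N(T,\alpha^2K)$ with $T=T_\theta(\mathbf Z)$ and $\alpha=\alpha_\theta(\mathbf Z)$. The plan is to use the Donsker--Varadhan variational formula, which needs no density or covariance structure for $Q$:
\[
D_{\mathrm{KL}}(Q\Vert P)\ \ge\ \int f\,\d Q-\log\int e^{f}\,\d P
\]
for every measurable $f$ with $\int e^f\d P<\infty$ and $f\in L^1(Q)$. We test this only on affine functionals $f(y)=t\langle h,y-T\rangle_{\mathcal Z}$, with $h\in\mathcal Z$ a unit vector and $t\in\mathbb R$. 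Here $f\in L^1(Q)$ because $Q$ has a finite second moment. Finiteness of the KL makes $m_Q$ well defined and the inequality meaningful.

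\textbf{Step 1: moment generating function of $P$.} Under $P$, the scalar $\langle h,y-T\rangle$ is a centered real Gaussian with variance $\alpha^2\langle Kh,h\rangle\le\alpha^2\|K\|_{\mathrm{op}}$. Hence
\[
\log\int e^{f}\,\d P=\tfrac12 t^2\alpha^2\langle Kh,h\rangle\le \tfrac12 t^2\alpha^2\|K\|_{\mathrm{op}}.
\]

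\textbf{Step 2: expectation under $Q$.} By linearity and the definition of the Bochner mean, $\int f\,\d Q=t\langle h,m_Q-T\rangle$.

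\textbf{Step 3: optimize.} Combining the two steps gives, for all $t$,
\[
D_{\mathrm{KL}}(Q\Vert P)\ge t\langle h,m_Q-T\rangle-\tfrac12 t^2\alpha^2\|K\|_{\mathrm{op}}.
\]
Maximizing over $t$ yields $D_{\mathrm{KL}}(Q\Vert P)\ge \langle h,m_Q-T\rangle^2/(2\alpha^2\|K\|_{\mathrm{op}})$. If $m_Q=T$ there is nothing to prove. Otherwise choose $h=(m_Q-T)/\|m_Q-T\|_{\mathcal Z}$ and take square roots, which gives exactly the claimed bound.

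\textbf{Main obstacle.} The only delicate point is the justification of the Donsker--Varadhan inequality for unbounded $f$ in this infinite-dimensional setting. I would handle it directly. Finite KL gives $Q\ll P$ with density $g$. Jensen's inequality applied to $\log$ under $Q$ gives
\[
\int f\,\d Q-D_{\mathrm{KL}}(Q\Vert P)=\int \log\frac{e^f}{g}\,\d Q\le\log\int_{\{g>0\}} e^f\,\d P\le\log\int e^f\,\d P.
\]
Integrability follows because $f\in L^1(Q)$, and because $\log g$ has an integrable negative part under $Q$, by the standard bound $\int(\log g)^-\,\d Q\le e^{-1}$. Nothing about the Cameron--Martin structure of $Q$ is needed. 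This shows the lemma holds for arbitrary $Q$, in particular for the Gaussian encoder posterior $Q_\phi(\cdot\mid\mathbf U_{n+1})$.
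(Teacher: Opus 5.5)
Your proof is correct, and it takes a genuinely different route from the paper.

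\textbf{The paper's route.} It goes through a transportation inequality. It invokes Riedel's Gaussian $T_2$ inequality (Theorem~\ref{thm:riedel-gaussian-t2}). It then identifies the weak variance of $\mathsf N(m,C)$ on a Hilbert space with $\|C\|_{\mathrm{op}}$ (Lemma~\ref{lem:hilbert-gaussian-t2}), which yields
$W_2\bigl(Q,P_\theta(\cdot\mid\mathbf Z)\bigr)\le\alpha_\theta(\mathbf Z)\sqrt{2\|K\|_{\mathrm{op}}D_{\mathrm{KL}}(Q\Vert P_\theta(\cdot\mid\mathbf Z))}$.
Finally, it bounds the mean gap by $W_2$ via Jensen's inequality over couplings.

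\textbf{Your route.} You bypass transport entirely. You test the Gibbs--Donsker--Varadhan inequality only on the linear functionals $t\langle h,\cdot-T\rangle$, which reduces everything to the one-dimensional Gaussian moment generating function. In effect, this is the dual (Bobkov--G\"otze) form of a transport inequality restricted to linear test functions. Your Jensen-based justification for unbounded $f$ is sound: finite KL gives $\log g\in L^1(Q)$, using the $e^{-1}$ bound on the negative part.

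\textbf{Comparison of constants.} The constant you obtain, $\alpha^2\sup_{\|h\|=1}\langle Kh,h\rangle=\alpha^2\|K\|_{\mathrm{op}}$, is exactly the weak variance $\sigma_\gamma^2$ in the paper's argument. The two bounds therefore coincide. Both are sharp, with equality for $Q=\mathsf N(T+c\mathbf E_1,\alpha^2K)$, where $\mathbf E_1$ is a top eigenvector of $K$.

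\textbf{What each route buys.} Your route is self-contained and avoids a deep infinite-dimensional theorem. It is also more general: it applies verbatim to any transition kernel with mean $T_\theta(\mathbf Z)$ whose one-dimensional projections are sub-Gaussian with variance proxy $\alpha^2\|K\|_{\mathrm{op}}$. That property is strictly weaker than a $T_2$ inequality. The paper's route, at the cost of citing Riedel's result, yields the stronger intermediate $W_2$ bound. That bound controls the full distributional discrepancy, for example all Lipschitz observables, not just the means. Only the mean bound is used downstream, in Proposition~\ref{prop:population_kl_rollout}.
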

Lemma~\ref{lem:kl_latent_mean_control} converts the dynamic KL divergence into quantitative control of the discrepancy between latent means. In particular, when the transition distribution is Gaussian, a small KL divergence between the encoded next-state distribution and the predicted transition distribution implies that their mean states must also be close. This is the key step that connects the probabilistic alignment enforced by the variational objective to the deterministic mean dynamics used at inference. The proof of Lemma~\ref{lem:kl_latent_mean_control} relies on a Gaussian transportation inequality \citep{riedel2017transportation} and is deferred to Appendix~\S\ref{app:kl-aware-rollout}.

We now introduce the remaining population quantities needed for the rollout analysis. Motivated by Lemma~\ref{lem:kl_latent_mean_control}, for the random reference trajectory, define the dynamic KL error
\[
    \kappa_n:=\left(\mathbb E
        \left[D_{\mathrm{KL}}
            \left(Q_\phi(\cdot\mid\mathbf U_{n+1}^\star)\,\Vert\, P_\theta(\cdot\mid\widetilde{\mathbf Z}_n)
            \right)\right]
    \right)^{1/2}.
\]
Thus, $\kappa_n$ measures, at the population level, how well the learned transition distribution from the perturbed current latent state matches the encoder distribution of the true next physical state. It is the direct population counterpart of the dynamic KL term appearing in the training objective.

In addition, we do not assume exact reconstruction of the physical state from its encoder mean, and therefore define
\[
    \delta_n:=\left(\mathbb E
        \left[\|D_\psi(\mathbf Z_n^\star)-\mathbf U_n^\star\|_{\mathcal U}^2\right]
    \right)^{1/2}.
\]
The quantity $\delta_n$ measures the discrepancy introduced when the encoder-mean latent state is decoded back to the physical space. Here the expectations are taken over the reference trajectory and, where applicable, the encoder perturbation.

Together with the sensitivity quantities studied in \S\ref{subsubsec:noise_injection}, we have the necessary ingredients for analyzing deterministic autoregressive rollout. The following proposition characterizes how the corresponding one-step discrepancies propagate and accumulate over the rollout horizon.

\begin{proposition}[Population KL-aware rollout control with predictive refinement]\label{prop:population_kl_rollout}
Let the setting and notation of \S\ref{subsubsec:noise_injection} hold. Assume that
\[
    0<\alpha_\theta(\mathbf Z)\leq\bar\alpha
\]
on the relevant latent region, and that $T_\theta$ and $D_\psi$ are Lipschitz with constants $\Lambda_T$ and $\Lambda_D$, respectively. Then the following hold.

\begin{enumerate}[label=(\alph*),topsep=0pt,itemsep=3pt]

\item
The population latent one-step defect satisfies
\begin{equation}
    \left(
        \mathbb E
        \left[
            \left\|
                T_\theta(\mathbf Z_n^\star)
                -
                \mathbf Z_{n+1}^\star
            \right\|_{\mathcal Z}^2
        \right]
    \right)^{1/2}
    \leq
    \eta_n+\bar\alpha\sqrt{2\|K\|_{\mathrm{op}}}\,\kappa_n.
    \label{eq:population_latent_one_step}
\end{equation}

\item
For the deterministic mean rollout
\eqref{eq:deterministic_mean_rollout},
\begin{equation}
    \left(
        \mathbb E
        \left[
            \|\widehat{\mathbf U}_n-\mathbf U_n^\star\|_{\mathcal U}^2
        \right]
    \right)^{1/2}
    \leq\Lambda_D
    \sum_{j=0}^{n-1}\Lambda_T^{\,n-1-j}\left(\eta_j+\bar\alpha\sqrt{2\|K\|_{\mathrm{op}}}\,\kappa_j
    \right)
    +
    \delta_n.
    \label{eq:population_kl_physical_rollout}
\end{equation}

\item Incorporating the noisy predictive error gives the refined bound
\begin{equation}
\begin{aligned}
\left(\mathbb E\left[\|\widehat{\mathbf U}_n-\mathbf U_n^\star\|_{\mathcal U}^2\right]\right)^{1/2}
&\leq\Lambda_D
    \sum_{j=0}^{n-2}
    \Lambda_T^{\,n-1-j}\left(\eta_j+\bar\alpha\sqrt{2\|K\|_{\mathrm{op}}}\,\kappa_j\right)+\Lambda_D\eta_{n-1}
    \\
    &\qquad
    +\min\left\{ \Lambda_D\bar\alpha\sqrt{2\|K\|_{\mathrm{op}}}\,\kappa_{n-1}+\delta_n, \epsilon_{n-1}+\rho_{n-1}\right\},
\end{aligned}
\label{eq:population_refined_rollout}
\end{equation}
for every $n\geq 1$, where the sum is understood as zero when $n=1$.
\end{enumerate}
\end{proposition}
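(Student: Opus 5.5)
The plan is a triangle-inequality decomposition at the level of $L^2$ norms over the joint law of the reference trajectory and the Gaussian perturbations, followed by a discrete Gronwall-type unrolling.

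\textbf{Part (a).} For each reference pair, the encoder is Gaussian, so $Q_\phi(\cdot\mid\mathbf U_{n+1}^\star)$ has mean $\mathbf Z_{n+1}^\star=m_\phi(\mathbf U_{n+1}^\star)$ and finite second moment. I would split
\[
T_\theta(\mathbf Z_n^\star)-\mathbf Z_{n+1}^\star=\bigl(T_\theta(\mathbf Z_n^\star)-T_\theta(\widetilde{\mathbf Z}_n)\bigr)+\bigl(T_\theta(\widetilde{\mathbf Z}_n)-\mathbf Z_{n+1}^\star\bigr).
\]
The left-hand side does not depend on $\Xi_n$, so its $L^2$ norm over the joint law equals the quantity we want. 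Minkowski's inequality then bounds it by $\eta_n$ plus the $L^2$ norm of the second piece. For the second piece, I would apply Lemma~\ref{lem:kl_latent_mean_control} pointwise with $Q=Q_\phi(\cdot\mid\mathbf U_{n+1}^\star)$ and $\mathbf Z=\widetilde{\mathbf Z}_n$, which gives
\[
\|T_\theta(\widetilde{\mathbf Z}_n)-\mathbf Z_{n+1}^\star\|\le\bar\alpha\sqrt{2\|K\|_{\mathrm{op}}\,D_{\mathrm{KL}}}.
\]
Squaring, taking expectations and taking the square root yields $\bar\alpha\sqrt{2\|K\|_{\mathrm{op}}}\,\kappa_n$. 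If the KL is infinite on a positive-measure set, then $\kappa_n=\infty$ and the bound is trivial, so finiteness may be assumed.

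\textbf{Part (b).} Set $e_n:=(\mathbb E\|\widehat{\mathbf Z}_n-\mathbf Z_n^\star\|_{\mathcal Z}^2)^{1/2}$. Since $\widehat{\mathbf Z}_0=m_\phi(\mathbf U_0^\star)=\mathbf Z_0^\star$, we have $e_0=0$. Writing
\[
\widehat{\mathbf Z}_{n+1}-\mathbf Z_{n+1}^\star=\bigl(T_\theta(\widehat{\mathbf Z}_n)-T_\theta(\mathbf Z_n^\star)\bigr)+\bigl(T_\theta(\mathbf Z_n^\star)-\mathbf Z_{n+1}^\star\bigr),
\]
Lipschitz continuity of $T_\theta$ together with part (a) gives
\[
e_{n+1}\le\Lambda_T e_n+\eta_n+\bar\alpha\sqrt{2\|K\|_{\mathrm{op}}}\,\kappa_n .
\]
Unrolling this recursion gives
\[
e_n\le\sum_{j<n}\Lambda_T^{n-1-j}\bigl(\eta_j+\bar\alpha\sqrt{2\|K\|_{\mathrm{op}}}\,\kappa_j\bigr).
\]
In physical space I would write
\[
\widehat{\mathbf U}_n-\mathbf U_n^\star=\bigl(D_\psi(\widehat{\mathbf Z}_n)-D_\psi(\mathbf Z_n^\star)\bigr)+\bigl(D_\psi(\mathbf Z_n^\star)-\mathbf U_n^\star\bigr),
\]
which is bounded by $\Lambda_D e_n+\delta_n$. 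This is exactly \eqref{eq:population_kl_physical_rollout}.

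\textbf{Part (c).} The idea is to treat only the last step differently. First write
\[
\widehat{\mathbf U}_n-\mathbf U_n^\star=\bigl(D_\psi(T_\theta(\widehat{\mathbf Z}_{n-1}))-D_\psi(T_\theta(\mathbf Z_{n-1}^\star))\bigr)+\bigl(D_\psi(T_\theta(\mathbf Z_{n-1}^\star))-\mathbf U_n^\star\bigr).
\]
The first term is bounded by $\Lambda_D\Lambda_T e_{n-1}$, which reproduces the sum over $j\le n-2$ with the stated powers of $\Lambda_T$. For the second term I would give two bounds and take their minimum, since the common prefix is the same.

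\emph{Route 1.} Insert $D_\psi(\mathbf Z_n^\star)$. This gives
\[
\Lambda_D\|T_\theta(\mathbf Z_{n-1}^\star)-\mathbf Z_n^\star\|+\delta_n\le\Lambda_D\eta_{n-1}+\Lambda_D\bar\alpha\sqrt{2\|K\|_{\mathrm{op}}}\,\kappa_{n-1}+\delta_n.
\]

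\emph{Route 2.} Insert $D_\psi(T_\theta(\widetilde{\mathbf Z}_{n-1}))$ and $D_\psi(\widetilde{\mathbf Z}_{n-1}^+)$. This gives
\[
\Lambda_D\|T_\theta(\mathbf Z_{n-1}^\star)-T_\theta(\widetilde{\mathbf Z}_{n-1})\|+\|D_\psi(T_\theta(\widetilde{\mathbf Z}_{n-1}))-D_\psi(\widetilde{\mathbf Z}^+_{n-1})\|+\|D_\psi(\widetilde{\mathbf Z}^+_{n-1})-\mathbf U_n^\star\|.
\]
In $L^2$ these three terms are at most $\Lambda_D\eta_{n-1}+\rho_{n-1}+\epsilon_{n-1}$.

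Both routes share the term $\Lambda_D\eta_{n-1}$, which factors out of the minimum and gives \eqref{eq:population_refined_rollout}.

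\textbf{Main obstacle.} The main care point is that the quantities live on different probability spaces: the rollout is deterministic given the trajectory, while $\eta$, $\rho$, $\epsilon$ and $\kappa$ involve the auxiliary noise. I would handle this by working throughout in $L^2$ of the joint law of $(\mathbf U^\star,\Xi,G)$. Deterministic-given-trajectory quantities have the same norm there, so Minkowski's inequality applies uniformly. The other point to verify is measurability and finiteness, so that Lemma~\ref{lem:kl_latent_mean_control} can be applied pointwise and then integrated.
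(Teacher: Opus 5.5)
Your proposal is correct and follows essentially the same argument as the paper: part (a) uses Minkowski plus Lemma~\ref{lem:kl_latent_mean_control} applied pointwise at $\widetilde{\mathbf Z}_n$, and part (b) uses the recursion $e_{n+1}\le\Lambda_T e_n+\eta_n+\bar\alpha\sqrt{2\|K\|_{\mathrm{op}}}\,\kappa_n$ from $e_0=0$ followed by the decoder triangle inequality. Part (c) uses the same three-term insertion the paper uses to bound the clean decoded one-step error by $\Lambda_D\eta_{n-1}+\rho_{n-1}+\epsilon_{n-1}$; your Route 1 is just the $j=n-1$ summand of (b) split off (the paper reads it directly from (b)), so the minimum over the shared prefix is taken exactly as in the paper.
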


\begin{remark}[Interpretation of the rollout bound]
\label{rem:population_rollout_interpretation}
The terms appearing in Proposition~\ref{prop:population_kl_rollout} are controlled by different components of the training objective. The dynamic KL error $\kappa_n$ is the population counterpart of the latent consistency term $\mathcal L_{\mathrm{KL}}$, while the reconstruction error $\delta_n$ is associated with the encoder-decoder reconstruction objective. The noisy prediction error $\epsilon_n$ is controlled through $\mathcal L_{\mathrm{pred}}$, and the sensitivity quantities $\eta_n$ and $\rho_n$ are studied in Proposition~\ref{prop:gaussian_sensitivity_bounds}. As discussed in \S\ref{subsubsec:noise_injection}, these sensitivity terms are not explicitly optimized, but stochastic training evaluates the model over Gaussian neighborhoods and thereby implicitly encourages local stability in the corresponding perturbation directions.

Part~(b) makes explicit how the two terms in the conditional ELBO \eqref{eq:condnegelbo} enter deterministic rollout. The static reconstruction term is reflected by the reconstruction error $\delta_n$, which measures the discrepancy between the true physical state and the decoder applied to its encoder-mean representation. The dynamic KL term is reflected by $\kappa_n$, which controls the mismatch between the predicted transition mean and the encoder mean of the true next state. The additional sensitivity term $\eta_n$ accounts for the fact that the KL objective is evaluated at a perturbed encoder input, whereas deterministic rollout propagates the encoder mean. These one-step latent discrepancies are accumulated through the mean transition with geometric factors $\Lambda_T^{n-1-j}$, while $\delta_n$ enters when the propagated latent state is decoded back to the physical space. Thus, the two principal terms of the conditional ELBO reappear naturally in the rollout error bound, linking variational training to deterministic long-horizon accuracy.

Part~(c) additionally incorporates the predictive objective into the same deterministic rollout analysis. At the final prediction step, the physical error admits two complementary upper bounds: one obtained from the KL-controlled latent mean discrepancy together with reconstruction,
\[
    \Lambda_D\bar\alpha\sqrt{2\|K\|_{\mathrm{op}}}\,
    \kappa_{n-1}+\delta_n,
\]
and the other from the noisy predictive error together with transition-noise sensitivity,
\[
    \epsilon_{n-1}+\rho_{n-1}.
\]
Taking the minimum simply selects the tighter analytical estimate for the same deterministic prediction and therefore sharpens the physical-space rollout bound. In particular, this refinement reflects the complementary roles of the variational and predictive components of the training objective.

Finally, we discuss the error amplification in rollouts and characterize the growth rate of the rollout error bound more explicitly. Suppose that the one-step error and sensitivity terms remain uniformly bounded over the rollout horizon:
\[
\eta_j \leq \bar{\eta}, \qquad
\kappa_j \leq \bar{\kappa}, \qquad
\delta_j \leq \bar{\delta}.
\]
Then Proposition~\ref{prop:population_kl_rollout} (b) implies
\[
\left(
\mathbb{E}
\|\widehat{\mathbf{U}}_n - \mathbf{U}_n^\star\|_{\mathcal{U}}^2
\right)^{1/2}
\leq
\begin{cases}
\displaystyle
\Lambda_D
\left(
\bar{\eta}+\bar{\alpha}\sqrt{2\|K\|_{\mathrm{op}}}\,\bar{\kappa}
\right)
\frac{1-\Lambda_T^n}{1-\Lambda_T}
+\bar{\delta},
& \Lambda_T < 1,\\[1.2em]
\displaystyle
n\,\Lambda_D
\left(
\bar{\eta}
+
\bar{\alpha}\sqrt{2\|K\|_{\mathrm{op}}}\,\bar{\kappa}
\right)
+\bar{\delta},
& \Lambda_T = 1,\\[1.2em]
\displaystyle
\Lambda_D
\left(
\bar{\eta}+\bar{\alpha}\sqrt{2\|K\|_{\mathrm{op}}}\,\bar{\kappa}
\right)
\frac{\Lambda_T^n - 1}{\Lambda_T - 1}
+\bar{\delta},
& \Lambda_T > 1.
\end{cases}
\]
Therefore, under uniformly controlled error and sensitivity terms, the bound remains uniformly bounded for $\Lambda_T < 1$, grows at most linearly for $\Lambda_T = 1$, and permits geometric growth for $\Lambda_T > 1$. The result thus separates two ingredients of long-horizon accuracy: the magnitude of the local one-step discrepancies and their amplification through the learned latent transition.
\end{remark}

While our main analysis applies deterministic mean-map rollout at inference, Appendix~\S\ref{app:stochastic_envelope} gives a complementary high-probability characterization of stochastic rollouts, showing that their latent deviations from the mean-map trajectory remain within a two-sided Gaussian envelope.

\paragraph{Generic autoregressive amplification.} To distinguish the generic effect of autoregressive error accumulation from the mechanisms specific to variational latent dynamics, we consider a generic deterministic predictor $F_\vartheta:\mathcal U\to\mathcal U$, which is deployed recursively as
\[
\widehat{\mathbf U}_0=\mathbf U_0^\star,
\qquad
\widehat{\mathbf U}_{n+1}
=
F_\vartheta(\widehat{\mathbf U}_n).
\]
Define its population one-step prediction error by
\[
\epsilon_n^{\mathrm{dir}}
:=
\left(\mathbb E
\left[\|F_\vartheta(\mathbf U_n^\star)-\mathbf U_{n+1}^\star\|_{\mathcal U}^2\right]\right)^{1/2}.
\]
The following result isolates how these one-step errors accumulate solely through repeated application of the learned predictor.

\begin{proposition}[Generic autoregressive error amplification]
\label{prop:generic_rollout}
Suppose that $F_\vartheta$ is Lipschitz on the relevant region with constant
$\Lambda_F$. Then
\begin{equation}
\left(
\mathbb E
\left[\|\widehat{\mathbf U}_n-\mathbf U_n^\star\|_{\mathcal U}^2
\right]\right)^{1/2}
\leq\sum_{j=0}^{n-1}\Lambda_F^{\,n-1-j}\epsilon_j^{\mathrm{dir}}.
\label{eq:generic_rollout}
\end{equation}
\end{proposition}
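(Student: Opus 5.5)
The plan is a one-step error recursion in the root-mean-square norm $\|X\|_{L^2}:=(\mathbb E\|X\|_{\mathcal U}^2)^{1/2}$, followed by unrolling. Define the rollout error $e_n:=\|\widehat{\mathbf U}_n-\mathbf U_n^\star\|_{L^2}$, with $e_0=0$ because $\widehat{\mathbf U}_0=\mathbf U_0^\star$.

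For each $n\ge 0$, add and subtract $F_\vartheta(\mathbf U_n^\star)$:
\[
\widehat{\mathbf U}_{n+1}-\mathbf U_{n+1}^\star
=\bigl(F_\vartheta(\widehat{\mathbf U}_n)-F_\vartheta(\mathbf U_n^\star)\bigr)
+\bigl(F_\vartheta(\mathbf U_n^\star)-\mathbf U_{n+1}^\star\bigr).
\]
The pointwise Lipschitz property gives $\|F_\vartheta(\widehat{\mathbf U}_n)-F_\vartheta(\mathbf U_n^\star)\|_{\mathcal U}\le\Lambda_F\|\widehat{\mathbf U}_n-\mathbf U_n^\star\|_{\mathcal U}$. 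I will take the $L^2$ norm over the joint law of the reference trajectory and use Minkowski's inequality. The second term has $L^2$ norm exactly $\epsilon_n^{\mathrm{dir}}$, so
\[
e_{n+1}\le \Lambda_F e_n+\epsilon_n^{\mathrm{dir}}.
\]

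Unrolling this recursion by induction from $e_0=0$ gives $e_n\le\sum_{j=0}^{n-1}\Lambda_F^{\,n-1-j}\epsilon_j^{\mathrm{dir}}$, which is \eqref{eq:generic_rollout}. The inductive step is immediate:
\[
\Lambda_F\sum_{j=0}^{n-1}\Lambda_F^{n-1-j}\epsilon_j^{\mathrm{dir}}+\epsilon_n^{\mathrm{dir}}=\sum_{j=0}^{n}\Lambda_F^{n-j}\epsilon_j^{\mathrm{dir}}.
\]

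The only delicate point is the qualifier ``on the relevant region.'' The Lipschitz bound is needed at the pairs $(\widehat{\mathbf U}_n,\mathbf U_n^\star)$, and I will interpret the hypothesis as covering them, as the paper does in Proposition~\ref{prop:population_kl_rollout}. Measurability and finiteness of the second moments follow from the continuity of $F_\vartheta$ and the finiteness of the $\epsilon_j^{\mathrm{dir}}$, also by induction; if some $\epsilon_j^{\mathrm{dir}}$ is infinite, the bound holds trivially.
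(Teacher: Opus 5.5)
Your proposal is correct and follows essentially the same argument as the paper: insert $F_\vartheta(\mathbf U_n^\star)$, apply the Lipschitz bound and Minkowski's inequality to get $e_{n+1}\le\Lambda_F e_n+\epsilon_n^{\mathrm{dir}}$, then unroll from $e_0=0$. Your extra remarks on the ``relevant region'' qualifier and on measurability and finiteness are harmless additions that the paper leaves implicit.
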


Proposition~\ref{prop:generic_rollout} applies to deterministic autoregressive predictors such as the direct neural-operator baseline considered in our subsequent experiments. In particular, \eqref{eq:generic_rollout} shows that geometric error amplification is a generic feature of autoregressive prediction rather than a consequence of the variational formulation. Although the variational latent model evolves autoregressively in latent rather than physical space, Proposition~\ref{prop:population_kl_rollout} exhibits the same amplification mechanism through powers of $\Lambda_T$. Thus, the preceding analysis characterizes how the variational training  controls the one-step quantities that are subsequently amplified during latent rollout.

The sensitivity analysis in \S\ref{subsubsec:noise_injection} suggests a complementary role for stochastic training in the autoregressive amplification mechanism. Both the generic deterministic bound and the variational rollout bound contain geometric amplification factors like $\Lambda_F$ and $\Lambda_T$, governed respectively by the stability of the learned one-step maps. In the variational latent formulation, encoder perturbations expose the transition map to neighborhoods of the encoded trajectory, while transition perturbations probe the decoder around predicted latent states. Proposition~\ref{prop:gaussian_sensitivity_bounds} shows that these sensitivities are locally governed by the Fréchet derivatives of the corresponding maps. Sharp local fluctuations in rollout-relevant neighborhoods can therefore increase the noisy predictive loss and are implicitly discouraged through reparameterized back-propagation. In this way, variational training can regularize the effective local expansion encountered along the rollout trajectory and thereby mitigate geometric error amplification over long horizons.

\section{Variational Autoencoding Markov Operator}
\label{sec:vamo}

The variational latent dynamics developed in
\S\ref{sec:var_latent_dyn} are formulated directly at the function-space level and are therefore independent of a particular spatial discretization or neural architecture. In this section, we introduce the \emph{Variational Autoencoding Markov Operator} (VAMO), a discretized neural-operator realization of this framework for time-dependent PDEs. VAMO realizes the encoder-transition-decoder structure of \S\ref{sec:abstract_vld} using spatially resolved physical and latent fields. Figure~\ref{fig:vamo_architecture} summarizes the variational training procedure and the corresponding mean latent rollout. We first construct its deterministic backbone from a resolution-preserving encoder and decoder together with a residual neural-operator transition in \S\ref{subsec:vamo_architecture}. We then augment this backbone with structured Gaussian latent perturbations in \S\ref{subsec:structured_latent_perturbations} and formulate the resulting training objective in \S\ref{subsec:vamo_training}.

\subsection{Spatially Resolved Architecture}
\label{subsec:vamo_architecture}
To obtain a computational realization of the function-space framework, we first discretize the physical and latent fields on a finite spatial grid and specify the deterministic mean maps underlying VAMO. This subsection focuses on the encoder mean, latent mean transition, and decoder mean that form the deterministic backbone of the model; the Gaussian covariance structure and stochastic sampling used for variational training are introduced separately in \S\ref{subsec:structured_latent_perturbations}.

\begin{figure}[H]
  \centering
  \resizebox{\linewidth}{!}{
\begin{tikzpicture}[
  x=1cm,y=1cm,font=\fontsize{10}{11}\selectfont,>=Latex,
  physical/.style={circle,draw=black!65,fill=black!3,line width=.8pt,
    minimum size=1.03cm,inner sep=2pt},
  decoded/.style={physical,draw=teal!65!black,fill=teal!5},
  latent/.style={rounded corners=5pt,draw=blue!55!black,fill=blue!4,
    line width=.8pt,minimum height=1.03cm,minimum width=2.7cm,
    inner sep=6pt,align=center},
  flow/.style={->,line width=.85pt,draw=black!75},
  enc/.style={flow,draw=blue!65!black},
  dec/.style={flow,draw=teal!70!black},
  loss/.style={draw=orange!70!black,dashed,line width=.85pt},
  noise/.style={->,draw=violet!75!black,line width=.7pt},
  labeltext/.style={font=\fontsize{9}{10.5}\selectfont,align=center,fill=white,inner sep=3pt},
  rowlabel/.style={anchor=east,font=\fontsize{9}{10.5}\selectfont\sffamily,text=black!60,align=right},
  title/.style={anchor=west,font=\bfseries\sffamily},
  note/.style={font=\fontsize{10}{10.5}\selectfont,align=center,text=black!70}
]
\node[title] at (-.3,8.0) {(a) Variational training};
\node[anchor=east,font=\fontsize{10}{10.5}\selectfont\sffamily,text=black!60] at (16.5,8.0)
  {Adjacent reference states};
\node[rowlabel] at (.5,6.0) {Decoded\\physical states};
\node[rowlabel] at (.5,3.0) {Latent\\representations};
\node[rowlabel] at (.5,0) {Reference\\physical states};
\node[physical] (u) at (3.0,0) {$\mathbf u_n$};
\node[physical] (unext) at (13.4,0) {$\mathbf u_{n+1}$};
\draw[flow,draw=black!35] (u) -- node[labeltext,text=black!55] {Reference evolution} (unext);
\node[latent] (q) at (3.0,3.0)
  {$q_n=Q_\phi(\cdot\mid\mathbf u_n)$\\[4pt]$\mathbf z_n\sim q_n$};
\node[latent,minimum width=3.15cm] (p) at (8.2,3.0)
  {$p_n=P_\theta(\cdot\mid\mathbf z_n)$\\[4pt]$\mathbf z_n^+\sim p_n$};
\node[latent,minimum width=3.15cm] (qn) at (13.4,3.0)
  {$q_{n+1}=$\\[4pt]$Q_\phi(\cdot\mid\mathbf u_{n+1})$};
\draw[enc] (u) -- node[labeltext,right=4pt] {Encoder\\$Q_\phi$} (q);
\draw[enc] (unext) -- node[labeltext,right=4pt] {Shared encoder\\$Q_\phi$} (qn);
\draw[flow] (q.east) -- node[labeltext,above=6pt] {Transition\\$P_\theta$} (p.west);
\node[note] at (5.6,2.10) {$T_\theta=I+\mathcal G_\theta^{\mathrm{FNO}}$};
\draw[loss] ($(p.east)+(0,.18)$) --
  node[labeltext,above=5pt,text=orange!70!black] {$\mathcal L_{\mathrm{KL}}$}
  node[labeltext,below=5pt,text=orange!70!black,font=\fontsize{7}{8.5}\selectfont,inner sep=1pt]
    {$D_{\mathrm{KL}}(q_{n+1}\Vert p_n)$}
  ($(qn.west)+(0,.18)$);
\node[decoded] (rec) at (3.0,6.0) {$\widetilde{\mathbf u}_n$};
\node[decoded] (pred) at (8.2,6.0) {$\widehat{\mathbf u}_{n+1}$};
\draw[dec] (q.north) -- node[labeltext,left=3pt] {$D_\psi$} (rec.south);
\draw[dec] (p.north) -- node[labeltext,left=3pt] {$D_\psi$} (pred.south);
\node[note,above=5pt] at (rec.north) {Reconstruction};
\node[note,above=5pt] at (pred.north) {Prediction};
\node[labeltext,text=violet!75!black] (ne) at (5.15,4.85)
  {$S_\phi(\mathbf u_n)K_{\mathrm{enc}}^{1/2}\boldsymbol\xi$\\[-1pt]
   {\fontsize{8}{9.5}\selectfont Pointwise amplitude}};
\draw[noise] (ne.south west) -- (q.north east);
\node[labeltext,text=violet!75!black] (nt) at (10.5,4.65)
  {$\alpha_\theta(\mathbf z_n)K_{\mathrm{tr}}^{1/2}\boldsymbol\xi^{\mathrm{tr}}$\\[-1pt]
   {\fontsize{8}{9.5}\selectfont Scalar amplitude}};
\draw[noise] (nt.south west) -- (p.north east);
\draw[loss] (rec.west) -- (1.15,6.0) --
  node[labeltext,rotate=90,text=orange!70!black] {$\mathcal L_{\mathrm{rec}}$}
  (1.15,0) -- (u.west);
\draw[loss] (pred.east) -- (16.05,6.0) --
  node[labeltext,rotate=90,text=orange!70!black] {$\mathcal L_{\mathrm{pred}}$}
  (16.05,0) -- (unext.east);
\node[font=\fontsize{10}{11}\selectfont] at (8.1,-.95)
  {$\mathcal L=\mathcal L_{\mathrm{pred}}+\beta\mathcal L_{\mathrm{KL}}
      +\lambda_{\mathrm{rec}}\mathcal L_{\mathrm{rec}}$};
\draw[black!15,line width=.7pt] (-.3,-1.55) -- (16.5,-1.55);
\node[title] at (-.3,-2.20) {(b) Mean latent rollout};
\node[anchor=east,font=\fontsize{10}{10}\selectfont\sffamily,text=black!60] at (16.5,-2.20)
  {Encode once; noise disabled};
\node[rowlabel] at (.5,-3.8) {Predicted\\physical states};
\node[rowlabel] at (.5,-6.3) {Initial state /\\latent evolution};
\node[physical] (ic) at (1.8,-6.3) {$\mathbf u_0$};
\node[latent,minimum width=1.45cm] (z0) at (4.5,-6.3) {$\widehat{\mathbf z}_0$};
\node[latent,minimum width=1.45cm] (z1) at (7.3,-6.3) {$\widehat{\mathbf z}_1$};
\node[latent,minimum width=1.45cm] (z2) at (10.1,-6.3) {$\widehat{\mathbf z}_2$};
\node[latent,minimum width=1.45cm] (zN) at (15.1,-6.3) {$\widehat{\mathbf z}_N$};
\node (dots) at (12.6,-6.3) {$\cdots$};
\draw[enc] (ic) -- node[labeltext,above=4pt] {$m_\phi$} (z0);
\draw[flow] (z0) -- node[labeltext,above=4pt] {$T_\theta$} (z1);
\draw[flow] (z1) -- node[labeltext,above=4pt] {$T_\theta$} (z2);
\draw[flow] (z2) -- node[labeltext,above=4pt] {$T_\theta$} (dots);
\draw[flow] (dots) -- node[labeltext,above=4pt] {$T_\theta$} (zN);
\node[decoded] (outN) at (15.1,-3.8) {$\widehat{\mathbf u}_N$};
\node[decoded] (out1) at (7.3,-3.8) {$\widehat{\mathbf u}_1$};
\node[decoded] (out2) at (10.1,-3.8) {$\widehat{\mathbf u}_2$};
\node at (12.6,-3.8) {$\cdots$};
\draw[dec] (zN) -- node[labeltext,right=3pt] {$D_\psi$} (outN);
\draw[dec] (z1) -- node[labeltext,right=3pt] {$D_\psi$} (out1);
\draw[dec] (z2) -- node[labeltext,right=3pt] {$D_\psi$} (out2);
\node[note] at (4.5,-7.25) {$\widehat{\mathbf z}_0=m_\phi(\mathbf u_0)$};
\node[note] at (11.65,-7.25)
  {$\widehat{\mathbf z}_{n+1}=T_\theta(\widehat{\mathbf z}_n),\quad
    \widehat{\mathbf u}_{n+1}=D_\psi(\widehat{\mathbf z}_{n+1})$};
\end{tikzpicture}}
  \caption{Architecture and deployment of VAMO.
  (a) Variational training on an adjacent reference pair $(\mathbf u_n,\mathbf u_{n+1})$. The encoder samples $\mathbf z_n$ from $q_n=Q_\phi(\cdot\mid\mathbf u_n)$,
  and the latent transition samples $\mathbf z_n^+$ from $p_n=P_\theta(\cdot\mid\mathbf z_n)$. A shared decoder produces the reconstruction $\widetilde{\mathbf u}_n=D_\psi(\mathbf z_n)$ and prediction $\widehat{\mathbf u}_{n+1}=D_\psi(\mathbf z_n^+)$.
  Structured Gaussian perturbations act at both latent sampling stages; the KL term aligns the transition distribution with the encoded next-state distribution $q_{n+1}=Q_\phi(\cdot\mid\mathbf u_{n+1})$. Dashed lines indicate loss comparisons, with expectations as defined in \S\ref{subsec:vamo_training}.
  (b) Mean latent rollout encodes the initial condition once, repeatedly applies $T_\theta$, and decodes the evolved latent states through step $N$ without sampling or re-encoding physical predictions. Circles denote physical states; rounded rectangles denote spatially resolved latent representations and their distributions.}
  \label{fig:vamo_architecture}
\end{figure}

Let $\{x_1,x_2,\cdots,x_{N_\mathrm{res}}\}\subset\Omega$ denote a spatial grid on the domain $\Omega$ with $N_{\mathrm{res}}$ discretization points. Evaluating a physical field with $C_u$ channels and its latent counterpart with $C_z$ channels on this grid gives
\[ 
\mathbf u_n \in \mathbb R^{C_u\times N_{\mathrm{res}}}, \quad
\mathbf z_n \in \mathbb R^{C_z\times N_{\mathrm{res}}}. 
\] 
Thus, spatial discretization converts the function-space variables $\mathbf U_n\in\mathcal U$ and $\mathbf Z_n\in\mathcal Z$ into finite-dimensional arrays while preserving their spatial organization. In particular, the latent representation remains a field over the computational grid rather than being collapsed into a single latent vector. Under this discretization, the encoder, transition, and decoder mean maps from \S\ref{sec:abstract_vld} are represented by 
\[ 
m_\phi: \mathbb R^{C_u\times N_{\mathrm{res}}} \rightarrow \mathbb R^{C_z\times N_{\mathrm{res}}}, \quad 
T_\theta: \mathbb R^{C_z\times N_{\mathrm{res}}} \rightarrow \mathbb R^{C_z\times N_{\mathrm{res}}},\quad\text{and}\quad
D_\psi: \mathbb R^{C_z\times N_{\mathrm{res}}} \rightarrow \mathbb R^{C_u\times N_{\mathrm{res}}}. 
\] 
The encoder $m_\phi$ first lifts the physical state to a spatially resolved latent representation, which is subsequently evolved by $T_\theta$ and mapped back to the physical variables through $D_\psi$. We implement the encoder and decoder using resolution-preserving convolutional residual networks \citep{he2016deep}, while the latent dynamics are modeled by a Fourier neural operator \citep{li2020fourier}. Specifically, the mean transition takes the residual form 
\begin{equation} 
T_\theta(\mathbf z) = \mathbf z + \mathcal G_\theta^{\mathrm{FNO}}(\mathbf z), \label{eq:vamo_residual_transition} 
\end{equation} 
where $\mathcal G_\theta^{\mathrm{FNO}}$ is an FNO acting on the latent feature field. The residual parameterization represents one-step evolution through an increment of the current latent state, consistent with the time-discretized PDE flow map, while the Fourier layers provide nonlocal interactions across the spatial domain. The three mean maps therefore define the deterministic backbone 
\[ 
\mathbf u_n \xrightarrow{\,m_\phi\,} \mathbf z_n \xrightarrow{\,T_\theta\,} \mathbf z_{n+1} \xrightarrow{\,D_\psi\,} \mathbf u_{n+1}. 
\] 
VAMO augments this backbone with state-dependent Gaussian perturbations of the encoded and propagated latent fields. We next describe how the corresponding covariance structures are constructed on the discretized grid.

\subsection{Structured Gaussian Latent Perturbations}
\label{subsec:structured_latent_perturbations}

The deterministic backbone in \S\ref{subsec:vamo_architecture} specifies the mean evolution of the latent state. To realize the variational formulation, VAMO augments this evolution with Gaussian perturbations whose covariance preserves the spatial structure of the latent field. Rather than injecting independent noise at individual grid points, we generate correlated perturbations through fixed spectral covariance operators and modulate their amplitudes according to the current state.

\paragraph{Spectral covariance structure.}
We first describe the covariance model shared by the encoder and transition perturbations. On a periodic spatial grid, let $K_{\ell,s}$ denote the discrete analogue of the Laplacian-resolvent covariance
\[
    K_{\ell,s}=(I-\ell^2\Delta)^{-s},
\]
which is diagonal in the Fourier basis. Its square root therefore acts as
\begin{equation}
    \widehat{K_{\ell,s}^{1/2}\boldsymbol\xi}(k)
    =
    \left(1+\ell^2|k|^2\right)^{-s/2}
    \widehat{\boldsymbol\xi}(k),
    \label{eq:discrete_spectral_filter}
\end{equation}
where $\boldsymbol\xi$ is a standard Gaussian field and $k$ denotes the discrete spatial frequency. Thus, $\ell>0$ controls the correlation length of the perturbation, while $s>0$ determines the spectral decay at high frequencies. This construction is the finite-dimensional counterpart of the Laplacian-resolvent Gaussian covariance and Sobolev geometry discussed in \S\ref{sec:functional_gaussian_latent_markov}.

\paragraph{Complementary encoder and transition perturbations.}
For the Gaussian kernels, we use separate covariance operators
\[
    K_{\mathrm{enc}}=K_{\ell_{\mathrm{enc}},s},
    \quad
    K_{\mathrm{tr}}=K_{\ell_{\mathrm{tr}},s}
\]
for the encoder and latent transition, with correlation lengths chosen independently. In practice, we use $\ell_{\mathrm{enc}}<\ell_{\mathrm{tr}}$ and combine these covariance structures with different state-dependent modulations.

\begin{enumerate}[topsep=2pt]
\item\textit{State-dependent encoder distribution.} In addition to the encoder mean $m_\phi(\mathbf u)$, the encoder network produces a pointwise log-variance modulation field
\[
    \ell_\phi(\mathbf u)
    \in
    \mathbb R^{C_z\times N_{\mathrm{res}}}.
\]
We define the corresponding standard-deviation multiplier
\[
    \sigma_\phi(\mathbf u):=\exp\left(\frac12\ell_\phi(\mathbf u)\right),
\]
and let $S_\phi(\mathbf u)$ denote pointwise multiplication by $\sigma_\phi(\mathbf u)$. A latent state is generated via the reparameterization
\begin{equation}
    \mathbf z
    =
    m_\phi(\mathbf u)
    +
    S_\phi(\mathbf u)
    K_{\mathrm{enc}}^{1/2}\boldsymbol\xi,\qquad
    \boldsymbol\xi\sim\mathsf N(0,I),
    \label{eq:discrete_encoder_reparam}
\end{equation}
or equivalently,
\[
    \mathbf z
    \sim
    \mathsf N\bigl(
        m_\phi(\mathbf u),
        K_\phi(\mathbf u)
    \bigr),\quad K_\phi(\mathbf u)=S_\phi(\mathbf u)K_{\mathrm{enc}}S_\phi^*(\mathbf u).
\]
Hence, the fixed covariance operator $K_{\mathrm{enc}}$ specifies the underlying spatial correlation structure, while the learned multiplier $S_\phi(\mathbf u)$ adapts the perturbation magnitude across latent channels, spatial locations, and input states.

\item\textit{Stochastic latent transition.} The transition distribution uses the same structured-noise principle, but modulates the fixed covariance $K_{\mathrm{tr}}$ through a scalar state-dependent amplitude. Given a latent state $\mathbf z$, we sample
\begin{equation}
    \mathbf z^+
    =
    T_\theta(\mathbf z)
    +
    \alpha_\theta(\mathbf z)
    K_{\mathrm{tr}}^{1/2}\boldsymbol\xi^{\mathrm{tr}},
    \qquad
    \boldsymbol\xi^{\mathrm{tr}}
    \sim
    \mathsf N(0,I),
    \label{eq:discrete_transition_reparam}
\end{equation}
which defines
\begin{equation}
    P_\theta(\cdot\mid\mathbf z)
    =
    \mathsf N\!\left(
        T_\theta(\mathbf z),
        \alpha_\theta(\mathbf z)^2K_{\mathrm{tr}}
    \right).
    \label{eq:discrete_transition_kernel}
\end{equation}
The amplitude $\alpha_\theta(\mathbf z)>0$ is predicted by a lightweight network head and adapts the overall perturbation magnitude to the current latent state. The precise numerical parameterization of $\alpha_\theta$ and $\ell_\phi$ is deferred to Appendix~\S\ref{app:implementation_details}.
\end{enumerate}
This construction yields an intentional local-global asymmetry between the two stochastic components. The encoder combines the relatively small correlation length $\ell_{\mathrm{enc}}$ with a pointwise, spatially heterogeneous multiplier, thereby probing robustness to localized and fine-scale variations in the encoded physical state. In contrast, the transition instead combines the larger correlation length $\ell_{\mathrm{tr}}$ with a single state-dependent amplitude applied globally across the latent field, producing smoother and more spatially coherent perturbations around the evolved latent state. Thus, the encoder probes spatially heterogeneous local uncertainty, whereas the transition probes coherent variations of the predicted latent dynamics. Together, these complementary perturbation mechanisms realize the stochastic training effects analyzed in \S\ref{subsubsec:noise_injection}.

\paragraph{Decoder likelihood.}
Finally, we equip the discretized decoder with an isotropic Gaussian likelihood,
\begin{equation}
    P_\psi(\cdot\mid\mathbf z)
    =
    \mathsf N\!\left(
        D_\psi(\mathbf z),
        \sigma_u^2 I
    \right),
    \label{eq:discrete_decoder_likelihood}
\end{equation}
where $\sigma_u>0$ is fixed. This is the finite-dimensional specialization of the Gaussian decoder considered in Proposition~\ref{prop:cm_decoder_likelihood}. In the present setting, the Cameron-Martin space is the entire discretized physical space and its norm is a scaled Euclidean norm:
\[
    \|\mathbf v\|_{\mathcal H_{\sigma_u^2 I}}^2
    =
    \frac{1}{\sigma_u^2}\|\mathbf v\|_2^2.
\]
Consequently, up to an additive constant, the decoder negative
log-likelihood is
\[
    -\log P_\psi(\mathbf u\mid\mathbf z)
    =
    \frac{1}{2\sigma_u^2}
    \|\mathbf u-D_\psi(\mathbf z)\|_2^2
    +\mathrm{const}.
\]
Therefore, the Cameron-Martin reconstruction geometry of the function-space formulation reduces exactly, up to a fixed scaling, to the squared Euclidean norm loss used in the discretized model.

Unlike the encoder and transition distributions, we do not sample from the decoder likelihood. Given a latent state $\mathbf z$, prediction uses $D_\psi(\mathbf z)$ directly, which is both the mean and the \textit{maximum-likelihood} output of the isotropic Gaussian decoder. Hence the stochasticity used for variational training is confined to the latent representation and transition rather than being propagated into the physical prediction.

The fixed likelihood scale $\sigma_u>0$ determines the relative weighting between reconstruction error and latent KL regularization in the ELBO. Indeed, multiplying the negative ELBO by $2\sigma_u^2$ converts the likelihood term to an unweighted squared Euclidean loss while rescaling the KL term. Accordingly, rather than treating $\sigma_u$ as an additional tunable uncertainty parameter, we fix it and absorb this relative scaling into the KL coefficient $\beta$ in the training objective introduced below.

\subsection{Training Objective}
\label{subsec:vamo_training}
Having specified the discretized encoder, transition, and decoder distributions, we now formulate the training objective on observed PDE trajectories as the finite-dimensional counterpart of the variational objective studied in \S\ref{sec:err_var_dyn}. Let $\{\mathbf u_n\}_{n=0}^{N}$ denote a trajectory from the training set. For each adjacent pair $(\mathbf u_n,\mathbf u_{n+1})$, we draw
\[
    \mathbf z_n\sim Q_\phi(\cdot\mid\mathbf u_n),
    \qquad
    \mathbf z_n^+\sim P_\theta(\cdot\mid\mathbf z_n),
\]
using the reparameterizations in \eqref{eq:discrete_encoder_reparam} and \eqref{eq:discrete_transition_reparam}. The resulting one-step objective contains three components:
\begin{align}
    \mathcal L_{\mathrm{pred}}^{(n)}
    &:=
    \mathbb E_{\mathbf z_n\sim Q_\phi(\cdot\mid\mathbf u_n)}
    \mathbb E_{\mathbf z_n^+\sim P_\theta(\cdot\mid\mathbf z_n)}
    \left[
        \left\|
            \mathbf u_{n+1}
            -
            D_\psi(\mathbf z_n^+)
        \right\|_2^2
    \right],
    \label{eq:vamo_pred_loss}
    \\
    \mathcal L_{\mathrm{KL}}^{(n)}
    &:=
    \mathbb E_{\mathbf z_n\sim Q_\phi(\cdot\mid\mathbf u_n)}
    \left[
        D_{\mathrm{KL}}
        \left(
            Q_\phi(\cdot\mid\mathbf u_{n+1})
            \,\Vert\,
            P_\theta(\cdot\mid\mathbf z_n)
        \right)
    \right],
    \label{eq:vamo_kl_loss}
    \\
    \mathcal L_{\mathrm{rec}}^{(n)}
    &:=
    \mathbb E_{\mathbf z_n\sim Q_\phi(\cdot\mid\mathbf u_n)}
    \left[
        \left\|
            \mathbf u_n-D_\psi(\mathbf z_n)
        \right\|_2^2
    \right].
    \label{eq:vamo_rec_loss}
\end{align}
We optimize the trajectory-averaged objective
\begin{equation}
    \mathcal L(\theta,\phi,\psi)
    =
    \frac{1}{N}
    \sum_{n=0}^{N-1}
    \left(
        \mathcal L_{\mathrm{pred}}^{(n)}
        +
        \beta\,\mathcal L_{\mathrm{KL}}^{(n)}
        +
        \lambda_{\mathrm{rec}}\,
        \mathcal L_{\mathrm{rec}}^{(n)}
    \right),
    \label{eq:implemented_vlm_loss}
\end{equation}
where the expectations are estimated by Monte Carlo samples through the reparameterized encoder and transition distributions \citep{kingma2013auto}.

The three terms correspond directly to the mechanisms analyzed in \S\ref{sec:err_var_dyn}, with their computational roles illustrated in Figure~\ref{fig:vamo_architecture}(a). The predictive term trains the full stochastic pathway $\mathbf u_n\to\mathbf z_n\to\mathbf z_n^+\to\widehat{\mathbf u}_{n+1}$, the KL term aligns the predicted latent transition with the distribution obtained by encoding the true next state, and the reconstruction term maintains an informative latent representation of the physical field.

\paragraph{Relation to the function-space objective.} 
Equation~\eqref{eq:implemented_vlm_loss} is a finite-dimensional realization of a weighted combination of the predictive negative log-likelihood and the negative conditional ELBO. To make this relation explicit, consider
\[
    \lambda_{\mathrm{pred}}
    \mathcal L_{\mathrm{pred}}^{\mathrm{NLL}}
    +
    \lambda_{\mathrm{ELBO}}
    \left(
        \mathcal L_{\mathrm{KL}}+\mathcal L_{\mathrm{rec}}^{\mathrm{NLL}}
    \right).
\]
Under the isotropic decoder likelihood \eqref{eq:discrete_decoder_likelihood}, both likelihood terms reduce to squared Euclidean losses with the factor $(2\sigma_u^2)^{-1}$. Normalizing the coefficient of the predictive Euclidean loss to one therefore gives, up to additive constants,
\[
    \mathcal L_{\mathrm{pred}}
    +\underbrace{
    2\sigma_u^2\frac{\lambda_{\mathrm{ELBO}}}{\lambda_{\mathrm{pred}}}}_{\beta}\mathcal L_{\mathrm{KL}}
    +\underbrace{\frac{\lambda_{\mathrm{ELBO}}}{\lambda_{\mathrm{pred}}}}_{\lambda_{\mathrm{rec}}}\mathcal L_{\mathrm{rec}}.
\]
Hence, up to an overall rescaling, the relative weighting of the predictive bound and conditional ELBO in Proposition~\ref{prop:conditional_functional_elbo}, together with the decoder likelihood scale, is represented equivalently by the two coefficients $\beta$ and $\lambda_{\mathrm{rec}}$ in \eqref{eq:implemented_vlm_loss}.

Notably, the conditional ELBO in
Proposition~\ref{prop:conditional_functional_elbo} naturally contains next-state reconstruction, whereas \eqref{eq:vamo_rec_loss} uses the current state. Define the step-$n$ reconstruction term
\[
    \mathcal R_n
    :=
    \mathbb E_{\mathbf z_n\sim Q_\phi(\cdot\mid\mathbf u_n)}
    \left[
        -\log P_\psi(\mathbf u_n\mid\mathbf z_n)
    \right].
\]
Over a complete trajectory, the ELBO reconstruction terms sum to
\[
    \sum_{n=0}^{N-1}\mathcal R_{n+1}
    =
    \sum_{n=1}^{N}\mathcal R_n,
\]
whereas the implemented current-state reconstruction sums to $\sum_{n=0}^{N-1}\mathcal R_n$. The two therefore differ only through the boundary contributions $\mathcal R_0$ and $\mathcal R_N$. We use the current-state form because it reuses the latent sample $\mathbf z_n$ already drawn for the transition step, avoiding an additional reparameterized sample from $Q_\phi(\cdot\mid\mathbf u_{n+1})$ for reconstruction. This provides a modest computational efficiency gain during training while leaving all interior reconstruction contributions unchanged.

\paragraph{Mean latent rollout.}
At inference, VAMO follows the mean latent dynamics illustrated in Figure~\ref{fig:vamo_architecture}(b). Given an initial condition $\mathbf u_0$, we initialize $\widehat{\mathbf z}_0=m_\phi(\mathbf u_0)$ and  recursively compute
\[
    \widehat{\mathbf z}_{n+1}
    =T_\theta(\widehat{\mathbf z}_n),
    \qquad
    \widehat{\mathbf u}_{n+1}
    =D_\psi(\widehat{\mathbf z}_{n+1}).
\]
The initial condition is encoded once, and subsequent evolution proceeds entirely in latent space, with Gaussian perturbations disabled and decoded predictions used only as physical outputs.

\section{Numerical Experiments}
\label{sec:experiments}

In this section, we evaluate VAMO on three fluid-dynamics benchmarks with distinct dynamical characteristics: the one-dimensional compressible Euler equations, two-dimensional compressible fluid dynamics, and forced two-dimensional incompressible Navier--Stokes equations. Together, these problems cover inviscid wave propagation, coupled compressible flow, and long-time vortex dynamics under different levels of physical dissipation. Our experiments are designed to answer three main questions:
\begin{itemize}[itemsep=0pt, topsep=2pt]
\item Can variational latent dynamics improve the accuracy and stability of long-horizon autoregressive PDE prediction?
\item Are the improvements attributable to the structured variational formulation, rather than latent compression or generic noise injection alone?
\item Does VAMO better preserve physically relevant spatial and spectral structures throughout the rollout?
\end{itemize}

For all benchmarks, the model is trained using trajectories restricted to a finite training horizon and evaluated by autoregressive rollout over a \textit{substantially longer test horizon}. During inference, the model is initialized from the exact initial condition and receives no additional ground-truth states. We report errors accumulated over the entire test rollout. This evaluation therefore measures both prediction accuracy within the temporal regime represented during training and stability beyond the training horizon.

\subsection{Benchmarks and Baselines}
\label{sec:benchmarks_baselines}

\subsubsection{Benchmarks Problems}
\label{sec:benchmarks}

We consider three representative fluid-dynamics systems, summarized in Table~\ref{tab:benchmark_summary}. We provide a brief description here and defer the complete governing equations, data-generation procedures, discretization schemes, and dataset configurations to Appendix~\S\ref{app:benchmark_details}.

\paragraph{1D compressible Euler equations.}
The one-dimensional Euler equations describe inviscid compressible flow through conservation of mass, momentum, and total energy. We represent the physical state using the primitive variables
\[
    \mathbf{u}(t,x)=\bigl(\rho(t,x),\, v(t,x),\, p(t,x)\bigr),\quad t\in[0,T],\ \ x\in\mathbb{T}_L.
\]
where $\rho$, $v$, and $p$ denote density, velocity, and pressure, respectively. Since the system contains no explicit viscous dissipation, prediction errors in wave speed and phase can accumulate over time, while nonlinear evolution may generate sharp gradients and discontinuous structures. This benchmark therefore tests long-horizon propagation of compressible waves in a controlled one-dimensional setting.
\paragraph{2D compressible fluid dynamics.}
We consider a two-dimensional viscous compressible-flow benchmark with a physical configuration similar to the CFD setting studied in PDEBench \citep{takamoto2022pdebench}. The state is represented by
\[
    \mathbf{u}(t,\mathbf{x})
    =
    \bigl(
        \rho(t,\mathbf{x}),
        v_x(t,\mathbf{x}),
        v_y(t,\mathbf{x}),
        p(t,\mathbf{x})
    \bigr),\quad t\in[0,T],\ \mathbf{x}\in\mathbb{T}^2.
\]
This system couples density, pressure, and the two velocity components through nonlinear conservation laws and viscous transport. It provides a challenging multi-field benchmark in which local prediction errors can propagate across physical channels and spatial locations.

\paragraph{2D incompressible Navier-Stokes equations.}
We study the forced two-dimensional incompressible Navier-Stokes equations in vorticity form on the unit torus \citep{li2022transformer}. The learned state is the scalar vorticity field
\[
    \omega(t,\mathbf{x}),\quad t\in[0,T],\ \mathbf{x}\in\mathbb{T}^2.
\]
while the corresponding velocity field is recovered nonlocally through the Biot-Savart operator. We consider several viscosity levels and two families of external forcing. Under weak viscosity, small errors in the predicted vorticity can induce nonlocal velocity errors that feed back into future advection. This benchmark is therefore particularly suited to evaluating long-time rollout stability and preservation of spatial and spectral flow structure.

\begin{table}[t]
\small
    \centering
    \caption{
    Summary of the fluid-dynamics benchmarks.
    The three systems span one-dimensional inviscid compressible waves,
    two-dimensional viscous compressible flow, and forced incompressible
    vorticity dynamics.
    }
    \label{tab:benchmark_summary}
    \begin{tabular}{llll}
        \toprule
        Benchmark
        & PDE type
        & Fields
        & Main challenge \\
        \midrule
        1D Euler
        & Compressible, inviscid
        & $(\rho,v,p)$
        & Wave propagation, phase error, shocks \\
        2D CFD
        & Compressible, viscous
        & $(\rho,\mathbf{v},p)$
        & Coupled multi-field dynamics, conservation \\
        2D Navier-Stokes
        & Incompressible, viscous
        & $\omega$
        & Nonlocal transport and long-time stability \\
        \bottomrule
    \end{tabular}
\end{table}

\subsubsection{Compared Methods}
\label{sec:compared_methods}
We compare VAMO with three baselines that isolate the effects of direct physical-space evolution, generic noise injection, and deterministic latent dynamics. All models are trained as one-step predictors and evaluated through autoregressive rollout. \begin{description}[topsep=2pt,itemsep=0pt]
\item\textbf{FNO.} We use the Fourier neural operator \citep{li2020fourier} as a direct physical-space baseline. Unlike the trajectory-to-trajectory setting considered in the original FNO experiments, our model maps the current physical snapshot to the next snapshot and is applied recursively over the test horizon. 
\item\textbf{FNO+Noise.} This baseline uses the same autoregressive FNO architecture, but injects Gaussian perturbations into the lifted feature field during training. It tests whether generic feature-space noise regularization alone can improve long-horizon stability. 
\item\textbf{FNO-AE.} This model combines a deterministic autoencoder with a residual FNO latent transition. The initial physical state is encoded once, the latent field is evolved recursively, and the resulting latent states are decoded into physical predictions. It therefore isolates deterministic latent evolution from the variational training used by VAMO. 
\item\textbf{FNO-VAMO.} This denotes the complete proposed model described in \S\ref{sec:vamo}. FNO-AE and FNO-VAMO use closely matched latent architectures, allowing their comparison to isolate the effect of the variational formulation. 
\end{description}
The roles of the four comparisons are summarized in Table~\ref{tab:comparison_logic}. 
\begin{table}[t] 
\centering 
\caption{ Experimental role of each model comparison. } \label{tab:comparison_logic} 
\begin{tabular}{ll} 
\toprule Comparison & Question addressed \\ 
\midrule 
FNO-VAMO vs.\ FNO & Does the complete framework improve long-horizon prediction? \\ 
FNO-VAMO vs.\ FNO-AE & Is variational training necessary beyond latent modeling? \\ 
FNO-VAMO vs.\ FNO+Noise & Is structured latent stochasticity better than generic noise? \\ 
FNO-AE vs.\ FNO & Does the deterministic latent architecture alone improve stability? \\ 
\bottomrule 
\end{tabular} 
\end{table} 
For a fair comparison, the models use matched FNO backbones whenever applicable, including the number of Fourier layers, retained modes, and hidden width. All methods are trained using the same trajectories, data splits, optimization budget, and model-selection protocol. Complete architectural and optimization details are provided in Appendix~\S\ref{app:implementation_details}.

\paragraph{Evaluation protocol.}
All models are trained as one-step predictors and evaluated by autoregressive rollout from the exact initial condition, without teacher forcing or ground-truth reinitialization. Errors are computed over the full test horizon and averaged across test trajectories. Relative $L^2$ error is the primary metric across all benchmarks. We additionally report relative $L^1$ error for the 1D Euler equations, relative $H^1$ error for the 2D benchmarks, and errors in enstrophy, palinstrophy, and the isotropic energy spectrum for incompressible Navier-Stokes. Complete metric definitions, including the benchmark-specific temporal aggregation and numerical implementation, are provided in Appendix~\S\ref{app:benchmark_details}.

\subsection{Main Results}\label{sec:main_results}
We first compare the long-horizon rollout performance of VAMO and the three baselines across the fluid-dynamics benchmarks. For each problem, all models are trained and evaluated using the same trajectories and autoregressive protocol described in \S\ref{sec:benchmarks_baselines}. The tables report errors accumulated over the full test horizon. We focus here on the overall quantitative comparison and defer the temporal evolution of rollout errors, qualitative predictions, and additional physical diagnostics to the subsequent sections.

\subsubsection{1D Compressible Euler Equations}
\label{sec:euler_results}

We evaluate the models on periodic 1D Euler trajectories with domain lengths $L\in\{5,10,15,20\}$ and corresponding spatial resolutions $1024$, $2048$, $3072$, and $4096$, keeping the grid spacing fixed across settings. Initial conditions are generated from normalized periodic Gaussian random fields with physical correlation length $\ell=1$. Models are trained on trajectories over $[0,1.5]$ and rolled out autoregressively to $T_{\mathrm{test}}=10$ with snapshot interval $\Delta t=0.1$. Complete data-generation and solver details are provided in Appendix~\S\ref{app:euler_data_generation}. We report full-rollout relative $L^1$ and $L^2$ errors for density, velocity, and pressure in Table~\ref{tab:euler1d_results}.

\begin{table}[t]
\centering
\caption{Results on the 1D Euler equations with different domain lengths $L$. We report rollout-aggregated relative $L^1$ and $L^2$ errors for density $\rho$, pressure $p$, velocity $u$, and their channel average. Lower is better.}
\label{tab:euler1d_results}
\resizebox{\textwidth}{!}{
\begin{tabular}{llcccccccc}
\toprule
\multirow{2}{*}{$L$}
& \multirow{2}{*}{Method}
& \multicolumn{2}{c}{Density $\rho$}
& \multicolumn{2}{c}{Pressure $p$}
& \multicolumn{2}{c}{Velocity $u$}
& \multicolumn{2}{c}{Channel Mean} \\
\cmidrule(lr){3-4}
\cmidrule(lr){5-6}
\cmidrule(lr){7-8}
\cmidrule(lr){9-10}
&
& Rel.\,$L^2$ & Rel.\,$L^1$
& Rel.\,$L^2$ & Rel.\,$L^1$
& Rel.\,$L^2$ & Rel.\,$L^1$
& Rel.\,$L^2$ & Rel.\,$L^1$ \\
\midrule
\multirow{4}{*}{$5$}
& FNO       & 0.0194 & 0.00883 & 0.0249 & 0.00951 & 0.1409 & 0.0596 & 0.0617 & 0.0260 \\
& FNO+Noise & 0.0200 & 0.00962 & 0.0259 & 0.0109 & 0.1488 & 0.0727 & 0.0649 & 0.0311 \\
& FNO-AE    & 0.0161 & 0.00737 & 0.0208 & 0.00873 & 0.1195 & 0.0553 & 0.0521 & 0.0238 \\
& FNO-VAMO  & $\mathbf{0.0132}$ & $\mathbf{0.00704}$ & $\mathbf{0.0168}$ & $\mathbf{0.00756}$ & $\mathbf{0.0926}$ & $\mathbf{0.0515}$ & $\mathbf{0.0409}$ & $\mathbf{0.0220}$ \\
\midrule
\multirow{4}{*}{$10$}
& FNO       &0.0206 & 0.00801 & 0.0277 & 0.00942 & 0.1325 & 0.0512 & 0.0602 & 0.0229 \\
& FNO+Noise & 0.0212 & 0.00835 & 0.0286 & 0.00986 & 0.1370 & 0.0540 & 0.0623 & 0.0241 \\
& FNO-AE    & 0.0181 & 0.00709 & 0.0245 & 0.00913 & 0.1263 & 0.0553 & 0.0563 & 0.0238 \\
& FNO-VAMO  & $\mathbf{0.0141}$ & $\mathbf{0.00625}$ & $\mathbf{0.0189}$ & $\mathbf{0.00701}$ & $\mathbf{0.0924}$ & $\mathbf{0.0450}$ & $\mathbf{0.0418}$ & $\mathbf{0.0194}$ \\
\midrule
\multirow{4}{*}{$15$}
& FNO       & 0.0266 & 0.00919 & 0.0369 & 0.0113 & 0.1656 & 0.0646 & 0.0763 & 0.0284 \\
& FNO+Noise  & 0.0263 & 0.00918 & 0.0366 & 0.0113 & 0.1642 & 0.0649 & 0.0757 & 0.0285 \\
& FNO-AE    & 0.0214 & 0.00854 & 0.0290 & 0.0102 & 0.1300 & 0.0571 & 0.0601 & 0.0253 \\
& FNO-VAMO  & $\mathbf{0.0173}$ & $\mathbf{0.00733}$ & $\mathbf{0.0232}$ & $\mathbf{0.00844}$ & $\mathbf{0.1056}$ & $\mathbf{0.0475}$ & $\mathbf{0.0487}$ & $\mathbf{0.0211}$ \\
\midrule
\multirow{4}{*}{$20$}
& FNO       & 0.0380 & 0.0127 & 0.0516 & 0.0159 & 0.2227 & 0.0855 & 0.1099 & 0.0380 \\
& FNO+Noise & 0.0378 & 0.0126 & 0.0511 & 0.0158 & 0.2208 & 0.0851 & 0.1032 & 0.0378 \\
& FNO-AE    & 0.5394 & 0.2259 & 0.5434 & 0.2261 & 2.1236 & 1.0253 & 1.0688 & 0.4924 \\
& FNO-VAMO  & $\mathbf{0.0211}$ & $\mathbf{0.00993}$ & $\mathbf{0.0255}$ & $\mathbf{0.00955}$ & $\mathbf{0.1200}$ & $\mathbf{0.0641}$ & $\mathbf{0.0555}$ & $\mathbf{0.0279}$ \\
\bottomrule
\end{tabular}
}
\end{table}

As shown in Table~\ref{tab:euler1d_results}, FNO-VAMO achieves the lowest channel-averaged relative $L^2$ and $L^1$ errors for all four domain lengths. Relative to the strongest baseline in each setting (FNO-AE for $L=5,10,15$ and FNO+Noise for $L=20$), FNO-VAMO reduces the channel-mean relative $L^2$ error by approximately $21.5\%$, $25.8\%$, $19.0\%$, and $46.2\%$ for $L=5,10,15$, and $20$, respectively. The improvement is consistent across density, pressure, and velocity, with velocity remaining the most challenging physical channel.

The baseline comparison also distinguishes variational latent training from simpler alternatives. Generic noise injection provides little consistent improvement over the direct FNO, while the deterministic latent model performs competitively for $L\leq 15$ but becomes unstable at $L=20$. In contrast, FNO-VAMO maintains a channel-mean relative $L^2$ error below $0.056$ across all four domain lengths, despite the larger physical domain and increasingly long-range wave interactions. These results suggest that the variational formulation improves the robustness of recursively evolved latent dynamics rather than merely providing a better one-step approximation.

\subsubsection{2D Compressible Fluid Dynamics}
\label{sec:cfd2d_results}

We evaluate the models on 2D compressible Navier-Stokes trajectories on the periodic unit torus $\mathbb{T}^2=\bbR^2/\mathbb{Z}^2$ in a low-viscosity, low-Mach-number regime with $\mu=\zeta=10^{-8}$ and $M=0.1$. Initial conditions are sampled from periodic Gaussian random fields. Models are trained on $t\in[0,1]$ and evaluated by autoregressive rollout to $T_{\mathrm{test}}=10$ with snapshot interval $\Delta t=0.1$. The low-viscosity regime is chosen to preserve nontrivial dynamics over the long test horizon; complete data-generation and numerical details are provided in Appendix~\S\ref{app:cfd2d_data_generation}.

\begin{table}[p]
\centering
\caption{Results on 2D compressible fluid dynamics. We report rollout-aggregated relative $L^2$ and $H^1$ errors for density $\rho$, pressure $p$, velocity $\mathbf v=(v_x,v_y)$, and their channel average. Lower is better.}
\label{tab:cfd2d_results}
\resizebox{\textwidth}{!}{
\begin{tabular}{lcccccccc}
\toprule
\multirow{2}{*}{Method}
& \multicolumn{2}{c}{Density $\rho$}
& \multicolumn{2}{c}{Pressure $p$}
& \multicolumn{2}{c}{Velocity $\mathbf v$}
& \multicolumn{2}{c}{Channel Mean} \\
\cmidrule(lr){2-3}
\cmidrule(lr){4-5}
\cmidrule(lr){6-7}
\cmidrule(lr){8-9}
& Rel.\,$L^2$ & Rel.\,$H^1$
& Rel.\,$L^2$ & Rel.\,$H^1$
& Rel.\,$L^2$ & Rel.\,$H^1$
& Rel.\,$L^2$ & Rel.\,$H^1$ \\
\midrule
FNO        & 0.1248 & 2.2440 & 0.1674 & 3.2094 & 0.8345 & 2.7868 & 0.3756 & 2.7467 \\
FNO+Noise  & 0.1422 & 2.3479 & 0.2073 & 3.8650 & 0.9055 & 3.0393 & 0.4183 & 3.0841 \\
FNO-AE     & 0.0972 & 3.3152 & 0.0331 & 1.7665 & 0.6276 & 1.9364 & 0.2526 & 2.3394 \\
FNO-VAMO   & $\mathbf{0.0290}$ & $\mathbf{0.6375}$ & $\mathbf{0.0090}$ & $\mathbf{0.2612}$ & $\mathbf{0.1316}$ & $\mathbf{0.3145}$ & $\mathbf{0.0565}$ & $\mathbf{0.4044}$ \\
\bottomrule
\end{tabular}
}
\end{table}

\begin{figure}[p]
    \centering
    \includegraphics[width=\linewidth]{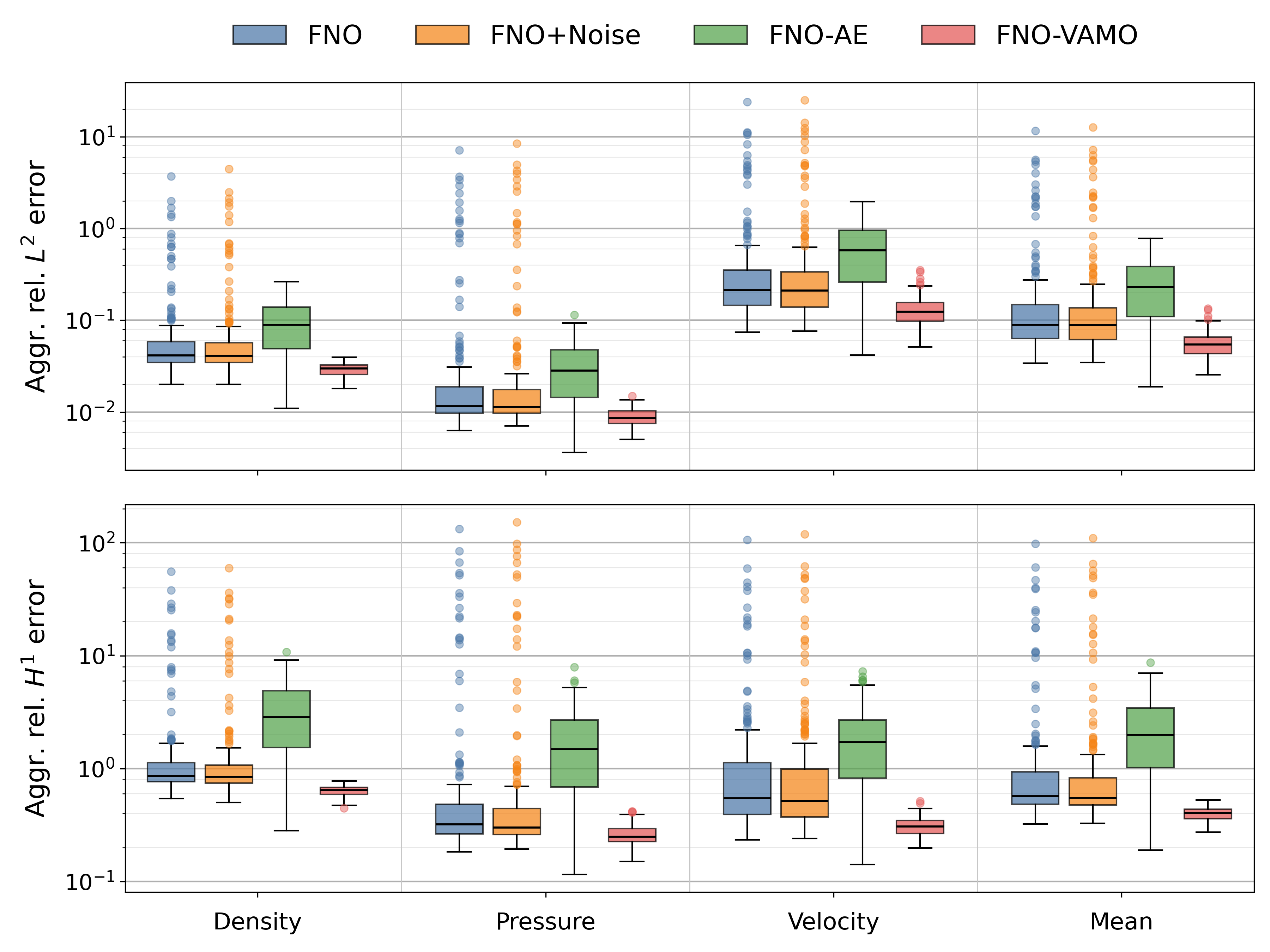}
    \caption{Distribution of per-trajectory full-rollout relative $L^2$ and $H^1$ errors for the two-dimensional compressible-flow benchmark. Results are shown separately for density, pressure, velocity, and their channel mean. The logarithmic scale highlights the heavy upper tails of the baseline methods and the tighter error distribution achieved by FNO-VAMO.}
    \label{fig:cfd2d_error_distribution}
\end{figure}

As shown in Table~\ref{tab:cfd2d_results}, FNO-VAMO substantially outperforms all baselines across every physical component and both error metrics. Relative to the strongest baseline (FNO-AE), it reduces the channel-averaged relative $L^2$ error from $0.2526$ to $0.0565$ ($-77.6\%$) and the relative $H^1$ error from $2.3394$ to $0.4044$ ($-82.7\%$). The gains are consistent across density, pressure, and velocity, with particularly large improvements for the velocity field, which is the most challenging component for the baseline models.

The baseline comparisons also clarify the source of the improvement. Generic noise injection does not improve upon the direct FNO, while the deterministic latent model lowers the $L^2$ error but remains much less accurate in $H^1$, indicating poor preservation of spatial derivatives despite improved field-level accuracy. In contrast, FNO-VAMO reduces both metrics simultaneously, suggesting better preservation of the spatial structure of the coupled compressible dynamics. Figure~\ref{fig:cfd2d_error_distribution} further shows that this advantage is reflected across individual test trajectories: FNO-VAMO exhibits lower typical errors, tighter distributions, and substantially fewer large-error outliers, especially in relative $H^1$. Thus, its improvement reflects more reliable long-horizon prediction across the test distribution rather than gains on only a small subset of trajectories.

\subsubsection{2D Incompressible Navier-Stokes Equations}
\label{sec:ns2d_results}
We consider the forced 2D incompressible Navier-Stokes
equations in vorticity form on the periodic unit torus $\mathbb{T}^2=\bbR^2/\mathbb{Z}^2$, with viscosities $\nu\in\{10^{-3},10^{-4},10^{-5}\}$ following the standard FNO benchmark setting \citep{li2020fourier}. We extend this setting by independently sampling a time-independent forcing field for each trajectory from either a Gaussian random field (\textsc{GRF}) or a sparse mixture of sinusoidal Fourier modes (\textsc{Wave}). For each viscosity, a single model is trained jointly on 1,600 trajectories, with 800 from each forcing family, and evaluated separately on the two forcing types. The training horizons are $[0,10]$, $[0,12]$, and $[0,8]$ for $\nu=10^{-3}$, $10^{-4}$, and $10^{-5}$, respectively, with corresponding test horizons $T=50$, $30$, and $20$. Complete data-generation and numerical details are provided in Appendix~\S\ref{app:ns2d_data_generation}.

\begin{table}[p]
\centering
\caption{Results on the 2D incompressible Navier-Stokes equations. We evaluate six test settings with viscosity $\nu\in\{10^{-3},10^{-4},10^{-5}\}$ and forcing type in \{GRF, \textsc{Wave}\}. The training horizons are $[0,10]$, $[0,12]$, and $[0,8]$ for $\nu=10^{-3}$, $10^{-4}$, and $10^{-5}$, respectively, while the corresponding test horizons are $[0,50]$, $[0,30]$, and $[0,20]$. We report rollout-aggregated errors. Lower is better.}
\label{tab:ns2d_main_results}
\resizebox{\textwidth}{!}{
\begin{tabular}{lllccccc}
\toprule
Data Setting
& Forcing 
& Method 
& Rel.\,$L^2$ 
& Rel.\,$H^1$ 
& $\mathcal{E}$-error 
& $\mathcal{P}$-error 
& Spec. error \\
\midrule
\multirow{8}{*}{$\begin{aligned}
&\nu=10^{-3}\\ &T_\text{train}=10 \\ &T_\text{test}=50 
\end{aligned}$}
& \multirow{4}{*}{GRF}
& FNO         & 0.1453 & 0.8015 & 0.1151 & 3.784 & 0.0239 \\
& 
& FNO+Noise   & 0.0375 & 0.0985 & 0.0196 & 0.1773 & 0.0206 \\
&
& FNO-AE      & 22.31 & 292.0 & 2447 & $4.358\!\times\! 10^5$ & 452.0 \\
&
& FNO-VAMO    & $\mathbf{0.0203}$ & $\mathbf{0.0264}$ & $\mathbf{0.0119}$ & $\mathbf{0.0123}$ & $\mathbf{0.0138}$ \\
\cmidrule(lr){2-8}
& \multirow{4}{*}{\textsc{Wave}}
& FNO         & 0.0698 & 0.1769 & 0.0430 & 0.4039 & 0.0116 \\
&
& FNO$+$Noise   & 0.0686 & 0.1148 & 0.0406 & 0.1778 & 0.0241 \\
&
& FNO-AE      & 5.623 & 55.84 & 565.0 & $5.753\!\times\! 10^4$ & 124.3 \\
&
& FNO-VAMO    & $\mathbf{0.0556}$ & $\mathbf{0.0687}$ & $\mathbf{0.00808}$ & $\mathbf{0.00940}$ & $\mathbf{0.0106}$ \\
\midrule
\multirow{8}{*}{$\begin{aligned}
&\nu=10^{-4}\\ &T_\text{train}=12 \\ &T_\text{test}=30 
\end{aligned}$}
& \multirow{4}{*}{GRF}
& FNO         & 0.4068 & 3.109 & 0.7338 & 45.84 & 0.0456 \\
&
& FNO+Noise   & 0.2987 & 2.003 & 0.5866 & 35.15 & 0.0569 \\
&
& FNO-AE      & 2.691 & 31.93 & 23.19 & 3608 & 0.8164 \\
&
& FNO-VAMO    & $\mathbf{0.0403}$ & $\mathbf{0.1139}$ & $\mathbf{0.0139}$ & $\mathbf{0.0512}$ & $\mathbf{0.0146}$ \\
\cmidrule(lr){2-8}
& \multirow{4}{*}{\textsc{Wave}}
& FNO         & 0.3570 & 1.613 & 0.6580 & 14.51 & 0.0497 \\
&
& FNO+Noise   & 0.3407 & 1.449 & 0.0666 & 15.73 & 0.0499 \\
&
& FNO-AE      & 1.188 & 10.19 & 7.213 & 580.3 & 0.4303 \\
&
& FNO-VAMO    & $\mathbf{0.1198}$ & $\mathbf{0.2212}$ & $\mathbf{0.0386}$ & $\mathbf{0.1008}$ & $\mathbf{0.0390}$ \\
\midrule
\multirow{8}{*}{$\begin{aligned}
&\nu=10^{-5}\\ &T_\text{train}=8 \\ &T_\text{test}=20 
\end{aligned}$}
& \multirow{4}{*}{GRF}
& FNO         & 0.2987 & 2.262 & 0.4862 & 32.90 & 0.0382 \\
&
& FNO+Noise   & 0.4089 & 3.061 & 1.179 & 75.58 & 0.0909 \\
&
& FNO-AE      & 0.5991 & 1.335 & 0.2788 & 1.354 & 0.1551 \\
&
& FNO-VAMO    & $\mathbf{0.0841}$ & $\mathbf{0.3858}$ & $\mathbf{0.0211}$ & $\mathbf{0.2415}$ & $\mathbf{0.00929}$ \\
\cmidrule(lr){2-8}
& \multirow{4}{*}{\textsc{Wave}}
& FNO & 0.2993 & 1.969 & 0.3519 & 19.14 & 0.0314 \\
&
& FNO+Noise   & 0.3450 & 2.254 & 0.5831 & 31.22 & 0.0617 \\
&
& FNO-AE      & 0.6719 & 1.158 & 0.2098 & 0.5878 & 0.2518 \\
&
& FNO-VAMO    & $\mathbf{0.1469}$ & $\mathbf{0.4509}$ & $\mathbf{0.0460}$ & $\mathbf{0.2734}$ & $\mathbf{0.0247}$ \\
\bottomrule
\end{tabular}
}
\end{table}

As shown in Table~\ref{tab:ns2d_main_results}, FNO-VAMO achieves the lowest error across all 6 viscosity-forcing settings and all five evaluation metrics. The gains are particularly strong in the more challenging low-viscosity regimes. Relative to the strongest baseline in each setting, FNO-VAMO reduces the relative $L^2$ error by approximately $65$--$87\%$ for $\nu=10^{-4}$ and $51$--$72\%$ for $\nu=10^{-5}$, with corresponding relative $H^1$ reductions of about $85$--$94\%$ and $61$--$71\%$, respectively.

The improvements extend beyond field-level accuracy. FNO-VAMO also consistently yields lower enstrophy, palinstrophy, and spectral errors, indicating better preservation of both overall vorticity magnitude and small-scale spatial structure. Generic noise injection is not consistently beneficial, while the deterministic latent baseline becomes highly unstable in several settings, especially at $\nu=10^{-3}$ and $\nu=10^{-4}$. These comparisons suggest that neither latent evolution nor unstructured noise alone is sufficient for reliable long-horizon prediction.

Figure~\ref{fig:ns2d_error_distribution} further shows that the advantage of FNO-VAMO is reflected not only in lower average error, but also in the distribution across individual trajectories. In the two more challenging regimes $\nu=10^{-4}$ and $\nu=10^{-5}$, FNO-VAMO generally achieves comparable or lower median errors while exhibiting narrower spreads and substantially fewer large-error outliers. This indicates that the variational formulation improves not only average accuracy but also the reliability of long-horizon rollout across trajectories.

\begin{figure}[ht]
    \centering
    \begin{subfigure}{\linewidth}
        \centering
        \caption{NS2D per-trajectory rollout errors, $\nu=10^{-4}$.}
        \includegraphics[width=\linewidth]
        {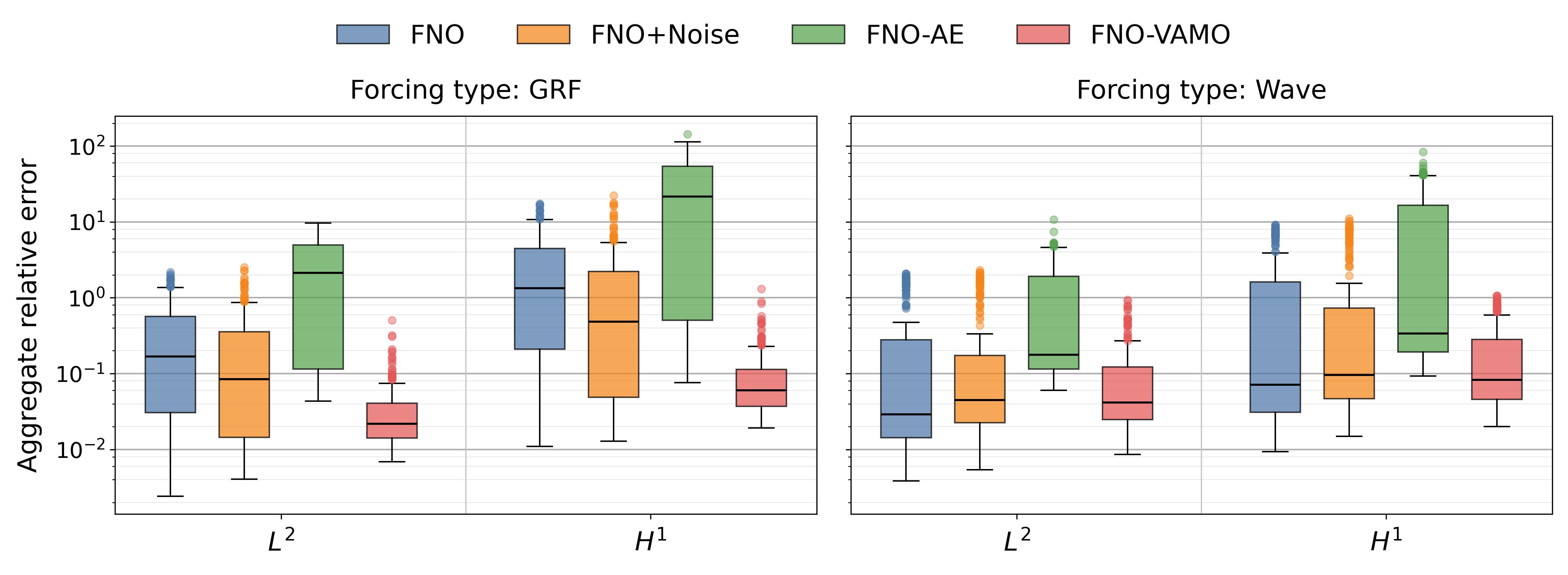}
        \label{fig:ns2d_nu4_error_distribution}
    \end{subfigure}
    \vspace{-1cm}
    
    \begin{subfigure}{\linewidth}
        \centering
        \caption{NS2D per-trajectory rollout errors, $\nu=10^{-5}$.}
        \includegraphics[width=\linewidth]
        {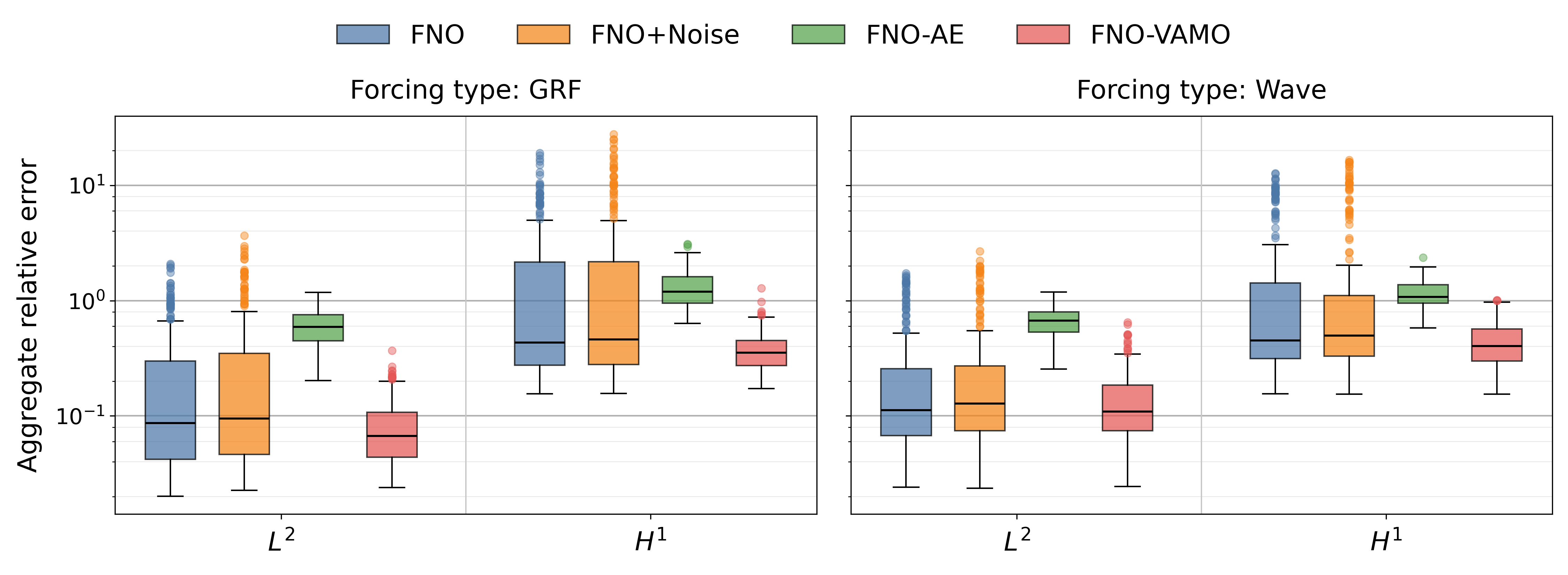}
        \label{fig:ns2d_nu5_error_distribution}
    \end{subfigure}
    \vspace{-1cm}
    \caption{
    Distribution of per-trajectory full-rollout relative $L^2$ and $H^1$ errors for the 2D incompressible Navier-Stokes benchmark. Results are shown separately for GRF and \textsc{Wave} forcing. The logarithmic scale highlights both the typical error and the heavy upper tails associated with unstable rollouts. VAMO generally reduces the median error and suppresses large-error outliers.
    }
    \label{fig:ns2d_error_distribution}
\end{figure}

\subsection{Long-Horizon Error Growth}
\label{sec:error_growth}

The trajectory-aggregated results in \S\ref{sec:main_results} do not show how prediction errors evolve during autoregressive rollout. We therefore report per-snapshot errors for representative 1D Euler settings with $L=5$ and $L=20$ (Figure~\ref{fig:euler_trend}), the 2D compressible-flow benchmark (Figure~\ref{fig:cfd2derror}), and the two more challenging 2D incompressible Navier--Stokes regimes, $\nu=10^{-4}$ and $\nu=10^{-5}$, under both forcing families (Figures~\ref{fig:ns2d_nu4_trend}--\ref{fig:ns2d_nu5_trend}). Results for more benchmarks are provided in Appendix~\S\ref{app:additional_quantitative}. In each figure, the shaded region denotes the temporal horizon represented during training.

\begin{figure}[p]
    \centering
    \begin{subfigure}{\linewidth}
        \centering
        \includegraphics[width=1.0\linewidth]{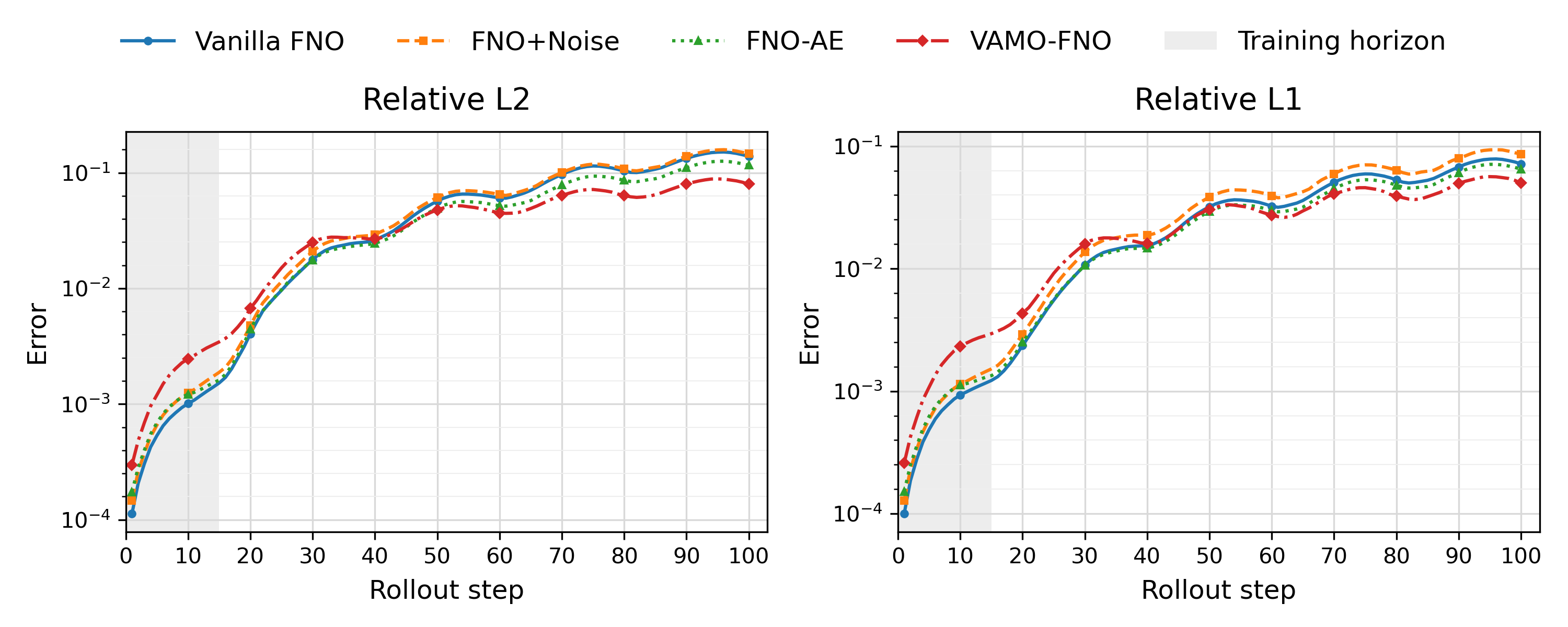}
        \caption{Domain length $L=5$.}
        \label{fig:eulerl5}
    \end{subfigure}

    \vspace{0.5em}
    
    \begin{subfigure}{\linewidth}
        \centering
        \includegraphics[width=1.0\linewidth]{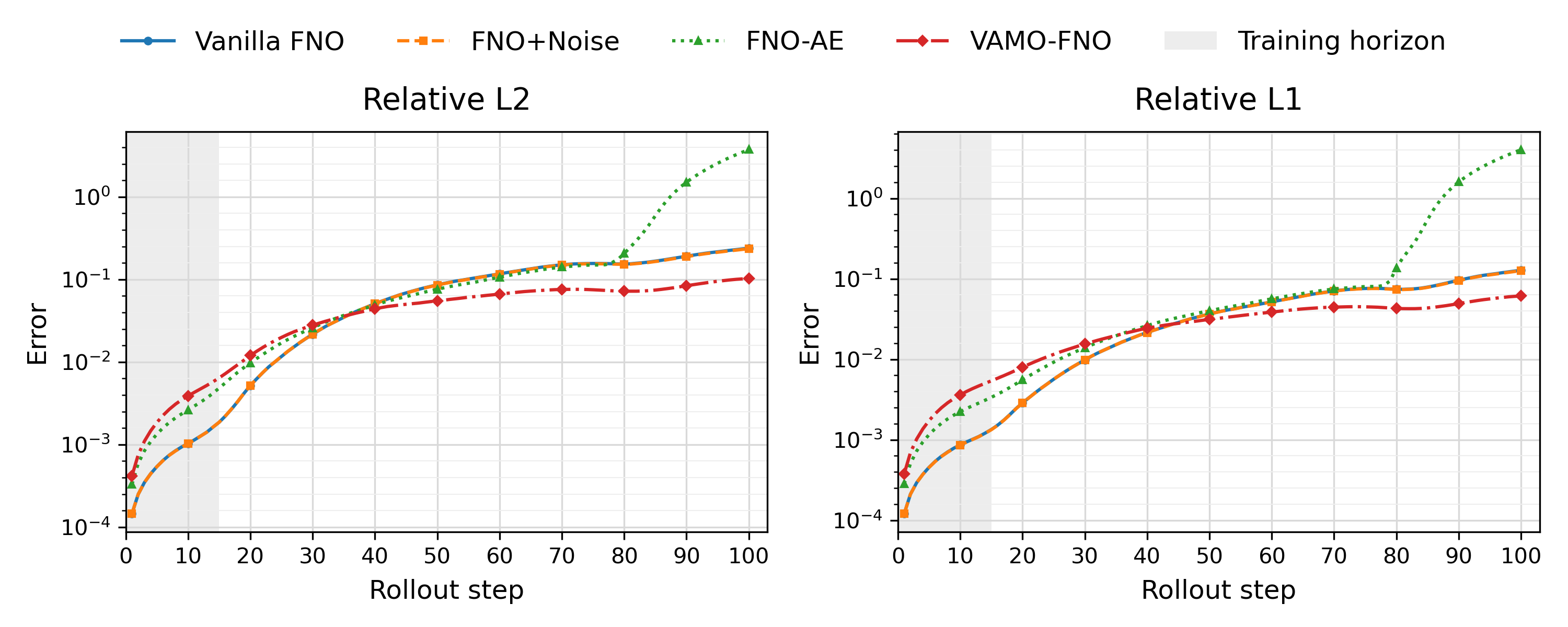}
        \caption{Domain length $L=20$.}
        \label{fig:eulerl20}
    \end{subfigure}
    \caption{Error trend for representative 1D Euler Benchmarks.}
    \label{fig:euler_trend}
\end{figure}

\begin{figure}[p]
    \centering
    \includegraphics[width=1.0\linewidth]{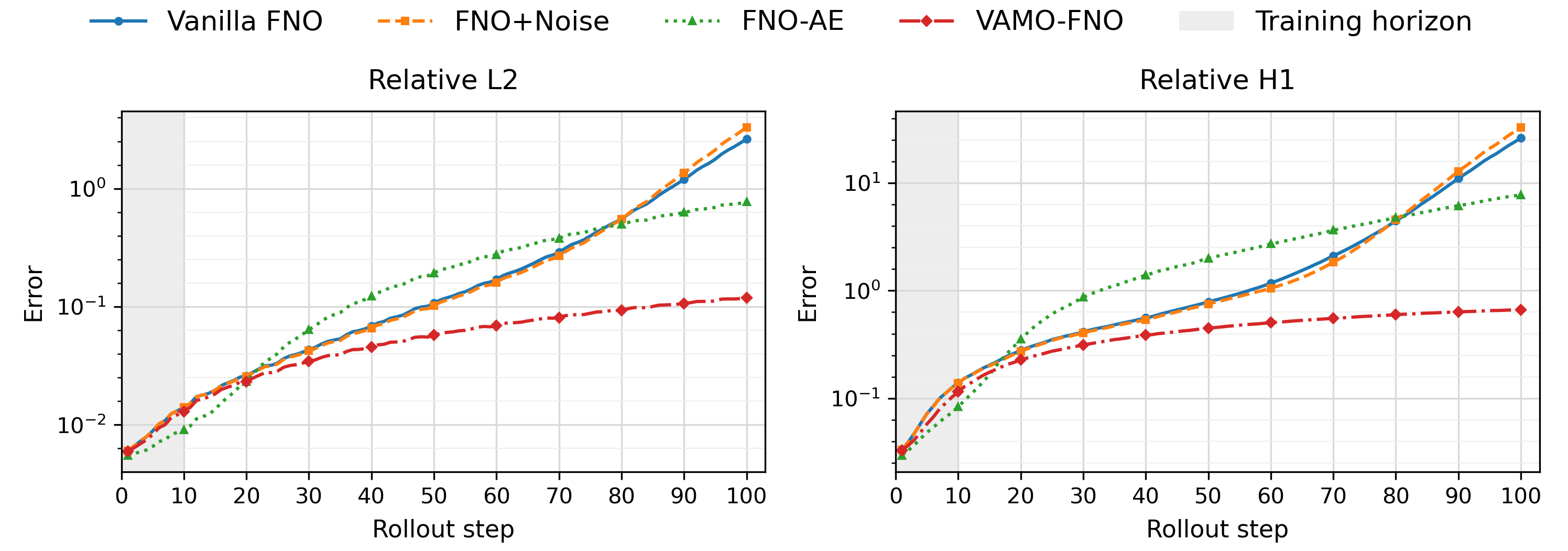}
    \caption{Error trend for 2D CFD Benchmark.}
    \label{fig:cfd2derror}
\end{figure}

\begin{figure}[p]
    \centering
    \begin{subfigure}{\linewidth}
        \centering
        \includegraphics[width=1.0\linewidth]{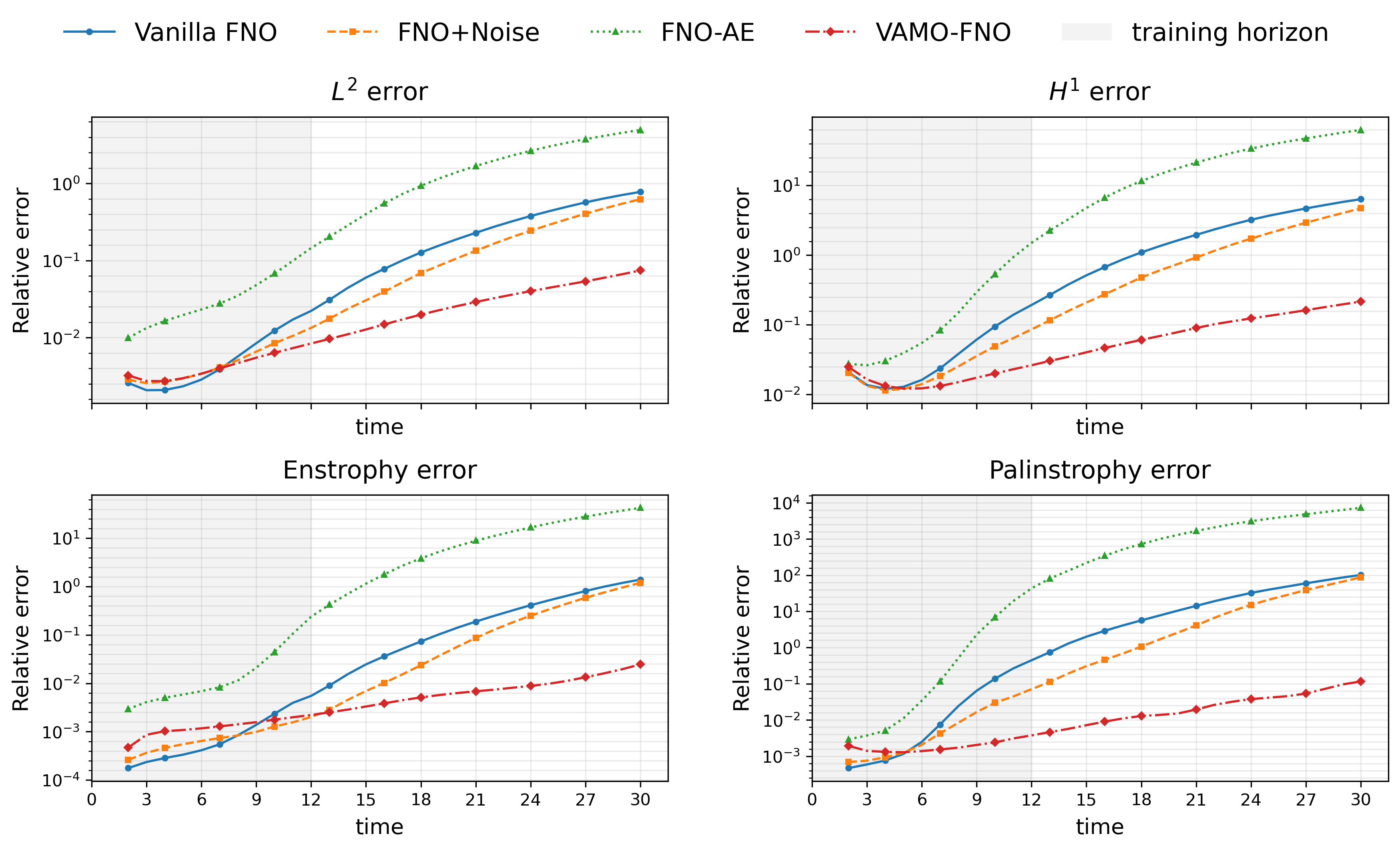}
        \caption{\textsc{GRF} forcing type.}
        \label{fig:nu4grf}
    \end{subfigure}

    \vspace{0.5em}
    
    \begin{subfigure}{\linewidth}
        \centering
        \includegraphics[width=1.0\linewidth]{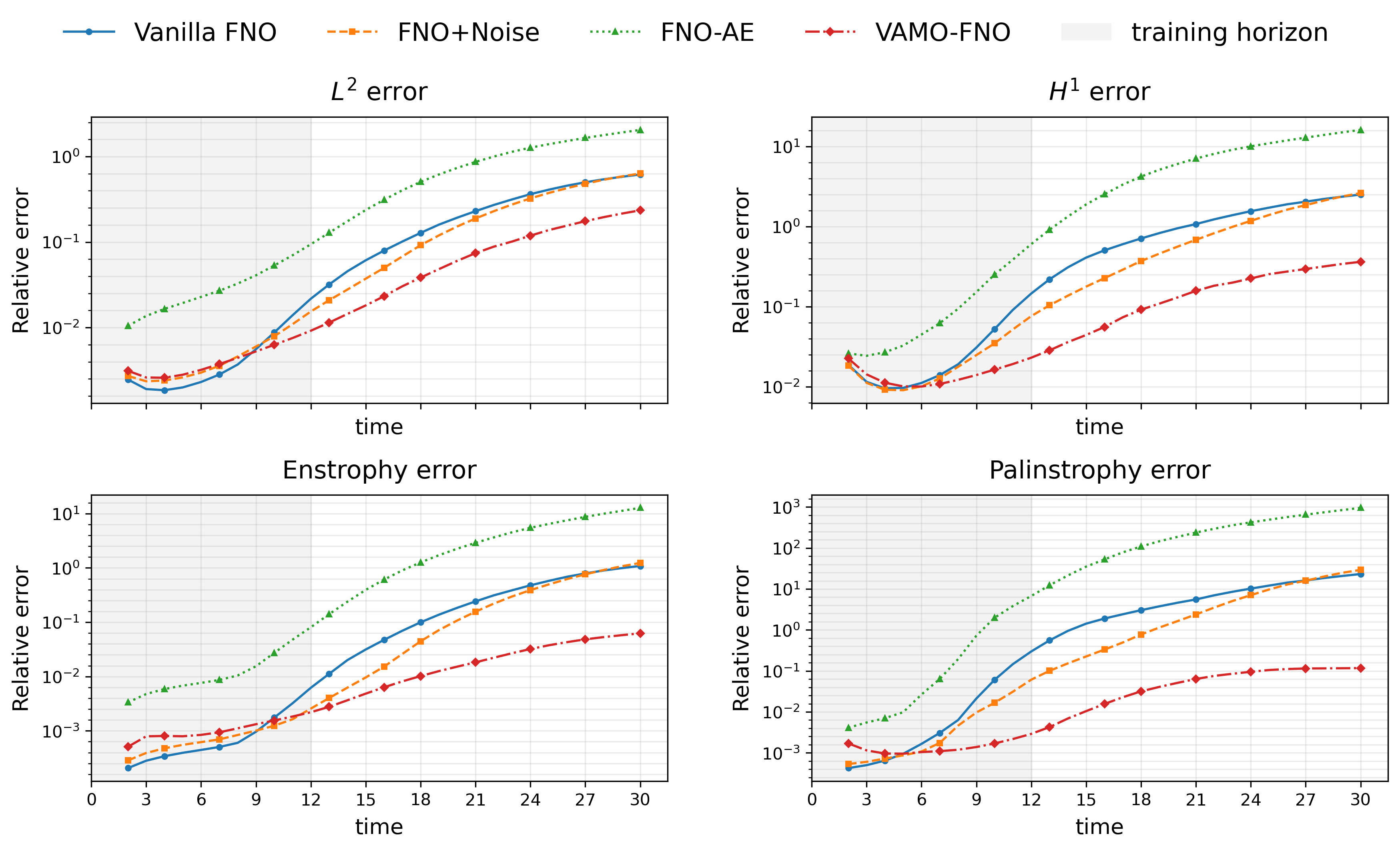}
        \caption{\textsc{Wave} forcing type.}
        \label{fig:nu4wave}
    \end{subfigure}
    \caption{Error trend for 2D incompressible NS Benchmark with $\nu=10^{-4}$.}
    \label{fig:ns2d_nu4_trend}
\end{figure}

\begin{figure}[p]
    \centering
    \begin{subfigure}{\linewidth}
        \centering
        \includegraphics[width=1.0\linewidth]{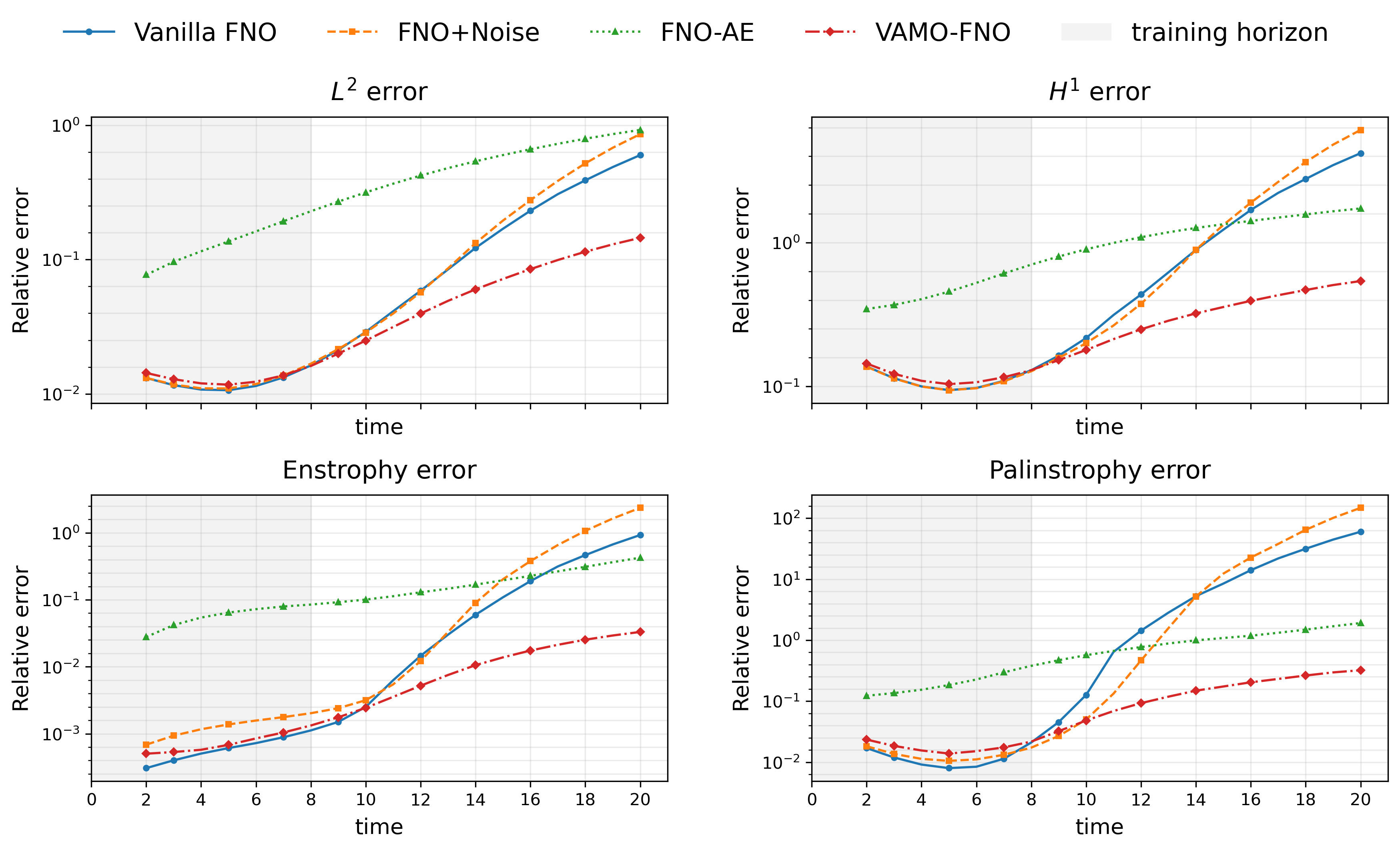}
        \caption{\textsc{GRF} forcing type.}
        \label{fig:nu5grf}
    \end{subfigure}

    \vspace{0.5em}
    
    \begin{subfigure}{\linewidth}
        \centering
        \includegraphics[width=1.0\linewidth]{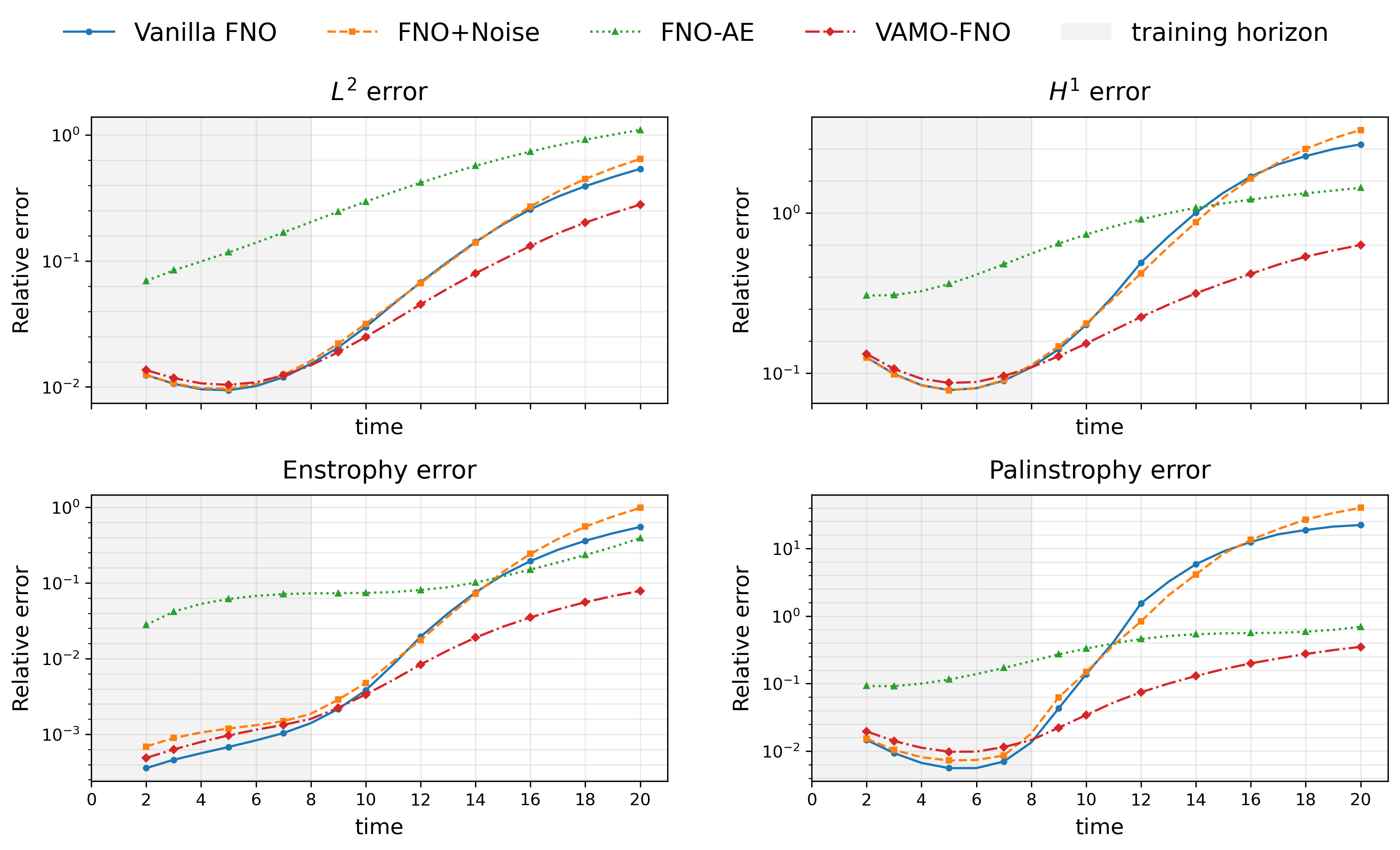}
        \caption{\textsc{Wave} forcing type.}
        \label{fig:nu5wave}
    \end{subfigure}
    \caption{Error trend for 2D incompressible NS Benchmark with $\nu=10^{-5}$.}
    \label{fig:ns2d_nu5_trend}
\end{figure}

Across the three PDE benchmarks, the main advantage of VAMO appears in the later stages of rollout. Its error can be comparable to, or occasionally higher than, that of the direct baselines at early in-distribution snapshots, particularly in easier settings such as the 1D Euler equation with $L=5$. However, the baseline errors generally grow more rapidly as the rollout proceeds, whereas VAMO exhibits substantially slower error accumulation. Consequently, VAMO achieves lower late-time error and better full-rollout accuracy.

This distinction becomes especially clear in the more challenging settings. For Euler with $L=20$, the deterministic latent model remains competitive initially but becomes unstable during the later rollout, while VAMO maintains controlled $L^1$ and $L^2$ errors. On the 2D compressible-flow benchmark, the $L^2$ and $H^1$ errors of the baselines grow rapidly after the training horizon, whereas VAMO remains substantially more stable. A similar pattern is observed in the low-viscosity incompressible Navier--Stokes regimes $\nu=10^{-4}$ and $\nu=10^{-5}$ under both forcing families, where VAMO consistently limits the growth of field-level errors and physically relevant quantities such as enstrophy and palinstrophy. These results indicate that the primary benefit of the variational formulation lies in \textit{improving long-horizon stability} rather than uniformly reducing the error at every individual prediction step.

\begin{figure}[H]
    \centering

    \begin{subfigure}{\linewidth}
        \centering
        \includegraphics[width=0.98\linewidth]
        {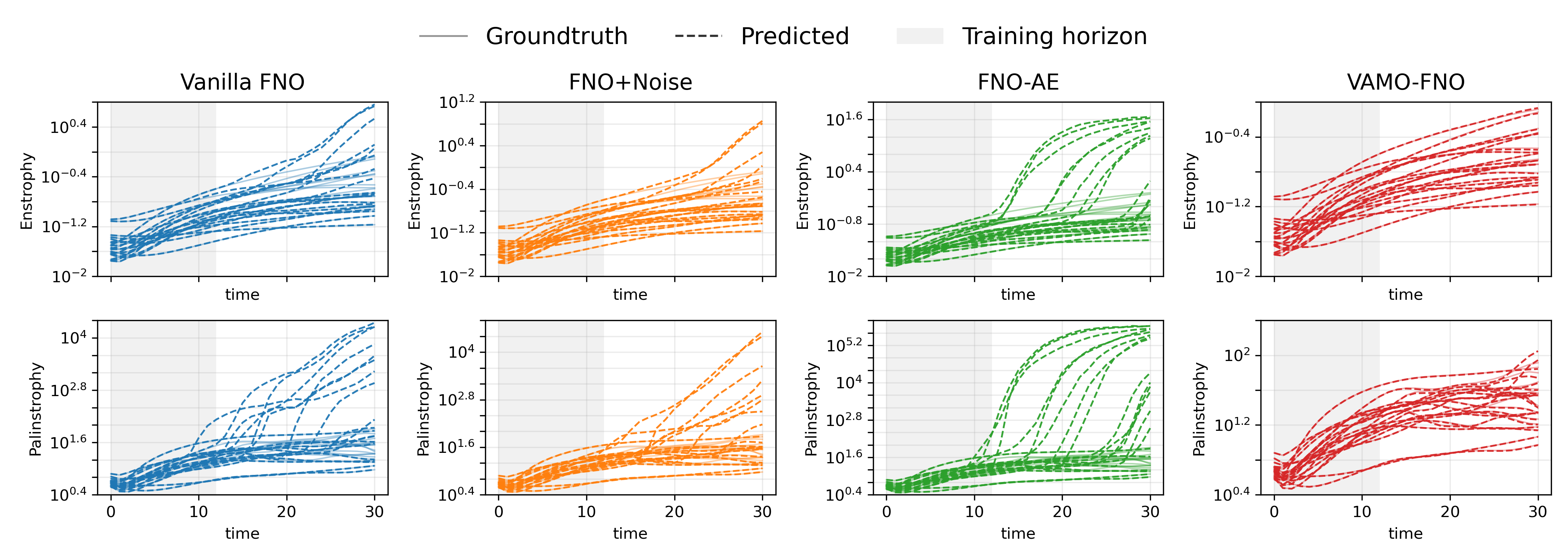}
        \caption{Viscosity $\nu=10^{-4}$, \textsc{GRF} forcing.}
        \label{fig:nsdiag_nu4_grf}
    \end{subfigure}

    \vspace{0.5em}

    \begin{subfigure}{\linewidth}
        \centering
        \includegraphics[width=0.98\linewidth]
        {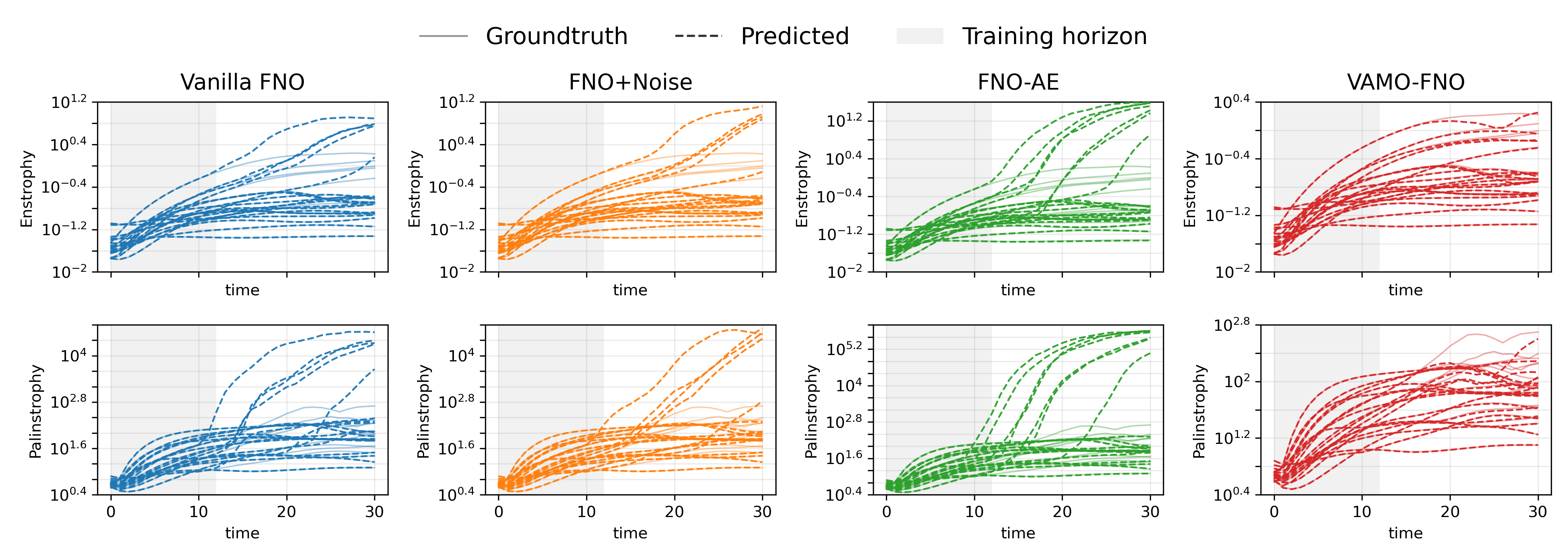}
        \caption{Viscosity $\nu=10^{-4}$, \textsc{Wave} forcing.}
        \label{fig:nsdiag_nu4_wave}
    \end{subfigure}

    \caption{Physical diagnostics for the 2D incompressible Navier-Stokes benchmark. Each panel compares enstrophy and palinstrophy along 20 ground-truth and predicted trajectories. Solid translucent curves denote ground truth, and dashed curves denote predictions.}
    \label{fig:ns2d_physical_diagnostics}
\end{figure}

\begin{figure}[ht]
    \ContinuedFloat
    \centering

    \begin{subfigure}{\linewidth}
        \centering
        \includegraphics[width=\linewidth]
        {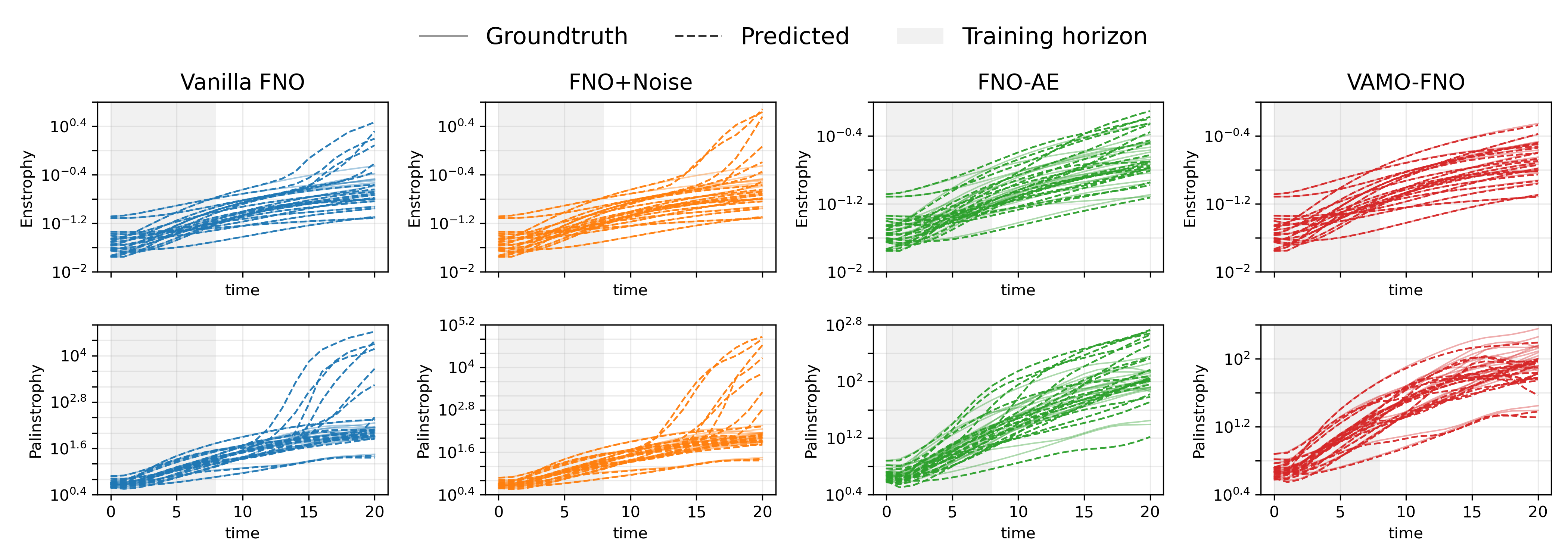}
        \caption{Viscosity $\nu=10^{-5}$, \textsc{GRF} forcing.}
        \label{fig:nsdiag_nu5_grf}
    \end{subfigure}

    \vspace{0.5em}

    \begin{subfigure}{\linewidth}
        \centering
        \includegraphics[width=0.98\linewidth]
        {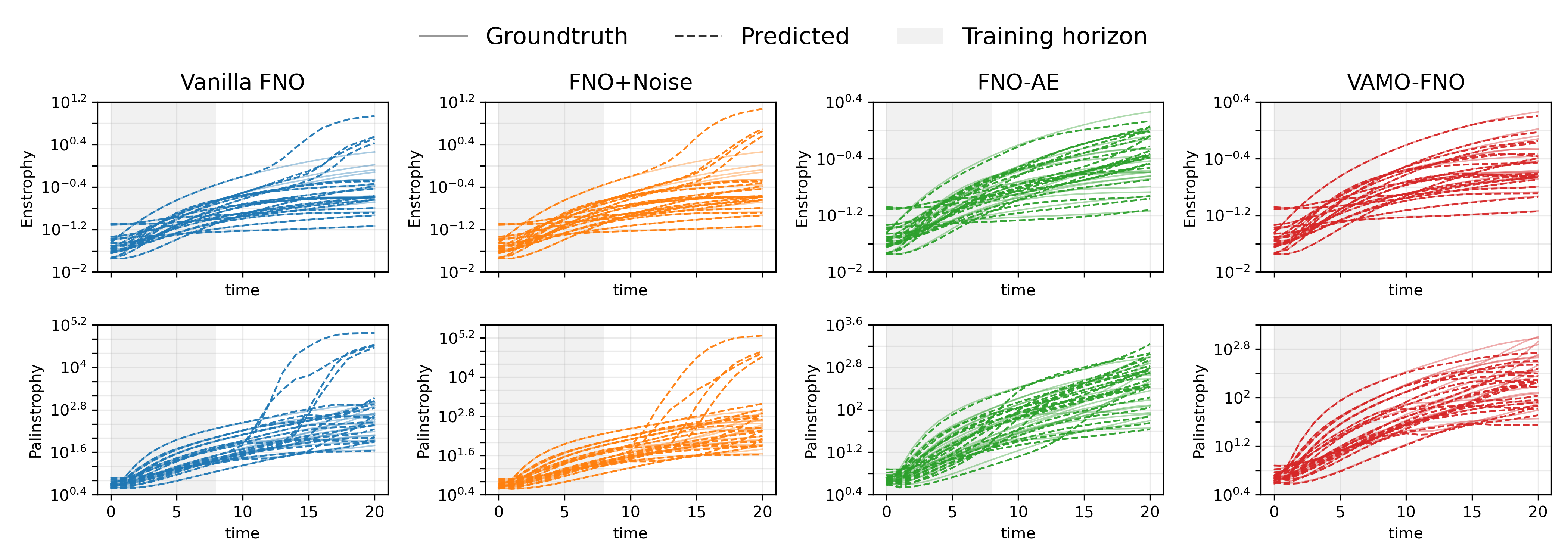}
        \caption{Viscosity $\nu=10^{-5}$, \textsc{Wave} forcing.}
        \label{fig:nsdiag_nu5_wave}
    \end{subfigure}

    \caption[]{Physical diagnostics for the 2D incompressible Navier-Stokes benchmark (continued).}
\end{figure}

\begin{figure}[p]
    \centering

    \begin{subfigure}[t]{\linewidth}
        \centering
        \includegraphics[width=0.98\linewidth]
        {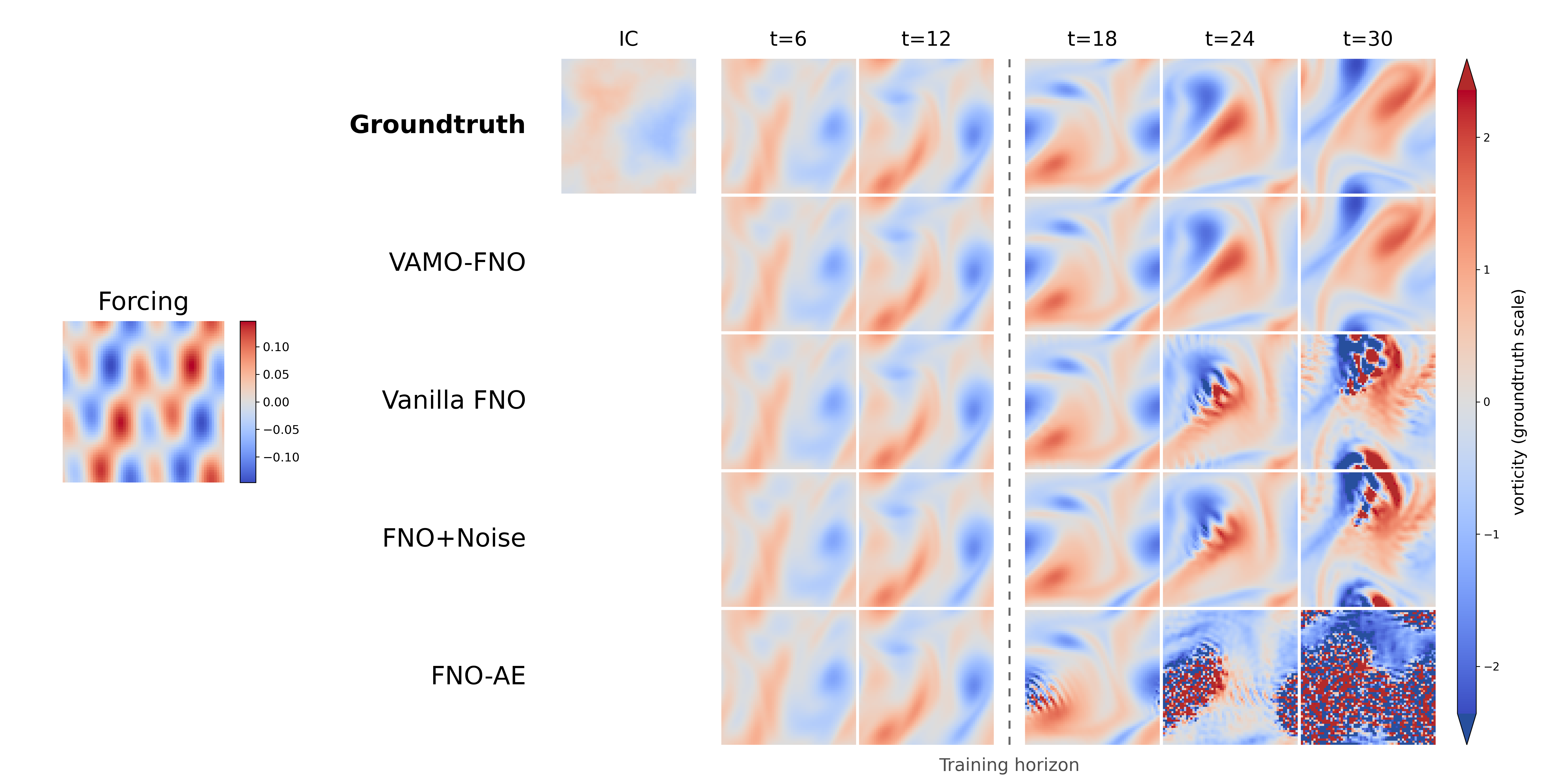}
        \caption{$\nu=10^{-4}$, \textsc{Wave} forcing, Sample 26.}
        \label{fig:ns2d_nu4_grf_trajectory_mb}
    \end{subfigure}

    \vspace{0.8em}
    
    \begin{subfigure}[t]{\linewidth}
        \centering
        \includegraphics[width=\linewidth]
        {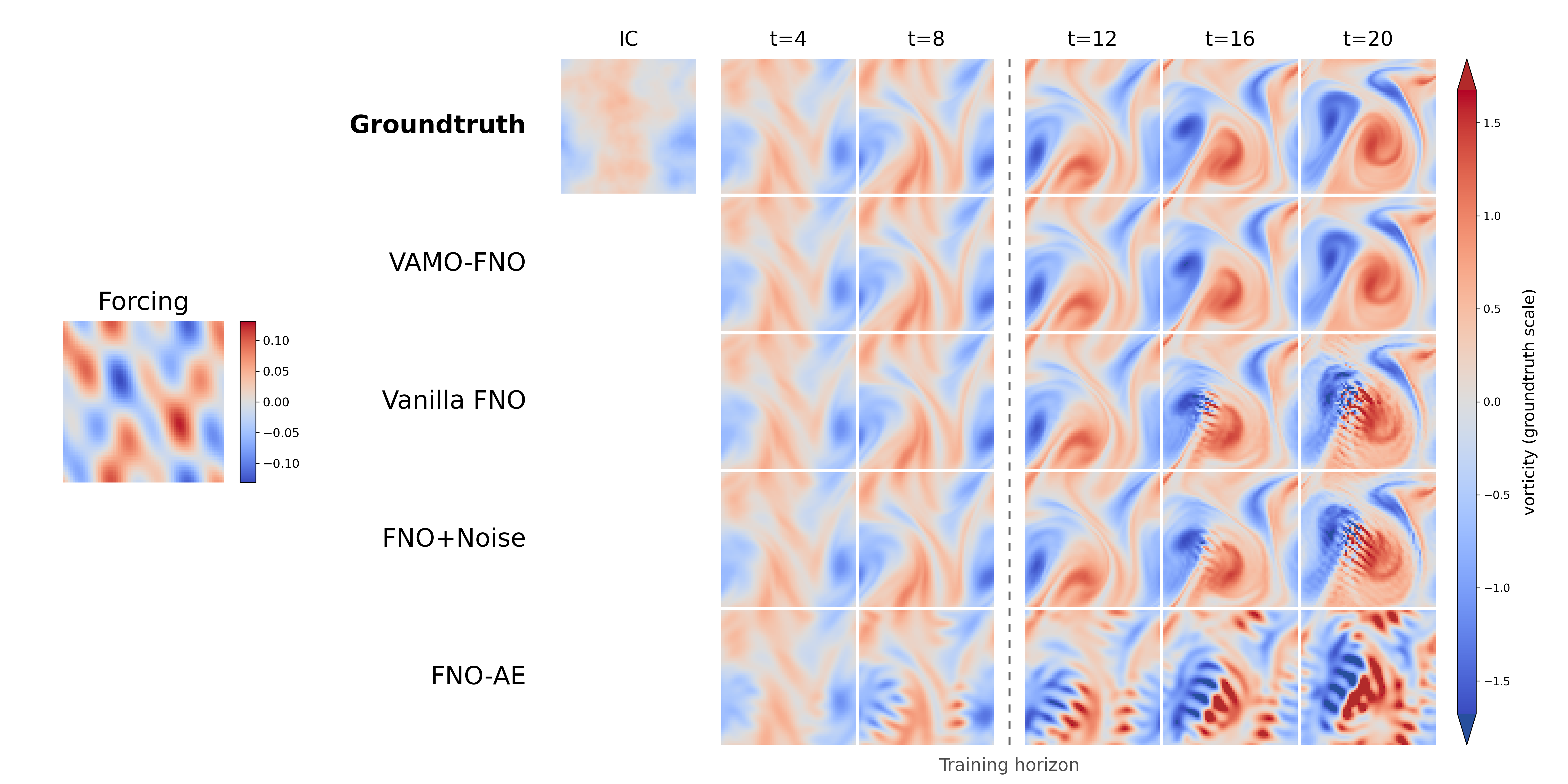}
        \caption{$\nu=10^{-5}$, \textsc{Wave} forcing, Sample 7.}
        \label{fig:ns2d_nu5_wave_trajectory_mb}
    \end{subfigure}
    \caption{
    Representative autoregressive rollouts for the 2D incompressible Navier--Stokes equations with $\nu\in\{10^{-4},10^{-5}\}$. Each panel shows the initial condition, the trajectory-specific forcing, the ground-truth vorticity, and predictions from the four evaluated methods at selected times. All predicted snapshots use the corresponding ground-truth color scale. 
    }
    \label{fig:ns2d_qualitative}
\end{figure}

\subsection{Physical-Statistic Stability}
\label{sec:physical_diagnostics}
We further examine whether the predicted Navier-Stokes trajectories preserve physically relevant flow statistics. Figure~\ref{fig:ns2d_physical_diagnostics} compares the enstrophy and palinstrophy of representative ground-truth and predicted trajectories for the two more challenging regimes, $\nu=10^{-4}$ and $\nu=10^{-5}$, under both forcing families; results for $\nu=10^{-3}$ are deferred to Appendix~\S\ref{app:additional_quantitative}. High enstrophy corresponds to large overall vorticity magnitude, while high palinstrophy reflects strong vorticity gradients and therefore finer-scale spatial structure.

Across the four challenging settings, the baseline failures are strongly trajectory-dependent. The largest deviations and occasional explosions occur primarily on trajectories with high ground-truth enstrophy or palinstrophy. In these cases, small prediction errors can generate excessive vorticity magnitude or spurious small-scale structure, which is reflected by increasingly inflated enstrophy and, more prominently, palinstrophy. FNO-AE is the least stable, with predicted statistics growing by several orders of magnitude in some trajectories. FNO and FNO+Noise are more stable overall, but still tend to overestimate enstrophy and especially palinstrophy on the more demanding trajectories. In contrast, FNO-VAMO tracks the scale and temporal evolution of these statistics substantially more closely and avoids the pronounced late-time inflation observed in the baselines.

The qualitative rollouts in Figure~\ref{fig:ns2d_qualitative} provide a complementary view of the same late-time instability. The baseline predictions remain close to the reference during the early rollout but progressively develop oscillatory and small-scale artifacts beyond the training horizon, with the deterministic latent model deteriorating most severely. FNO-VAMO instead preserves the dominant vorticity structures over the long rollout. Together with the aggregate errors in Table~\ref{tab:ns2d_main_results}, these diagnostics show that the advantage of VAMO extends beyond lower field error to improved control of spurious small-scale growth during autoregressive prediction.

\subsection{Ablation Studies}
To isolate the contributions of the main variational components, we conduct ablations on the two more challenging Navier--Stokes regimes, $\nu=10^{-4}$ and $\nu=10^{-5}$. We consider three variants of our VAMO framework: (i) setting the KL weight to $\beta=0$ while retaining both stochastic components, (ii) removing encoder noise, and (iii) removing transition noise.

In each case, the remaining architecture and training settings are kept unchanged. As in the main experiments, a single model is trained jointly on GRF- and \textsc{Wave}-forced trajectories for each viscosity and evaluated separately on the two forcing families. The aggregate results are reported in Table~\ref{tab:ns2d_variational_ablation}, while Figure~\ref{fig:ns2d_abl_trend} shows how the corresponding $L^2$ and $H^1$ errors evolve throughout the rollout. The temporal trends reinforce the aggregate comparison: removing the KL term leads to rapid error growth, removing encoder noise produces a clear but less severe degradation, and removing transition noise remains close to the full model.

\begin{figure}[p]
    \centering
    \begin{subfigure}{\linewidth}
        \centering
        \includegraphics[width=1.0\linewidth]{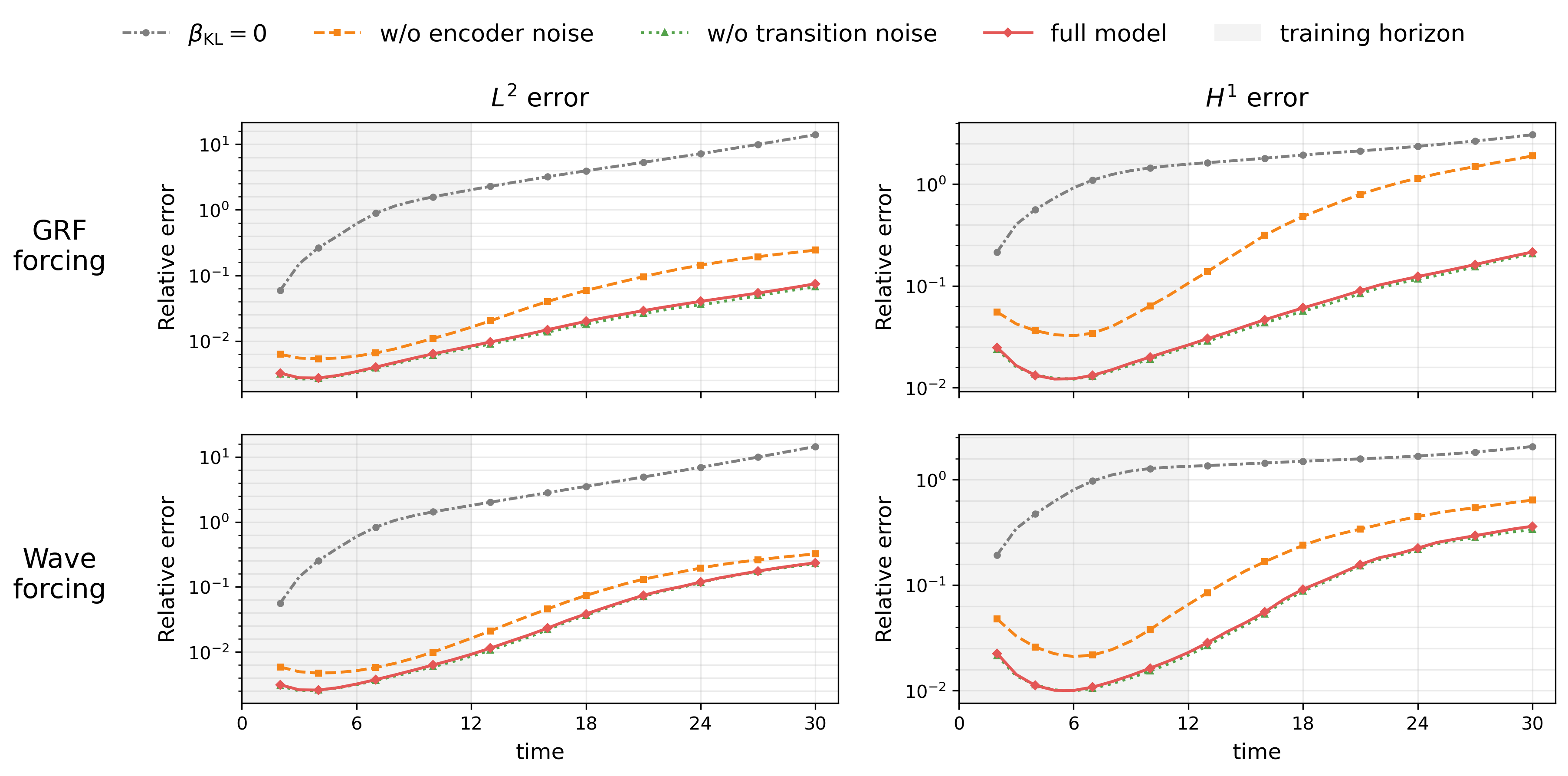}
        \caption{Viscosity $\nu=10^{-4}$.}
        \label{fig:nu4_abl}
    \end{subfigure}

    \vspace{0.5em}
    
    \begin{subfigure}{\linewidth}
        \centering
        \includegraphics[width=1.0\linewidth]{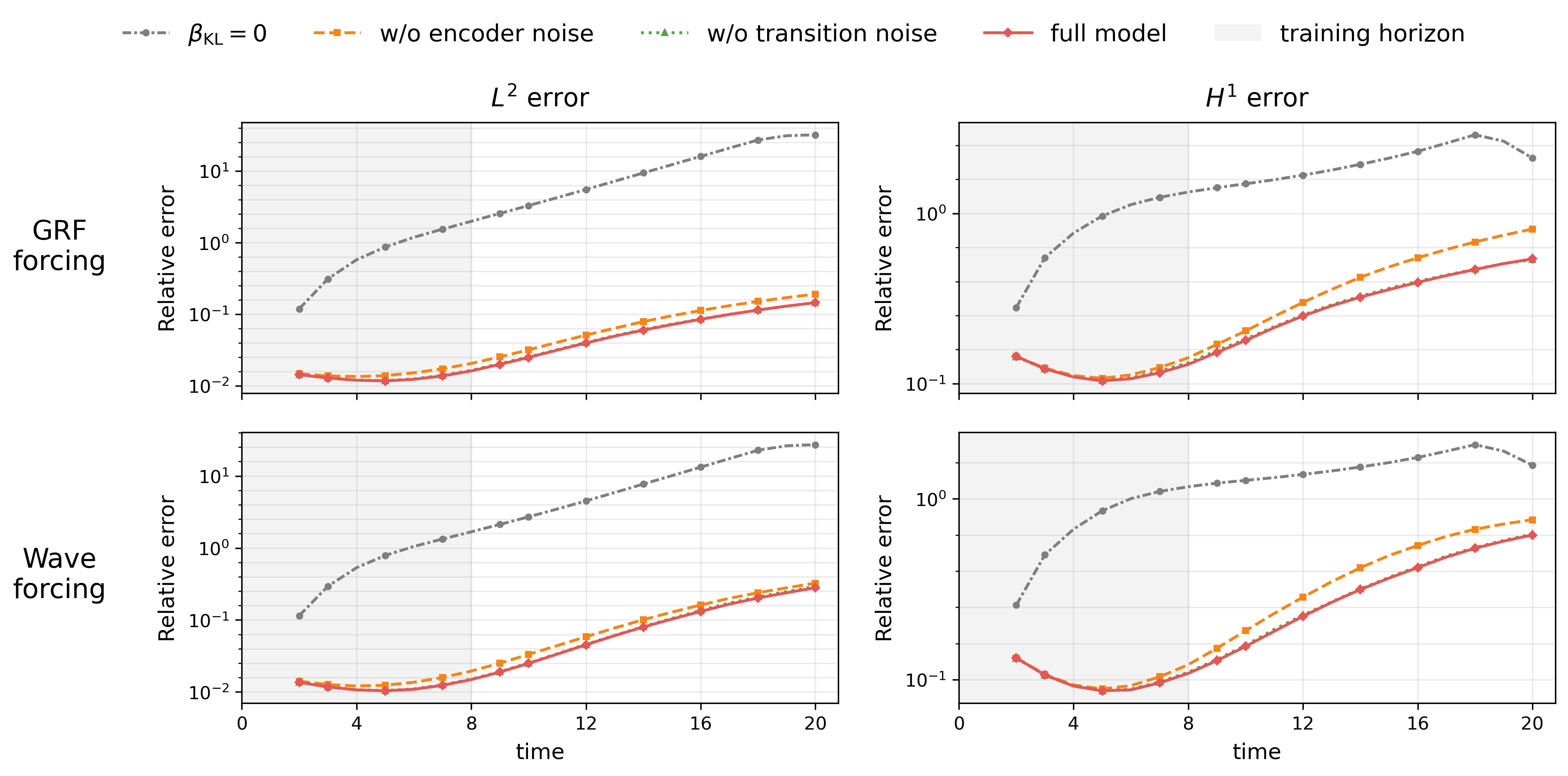}
        \caption{Viscosity $\nu=10^{-5}$.}
        \label{fig:nu5_abl}
    \end{subfigure}
    \caption{
Per-snapshot ablation results on the two-dimensional incompressible Navier-Stokes benchmark for (a) $\nu=10^{-4}$ and (b) $\nu=10^{-5}$. We report relative $L^2$ and $H^1$ errors under both GRF and \textsc{Wave} forcing. The shaded region denotes the training horizon. Removing the KL term causes rapid error growth, removing encoder noise leads to a clear but less severe degradation,
while removing transition noise remains close to the full VAMO model throughout the rollout.
}
    \label{fig:ns2d_abl_trend}
\end{figure}

\begin{table}[t]
\centering
\caption{
Ablation of the latent KL-consistency term on the two-dimensional
incompressible Navier--Stokes benchmark. We compare full VAMO with a variant that sets the KL weight to $\beta=0$ while retaining the same architecture, stochastic encoder and transition, and all other training settings. Results are reported for $\nu\in\{10^{-4},10^{-5}\}$ under GRF and \textsc{Wave} forcing. Lower is better.
}
\label{tab:ns2d_variational_ablation}
\resizebox{\textwidth}{!}{
\begin{tabular}{lllccccc}
\toprule
Viscosity
& Forcing
& Method
& Rel.\,$L^2$
& Rel.\,$H^1$
& $\mathcal{E}$-error
& $\mathcal{P}$-error
& Spec.\ error \\
\midrule

\multirow{8}{*}{$\nu=10^{-4}$}
& \multirow{4}{*}{GRF}
& VAMO ($\beta=0$)
& 7.017 & 2.087 & 92.33 & 3.727 & 4.185 \\
& & VAMO w/o encoder noise
& 0.1373 & 1.018 & 0.0472 & 3.755 & 0.0257 \\
& & VAMO w/o transition noise
& $\mathbf{0.0366}$ & $\mathbf{0.1086}$ & $\mathbf{0.00977}$ & ${0.0573}$ & $\mathbf{0.0102}$ \\
& & Full VAMO
& 0.0403 & 0.1139 & ${0.0139}$ & $\mathbf{0.0512}$ & ${0.0146}$ \\
\cmidrule(lr){2-8}

& \multirow{4}{*}{\textsc{Wave}}
& VAMO ($\beta=0$)
& 6.381 & 1.548 & 82.16 & 1.425 & 7.769 \\
& & VAMO w/o encoder noise
& 0.1827 & 0.4276 & 0.0390 & 0.3667 & 0.0492 \\
& & VAMO w/o transition noise &
$\mathbf{0.1181}$ & $\mathbf{0.2128}$ & $\mathbf{0.0349}$ & $0.1026$ & $\mathbf{0.0303}$ \\
& & Full VAMO
& ${0.1198}$ & ${0.2212}$ & ${0.0386}$ & $\mathbf{0.1008}$ & ${0.0390}$ \\
\midrule

\multirow{8}{*}{$\nu=10^{-5}$}
& \multirow{4}{*}{GRF}
& VAMO ($\beta=0$)
& 17.54 & 2.098 & 614.8 & 3.145 & 28.56 \\
& & VAMO w/o encoder noise
& 0.1105 & 0.5422 & 0.0241 & 0.4616 & 0.0164 \\
& & VAMO w/o transition noise
& 0.0844 & 0.3870 & 0.0224 & $\mathbf{0.2371}$ & 0.0103 \\
& & Full VAMO
& $\mathbf{0.0841}$ & $\mathbf{0.3858}$ & $\mathbf{0.0211}$ & ${0.2415}$ & $\mathbf{0.00929}$ \\
\cmidrule(lr){2-8}

& \multirow{4}{*}{\textsc{Wave}}
& VAMO ($\beta=0$)
& 14.55 & 1.573 & 424.7 & 1.329 & 41.27\\
& & VAMO w/o encoder noise
& 0.1743 & 0.5676 & $\mathbf{0.0405}$ & 0.3532 & 0.0268 \\
& & VAMO w/o transition noise
& 0.1523 & 0.4555 & 0.0488 & 0.2820 & 0.0271 \\
& & Full VAMO
& $\mathbf{0.1469}$ & $\mathbf{0.4509}$ & 0.0460 & $\mathbf{0.2734}$ & $\mathbf{0.0247}$ \\
\bottomrule
\end{tabular}
}
\end{table}

The empirical effects of these ablations can be interpreted through the population rollout analysis in Proposition~\ref{prop:population_kl_rollout}. In particular, the refined bound \eqref{eq:population_refined_rollout} gives
\begin{equation}
\begin{aligned}
\left(
\mathbb E
\left[
\|\widehat{\mathbf U}_n-\mathbf U_n^\star\|_{\mathcal U}^2
\right]
\right)^{1/2}
&\leq
\Lambda_D
\sum_{j=0}^{n-2}
\Lambda_T^{\,n-1-j}
\left(
\eta_j
+
\bar\alpha\sqrt{2\|K\|_{\mathrm{op}}}\,\kappa_j
\right)
+
\Lambda_D\eta_{n-1}
\\
&\qquad
+
\min\left\{
\Lambda_D\bar\alpha\sqrt{2\|K\|_{\mathrm{op}}}\,
\kappa_{n-1}+\delta_n,
\;
\epsilon_{n-1}+\rho_{n-1}
\right\},
\end{aligned}
\label{eq:ablation_rollout_recall}
\end{equation}
where $\kappa_j$ measures the latent KL mismatch, $\eta_j$ measures the sensitivity of the decoded transition to encoder-side perturbations, and $\rho_j$ measures sensitivity to transition-side perturbations. The bound highlights an important asymmetry: $\kappa_j$ and $\eta_j$ enter throughout the recursively accumulated rollout history and are weighted by the geometric factors $\Lambda_T^{\,n-1-j}$, whereas $\rho_{n-1}$ appears only in the terminal predictive refinement. This provides a useful lens for interpreting the markedly different effects of the three ablations:
\begin{itemize}[itemsep=0pt]
\item \textbf{KL consistency.}
Removing the KL term produces by far the largest degradation in Table~\ref{tab:ns2d_variational_ablation}. Across both viscosities and forcing families, the $\beta=0$ model becomes dramatically less accurate in field-level and physical metrics, with relative $L^2$, enstrophy, and spectral errors increasing by orders of magnitude. This behavior is consistent with the role of $\kappa_j$ in \eqref{eq:ablation_rollout_recall}: the KL-controlled latent mismatch enters at every previous rollout step and is accumulated with the geometric factors $\Lambda_T^{\,n-1-j}$. Thus, without explicit alignment between the predicted latent transition and the encoded next-state distribution, latent transition errors can be repeatedly amplified throughout autoregressive evolution.

\item\textbf{Encoder noise.}
Removing encoder noise also consistently worsens performance, although less severely than removing the KL term. The degradation is particularly visible in derivative-sensitive and physical-statistic errors. This observation agrees with the role of the encoder-neighborhood sensitivity $\eta_j$, which, like $\kappa_j$, appears throughout the geometrically weighted rollout sum. Encoder-side perturbations expose the learned transition to a neighborhood of the encoded trajectory during training, thereby encouraging reduced sensitivity to deviations from the nominal latent state. The ablation suggests that this neighborhood regularization is important precisely because such input-side sensitivity can be propagated and amplified over many subsequent rollout steps.

\item\textbf{Transition noise.}
In contrast, removing transition noise affects the performance of VAMO only marginally. The ablated model remains close to full VAMO and is slightly better on several metrics for $\nu=10^{-4}$, while full VAMO is generally competitive or marginally better for $\nu=10^{-5}$. This observation is consistent with the refined rollout bound: unlike $\kappa_j$ and $\eta_j$, the transition-noise sensitivity $\rho_{n-1}$ appears only in the terminal predictive refinement and does not enter the geometrically amplified rollout history. Thus, transition noise appears to act primarily as a secondary local regularizer whose contribution is more dependent on the particular dynamical regime.
\end{itemize}

Taken together, the ablations reveal a clear empirical hierarchy among the variational components. Latent KL consistency is essential for stable evolution, encoder-side neighborhood regularization provides a second substantial source of robustness, and transition noise has a smaller, problem-dependent effect. This hierarchy closely mirrors the structure of the rollout analysis: quantities governing latent consistency and input-side sensitivity enter the recursively amplified part of the error bound, whereas transition-output sensitivity enters only the terminal predictive refinement. While the ablations modify the perturbation distributions and therefore should not be viewed as direct consequences of the bound, their behavior provides empirical support for the mechanism identified by the theory.

\section{Conclusion and Outlook}
\label{sec:conclusion}

In this work, we developed a variational framework for learning time-dependent PDE dynamics with an emphasis on long-horizon autoregressive prediction. The proposed formulation represents physical states through latent distributions and models their evolution with a probabilistic Markov transition, while retaining deterministic mean dynamics at inference. At the theoretical level, we formulated the model directly on function spaces, characterized the geometry induced by functional Gaussian distributions, and connected stochastic latent perturbations and variational transition alignment to the propagation of deterministic rollout errors. We instantiated this framework as the Variational Autoencoding Markov Operator (VAMO), which combines spatially resolved latent representations, structured Gaussian perturbations, and a neural-operator transition. Across the compressible Euler, compressible-flow, and incompressible Navier--Stokes benchmarks, VAMO consistently exhibited slower error accumulation and improved long-horizon stability compared with direct, deterministic-latent, and generic noise-injection baselines.

\paragraph{Limitations.} This work also leaves several limitations. Our theoretical analysis is intended primarily to expose the mechanisms through which the variational objective enters autoregressive error propagation; the resulting bounds rely on regularity assumptions and are not expected to be quantitatively tight for general nonlinear PDE dynamics. For our empirical study, the current realization uses an FNO-based latent transition and spectrally structured Gaussian perturbations on regular grids, and our experiments focus on fluid-dynamics systems with a fixed local prediction interval. The results therefore do not establish that the same architecture or stochastic parameterization is universally preferable across other PDE classes, geometries, discretizations, or temporal training regimes.

\paragraph{Future directions.} Several extensions of this work are natural. First, the present objective is formulated from local transition pairs. A multi-step variational objective could instead be derived by introducing latent distributions over several consecutive transitions, providing a principled basis for combining variational dynamics with multi-step or curriculum-based training. Such a formulation may further reduce the mismatch between training and long autoregressive deployment. Second, VAMO is not tied to the FNO transition used in our experiments: the latent evolution operator could be replaced by geometry-aware or attention-based architectures such as Geo-FNO, OFormer, or PiT, allowing the variational formulation to be studied together with spatial-resolution transfer, irregular geometries, and more flexible discretizations. More broadly, the probabilistic latent dynamics developed here provide a starting point for studying how distributional modeling and deterministic long-horizon prediction can be combined in neural PDE solvers.

More broadly, the variational latent-dynamics framework developed here provides a starting point for studying how probabilistic modeling can support reliable deterministic evolution in neural PDE solvers and operator learning.

\bibliographystyle{ims}
\bibliography{reference}

\newpage
\appendix
\section{Gaussian Measures and Cameron-Martin Spaces}
\subsection{Gaussian Measures on Banach Spaces}
\label{app:gaussian_measures_banach}
We briefly review several basic notions for Gaussian measures on Banach spaces \citep{bogachev1998gaussian,kuo2006gaussian}. Let $\mathcal X$ be a real separable Banach space equipped with its Borel $\sigma$-algebra, and let $\mathcal X^*$ denote its dual space. A Borel probability measure $\mu$ on $\mathcal X$ is called a \textit{Gaussian measure} if, for every continuous linear functional $f\in \mathcal X^*$, the pushforward
\[
    f_\sharp \mu=\mu\circ f^{-1}
\]
is a Gaussian probability measure on $\mathbb R$, possibly degenerate. Equivalently, if a random element $u\sim \mu$, then $f(u)$ is a real-valued Gaussian random variable for every $f\in\mathcal X^*$.

The \textit{mean} of $\mu$ is defined weakly through its action on linear functionals. For each $f\in\mathcal X^*$, one writes
\[
    m_\mu(f)
    :=
    \int_{\mathcal X} f(u)\,\d\mu(u).
\]
This defines a linear functional on $\mathcal X^*$, hence an element of the bidual $\mathcal X^{**}$. If there exists an element $m\in\mathcal X$ such that
\[
    f(m)
    =
    \int_{\mathcal X} f(x)\,\d\mu(x),
    \qquad \forall f\in\mathcal X^*,
\]
then $m$ is called the \textit{mean element} of $\mu$. In this case we write $m = \mathbb E_{X\sim\mu}[X]$ in the weak sense. For Gaussian measures on separable Banach spaces, the mean can typically be identified with an element of the underlying Banach space $\mathcal X$.

The \textit{covariance} of $\mu$ is also naturally defined through dual pairings. Assuming that $\mu$ has mean $m\in\mathcal X$, its \textit{covariance bilinear form} is
\[
    C_\mu(f,g)
    :=
    \int_{\mathcal X}
    \bigl(f(u)-f(m)\bigr)
    \bigl(g(u)-g(m)\bigr)
    \,\d\mu(u),
    \qquad f,g\in\mathcal X^*.
\]
Equivalently, $C_\mu(f,g)=\operatorname{Cov}_{X\sim\mu}(f(X),g(X))$.
Thus $C_\mu:\mathcal X^*\times \mathcal X^*\to\mathbb R$ is a positive semidefinite bilinear form. In particular,
\[
    C_\mu(f,f)
    =
    \operatorname{Var}_\mu(f(u))
    \ge 0.
\]
The covariance bilinear form induces a \textit{covariance operator}. Abstractly, one may define a mapping $K_\mu:\mathcal X^*\to \mathcal X^{**}$
by
\[
    (K_\mu f)(g)
    :=
    C_\mu(f,g),
    \qquad f,g\in\mathcal X^*.
\]
When $K_\mu f$ can be identified with an element of $\mathcal X$, we regard the covariance operator as a mapping $K_\mu:\mathcal X^*\to \mathcal X$
satisfying
\[
    g(K_\mu f)
    =
    C_\mu(f,g),
    \qquad f,g\in\mathcal X^*.
\]
This is the natural Banach-space analogue of the finite-dimensional covariance matrix. Indeed, if $\mathcal X=\mathbb R^d$ and $\mu=\mathsf N(m,\Sigma)$, then each $f\in\mathcal X^*$ has the form $f(u)=a^\top u$ for some $a\in\mathbb R^d$, and
\[
    C_\mu(f,g)
    =
    a^\top \Sigma b,
    \qquad
    g(u)=b^\top u.
\]
In this case the covariance operator is simply multiplication by the covariance matrix $\Sigma$.

\subsection{Abstract Wiener Space and the Cameron-Martin Theorem}
\label{app:cameron_martin}

We next review the Cameron-Martin space associated with a Gaussian measure on a Banach space. A convenient framework is provided by the notion of an abstract Wiener space \citep{stroock2010probability}.

\begin{definition}[Abstract Wiener space]
\label{def:abstract_wiener_space}
Let $\mathcal H$ be a real separable Hilbert space and let $\mathcal X$ be a real separable Banach space. Suppose that $\mathcal H$ is continuously and densely embedded into $\mathcal X$ through an injective linear map
\[
    \iota:\mathcal H\hookrightarrow \mathcal X.
\]
The triple $(\mathcal H,\mathcal X,\mu)$ is called an \emph{abstract Wiener space} if $\mu$ is a centered Gaussian measure on $\mathcal X$ whose covariance structure is induced by the embedding $\iota$ in the following sense: for every $f\in\mathcal X^*$, the pushforward $f_\sharp\mu$ is a centered real Gaussian measure satisfying
\[
    \int_{\mathcal X} f(u)^2\,\d\mu(u)
    =
    \|\iota^*f\|_{\mathcal H}^2,
\]
where $\iota^*:\mathcal X^*\to\mathcal H$ is the adjoint map defined by
\[
    \langle \iota^*f,h\rangle_{\mathcal H}
    =
    f(\iota h),
    \qquad f\in\mathcal X^*,\ h\in\mathcal H.
\]
The Hilbert space $\mathcal H$ is called the \emph{Cameron-Martin space} of $\mu$.
\end{definition}
Equivalently, the covariance operator $K_\mu:\mathcal X^*\to\mathcal X$ satisfies
\[
    K_\mu=\iota\iota^*
\]
in the weak sense that
\[
    f(K_\mu g)=
    \langle \iota^* f,\iota^* g\rangle_{\mathcal H},
    \qquad f,g\in\mathcal X^*.
\]
Thus the Cameron-Martin inner product determines the covariance structure of all continuous linear observations of the Gaussian random element.

We now define the Paley-Wiener map associated with the abstract Wiener space. For any $f,g\in\mathcal X^*$, we have
\[
    \mathbb E_{X\sim\mu}\left[f(X)g(X)\right]
    =
    \langle \iota^* f,\iota^* g\rangle_{\mathcal H}.
\]
Hence the assignment $\iota^* f\mapsto f(\cdot)$
is an isometry from the subspace $\iota^*(\mathcal X^*)\subset\mathcal H$ into $L^2(\mathcal X,\mu)$. Since $\iota^*(\mathcal X^*)$ is dense in $\mathcal H$ for an abstract Wiener space, this isometry extends uniquely to all of $\mathcal H$. This extension is called the Paley-Wiener map. For $h\in\mathcal H$, we denote its image by
\[
    \langle h,\cdot\rangle^\sim
    \in L^2(\mathcal X,\mu).
\]
Thus $\langle h,u\rangle^\sim$ is a centered Gaussian random variable satisfying
\[
    \mathbb E_\mu
    \left[
        \langle h,\cdot\rangle^\sim
        \langle k,\cdot\rangle^\sim
    \right]
    =
    \langle h,k\rangle_{\mathcal H},
    \qquad h,k\in\mathcal H.
\]
In particular, for $f\in\mathcal X^*$,
\[
    \langle \iota^*f,x\rangle^\sim
    =
    f(x),
    \qquad \mu\text{-a.s.}
\]
This identity is the main point of the construction: the Paley-Wiener map extends ordinary linear observations $f(x)$ from directions of the form $\iota^*f$ to arbitrary Cameron-Martin directions $h\in\mathcal H$.

The key property of the Cameron-Martin space is that it characterizes the directions along which the Gaussian measure can be translated without changing its measure class. For $h\in\mathcal X$, define the translated measure
\[
    \mu_h(A)
    :=
    \mu(A-h),
    \qquad A\in\mathcal B(\mathcal X).
\]
Equivalently, if $u\sim\mu$, then $u+h\sim\mu_h$.

\begin{theorem}[Cameron-Martin theorem]
\label{thm:cameron_martin}
Let $(\mathcal H,\mathcal X,\mu)$ be an abstract Wiener space. For $h\in\mathcal X$, let $\mu_h$ be the translated measure defined by $\mu_h=\mu(\cdot-h)$.
Then $\mu_h\ll\mu$ if and only if $h\in\iota(\mathcal H)$. Identifying $\mathcal H$ with its image $\iota(\mathcal H)\subset\mathcal X$, if $h\in\mathcal H$, then
\[
    \frac{\d\mu_h}{\d\mu}(u)
    =
    \exp\left(
        \langle h,u\rangle^\sim
        -
        \frac12\|h\|_{\mathcal H}^2
    \right).
\]
If $h\notin\iota(\mathcal H)$, then $\mu_h\perp\mu$.
\end{theorem}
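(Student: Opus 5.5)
We prove the Cameron--Martin theorem in three stages: the density formula for $h\in\iota(\mathcal H)$, then equivalence versus singularity, using characteristic functionals throughout. Fix $h\in\mathcal H$ and define the candidate density
\[
    \rho_h(u):=\exp\left(\langle h,u\rangle^\sim-\tfrac12\|h\|_{\mathcal H}^2\right).
\]
By construction of the Paley--Wiener map, $\langle h,\cdot\rangle^\sim$ is a centered Gaussian with variance $\|h\|_{\mathcal H}^2$. Hence $\rho_h>0$ and $\int\rho_h\,\d\mu=1$, so $\nu:=\rho_h\,\mu$ is a probability measure equivalent to $\mu$.

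To identify $\nu$ with $\mu_h$, we compare Fourier transforms on $\mathcal X^*$. For $f\in\mathcal X^*$ we have $f(u)=\langle\iota^*f,u\rangle^\sim$ $\mu$-a.s. Thus $(f,\langle h,\cdot\rangle^\sim)$ is a jointly Gaussian pair, being the image of the Gaussian Hilbert space under the isometry, with covariance $\langle\iota^*f,h\rangle_{\mathcal H}=f(\iota h)$. Computing $\int e^{\mathrm i f(u)}\rho_h(u)\,\d\mu(u)$ as the expectation of the exponential of a complex linear combination of jointly Gaussian variables gives
\[
    \exp\left(\mathrm i f(h)-\tfrac12\|\iota^*f\|_{\mathcal H}^2\right)=e^{\mathrm i f(h)}\hat\mu(f)=\hat\mu_h(f).
\]
Borel probability measures on a separable Banach space are determined by their characteristic functionals, since cylinder sets generate the Borel $\sigma$-algebra. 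Hence $\nu=\mu_h$, which gives both $\mu_h\ll\mu$ and the stated density.

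For the converse and dichotomy, suppose $h\in\mathcal X\setminus\iota(\mathcal H)$; we show $\mu_h\perp\mu$, which also yields the ``only if''. The plan is to exploit the characterization
\[
    h\in\iota(\mathcal H)\iff \sup\{f(h): f\in\mathcal X^*,\ \|\iota^*f\|_{\mathcal H}\le1\}<\infty.
\]
The forward direction is immediate from $f(\iota k)=\langle\iota^*f,k\rangle_{\mathcal H}$. For the reverse, a finite supremum makes $\iota^*f\mapsto f(h)$ a bounded functional on the dense subspace $\iota^*(\mathcal X^*)$, represented by some $k\in\mathcal H$. Then $f(h)=f(\iota k)$ for all $f$, so $h=\iota k$ by Hahn--Banach. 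Injectivity of $\iota$ ensures $\iota^*(\mathcal X^*)$ is dense, as stated in the appendix.

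If the supremum is infinite, we pick $f_n$ with $\|\iota^*f_n\|_{\mathcal H}\le1$ and $f_n(h)\ge n^2$. Under $\mu$, $f_n$ is $\mathsf N(0,\le1)$; under $\mu_h$, $f_n$ is $\mathsf N(f_n(h),\le1)$. Setting $A_n=\{u: |f_n(u)|<n^2/2\}$, Chebyshev's inequality gives $\mu(A_n^c)\le 4/n^4$ and $\mu_h(A_n)\le 4/n^4$. Borel--Cantelli applied to $A=\liminf_n A_n$ then yields $\mu(A)=1$ and $\mu_h(A)=0$, so the measures are mutually singular.

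The main obstacle is the step using the joint Gaussianity of $f$ and $\langle h,\cdot\rangle^\sim$. This requires the Paley--Wiener extension to take values in the closed Gaussian subspace of $L^2(\mu)$, which holds because $L^2$-limits of centered jointly Gaussian variables remain jointly Gaussian. Justifying this, along with the uniqueness of the characteristic functional, is the only genuinely technical point; everything else is direct computation.
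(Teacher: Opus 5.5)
Your proof is correct. The paper does not prove this theorem: it is quoted in the appendix as standard background with a pointer to Stroock's monograph, so there is no internal argument to compare against, and what you have written is a complete, self-contained proof of the cited result.

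\textbf{Density half.} This is the standard route. You identify $\rho_h\,\mu$ with $\mu_h$ through characteristic functionals. Because $\rho_h>0$, this actually gives $\mu_h\sim\mu$, not merely $\mu_h\ll\mu$. The joint Gaussianity you flag is settled by linearity of the Paley--Wiener map together with the fact that centered Gaussians are closed under $L^2$ limits. The complex exponential moment then follows by analytic continuation or by completing the square.

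\textbf{Singular half.} Here your route is more elementary than the common textbook one, which goes through Kakutani's product-measure dichotomy or Hellinger integrals. The dual-norm characterization of $\iota(\mathcal H)$ reduces everything to a single sequence of functionals $f_n$ with $\|\iota^*f_n\|_{\mathcal H}\le1$ and $f_n(h)\to\infty$. After that, only Chebyshev's inequality and Borel--Cantelli are needed.

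\textbf{Two small points to tighten.}
\begin{itemize}
\item The map $\iota^*f\mapsto f(h)$ must be shown to be well defined. It is, because $\iota^*$ is injective: if $\iota^*f=0$ then $f$ vanishes on the dense set $\iota(\mathcal H)$, so $f=0$. Alternatively, apply your scaling argument to $\pm t f$.
\item Density of $\iota^*(\mathcal X^*)$ is not needed in the Riesz step. You can represent the bounded functional on the closure of $\iota^*(\mathcal X^*)$ and still conclude $f(h)=f(\iota k)$ for all $f\in\mathcal X^*$.
\end{itemize}
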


Here the notation $\langle h,u\rangle^\sim$ should not be confused with the inner product $\langle h,u\rangle_{\mathcal H}$ in the Hilbert space. In infinite dimensions, a typical sample $u\sim\mu$ does not belong to $\mathcal H$, so $\langle h,u\rangle_{\mathcal H}$ is generally not defined. The Paley-Wiener variable $\langle h,u\rangle^\sim$ is the rigorous substitute for this formal pairing.

\paragraph{Construction from a Gaussian measure.}
Conversely, the preceding abstract Wiener-space structure can be constructed from a centered Gaussian measure on a separable Banach space. Let $\mu$ be a centered Gaussian measure on $\mathcal X$ with covariance operator $K_\mu:\mathcal X^*\to\mathcal X$, and let
\[
    \mathcal G_\mu
    :=
    \overline{\mathcal X^*}^{\,L^2(\mathcal X,\mu)}
\]
denote the closure of the continuous linear functionals in $L^2(\mathcal X,\mu)$. For $f\in\mathcal G_\mu$, define $R_\mu f\in\mathcal X$ weakly by
\[
    R_\mu f
    :=
    \int_{\mathcal X}x f(x)\,\d\mu(x),
\]
meaning that
\[
    g(R_\mu f)
    =
    \int_{\mathcal X}g(x)f(x)\,\d\mu(x),
    \qquad
    \forall g\in\mathcal X^*.
\]
For $f\in\mathcal X^*$, one has
\[
    R_\mu f=K_\mu f.
\]
The corresponding Cameron--Martin space is
\[
    \mathcal H_\mu
    :=
    R_\mu(\mathcal G_\mu)
    \subset\mathcal X,
\]
equipped with the inner product induced from $L^2(\mathcal X,\mu)$:
\[
    \langle R_\mu f,R_\mu g\rangle_{\mathcal H_\mu}
    :=
    \int_{\mathcal X}f(x)g(x)\,\d\mu(x).
\]
With the canonical embedding $\iota:\mathcal H_\mu\hookrightarrow\mathcal X$, the triple $(\mathcal H_\mu,\mathcal X,\mu)$ is an abstract Wiener space of the form described above. Moreover, the inverse map
\[
    R_\mu^{-1}:
    \mathcal H_\mu
    \to
    \mathcal G_\mu
    \subset L^2(\mathcal X,\mu)
\]
coincides with the Paley-Wiener map:
\[
    \langle h,\cdot\rangle^\sim=R_\mu^{-1}h.
\]
In particular, for $h=K_\mu f$ with $f\in\mathcal X^*$,
\[
    \langle h,x\rangle^\sim
    =\langle K_\mu f,x\rangle^\sim=f(x),
    \qquad
    \mu\text{-a.s.}
\]
This identity explains why the Paley-Wiener map provides the rigorous counterpart of the formal pairing $\langle h,x\rangle_{\mathcal H_\mu}$. In infinite dimensions, a typical sample $x\sim\mu$ need not belong to $\mathcal H_\mu$, so this Hilbert-space inner product is generally not defined.

Specializing Theorem~\ref{thm:cameron_martin} to the construction above gives the following form of Cameron-Martin theorem, which is used later in our analysis.

\begin{theorem}[Cameron-Martin theorem for Gaussian measures]
\label{thm:cameron_martin2}
Let $\mu$ be a centered Gaussian measure on a real, separable Banach space $\mathcal X$, and let $\mathcal H_\mu$ be its Cameron-Martin space. For $h\in\mathcal{X}$,
the translated measure $\mu_h\ll \mu$ if and only if $h\in\mathcal{H}_\mu$. Moreover, if $h\in\mathcal H_\mu$, then
\[
    \frac{\d\mu_h}{\d\mu}(u)=\exp\left(\langle h,u\rangle^\sim-\frac12\|h\|_{\mathcal H_\mu}^2\right).
\]
If $h\notin\mathcal H_\mu$, then $\mu_h\perp \mu$.
\end{theorem}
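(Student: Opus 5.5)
The plan is to obtain Theorem~\ref{thm:cameron_martin2} as a corollary of Theorem~\ref{thm:cameron_martin}. I would verify that the construction above, namely $\mathcal H_\mu=R_\mu(\mathcal G_\mu)$ with the inner product pulled back from $L^2(\mathcal X,\mu)$, satisfies every requirement of Definition~\ref{def:abstract_wiener_space}, and that its Paley--Wiener map is $R_\mu^{-1}$. Once this is done, the displayed Radon--Nikodym formula and the dichotomy $\mu_h\ll\mu$ versus $\mu_h\perp\mu$ follow verbatim from Theorem~\ref{thm:cameron_martin}.

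\textbf{Step 1: $R_\mu$ is well defined.} By Fernique's theorem, $\int_{\mathcal X}\|x\|^2\,\d\mu<\infty$. Hence for $f\in\mathcal G_\mu$ the integral $\int x f(x)\,\d\mu$ is a Bochner integral in $\mathcal X$, using Cauchy--Schwarz. It also satisfies
\[
\|R_\mu f\|_{\mathcal X}\le \Bigl(\int\|x\|^2\,\d\mu\Bigr)^{1/2}\|f\|_{L^2(\mu)}.
\]
This single bound gives that $R_\mu$ maps into $\mathcal X$ and that the embedding $\iota:\mathcal H_\mu\hookrightarrow\mathcal X$ is continuous.

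\textbf{Step 2: $R_\mu$ is injective, so the inner product is well defined.} Suppose $R_\mu f=0$. Then $\int gf\,\d\mu=0$ for all $g\in\mathcal X^*$. Since $f$ lies in the $L^2$-closure of $\mathcal X^*$, it follows that $f=0$ in $L^2(\mu)$, and so $\mathcal H_\mu$ is a Hilbert space isometric to $\mathcal G_\mu$.

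\textbf{Step 3: identify $\iota^*$ and the Paley--Wiener map.} For $f\in\mathcal X^*$ and $h=R_\mu g$ we have
\[
f(\iota h)=\int fg\,\d\mu=\langle R_\mu f,R_\mu g\rangle_{\mathcal H_\mu},
\]
so $\iota^*f=R_\mu f=K_\mu f$. It follows that $\|\iota^*f\|_{\mathcal H_\mu}^2=\int f^2\,\d\mu$, which is exactly the covariance condition in the definition. Moreover, $\iota^*(\mathcal X^*)=R_\mu(\mathcal X^*)$ is dense in $\mathcal H_\mu$ because $\mathcal X^*$ is dense in $\mathcal G_\mu$. The isometry $\iota^*f\mapsto f$ extends to $R_\mu^{-1}$, which therefore is the Paley--Wiener map $\langle h,\cdot\rangle^\sim$.

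\textbf{Step 4: density of $\mathcal H_\mu$ in $\mathcal X$ (the main obstacle).} Density is the delicate point, since in general it holds only when $\mu$ has full support. I would argue via the dual. If $f\in\mathcal X^*$ vanishes on $\mathcal H_\mu$, then
\[
0=f(K_\mu f)=\int f^2\,\d\mu,
\]
so $f=0$ $\mu$-a.s. By Hahn--Banach, the closed subspace $\mathcal X_0:=\overline{\mathcal H_\mu}$ then satisfies $\mu(\mathcal X_0)=1$; this uses separability, writing $\mathcal X_0$ as a countable intersection of kernels of such $f$.

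I would then apply Theorem~\ref{thm:cameron_martin} to the abstract Wiener space $(\mathcal H_\mu,\mathcal X_0,\mu|_{\mathcal X_0})$. This handles every $h\in\mathcal X_0$, and the formula transfers to $\mathcal X$ because $\mathcal X\setminus\mathcal X_0$ is $\mu$-null. For $h\notin\mathcal X_0$, note that $\mathcal X_0+h$ and $\mathcal X_0$ are disjoint because $\mathcal X_0$ is a linear subspace. Since $\mu_h(\mathcal X_0+h)=1$ while $\mu(\mathcal X_0+h)=0$, we get $\mu_h\perp\mu$ directly. This completes both directions of the equivalence and the singularity claim.
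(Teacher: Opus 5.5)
Your proposal is correct and follows the paper's route. The paper obtains Theorem~\ref{thm:cameron_martin2} simply by specializing Theorem~\ref{thm:cameron_martin} to the construction $\mathcal H_\mu=R_\mu(\mathcal G_\mu)$ with Paley--Wiener map $R_\mu^{-1}$, exactly as you plan. You are in fact more careful than the paper, which asserts without proof that $(\mathcal H_\mu,\mathcal X,\mu)$ is an abstract Wiener space even though density of $\mathcal H_\mu$ in $\mathcal X$ fails unless $\mu$ has full support; your restriction to $\mathcal X_0=\overline{\mathcal H_\mu}$ with $\mu(\mathcal X_0)=1$, together with the disjointness argument for $h\notin\mathcal X_0$, closes precisely that gap.
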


For a non-centered Gaussian measure with mean $m\in\mathcal X$ and the same covariance structure, the Cameron-Martin space is unchanged. In that case, for $h\in\mathcal H_\mu$,
\[
    \frac{\d\mathsf N(m+h,K_\mu)}{\d\mathsf N(m,K_\mu)}(u)
    =
    \exp\left(
        \langle h,u-m\rangle^\sim
        -
        \frac12\|h\|_{\mathcal H_\mu}^2
    \right).
\]
Thus the Cameron-Martin norm is the infinite-dimensional analogue of the Mahalanobis norm. In particular, shifts outside $\mathcal H_\mu$ produce singular Gaussian measures, while shifts inside $\mathcal H_\mu$ preserve absolute continuity.

\section{Proofs and Auxiliary Results}
\label{app:proofs}
This appendix collects the proofs and auxiliary results supporting the theoretical developments of \S\ref{sec:var_latent_dyn}. The organization follows the progression of the main text: variational bounds, functional Gaussian geometry, local sensitivity under Gaussian perturbations, KL-aware deterministic rollout, and finally the high-probability stochastic extension.

\subsection{Variational Bounds and Population Alignment}
\label{app:variational_proofs}
We first provide the proofs of the abstract variational results in \S\ref{sec:abstract_vld}. We begin with the one-step variational bounds and then turn to the population decomposition of the latent dynamics term.

\begin{proof}[Proof of Proposition \ref{prop:conditional_functional_elbo}]
(a) Define the latent predictive kernel
\begin{equation*}
    K_{\theta,\phi}(\d\mathbf{Z}_{n+1},\d\mathbf{Z}_n\,|\,\mathbf{U}_n)=P_\theta(\d\mathbf{Z}_{n+1}\,|\,\mathbf{Z}_n)\,Q_\phi(\d\mathbf{Z}_n\,|\,\mathbf{U}_n).
\end{equation*}
Then
\begin{equation*}
    P_{\theta,\psi,\phi}(\cdot\,|\,\mathbf{U}_n)=\int_{\mathcal{Z}\times\mathcal{Z}} P_\psi(\cdot\,|\,\mathbf{Z}_{n+1})\,K_{\theta,\phi}(\d\mathbf{Z}_{n+1},\d\mathbf{Z}_n\,|\,\mathbf{U}_n),
\end{equation*}
and $\mathsf{W}_\mathcal{U}$-almost every $\mathbf{U}_{n+1}$, we have
\begin{equation*}
    \frac{\d P_{\theta,\psi,\phi}(\cdot\,|\,\mathbf{U}_n)}{\d\mathsf{W}_\mathcal{U}}(\mathbf{U}_{n+1})=\int_{\mathcal{Z}\times\mathcal{Z}} \frac{\d P_\psi(\cdot\,|\,\mathbf{Z}_{n+1})}{\d\mathsf{W}_\mathcal{U}}(\mathbf{U}_{n+1})\,K_{\theta,\phi}(\d\mathbf{Z}_{n+1},\d\mathbf{Z}_n\,|\,\mathbf{U}_n).
\end{equation*}
By Jensen's inequality,
\begin{align*}
    \log \frac{\d P_{\theta,\psi,\phi}(\cdot\,|\,\mathbf{U}_n)}{\d\mathsf{W}_\mathcal{U}}(\mathbf{U}_{n+1})&\geq\int_{\mathcal{Z}\times\mathcal{Z}} \log \frac{\d P_\psi(\cdot\,|\,\mathbf{Z}_{n+1})}{\d\mathsf{W}_\mathcal{U}}(\mathbf{U}_{n+1})\,K_{\theta,\phi}(\d\mathbf{Z}_{n+1},\d\mathbf{Z}_n\,|\,\mathbf{U}_n)\\
    &=\bbE_{\mathbf{Z}_{n+1},\mathbf{Z}_n\sim K_{\theta,\phi}(\cdot|\mathbf{U}_n)}\left[\log \frac{\d P_\psi(\cdot\,|\,\mathbf{Z}_{n+1})}{\d\mathsf{W}_\mathcal{U}}(\mathbf{U}_{n+1})\right].
\end{align*}
Expanding $K_{\theta,\phi}$ gives \eqref{eq:predlb}.
\item (b) For fixed $\mathbf Z_n\in\mathcal{Z}$, define the latent-conditioned predictive distribution
\begin{equation*}
    P_{\theta,\psi}(\d\mathbf U_{n+1}\mid \mathbf Z_n)
    =
    \int_{\mathcal Z}
    P_\psi(\d\mathbf U_{n+1}\mid \mathbf Z_{n+1})
    P_\theta(\d\mathbf Z_{n+1}\mid \mathbf Z_n).
\end{equation*}
Then we have the decomposition
\begin{equation}
    \frac{\d P_{\theta,\psi,\phi}(\cdot\,|\,\mathbf{U}_n)}{\d\mathsf{W}_\mathcal{U}}(\mathbf{U}_{n+1})=\int_{\mathcal{Z}}\frac{\d P_{\theta,\psi}(\cdot\,|\,\mathbf{Z}_n)}{\d\mathsf{W}_\mathcal{U}}(\d\mathbf{U}_{n+1})\,Q_\phi(\d\mathbf{Z}_n\,|\,\mathbf{U}_n),\quad\text{for $\mathsf{W}_\mathcal{U}$-a.e.}\ \mathbf{U}_{n+1}.\label{eq:vaedecomp}
\end{equation}
We choose a reference measure $\Lambda$ on $\mathcal{Z}$ such that both $Q_\phi(\cdot\,|\,\mathbf{U}_{n+1})$ and $P_\theta(\cdot\,|\,\mathbf{Z}_n)$ are absolutely continuous with respect to $\Lambda$, for instance, $\Lambda=Q_\phi(\cdot\,|\,\mathbf{U}_{n+1})/2+P_\theta(\cdot\,|\,\mathbf{Z}_n)/2$, and define the likelihood densities
$$p_\theta=\frac{\d P_\theta(\cdot\,|\,\mathbf{Z}_n)}{\d\Lambda},\quad q_\phi=\frac{\d Q_\theta(\cdot\,|\,\mathbf{U}_{n+1})}{\d\Lambda}.$$
Then we factorize $P_{\theta,\psi}$ and apply Jensen's inequality to obtain
\begin{align*}
    \log \frac{\d P_{\theta,\psi}(\cdot\,|\,\mathbf{Z}_n)}{\d\mathsf{W}_\mathcal{U}}&(\mathbf{U}_{n+1})
    =\log\int_{\mathcal Z}
    \frac{\d P_\psi(\cdot\,|\, \mathbf Z_{n+1})}{\d\mathsf{W}_\mathcal{U}}(\mathbf U_{n+1})\,
    p_\theta(\mathbf Z_{n+1}\,|\, \mathbf Z_n)\,\Lambda(\d\mathbf{Z}_{n+1})\\
    &=\log\int_{\mathcal Z}
    \frac{\d P_\psi(\cdot\,|\, \mathbf Z_{n+1})}{\d\mathsf{W}_\mathcal{U}}(\mathbf U_{n+1})\,
    \frac{p_\theta(\mathbf Z_{n+1}\,|\, \mathbf Z_n)}{q_\phi(\mathbf Z_{n+1}\,|\, \mathbf U_{n+1})}\,q_\phi(\mathbf Z_{n+1}\,|\, \mathbf U_{n+1})\Lambda(\d\mathbf{Z}_{n+1})\\
    &\geq\int_{\mathcal Z}
    \left[\log \frac{\d P_\psi(\cdot\,|\, \mathbf Z_{n+1})}{\d\mathsf{W}_\mathcal{U}}(\mathbf U_{n+1})-\log
    \frac{q_\phi(\mathbf Z_{n+1}\,|\, \mathbf U_{n+1})}{p_\theta(\mathbf Z_{n+1}\,|\, \mathbf Z_n)}\right]\! q_\phi(\mathbf Z_{n+1}\,|\,\mathbf U_{n+1})\Lambda(\d\mathbf{Z}_{n+1}),
\end{align*}
with the convention $q_\phi(\mathbf Z_{n+1}\,|\, \mathbf U_{n+1})\log q_\phi(\mathbf Z_{n+1}\,|\, \mathbf U_{n+1})=0$ outside the support of $Q_\phi(\cdot\,|\, \mathbf U_{n+1})$. For $Q_\phi(\cdot\,|\,\mathbf{U}_{n+1})\ll P_\theta(\cdot\,|\,\mathbf{Z}_n)$, the above result is equivalent to
\begin{align*}
    \log \frac{\d P_{\theta,\psi}(\cdot\,|\,\mathbf{Z}_n)}{\d\mathsf{W}_\mathcal{U}}(\mathbf{U}_{n+1})&\geq\int_{\mathcal Z}
    \left[\log \frac{\d P_\psi(\cdot\,|\, \mathbf Z_{n+1})}{\d\mathsf{W}_\mathcal{U}}(\mathbf U_{n+1})-\log
    \frac{\d Q_\phi(\cdot\,|\, \mathbf U_{n+1})}{P_\theta(\cdot\,|\, \mathbf Z_n)}(\mathbf Z_{n+1})\right] Q_\phi(\d\mathbf Z_{n+1}\,|\, \mathbf U_{n+1})\\
    &=\bbE_{\mathbf{Z}_{n+1}\sim Q_\phi(\cdot\,|\,\mathbf{U}_{n+1})}\left[\log \frac{\d P_\psi(\cdot\,|\, \mathbf Z_{n+1})}{\d\mathsf{W}_\mathcal{U}}(\mathbf U_{n+1})\right]-D_\mathrm{KL}\left(Q_\phi(\cdot\,|\,\mathbf{U}_{n+1})\,\|\,P_\theta(\cdot\,|\,\mathbf{Z}_n)\right).
\end{align*}
Again we use Jensen's inequality to \eqref{eq:vaedecomp}:
\begin{equation*}
    \log \frac{\d P_{\theta,\psi,\phi}(\cdot\,|\,\mathbf{U}_n)}{\d\mathsf{W}_\mathcal{U}}(\mathbf{U}_{n+1})\geq\bbE_{\mathbf{Z}_n\sim Q_\phi(\cdot|\mathbf{U}_n)}\left[\log \frac{\d P_{\theta,\psi}(\cdot\,|\,\mathbf{Z}_n)}{\d\mathsf{W}_\mathcal{U}}(\mathbf{U}_{n+1})\right].
\end{equation*}
Combining the two inequalities gives
\begin{align*}
    \log \frac{\d P_{\theta,\psi,\phi}(\cdot\,|\,\mathbf{U}_n)}{\d\mathsf{W}_\mathcal{U}}(\mathbf{U}_{n+1})&\geq\bbE_{\mathbf{Z}_{n+1}\sim Q_\phi(\cdot\,|\,\mathbf{U}_{n+1})}\left[
    \log\frac{\d P_\psi(\cdot\,|\, \mathbf Z_{n+1})}{\d\mathsf{W}_\mathcal{U}}(\mathbf U_{n+1})\right]\\
    &\quad -\bbE_{\mathbf{Z}_n\sim Q_\phi(\cdot|\mathbf{U}_n)}\left[D_\mathrm{KL}\left(Q_\phi(\cdot\,|\,\mathbf{U}_{n+1})\,\|\,P_\theta(\cdot\,|\,\mathbf{Z}_n)\right)\right],
\end{align*}
which proves the claim.
\end{proof}

We next lift the dynamic KL term from the one-step objective to the population level and identify the transition kernel targeted by its minimization.

\begin{proof}[Proof of Proposition \ref{prop:population_elbo_transition_consistency}]
As in the proof of Proposition~\ref{prop:conditional_functional_elbo},
we choose a common dominating measure on $\mathcal Z$ as reference and, with a slight abuse of notation, write
$Q_\phi(\mathbf Z^+\,|\, \mathbf U^+)$,
$P(\mathbf Z^+\,|\, \mathbf Z)$, and
$\Pi_\phi(\mathbf Z^+\,|\, \mathbf Z)$ for the corresponding densities with respect to the dominating measure. Expanding the KL divergence in
$\mathcal R(P)$ gives
\begin{equation*}
\begin{aligned}
\mathcal R(P)&=
\mathbb E_{(\mathbf U,\mathbf U^+)\sim\rho}
\mathbb E_{\mathbf Z\sim Q_\phi(\cdot\,|\,\mathbf U)}
\mathbb E_{\mathbf Z^+\sim Q_\phi(\cdot\,|\,\mathbf U^+)}
\left[\log
\frac{\d Q_\phi(\cdot\,|\,\mathbf U^+)
}{\d P(\cdot\,|\,\mathbf Z)}
(\mathbf Z^+)\right]
\\
&=\mathbb E_{(\mathbf Z,\mathbf Z^+,\mathbf{U}^+)\sim\Gamma_\phi}
\left[
\log \frac{\d Q_\phi(\cdot\,|\,\mathbf U^+)}{\d\Pi_\phi(\cdot\,|\,\mathbf{Z})}(\mathbf Z^+)
\right]+
\mathbb E_{(\mathbf Z,\mathbf Z^+)\sim\Gamma_\phi}
\left[
\log\frac{\d\Pi_\phi(\cdot\,|\,\mathbf{Z})}{\d P(\cdot\,|\,\mathbf Z)}(\mathbf Z^+)
\right].
\end{aligned}
\end{equation*}
The first term depends only on the encoder and the data distribution, and is therefore independent of $P$. For the second term, disintegrate $\Gamma_\phi$ as
\[
\Gamma_\phi(\d\mathbf Z,\d\mathbf Z^+,\d\mathbf{U}^+)=
\Gamma_\phi(\d\mathbf{Z},\d\mathbf{U}^+)\,Q_\phi(\d\mathbf{Z}^+|\mathbf{U}^+)
\]
Then
\begin{equation}
\begin{aligned}
&\mathbb E_{(\mathbf Z,\mathbf Z^+,\mathbf{U}^+)\sim\Gamma_\phi}
\left[
\log \frac{\d Q_\phi(\cdot\,|\,\mathbf U^+)}{\d \Pi_\phi(\cdot\,|\,\mathbf{Z})}(\mathbf Z^+)
\right]\\
&\quad=\int_{\mathcal{U}\times\mathcal{U}}\int_\mathcal{Z} D_\mathrm{KL}\left(Q_\phi(\cdot\,|\,\mathbf{U}^+)\,\|\,\Pi_\phi(\cdot\,|\,\mathbf{Z})\right) Q_\phi(\d\mathbf{Z}\,|\,\mathbf{U})\,\rho(\d\mathbf{U},\d\mathbf{U}^+).
\end{aligned}\label{eq:elbokldecompaux1}
\end{equation}
For the second term, disintegrate $\Gamma_\phi$ as
\[
\Gamma_\phi(\d\mathbf Z,\d\mathbf Z^+)=\Pi_\phi(\d\mathbf Z^+\,|\,\mathbf Z)\,\Gamma_\phi(\d\mathbf Z).
\]
Then
\begin{equation}
\begin{aligned}
\mathbb E_{(\mathbf Z,\mathbf Z^+)\sim\Gamma_\phi}
\left[
\log\frac{\d\Pi_\phi(\cdot\,|\,\mathbf{Z})}{\d P(\cdot\,|\,\mathbf Z)}(\mathbf Z^+)
\right]=\int_{\mathcal Z}
D_{\mathrm{KL}}
\left(\Pi_\phi(\cdot\,|\,\mathbf Z)\,\Vert\,P(\cdot\,|\,\mathbf Z)\right)
\Gamma_\phi(\d\mathbf Z),
\end{aligned}\label{eq:elbokldecompaux2}
\end{equation}
Combining the identities \eqref{eq:elbokldecompaux1} and \eqref{eq:elbokldecompaux2} proves \eqref{eq:elbokldecomp}.

The first term in \eqref{eq:elbokldecomp} is independent of $P$. The second term is nonnegative and equals zero if and only if $P(\cdot\,|\,\mathbf Z)=\Pi_\phi(\cdot\,|\,\mathbf Z)$ for
$\Gamma_\phi$-almost every $\mathbf Z$. Therefore $P^\star=\Pi_\phi$ is the population minimizer over all Markov kernels.
\end{proof}

We now specialize the abstract formulation to functional Gaussian kernels and their associated Cameron-Martin and spectral geometry.

\subsection{Gaussian Likelihood and Spectral Geometry}
\label{app:gaussian_geometry_proofs}
This subsection collects the arguments underlying the functional Gaussian specialization in \S\ref{sec:functional_gaussian_latent_markov}. We first establish the Cameron-Martin representation of the decoder likelihood in the physical state space. We then turn to the latent Hilbert space, where a trace-class covariance admits a spectral representation that determines both the Gaussian perturbations and the associated Cameron-Martin geometry. Finally, we specialize this general construction to the Laplacian-resolvent covariance on the torus, which yields the Sobolev interpretation used in Corollary~\ref{cor:sobolev_residual_control}.

\begin{proof}[Proof of Proposition \ref{prop:cm_decoder_likelihood}]
Since $D_\psi(\mathbf Z)\in \mathcal H_{\mathcal U}$, the
Cameron-Martin theorem [Theorem \ref{thm:cameron_martin2}] implies that the shifted Gaussian measure $\mathsf W_{\mathcal U}^{D_\psi(\mathbf Z)}$ is absolutely continuous with respect to $\mathsf W_{\mathcal U}$ and satisfies
\[
    \frac{\d\mathsf W_{\mathcal U}^{D_\psi(\mathbf Z)}}{\d\mathsf W_{\mathcal U}}(\mathbf U)=\exp\left(\langle D_\psi(\mathbf Z),\mathbf U\rangle^\sim
    -\frac12 \|D_\psi(\mathbf Z)\|_{\mathcal H_{\mathcal U}}^2
    \right).
\]
This proves \eqref{eq:cm_decoder_rn}. Taking negative logarithms gives \eqref{eq:cm_decoder_nll}. If $\mathbf U\in\mathcal H_{\mathcal U}$, then the Paley-Wiener map agrees with the Cameron-Martin inner product:
\[
    \langle D_\psi(\mathbf Z),\mathbf U\rangle^\sim
    =
    \langle D_\psi(\mathbf Z),\mathbf U\rangle_{\mathcal H_{\mathcal U}}.
\]
Hence
\[
\begin{aligned}
-\log\frac{\d P_\psi(\cdot\mid \mathbf Z)}{\d\mathsf W_{\mathcal U}}(\mathbf U)
&=\frac12
\|D_\psi(\mathbf Z)\|_{\mathcal H_{\mathcal U}}^2-
\langle D_\psi(\mathbf Z),\mathbf U\rangle_{\mathcal H_{\mathcal U}}\\
&=\frac12\|\mathbf U-D_\psi(\mathbf Z)\|_{\mathcal H_{\mathcal U}}^2-\frac12\|\mathbf U\|_{\mathcal H_{\mathcal U}}^2,
\end{aligned}
\]
which proves \eqref{eq:cm_decoder_squared_norm}.
\end{proof}

We next turn from the decoder likelihood on the physical state space to the Gaussian geometry of the latent transition. We first present a standard auxiliary result that constructs a Gaussian random element from a positive trace-class covariance and characterizes its Cameron-Martin space in spectral coordinates \citep{da2014stochastic,bogachev1998gaussian}.
\begin{lemma}[Gaussian measure induced by a trace-class covariance]
\label{lem:trace_class_gaussian_cm_kl}
Let $\mathcal Z$ be a separable Hilbert space and let $K:\mathcal Z\to\mathcal Z$ be a positive, self-adjoint, trace-class operator. Let $\{(\lambda_i,\mathbf E_i)\}_{i\ge 1}$ be an eigensystem of $K$, where $\{\mathbf E_i\}_{i\ge 1}$ is an orthonormal basis of $\mathcal Z$,
$\lambda_i>0$, and
\[
    \operatorname{Tr}(K)=\sum_{i=1}^\infty \lambda_i<\infty .
\]
Let $\{\xi_i\}_{i\ge 1}$ be i.i.d. standard Gaussian random variables. Then the series
\[
    \mathbf G
    =\sum_{i=1}^\infty \sqrt{\lambda_i}\xi_i\mathbf E_i
\]
converges in $L^2(\Omega;\mathcal Z)$ and almost surely in $\mathcal Z$. Moreover, $\mathbf G$ is a centered $\mathcal Z$-valued Gaussian random element with covariance operator $K$, i.e.,
\[
    \mathbf G\sim \mathsf N(0,K).
\]
The Cameron-Martin space of $\mathsf N(0,K)$ is
\[
    \mathcal H_K=
    \left\{\mathbf h=\sum_{i=1}^\infty h_i\mathbf E_i:
        \sum_{i=1}^\infty \frac{|h_i|^2}{\lambda_i}<\infty
    \right\},
\]
equipped with the inner product
\[
\langle \mathbf h,\mathbf g\rangle_{\mathcal H_K} =\sum_{i=1}^\infty \frac{h_i g_i}{\lambda_i},
\]
and norm
\[
    \|\mathbf h\|_{\mathcal H_K}^2=\sum_{i=1}^\infty \frac{|h_i|^2}{\lambda_i}.
\]
Equivalently,
\[
    \mathcal H_K=\operatorname{Range}(K^{1/2}),
    \qquad
    \|\mathbf h\|_{\mathcal H_K}
    =\|K^{-1/2}\mathbf h\|_{\mathcal Z},
\]
where $K^{-1/2}$ is understood on $\operatorname{Range}(K^{1/2})$.
\end{lemma}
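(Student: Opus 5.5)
The plan is the standard Karhunen--Lo\`eve construction followed by identification of the Cameron--Martin space through the construction recalled in Appendix~\ref{app:cameron_martin}. We argue in three steps.

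\emph{Step 1: convergence.} Set $\mathbf G_N=\sum_{i\le N}\sqrt{\lambda_i}\xi_i\mathbf E_i$. By orthonormality and independence, for $N>M$,
\[
\mathbb E\|\mathbf G_N-\mathbf G_M\|_{\mathcal Z}^2=\sum_{i=M+1}^N\lambda_i\to0,
\]
so $(\mathbf G_N)$ is Cauchy in $L^2(\Omega;\mathcal Z)$ and converges to some $\mathbf G$. For almost sure convergence, the partial sums form a $\mathcal Z$-valued martingale bounded in $L^2$, since $\sup_N\mathbb E\|\mathbf G_N\|^2\le\operatorname{Tr}K$. Doob's maximal inequality then gives a.s.\ convergence. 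Alternatively, one can apply the It\^o--Nisio theorem, or simply use that $\sum_i\lambda_i\xi_i^2<\infty$ a.s.\ (its expectation is finite), which makes the series of orthogonal terms converge in norm almost surely.

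\emph{Step 2: Gaussian law and covariance.} For each $f\in\mathcal Z^*\cong\mathcal Z$, we have $\langle f,\mathbf G_N\rangle=\sum_{i\le N}\sqrt{\lambda_i}\xi_i\langle f,\mathbf E_i\rangle$, which is a centered real Gaussian. $L^2$ limits of centered Gaussians are centered Gaussian, so $\langle f,\mathbf G\rangle$ is centered Gaussian and $\mathbf G$ is a Gaussian element. Computing the covariance,
\[
\mathbb E\langle f,\mathbf G\rangle\langle g,\mathbf G\rangle=\sum_i\lambda_i\langle f,\mathbf E_i\rangle\langle g,\mathbf E_i\rangle=\langle Kf,g\rangle,
\]
which identifies the covariance operator as $K$, i.e.\ $\mathbf G\sim\mathsf N(0,K)$.

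\emph{Step 3: Cameron--Martin space.} This step needs the most care, in order to match the abstract definition $\mathcal H_\mu=R_\mu(\mathcal G_\mu)$ of Appendix~\ref{app:cameron_martin}. The map $f\mapsto\langle f,\cdot\rangle$ from $\mathcal Z$ into $L^2(\mu)$ has norm
\[
\|\langle f,\cdot\rangle\|_{L^2(\mu)}^2=\sum_i\lambda_i f_i^2=\|K^{1/2}f\|^2,
\]
so $\mathcal G_\mu$ is the completion of $\mathcal Z$ under this norm. Concretely, $\mathcal G_\mu$ consists of the elements $\sum_i a_i\xi_i$ with $(a_i)\in\ell^2$, corresponding formally to $f_i=a_i/\sqrt{\lambda_i}$.

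Next we compute $R_\mu$. For $F=\sum_i a_i\xi_i$ and any $g$,
\[
\langle g,R_\mu F\rangle=\mathbb E[\langle g,\mathbf G\rangle F]=\sum_i\sqrt{\lambda_i}\,g_i a_i,
\]
so $R_\mu F=\sum_i\sqrt{\lambda_i}a_i\mathbf E_i$. Hence $\mathcal H_\mu$ consists of the elements $h=\sum_i h_i\mathbf E_i$ with $h_i=\sqrt{\lambda_i}a_i$ and $(a_i)\in\ell^2$, which is exactly the condition $\sum_i h_i^2/\lambda_i<\infty$. The inner product is
\[
\langle R_\mu F,R_\mu F'\rangle=\mathbb E[FF']=\sum_i a_i a_i'=\sum_i\frac{h_ih_i'}{\lambda_i}.
\]
This is the claimed space with the claimed inner product and norm.

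Finally, the equivalent form is immediate: $K^{1/2}$ acts diagonally by $\sqrt{\lambda_i}$, so $\operatorname{Range}(K^{1/2})$ is the same set, and $\|K^{-1/2}h\|_{\mathcal Z}^2=\sum_i h_i^2/\lambda_i$. Positivity $\lambda_i>0$ ensures that $K^{1/2}$ is injective, so $K^{-1/2}$ is well defined on the range.
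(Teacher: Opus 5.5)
Your proof is correct, and Steps~1 and~2 follow the paper's argument closely: $L^2$-Cauchy partial sums, an $L^2$-bounded martingale for almost sure convergence, and Gaussianity of $\langle f,\mathbf G\rangle$ as an $L^2(\Omega)$-limit of centered Gaussians with covariance $\langle Kf,g\rangle$. Your alternative for almost sure convergence is more elementary than the paper's appeal to the Hilbert-space martingale convergence theorem. It uses $\|\mathbf G_N-\mathbf G_M\|_{\mathcal Z}^2=\sum_{M<i\le N}\lambda_i\xi_i^2$ pointwise, together with $\sum_i\lambda_i\xi_i^2<\infty$ almost surely since its expectation is $\operatorname{Tr}(K)$. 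You should still say, as the paper does, that the almost sure limit agrees with the $L^2$ limit, for instance via an a.s.\ convergent subsequence.

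The real difference is Step~3. The paper computes $\operatorname{Range}(K^{1/2})$ in spectral coordinates and then simply asserts that the Cameron--Martin inner product is the pullback through $K^{1/2}$. In other words, it treats $\mathcal H_K=\operatorname{Range}(K^{1/2})$ as a known fact from the cited literature. You instead derive the space from the paper's own definition $\mathcal H_\mu=R_\mu(\mathcal G_\mu)$ in Appendix~\S\ref{app:cameron_martin}, in three moves:
\begin{itemize}
\item you identify $\mathcal G_\mu$ as $\{\sum_i a_i\xi_i:(a_i)\in\ell^2\}$, where $\xi_i=\lambda_i^{-1/2}\langle\mathbf E_i,\cdot\rangle$ is a continuous functional because $\lambda_i>0$;
\item you compute $R_\mu\bigl(\sum_i a_i\xi_i\bigr)=\sum_i\sqrt{\lambda_i}\,a_i\mathbf E_i$;
\item you read the inner product off $\mathbb E[FF']$.
\end{itemize}
This gives a self-contained verification that the spectral description is the Cameron--Martin space in the sense used elsewhere in the paper. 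It also makes visible why the inner product is well defined, namely injectivity of $R_\mu$. The cost is a little more bookkeeping. The paper's route is shorter but leaves that key identification to the references.
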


\begin{proof}
\textit{Step I: Convergence.} For each positive integer $N$, we define
\[
    \mathbf G_N
    =
    \sum_{i=1}^N\sqrt{\lambda_i}\xi_i\mathbf E_i .
\]
For $M>N$, by orthonormality of $\{\mathbf E_i\}_{i\ge 1}$ and independence of
$\{\xi_i\}_{i\ge 1}$,
\[
    \mathbb E\|\mathbf G_M-\mathbf G_N\|_{\mathcal Z}^2
    =
    \mathbb E
    \left\|
        \sum_{i=N+1}^M\sqrt{\lambda_i}\xi_i\mathbf E_i
    \right\|_{\mathcal Z}^2
    =
    \sum_{i=N+1}^M\lambda_i .
\]
Since $\sum_{i=1}^\infty \lambda_i<\infty$, the sequence
$\{\mathbf G_N\}_{N\ge 1}$ is Cauchy in $L^2(\Omega;\mathcal Z)$. Therefore it converges in $L^2(\Omega;\mathcal Z)$ to some $\mathcal Z$-valued random element $\mathbf G$. In particular,
\[
    \mathbb E\|\mathbf G\|_{\mathcal Z}^2
    =
    \lim_{N\to\infty}\mathbb E\|\mathbf G_N\|_{\mathcal Z}^2
    =
    \sum_{i=1}^\infty \lambda_i
    =
    \operatorname{Tr}(K)
    <\infty.
\]
Moreover, $\{\mathbf G_N\}_{N\ge 1}$ is an $L^2$-bounded martingale with respect to the filtration
\[
    \mathcal F_N=\sigma(\xi_1,\ldots,\xi_N).
\]
Indeed, for $M>N$,
\[
    \mathbb E[\mathbf G_M\mid \mathcal F_N]=\mathbf G_N,
\]
and
\[
    \sup_{N\ge 1}\mathbb E\|\mathbf G_N\|_{\mathcal Z}^2
    =
    \sup_{N\ge 1}\sum_{i=1}^N\lambda_i
    \leq
    \sum_{i=1}^\infty \lambda_i
    <\infty.
\]
Therefore, by the Hilbert-space martingale convergence theorem, $\mathbf G_N$ converges almost surely and in $L^2(\Omega;\mathcal Z)$ to a $\mathcal Z$-valued random element. The $L^2$ limit is unique, so the almost sure limit is the same random element $\mathbf G$.

\item\textit{Step II: Verify the Karhunen-Loève expansion.} We next verify that the random element $\mathbf G$ has covariance operator $K$. For any $\mathbf f\in\mathcal Z$,
\[
    \langle \mathbf G,\mathbf f\rangle_{\mathcal Z}
    =
    \sum_{i=1}^\infty \sqrt{\lambda_i}\xi_i
    \langle \mathbf E_i,\mathbf f\rangle_{\mathcal Z},
\]
where the convergence holds in $L^2(\Omega)$. Hence $\langle \mathbf G,\mathbf f\rangle_{\mathcal Z}$ is a centered Gaussian random variable. Therefore $\mathbf G$ is a centered Gaussian random element. Moreover,
for any $\mathbf f,\mathbf g\in\mathcal Z$,
\[
\begin{aligned}
    \mathbb E
    \left[
        \langle \mathbf G,\mathbf f\rangle_{\mathcal Z}
        \langle \mathbf G,\mathbf g\rangle_{\mathcal Z}
    \right]
    &=
    \sum_{i=1}^\infty \lambda_i
    \langle \mathbf E_i,\mathbf f\rangle_{\mathcal Z}
    \langle \mathbf E_i,\mathbf g\rangle_{\mathcal Z}  \\
    &=
    \left\langle
        \sum_{i=1}^\infty \lambda_i
        \langle \mathbf f,\mathbf E_i\rangle_{\mathcal Z}\mathbf E_i,
        \mathbf g
    \right\rangle_{\mathcal Z}=
    \langle K\mathbf f,\mathbf g\rangle_{\mathcal Z}.
\end{aligned}
\]
Thus the covariance operator of $\mathbf G$ is $K$, and $\mathbf G\sim\mathsf N(0,K)$.

\item\textit{Step III: Identify the Cameron-Martin space.} Since $K$ is positive and self-adjoint,
\[
    K^{1/2}\mathbf f
    =
    \sum_{i=1}^\infty \sqrt{\lambda_i}
    \langle \mathbf f,\mathbf E_i\rangle_{\mathcal Z}\mathbf E_i .
\]
Therefore $\mathbf h\in \operatorname{Range}(K^{1/2})$ if and only if there
exists $\mathbf f=\sum_{i=1}^\infty f_i\mathbf E_i\in\mathcal Z$ such that
\[
    \mathbf h=K^{1/2}\mathbf f
    =
    \sum_{i=1}^\infty \sqrt{\lambda_i}f_i\mathbf E_i .
\]
Writing $h_i=\sqrt{\lambda_i}f_i$, this is equivalent to
\[
    \sum_{i=1}^\infty \frac{|h_i|^2}{\lambda_i}
    =
    \sum_{i=1}^\infty |f_i|^2
    <\infty .
\]
Hence
\[
    \mathcal H_K
    =
    \operatorname{Range}(K^{1/2})
    =
    \left\{
        \mathbf h=\sum_{i=1}^\infty h_i\mathbf E_i:
        \sum_{i=1}^\infty \frac{|h_i|^2}{\lambda_i}<\infty
    \right\}.
\]
The Cameron-Martin inner product is the pullback inner product through
$K^{1/2}$:
\[
    \langle \mathbf h,\mathbf g\rangle_{\mathcal H_K}
    =
    \langle K^{-1/2}\mathbf h,K^{-1/2}\mathbf g\rangle_{\mathcal Z}
    =
    \sum_{i=1}^\infty \frac{h_i g_i}{\lambda_i}.
\]
This proves the claim.
\end{proof}

The preceding lemma provides the two ingredients needed for
Proposition~\ref{prop:general_spectral_residual_control}: it identifies the Cameron-Martin geometry induced by $K$ and gives the Karhunen-Lo\`eve representation of a Gaussian perturbation with covariance $K$. Applying these facts to the scaled and shifted transition measure $\mathsf N(T_\theta(\mathbf Z),\alpha_\theta(\mathbf Z)^2K)$ yields the spectral form of the latent transition.

\begin{proof}[Proof of Proposition \ref{prop:general_spectral_residual_control}]
Part~(a) follows directly from Lemma~\ref{lem:trace_class_gaussian_cm_kl}. Indeed, since $K$ is positive, self-adjoint, and trace-class, the centered Gaussian measure $\mathsf W_{\mathcal Z}=\mathsf N(0,K)$ is well-defined on $\mathcal Z$, and its Cameron-Martin space is
\[
    \mathcal H_K
    =\operatorname{Range}(K^{1/2})
    =\left\{
        \mathbf R=\sum_{i=1}^\infty r_i\mathbf E_i:
        \sum_{i=1}^\infty \frac{|r_i|^2}{\lambda_i}<\infty
    \right\},
\]
with norm
\[
    \|\mathbf R\|_{\mathcal H_K}^2=\sum_{i=1}^\infty \frac{|r_i|^2}{\lambda_i}.
\]
We now prove part~(b). Fix $\mathbf Z$ and write
\[
    a=\alpha_\theta(\mathbf Z),
    \qquad
    m=T_\theta(\mathbf Z).
\]
Then
\[
    P_\theta(\cdot\mid \mathbf Z)=\mathsf N(m,a^2K).
\]
The Cameron-Martin space of $\mathsf N(0,a^2K)$ is the same set $\mathcal H_K$, but with scaled norm
\[
    \|\mathbf h\|_{\mathcal H_{a^2K}}^2
    =\frac{1}{a^2}\|\mathbf h\|_{\mathcal H_K}^2.
\]
Since $m\in\mathcal H_K$, the Cameron-Martin theorem implies that $\mathsf N(m,a^2K)$ is absolutely continuous with respect to $\mathsf N(0,a^2K)$, and for $\mathbf Z^+\in\mathcal H_K$,
\[
    \log
    \frac{\d \mathsf N(m,a^2K)}{\d \mathsf N(0,a^2K)}(\mathbf Z^+)
    =\frac{1}{a^2}\langle \mathbf Z^+,m\rangle_{\mathcal H_K}
    -\frac{1}{2a^2}\|m\|_{\mathcal H_K}^2.
\]
Therefore
\[
\begin{aligned}
    -\log\frac{\d\mathsf N(m,a^2K)}{\d \mathsf N(0,a^2K)}(\mathbf Z^+)
    &=\frac{1}{2a^2}\|m\|_{\mathcal H_K}^2-\frac{1}{a^2}
    \langle \mathbf Z^+,m\rangle_{\mathcal H_K}\\
    &=\frac{1}{2a^2}\|\mathbf Z^+-m\|_{\mathcal H_K}^2
    -\frac{1}{2a^2}\|\mathbf Z^+\|_{\mathcal H_K}^2 .
\end{aligned}
\]
Substituting back $a=\alpha_\theta(\mathbf Z)$ and
$m=T_\theta(\mathbf Z)$ yields
\[
    -\log
    \frac{\d P_\theta(\cdot\mid\mathbf Z)}{\d\mathsf N(0,\alpha_\theta(\mathbf Z)^2K)}(\mathbf Z^+)
    =\frac{1}{2\alpha_\theta(\mathbf Z)^2}
    \|\mathbf Z^+-T_\theta(\mathbf Z)\|_{\mathcal H_K}^2-
    \frac{1}{2\alpha_\theta(\mathbf Z)^2}\|\mathbf Z^+\|_{\mathcal H_K}^2.
\]
Finally, by Lemma~\ref{lem:trace_class_gaussian_cm_kl}, the random series
\[
    \sum_{i=1}^\infty \sqrt{\lambda_i}\xi_i\mathbf E_i
\]
defines a $\mathcal Z$-valued Gaussian random element with law $\mathsf N(0,K)$. Hence
\[
    \alpha_\theta(\mathbf Z)
    \sum_{i=1}^\infty \sqrt{\lambda_i}\xi_i\mathbf E_i
    \sim\mathsf N(0,\alpha_\theta(\mathbf Z)^2K).
\]
Adding the mean $T_\theta(\mathbf Z)$ gives
\[
    \mathbf Z^+
    =
    T_\theta(\mathbf Z)
    +
    \alpha_\theta(\mathbf Z)
    \sum_{i=1}^\infty \sqrt{\lambda_i}\xi_i\mathbf E_i
    \sim
    \mathsf N(T_\theta(\mathbf Z),\alpha_\theta(\mathbf Z)^2K),
\]
which is exactly the transition kernel $P_\theta(\cdot\mid\mathbf Z)$. The spectral-coordinate statement in the remark follows by taking inner
products with $\mathbf E_i$.
\end{proof}

Proposition~\ref{prop:general_spectral_residual_control} applies to an arbitrary positive, self-adjoint, trace-class covariance operator. For the spatially structured Gaussian perturbations considered in this work, a particularly useful choice is the Laplacian-resolvent family
\[
    K_{\ell,s}=(I-\ell^2\Delta)^{-s}
\]
on a periodic domain. The next lemma makes the corresponding Fourier structure explicit. In particular, it identifies the covariance spectrum, the condition under which $K_{\ell,s}$ is trace-class on $L^2$, and the resulting Cameron-Martin space. These properties will allow the abstract spectral geometry above to be interpreted as a Sobolev geometry.
\begin{lemma}[Laplacian-resolvent covariance on the torus]
\label{lem:laplacian_resolvent_torus}
Let
\[
    \mathcal Z
    =
    L^2(\mathbb T^{d_x};\mathbb R^{d_z}),
\]
and let
\[
    K_{\ell,s}
    :=
    (I-\ell^2\Delta)^{-s},
    \qquad
    \ell>0,\quad s>0,
\]
where the Laplacian acts componentwise. Let
\[
    \mathcal Z_{\mathbb C}
    =
    L^2(\mathbb T^{d_x};\mathbb C^{d_z})
\]
be the complexification of $\mathcal Z$. For
$m\in\mathbb Z^{d_x}$, define
\[
    \phi_m(x)=e^{2\pi i m\cdot x},
\]
and, for the standard basis
$\{\mathbf e_a\}_{a=1}^{d_z}$ of $\mathbb R^{d_z}$, set
\[
    \mathbf E_{m,a}(x)
    =
    \phi_m(x)\,\mathbf e_a.
\]
Then the following hold.

\begin{enumerate}
\item[(a)]
The family $\{\mathbf E_{m,j}\}_{m\in\mathbb Z^{d_x},\,1\le j\le d_z}$ is an orthonormal eigenbasis of $\mathcal Z_{\mathbb C}$ for $K_{\ell,s}$, with
\begin{equation}
    K_{\ell,s}\mathbf E_{m,j}=\lambda_m\mathbf E_{m,j},
    \qquad
    \lambda_m=\left(1+4\pi^2\ell^2|m|^2\right)^{-s}.
    \label{eq:laplacian_resolvent_eigenvalues}
\end{equation}
Equivalently, $K_{\ell,s}$ is the Fourier multiplier with symbol $\lambda_m$.

\item[(b)]
The operator $K_{\ell,s}$ is self-adjoint, positive definite, and compact. Moreover, it is trace-class if and only if
\[
    s>\frac{d_x}{2}.
\]
In this case,
\begin{equation}
    \operatorname{Tr}(K_{\ell,s})
    =
    d_z
    \sum_{m\in\mathbb Z^{d_x}}
    \left(1+4\pi^2\ell^2|m|^2\right)^{-s}.
    \label{eq:laplacian_resolvent_trace}
\end{equation}

\item[(c)]
If $s>d_x/2$, then $\mathsf N(0,K_{\ell,s})$ defines a Gaussian measure on $\mathcal Z$, and its Cameron-Martin space is
\[
    \mathcal H_{K_{\ell,s}}
    =
    \operatorname{Range}(K_{\ell,s}^{1/2}),
\]
with norm
\begin{equation}
    \|\mathbf h\|_{\mathcal H_{K_{\ell,s}}}^2
    =
    \sum_{m\in\mathbb Z^{d_x}}
    \sum_{j=1}^{d_z}
    \left(
        1+4\pi^2\ell^2|m|^2
    \right)^s
    |\widehat h_j(m)|^2.
    \label{eq:laplacian_resolvent_cm_norm}
\end{equation}
Consequently,
\[
    \mathcal H_{K_{\ell,s}}
    =
    H^s(\mathbb T^{d_x};\mathbb R^{d_z})
\]
as sets, with equivalent norms.
\end{enumerate}
\end{lemma}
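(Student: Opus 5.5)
The plan is to diagonalize $K_{\ell,s}$ in the Fourier basis and then read off parts (b) and (c) from Lemma~\ref{lem:trace_class_gaussian_cm_kl}. For part (a), orthonormality of $\{\mathbf E_{m,a}\}$ in $\mathcal Z_{\mathbb C}$ follows from $\int_{\mathbb T^{d_x}}\phi_m\overline{\phi_{m'}}\,\d x=\delta_{mm'}$ together with orthonormality of $\{\mathbf e_a\}$. Completeness follows from completeness of the trigonometric system in $L^2(\mathbb T^{d_x};\mathbb C)$ (Plancherel), applied componentwise. Since $\Delta\phi_m=-4\pi^2|m|^2\phi_m$ and the Laplacian acts componentwise, $(I-\ell^2\Delta)\mathbf E_{m,a}=(1+4\pi^2\ell^2|m|^2)\mathbf E_{m,a}$. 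I will take the definition of $(I-\ell^2\Delta)^{-s}$ for non-integer $s$ to be the functional calculus of this self-adjoint operator, so that it is exactly the Fourier multiplier with symbol $\lambda_m$; this gives \eqref{eq:laplacian_resolvent_eigenvalues}.

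For part (b), self-adjointness and positive definiteness follow because the symbol $\lambda_m$ is real and satisfies $0<\lambda_m\le1$, which also gives boundedness. Compactness follows because $\lambda_m\to0$ as $|m|\to\infty$, so $K_{\ell,s}$ is a norm limit of finite-rank truncations. Since the operator is positive and diagonal in an orthonormal basis, its trace is $d_z\sum_m\lambda_m$, which is \eqref{eq:laplacian_resolvent_trace}. It remains to decide when this lattice sum is finite, and this is the one step needing real estimation. I will compare the sum with $\int_{\mathbb R^{d_x}}(1+4\pi^2\ell^2|x|^2)^{-s}\,\d x$. Each cell $m+[0,1)^{d_x}$ has comparable values of $1+|x|^2$ up to constants depending only on $d_x$, so the sum and the integral are finite or infinite together. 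In polar coordinates the integral behaves like $\int^\infty r^{d_x-1-2s}\,\d r$, which converges iff $2s>d_x$. Equivalently, one can count the $\asymp R^{d_x-1}$ lattice points in each dyadic-ish shell $|m|\in[R,R+1)$. Finally, I will note that the real space $\mathcal Z$ is invariant under $K_{\ell,s}$, because the symbol is even in $m$. Hence the real operator is self-adjoint and trace-class with the same trace, using the real basis of cosines and sines built from the pairs $\phi_{\pm m}$.

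For part (c), once $s>d_x/2$, Lemma~\ref{lem:trace_class_gaussian_cm_kl} applies directly with the real eigenbasis just described. It shows that $\mathsf N(0,K_{\ell,s})$ is well defined and that its Cameron--Martin space is $\operatorname{Range}(K_{\ell,s}^{1/2})$, with norm $\sum_i|h_i|^2/\lambda_i$. Rewriting this sum in complex Fourier coefficients via Parseval gives \eqref{eq:laplacian_resolvent_cm_norm}; the cos/sin pairs share the eigenvalue $\lambda_m$, so the rewriting is harmless. For the Sobolev identification, I will use the Fourier characterization $\|h\|_{H^s}^2\asymp\sum_m(1+|m|^2)^s|\widehat h(m)|^2$ together with the elementary two-sided bound $\min(1,4\pi^2\ell^2)(1+|m|^2)\le1+4\pi^2\ell^2|m|^2\le\max(1,4\pi^2\ell^2)(1+|m|^2)$. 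Raising this to the power $s$ gives equivalence of norms with constants depending on $\ell$ and $s$, and hence equality of the two spaces as sets.
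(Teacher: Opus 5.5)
Your proposal is correct and follows essentially the same route as the paper. You diagonalize $K_{\ell,s}$ via the functional calculus in the Fourier basis, read off self-adjointness, positive definiteness, compactness and the trace from the symbol $\lambda_m$, and obtain the Cameron--Martin space and the Sobolev equivalence from Lemma~\ref{lem:trace_class_gaussian_cm_kl}. You also supply two details the paper only sketches, both correctly: the unit-cell integral comparison that justifies the criterion $2s>d_x$ for the lattice sum, and the transfer to the real space $\mathcal Z$ through the cosine/sine basis, where the paper merely alludes to the conjugate-symmetry constraint.
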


\begin{proof}
We first recall the spectral decomposition of the periodic Laplacian. For each 
$m\in\mathbb Z^{d_x}$, 
\[
    \phi_m(x)=e^{2\pi i m\cdot x}
\]
solves the eigenvalue problem
\[
    -\Delta\phi_m=4\pi^2|m|^2\phi_m.
\]
Since $\{\phi_m\}_{m\in\mathbb Z^{d_x}}$ is an orthonormal basis of $L^2(\mathbb T^{d_x};\mathbb C)$, the vector-valued functions
\[
    \mathbf E_{m,j}(x)=\phi_m(x)e_j, \qquad m\in\mathbb Z^{d_x},\ j=1,\dots,d_z,
\]
form an orthonormal basis of $L^2(\mathbb T^{d_x};\mathbb C^{d_z})$, and hence also give the usual complex Fourier representation of $L^2(\mathbb T^{d_x};\mathbb R^{d_z})$ subject to the standard conjugate-symmetry constraint on real-valued fields. Because the Laplacian acts componentwise, we have
\[
    \Delta \mathbf E_{m,j}=(\Delta\phi_m)\mathbf e_j=-4\pi^2|m|^2\mathbf E_{m,j}.
\]
Therefore,
\[
    (I-\ell^2\Delta)\mathbf E_{m,j}
    =
    \left(1+4\pi^2\ell^2|m|^2\right)\mathbf E_{m,j}.
\]
Applying the functional calculus for the positive self-adjoint operator $I-\ell^2\Delta$, we obtain
\[
    K \mathbf E_{m,j}
    =(I-\ell^2\Delta)^{-s}\mathbf E_{m,j}
    =\left(1+4\pi^2\ell^2|m|^2\right)^{-s}\mathbf E_{m,j}.
\]
Thus $\mathbf{E}_{m,j}$ is an eigenfunction of $K$ with eigenvalue
\[
    \lambda_m
    =
    \left(1+4\pi^2\ell^2|m|^2\right)^{-s}.
\]
It follows immediately that for any Fourier expansion
\[
    Z(x)
    =
    \sum_{m\in\mathbb Z^{d_x}}\widehat Z(m)\phi_m(x),
    \qquad \widehat Z(m)\in\mathbb C^{d_z},
\]
the action of $K$ is given by
\[
    KZ
    =
    \sum_{m\in\mathbb Z^{d_x}}
    \lambda_m\widehat Z(m)\phi_m(x).
\]
Hence $K$ is a Fourier multiplier with multiplier 
$\lambda_m$. Since all eigenvalues are positive and converge to zero, $K_{\ell,s}$ is positive, self-adjoint, and compact. It is trace-class if and only if
\[
    \sum_{m\in\mathbb Z^{d_x}}
    (1+4\pi^2\ell^2|m|^2)^{-s}
    <\infty.
\]
For large $|m|$,
\[
    (1+4\pi^2\ell^2|m|^2)^{-s}
    \asymp |m|^{-2s},
\]
and the corresponding lattice sum converges if and only if
$2s>d_x$. Accounting for the $d_z$ latent channels gives
\eqref{eq:laplacian_resolvent_trace}.

Finally, when $s>d_x/2$, the trace-class property ensures that $\mathsf N(0,K_{\ell,s})$ is a well-defined Gaussian measure on $\mathcal Z$. By the spectral characterization of Cameron-Martin spaces [Lemma~\ref{lem:trace_class_gaussian_cm_kl}],
\[
    \mathcal H_{K_{\ell,s}}
    =
    \operatorname{Range}(K_{\ell,s}^{1/2}),
\]
and
\[
    \|\mathbf h\|_{\mathcal H_{K_{\ell,s}}}^2
    =\sum_{m\in\mathbb Z^{d_x}}
    \sum_{j=1}^{d_z}
    \lambda_m^{-1}
    |\widehat h_j(m)|^2=\sum_{m\in\mathbb Z^{d_x}}
    \sum_{j=1}^{d_z}
    \left(1+4\pi^2\ell^2|m|^2\right)^s
    |\widehat h_j(m)|^2,
\]
which proves \eqref{eq:laplacian_resolvent_cm_norm}. Recall the usual Fourier definition of the Sobolev norm on the torus:
\[
    \|\mathbf h\|_{H^s(\mathbb T^{d_x};\mathbb R^{d_z})}^2
    =\sum_{m\in\mathbb Z^{d_x}}\sum_{j=1}^{d_z}
    \left(1+4\pi^2|m|^2\right)^s|\widehat h_j(m)|^2.
\]
Since $\ell>0$ is fixed,
\[
    (1+4\pi^2\ell^2|m|^2)^s\asymp(1+4\pi^2|m|^2)^s.
\]
Hence this norm is equivalent to the standard
$H^s(\mathbb T^{d_x};\mathbb R^{d_z})$ norm.
\end{proof}

Corollary~\ref{cor:sobolev_residual_control} now follows by applying this Cameron-Martin geometry to the transition residual $\mathbf Z^+-T_\theta(\mathbf Z)$ appearing in Proposition~\ref{prop:general_spectral_residual_control}.

\begin{proof}[Proof of Corollary~\ref{cor:sobolev_residual_control}]
By Lemma~\ref{lem:laplacian_resolvent_torus}, the eigenvalues of
$K=(I-\ell^2\Delta)^{-s}$ in the Fourier basis are
\[
    \lambda_m
    =
    \left(
        1+4\pi^2\ell^2|m|^2
    \right)^{-s}.
\]
Proposition~\ref{prop:general_spectral_residual_control} therefore gives
\[
\begin{aligned}
    \left\|
        \mathbf Z^+-T_\theta(\mathbf Z)
    \right\|_{\mathcal H_K}^2
    &=
    \sum_{m\in\mathbb Z^{d_x}}
    \sum_{a=1}^{d_z}
    \lambda_m^{-1}
    \left|
        \widehat Z_a^+(m)
        -
        \widehat{T_\theta(\mathbf Z)}_a(m)
    \right|^2 \\
    &=
    \sum_{m\in\mathbb Z^{d_x}}
    \sum_{a=1}^{d_z}
    \left(
        1+4\pi^2\ell^2|m|^2
    \right)^s
    \left|
        \widehat Z_a^+(m)
        -
        \widehat{T_\theta(\mathbf Z)}_a(m)
    \right|^2 .
\end{aligned}
\]
The equivalence with the standard $H^s$ norm follows from
Lemma~\ref{lem:laplacian_resolvent_torus}.
\end{proof}

\subsection{Gaussian Perturbations and Local Sensitivity}
\label{app:sensitivity_proofs}
We next establish the auxiliary estimates used in the local-sensitivity analysis of \S\ref{subsubsec:noise_injection}. We first collect Gaussian moment bounds and then use them to control the effect of Gaussian perturbations under a second-order local expansion.

\begin{lemma}[Moment estimates]\label{lem:moment_gaussian}
Let $\mathcal Z$ be a separable Hilbert space, and let $K:\mathcal Z\to\mathcal Z$ be positive, self-adjoint, and trace-class operator. Assume the random element $\mathbf G\sim\mathsf{N}(0,K)$. Then
\begin{itemize}
    \item[(a)]$\bbE\Vert\mathbf G\Vert_\mathcal{Z}^4\leq 3(\operatorname{Tr} K)^2.$
    \item[(b)]$\bbE\Vert\mathbf G\Vert_\mathcal{Z}^8\leq 105(\operatorname{Tr} K)^4.$
\end{itemize}
\end{lemma}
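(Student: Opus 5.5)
We will work in the spectral coordinates supplied by Lemma~\ref{lem:trace_class_gaussian_cm_kl}. Writing $\{(\lambda_i,\mathbf E_i)\}_{i\ge1}$ for an eigensystem of $K$, that lemma represents
\[
\mathbf G=\sum_{i\ge1}\sqrt{\lambda_i}\,\xi_i\mathbf E_i,\qquad \xi_i\overset{\mathrm{i.i.d.}}{\sim}\mathsf N(0,1),
\]
in law. Hence
\[
\|\mathbf G\|_{\mathcal Z}^2=\sum_{i}\lambda_i\xi_i^2 ,
\]
where the series converges almost surely and in $L^1$. Both moment bounds then reduce to moment bounds for the nonnegative random variable
\[
S:=\sum_i\lambda_i\xi_i^2 .
\]
The claims are that $\mathbb E S^2\le 3(\operatorname{Tr}K)^2$ and $\mathbb E S^4\le 105(\operatorname{Tr}K)^4$. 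The constants $3=\mathbb E\xi^4$ and $105=\mathbb E\xi^8$ suggest the mechanism: each moment of $S$ is dominated by the corresponding moment of $\xi^2$, scaled by the trace.

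We first prove the bounds for the truncations
\[
S_N=\sum_{i\le N}\lambda_i\xi_i^2 .
\]
Set $\tau_N=\sum_{i\le N}\lambda_i\le\operatorname{Tr}K$ and weights $w_i=\lambda_i/\tau_N$. Then $S_N=\tau_N\sum_i w_i\xi_i^2$ is $\tau_N$ times a convex combination of the $\xi_i^2$. For $p\in\{2,4\}$, Jensen's inequality for the convex map $x\mapsto x^p$ gives
\[
S_N^p\le\tau_N^p\sum_i w_i\,\xi_i^{2p}.
\]
Taking expectations yields
\[
\mathbb E S_N^p\le\tau_N^p\,\mathbb E\xi^{2p}.
\]
Here $\mathbb E\xi^{4}=3$ and $\mathbb E\xi^{8}=105=7!!$. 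This gives both bounds at the finite level with the exact constants.

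As a cross-check for part (a), one can instead expand directly:
\[
\mathbb E S_N^2=\sum_i 3\lambda_i^2+\sum_{i\neq j}\lambda_i\lambda_j=\tau_N^2+2\sum_i\lambda_i^2\le3\tau_N^2 .
\]
The Jensen route is preferable for part (b), since it avoids the combinatorics of eighth-moment cross terms.

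Finally, we pass to the limit. Since $S_N\uparrow S$ almost surely (the terms are nonnegative) and $\tau_N\to\operatorname{Tr}K$, the monotone convergence theorem gives
\[
\mathbb E S^p=\lim_N\mathbb E S_N^p\le(\operatorname{Tr}K)^p\,\mathbb E\xi^{2p}.
\]
The main point requiring care is justifying that $\|\mathbf G\|^2_{\mathcal Z}$ equals $S$ almost surely for a generic $\mathbf G\sim\mathsf N(0,K)$, and not only for the series-constructed element. To settle this, note that the coordinates $\langle\mathbf G,\mathbf E_i\rangle$ are jointly Gaussian with covariance $\operatorname{diag}(\lambda_i)$, so they are independent $\mathsf N(0,\lambda_i)$ variables. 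Parseval's identity then gives the required representation without reference to the construction. If some $\lambda_i=0$ were allowed, the same argument goes through after dropping those indices.
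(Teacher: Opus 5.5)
Your proof is correct, and it takes a different route from the paper.

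The paper also starts from $\|\mathbf G\|_{\mathcal Z}^2=\sum_i\lambda_i\xi_i^2$, but then proceeds by exact computation:
\begin{itemize}
\item it computes the cumulant generating function and reads off the cumulants $\kappa_n=2^{n-1}(n-1)!\operatorname{Tr}(K^n)$;
\item it applies the moment--cumulant formula to get the exact identities $\mathbb E\|\mathbf G\|_{\mathcal Z}^4=(\operatorname{Tr}K)^2+2\operatorname{Tr}(K^2)$ and $\mathbb E\|\mathbf G\|_{\mathcal Z}^8=48\operatorname{Tr}(K^4)+32\operatorname{Tr}(K)\operatorname{Tr}(K^3)+12\operatorname{Tr}(K^2)^2+12(\operatorname{Tr}K)^2\operatorname{Tr}(K^2)+(\operatorname{Tr}K)^4$;
\item it closes with $\operatorname{Tr}(K^n)\le(\operatorname{Tr}K)^n$ and $48+32+12+12+1=105$.
\end{itemize}

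Your Jensen argument on the convex combination $\sum_i w_i\xi_i^2$ avoids all of this combinatorics and recovers the same constants $3=\mathbb E\xi^4$ and $105=\mathbb E\xi^8$. It also extends unchanged to every $p\ge1$, and indeed to any convex $\Phi$: $\mathbb E\,\Phi(\|\mathbf G\|_{\mathcal Z}^2)\le\mathbb E\,\Phi(\operatorname{Tr}(K)\,\xi^2)$.

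Two of your technical steps are more careful than the paper's. The truncation-plus-monotone-convergence argument handles the infinite sum cleanly. The Parseval argument shows that $\|\mathbf G\|_{\mathcal Z}^2=\sum_i\lambda_i\xi_i^2$ almost surely for an arbitrary $\mathbf G\sim\mathsf N(0,K)$, using independent Gaussian coordinates. The paper instead simply invokes the Karhunen--Lo\`eve expansion.

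What the cumulant route buys in return is the exact moments. These show that both bounds are attained only when $K$ has rank one. They also give strictly sharper estimates in terms of $\operatorname{Tr}(K^2)$, $\operatorname{Tr}(K^3)$, and so on when the spectrum is spread out, as your own cross-check for (a) already shows.
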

\begin{proof}
Let $\{(\lambda_i,\mathbf E_i)\}_{i\ge 1}$ be an eigensystem of $K$. By Karhunen-Lo\`eve expansion,
\[
    \mathbf G
    =
    \sum_{i=1}^\infty\sqrt{\lambda_i}\xi_i\mathbf E_i,
    \quad
    \xi_i\overset{\mathrm{i.i.d.}}{\sim}\mathsf N(0,1).
\]
Thus $\|\mathbf G\|_{\mathcal Z}^2=\sum_{i=1}^\infty \lambda_i\xi_i^2$, and the cumulant generating function of $\Vert\mathbf G\Vert_\mathcal{Z}^2$ is given by
\begin{align*}
    \log\bbE\left[e^{t\Vert\mathbf G\Vert_\mathcal{Z}^2}\right]&=\sum_{i=1}^\infty\log\bbE\left[e^{t\lambda_i\xi_i^2}\right]=-\frac{1}{2}\sum_{i=1}^\infty\log(1-2t\lambda_i),\\
    &=-\frac{1}{2}\sum_{i=1}^\infty\sum_{n=1}^\infty\frac{(2t\lambda_i)^n}{n}=\sum_{n=1}^\infty\frac{2^{n-1}}{n}\left(\sum_{i=1}^\infty\lambda_i^n\right) t^n,\quad t<\frac{1}{2}.
\end{align*}
Matching the coefficients in
\begin{equation*}
    \log\bbE\left[e^{t\Vert\mathbf G\Vert_\mathcal{Z}^2}\right]=\sum_{n=1}^\infty\frac{\kappa_n}{n!}t^n,
\end{equation*}
we obtain the cumulants of $\Vert\mathbf{G}\Vert_\mathcal{Z}^2$:
\begin{equation}
\kappa_n=2^{n-1}(n-1)!\sum_{i=1}^\infty\lambda_i^n=2^{n-1}(n-1)!\,\mathrm{Tr}(K^n),\quad n=1,2,\cdots.
\end{equation}
For fixed $n\in\mathbb N$, let $\Pi(n)$ denote the set of partitions of $\{1,\cdots,n\}$. We use the moment-cumulant formula for $\Vert\mathbf{G}\Vert_\mathcal{Z}^2$:
\begin{equation*}
    \bbE\left\Vert\mathbf G\right\Vert_\mathcal{Z}^{2n}=\sum_{\pi\in\Pi(n)}\prod_{V\in\pi}\kappa_{\vert V\vert}.
\end{equation*}
Then
\begin{equation*}
    \bbE\left\Vert\mathbf G\right\Vert_\mathcal{Z}^4=\kappa_2+\kappa_1^2=2\operatorname{Tr}(K^2)+(\operatorname{Tr} K)^2.
\end{equation*}
and
\begin{align*}
    \bbE\left\Vert\mathbf G\right\Vert_\mathcal{Z}^8&=\kappa_4+4\kappa_1\kappa_3+3\kappa_2^2+6\kappa_1^2\kappa_2+\kappa_1^4\\
    &=48\operatorname{Tr}(K^4)+32\operatorname{Tr}(K)\operatorname{Tr}(K^3)+12\operatorname{Tr}(K^2)^2+12(\operatorname{Tr} K)^2\operatorname{Tr}(K^2)+\operatorname{Tr}(K)^4.
\end{align*}
Finally, since $K$ is positive trace-class, we have
\begin{equation*}
    \operatorname{Tr}(K^n)=\sum_{i=1}^\infty\lambda_i^n\leq\left(
    \sum_{i=1}^\infty\lambda_i\right)^n=(\operatorname{Tr} K)^n.
\end{equation*}
Combining the last three displays gives the desired estimates.
\end{proof}

These moment estimates control the higher-order remainder terms generated by Gaussian perturbations. The following lemma combines them with a local Fr\'echet expansion to obtain a general perturbation estimate.

\begin{lemma}[Local sensitivity under Gaussian perturbations]
\label{lem:local_jacobian_decoder_robustness}
Let $\mathcal Z$ and $\mathcal U$ be separable Hilbert spaces, and let $K:\mathcal Z\to\mathcal Z$ be positive, self-adjoint, and trace-class operator.
Let $\mathbf G\sim \mathsf N(0,K)$. Fix $\mathbf z\in\mathcal Z$ and $a>0$. Suppose that $F:\mathcal Z\to\mathcal U$ is Fr\'echet differentiable at $\mathbf z$, with Fr\'echet derivative
\[
    J_{F}(\mathbf z)\in\mathfrak{B}(\mathcal Z,\mathcal U),
\]
and assume that, on the relevant neighborhood of $\mathbf z$, the second-order remainder satisfies
\[
    \left\|F(\mathbf z+\mathbf h)-F(\mathbf z)-J_{F}(\mathbf z)\mathbf h\right\|_{\mathcal U}
    \le\frac{M}{2}\|\mathbf h\|_{\mathcal Z}^2 .
\]
Then
\[
    \mathbb E\left[\left\|F(\mathbf z+a\mathbf G)-F(\mathbf z)\right\|_{\mathcal U}^2
    \right]\le a^2\Vert J\Vert_\mathrm{op}^2\operatorname{Tr} K+\frac{3}{4}M^2a^4(\operatorname{Tr} K)^2.
\]
\end{lemma}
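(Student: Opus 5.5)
The natural approach is to linearize $F$ at $\mathbf z$, bound the linear and remainder contributions with the Gaussian moment estimates of Lemma~\ref{lem:moment_gaussian}, and then deal with the cross term between them. That cross term is the main obstacle: I can only close the argument, with the exact constants stated, under an extra symmetry hypothesis, and without it the bound as stated appears to fail.

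\textbf{Decomposition.} Write $F(\mathbf z+a\mathbf G)-F(\mathbf z)=aJ\mathbf G+\mathbf R$ with $J=J_F(\mathbf z)$. The second-order remainder assumption gives
\[
  \|\mathbf R\|_{\mathcal U}\le \tfrac M2 a^2\|\mathbf G\|_{\mathcal Z}^2 .
\]
Expanding the square in the Hilbert space $\mathcal U$ gives three terms:
\[
  \mathbb E\|F(\mathbf z+a\mathbf G)-F(\mathbf z)\|_{\mathcal U}^2
  = a^2\,\mathbb E\|J\mathbf G\|_{\mathcal U}^2
  +2a\,\mathbb E\langle J\mathbf G,\mathbf R\rangle_{\mathcal U}
  +\mathbb E\|\mathbf R\|_{\mathcal U}^2 .
\]

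\textbf{The two diagonal terms.} These match the two terms of the statement exactly.
\begin{itemize}
\item For the linear term, $\mathbb E\|J\mathbf G\|^2\le\|J\|_{\mathrm{op}}^2\,\mathbb E\|\mathbf G\|^2=\|J\|_{\mathrm{op}}^2\operatorname{Tr}K$. This uses the Karhunen--Lo\`eve representation from Lemma~\ref{lem:trace_class_gaussian_cm_kl}; one could even keep the sharper value $\operatorname{Tr}(JKJ^*)$.
\item For the remainder term, $\mathbb E\|\mathbf R\|^2\le\tfrac{M^2a^4}{4}\,\mathbb E\|\mathbf G\|^4\le\tfrac34 M^2a^4(\operatorname{Tr}K)^2$, by Lemma~\ref{lem:moment_gaussian}(a).
\end{itemize}
So the statement holds precisely when $\mathbb E\langle J\mathbf G,\mathbf R\rangle\le 0$.

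\textbf{The cross term.} My first attempt would use the symmetry $\mathbf G\overset{d}{=}-\mathbf G$. Split $\mathbf R(\mathbf h)$ into its even and odd parts in $\mathbf h$. The pairing of the even part with the odd variable $J\mathbf G$ has zero mean. This settles the cross term whenever the remainder is even, for example when $F$ is a quadratic polynomial, or when $F$ is $C^2$ and one replaces $\mathbf R$ by $\tfrac12 D^2F(\mathbf z)[\mathbf h,\mathbf h]$. In that case the argument closes with the stated constants; the scalar example $F(x)=x^2$ attains equality.

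Under the lemma's hypotheses alone, however, the odd part of $\mathbf R$ is not controlled, and the cross term can be strictly positive. Take $\mathcal Z=\mathcal U=\mathbb R$, $K=1$, and $F(z+h)=F(z)+Jh+\tfrac M2 h|h|$ with $J\neq0$. Then
\[
  2a\,\mathbb E\langle J\mathbf G,\mathbf R\rangle = JMa^3\,\mathbb E|G|^3 ,
\]
which is positive when $JM>0$. (Note that $JM\le 0$ is not excluded by the hypotheses.) Here the left-hand side exceeds $a^2J^2+\tfrac34M^2a^4$.

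\textbf{How I would finish.} Absent an additional hypothesis, the proof cannot be completed as worded. I would therefore do one of the following.
\begin{itemize}
\item Add a hypothesis making the remainder even (a quadratic, second-derivative remainder) and prove the stated bound via the symmetry argument above.
\item Keep the hypotheses and absorb the cross term with $2\langle x,y\rangle\le\|x\|^2+\|y\|^2$. This yields the stated right-hand side multiplied by $2$.
\end{itemize}
Either version still feeds Proposition~\ref{prop:gaussian_sensitivity_bounds}, with only a change in constants in the second option.
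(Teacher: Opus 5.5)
Your analysis is correct, and it locates a genuine error in the paper.

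\paragraph{Where the paper's proof goes wrong.} The paper's proof uses your decomposition $aJ\mathbf G+R(a\mathbf G)$ and treats the cross term exactly as in your second option, writing
\[
\mathbb E\|aJ\mathbf G+R(a\mathbf G)\|_{\mathcal U}^2\le 2a^2\,\mathbb E\|J\mathbf G\|_{\mathcal U}^2+2\,\mathbb E\|R(a\mathbf G)\|_{\mathcal U}^2 .
\]
It then bounds the two expectations by $\|J\|_{\mathrm{op}}^2\operatorname{Tr}K$ and $\frac34M^2a^4(\operatorname{Tr}K)^2$, the latter via Lemma~\ref{lem:moment_gaussian}(a). In the final ``combining'' step it silently drops the factor $2$.

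What that argument actually establishes is $2a^2\|J\|_{\mathrm{op}}^2\operatorname{Tr}K+\frac32M^2a^4(\operatorname{Tr}K)^2$. Your scalar example shows the stated constants cannot be recovered from the hypotheses. Take $F(z+h)=F(z)+Jh+\frac M2h|h|$ with $K=1$ and $J,M,a>0$. The left-hand side is then exactly $a^2J^2+2\sqrt{2/\pi}\,JMa^3+\frac34M^2a^4$, which exceeds the claimed bound.

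One small fix in your write-up: the parenthetical should say that $JM>0$ is allowed. The remainder hypothesis forces $M\ge 0$, so you simply take $J>0$.

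Your even-remainder variant is a valid alternative fix, at the cost of an extra hypothesis.

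\paragraph{A repair that avoids doubling the constants.} Apply Minkowski's inequality in $L^2(\Omega;\mathcal U)$ instead of squaring. This gives
\[
\left(\mathbb E\|F(\mathbf z+a\mathbf G)-F(\mathbf z)\|_{\mathcal U}^2\right)^{1/2}\le a\|J\|_{\mathrm{op}}\sqrt{\operatorname{Tr}K}+\frac{\sqrt3}{2}Ma^2\operatorname{Tr}K .
\]
This root-mean-square form is all that Proposition~\ref{prop:gaussian_sensitivity_bounds} needs. Apply it conditionally, then apply Minkowski once more over the reference trajectory and $\Xi_n$.

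With this route, \eqref{eq:tau_jacobian_bound} and \eqref{eq:rho_jacobian_banach} hold with exactly their stated constants. Only the squared form of the lemma is wrong; the downstream proposition does not need the constant change you anticipated.
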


\begin{proof}
For brevity, write $J=J_{F}(\mathbf z)$. By the assumed second-order expansion, for every $\mathbf h$ in the relevant neighborhood,
\[
    F(\mathbf z+\mathbf h)-F(\mathbf z)
    =J\mathbf h+R(\mathbf h),
\]
where
\[
    \|R(\mathbf h)\|_{\mathcal U}\le\frac{M}{2}\|\mathbf h\|_{\mathcal Z}^2.
\]
Taking $\mathbf h=a\mathbf G$, we have
\[
    F(\mathbf z+a\mathbf G)-F(\mathbf z) = aJ\mathbf G+R(a\mathbf G).
\]
Therefore,
\[
\begin{aligned}
    \mathbb E\left[\|F(\mathbf z+a\mathbf G)-F(\mathbf z)\|_{\mathcal U}^2\right]=\mathbb E
    \left[\|aJ\mathbf G+R(a\mathbf G)\|_{\mathcal U}^2
    \right]\le 2a^2\mathbb E\|J\mathbf G\|_{\mathcal U}^2
    +2\mathbb E\|R(a\mathbf G)\|_{\mathcal U}^2.
\end{aligned}
\]
We first compute the linear term. Since $\mathbf G\sim \mathsf N(0,K)$ and $J$ is bounded linear, $J\mathbf G$ is a Gaussian random element in $\mathcal U$ with covariance operator $J K J^*$.
Hence
\[
    \mathbb E\|J\mathbf G\|_{\mathcal U}^2
    =\Vert J\Vert_\mathrm{op}^2\mathbb E\|\mathbf G\|_{\mathcal U}^2\leq\Vert J\Vert_\mathrm{op}^2\operatorname{Tr}(K).
\]
We next bound the remainder term. From the second-order remainder assumption,
\[
    \|R(a\mathbf G)\|_{\mathcal U}\le\frac{M}{2}a^2\|\mathbf G\|_{\mathcal Z}^2.
\]
Therefore
\[
\mathbb E\|R(a\mathbf G)\|_{\mathcal U}^2\le\frac{M^2}{4}a^4\mathbb E\|\mathbf G\|_{\mathcal Z}^4.
\]
Combining the remainder estimate and the moment estimate in Lemma \ref{lem:moment_gaussian} yields
\[
    \mathbb E\left[\left\|F(\mathbf z+a\mathbf G)-F(\mathbf z)\right\|_{\mathcal U}^2
    \right] \le
    a^2\Vert J\Vert_\mathrm{op}^2\operatorname{Tr} (K)+\frac{3}{4}M^2a^4(\operatorname{Tr} K)^2.
\]
which is the desired result.
\end{proof}

We now apply this perturbation estimate to the latent transition and decoder to prove Proposition~\ref{prop:gaussian_sensitivity_bounds}.

\begin{proof}[Proof of Proposition \ref{prop:gaussian_sensitivity_bounds}]
We first prove the Lipschitz robustness estimates. By the Lipschitz continuity of $T_\theta$ on the relevant regions,\[
    \|T_\theta(\mathbf Z_n^\star+\Xi_n)
      -T_\theta(\mathbf Z_n^\star)\|_{\mathcal Z}
    \leq
    \Lambda_T\|\Xi_n\|_{\mathcal Z}.
\]
Conditionally on $\mathbf U_n^\star$, one samples  $\Xi_n\sim\mathsf N\bigl(0,K_\phi(\mathbf U_n^\star)\bigr)$.
Then we have
\[
    \mathbb E
    \left[
        \|\Xi_n\|_{\mathcal Z}^2
        \,\middle|\,
        \mathbf U_n^\star
    \right]
    =
    \operatorname{Tr}
    K_\phi(\mathbf U_n^\star).
\]
Taking expectation over the reference trajectory therefore gives
\[
    \eta_n^2
    \leq\Lambda_T^2
    \mathbb E\left[\operatorname{Tr}
    K_\phi(\mathbf U_n^\star)\right]
    \leq\Lambda_T^2\varsigma.
\]
Similarly,
\[
\|D_\psi(\widetilde{\mathbf Z}_n^+) -D_\psi(T_\theta(\widetilde{\mathbf Z}_n))\|_{\mathcal U}\leq\Lambda_D\alpha_\theta(\widetilde{\mathbf Z}_n)\|\mathbf G_n\|_{\mathcal Z}.
\]
Using $\alpha_\theta\leq\bar\alpha$ and
$\mathbb E\|\mathbf G_n\|_{\mathcal Z}^2=\operatorname{Tr}(K)$ yields
\[
    \rho_n
    \leq
    \Lambda_D\bar\alpha\sqrt{\operatorname{Tr}(K)}.
\]
This proves \eqref{eq:lipschitz_sensitivity_bounds}.

Now we derive the local estimate for $\rho_n$. By Lemma \ref{lem:local_jacobian_decoder_robustness},
\begin{align*}
    &\bbE_{\mathbf{G}_n\sim\mathsf N(0,K)}\left\Vert D_\psi\bigl(T_\theta(\wt{\mathbf Z}_n)+\alpha_\theta(\wt{\mathbf Z}_n)\mathbf G_n\bigr)-D_\psi(T_\theta(\wt{\mathbf Z}_n))\right\Vert_\mathcal{U}^2\\
    &\quad\leq\alpha_\theta(\wt{\mathbf Z}_n)^2\bigl\Vert J_{D_\psi}(T_\theta(\wt{\mathbf Z}_n))\bigr\Vert_\mathrm{op}^2\operatorname{Tr} K +\frac{3}{4}M_D^2\alpha_\theta(\wt{\mathbf Z}_n)^4(\operatorname{Tr} K)^2.
\end{align*}
Taking expectation over the reference trajectory and Gaussian perturbation $\Xi_n$, and using the fact $\alpha_\theta(\wt{\mathbf Z}_n)\leq\ol\alpha$, we have
\begin{equation*}
    \rho_n\leq\ol\alpha\left(\bbE\left[\Vert J_{D_\psi}(T_\theta(\wt{\mathbf Z}_n))\Vert_\mathrm{op}^2\right]\right)^{1/2}\sqrt{\operatorname{Tr} K}+\frac{\sqrt{3}M_D\ol\alpha^2}{2}\operatorname{Tr} K,
\end{equation*}
which is the desired bound. Similarly, applying Lemma \ref{lem:local_jacobian_decoder_robustness} conditionally on $\mathbf U_n^\star$, with $a = 1$, and $K=K_\phi(\mathbf U_n^\star)$, and using $\operatorname{Tr} K_\phi(\mathbf U_n^\star)\leq\varsigma$ pointwise before taking any expectation,
\[
    \mathbb E\left[\left\Vert T_\theta(\wt{\mathbf Z}_n)-T_\theta(\mathbf Z_n^\star)\right\Vert_{\mathcal Z}^2\,\middle|\,\mathbf U_n^\star\right]
    \leq
    \left\Vert J_{T_\theta}(\mathbf Z_n^\star)\right\Vert_\mathrm{op}^2\varsigma+\frac{3}{4}M_T^2\varsigma^2.
\]
Taking expectation over the reference trajectory and applying $\sqrt{x+y}\leq\sqrt{x}+\sqrt{y}$ for any $x,y\geq0$ gives
\begin{equation*}
    \eta_n\leq\left(\bbE\left[\Vert J_{T_\theta}(\mathbf Z^\star_n)\Vert_\mathrm{op}^2\right]\varsigma\right)^{1/2}+\frac{\sqrt{3}M_T}{2}\varsigma,
\end{equation*}
which is \eqref{eq:tau_jacobian_bound}.
\end{proof}

We next turn from the local effect of Gaussian perturbations to the control provided by variational transition alignment over autoregressive rollout.

\subsection{Variational Alignment and Deterministic Rollout}
\label{app:kl-aware-rollout}

We now prove the KL-aware rollout results of \S\ref{subsubsec:variational_kl_alignment}. The key step is to convert the dynamic KL divergence into control of latent mean discrepancies. We begin with a Gaussian transportation inequality and its specialization to the Hilbert-space setting used for the latent dynamics.

We first recall a transportation-cost inequality for Gaussian measures. The quadratic transportation inequality for Gaussian measures originates from the classical work of \citet{talagrand1996transportation} and was later connected
to logarithmic Sobolev inequalities and related functional inequalities \citep{otto2000generalization}. For the infinite-dimensional setting considered here, we use the Banach-space formulation of \citet[Corollary~1.3]{riedel2017transportation}.

\begin{theorem}[Riedel's Gaussian transportation inequality]
\label{thm:riedel-gaussian-t2}
Let $(B,\mathcal H,\gamma)$ be a Gaussian Banach space, where $B$ is a separable Banach space,
$\gamma$ is a centered Gaussian measure on $B$, and $\mathcal H$ is its Cameron--Martin space.
Then, for every probability measure $\nu\in\mathcal P(B)$,
\[
    \inf_{\pi\in\Pi(\nu,\gamma)}
    \int_{B\times B}
    \|x-y\|_B^2\,\d\pi(x,y)
    \le
    2\sigma_\gamma^2
    D_{\mathrm{KL}}(\nu\,\Vert\,\gamma),
\]
where
\[
    \sigma_\gamma^2
    :=
    \sup_{\ell\in B^\ast,\ \|\ell\|_{B^\ast}\le 1}
    \int_B \ell(x)^2\,\d\gamma(x)
    <\infty .
\]
Equivalently,
\[
    W_2^2(\nu,\gamma)
    \le
    2\sigma_\gamma^2
    D_{\mathrm{KL}}(\nu\,\Vert\,\gamma),
\]
where $W_2$ is computed with respect to the Banach norm $\|\cdot\|_B$.
\end{theorem}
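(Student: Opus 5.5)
The plan is to prove the inequality first in finite dimensions and then pass to the infinite-dimensional Banach setting. The finite-dimensional case is Talagrand's inequality combined with a linear push-forward. The infinite-dimensional case uses a Karhunen--Lo\`eve representation of $\gamma$ and a martingale approximation of the density $\d\nu/\d\gamma$. Throughout, assume $D_{\mathrm{KL}}(\nu\,\Vert\,\gamma)<\infty$; otherwise the inequality is trivial. In particular $\nu\ll\gamma$, and we write $f=\d\nu/\d\gamma$.

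\emph{Step 1: the basic Lipschitz estimate.} Take $h\in\mathcal H$ and $\ell\in B^\ast$ with $\|\ell\|_{B^\ast}\le1$. By the abstract Wiener space structure (Appendix~\S\ref{app:cameron_martin}), $\ell(h)=\langle\iota^\ast\ell,h\rangle_{\mathcal H}$ and $\|\iota^\ast\ell\|_{\mathcal H}^2=\int_B\ell(x)^2\,\d\gamma(x)\le\sigma_\gamma^2$. Taking the supremum over $\ell$ gives $\|h\|_B\le\sigma_\gamma\|h\|_{\mathcal H}$. This is the only place where $\sigma_\gamma$ enters the argument.

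\emph{Step 2: the Gaussian representation.} Fix an orthonormal basis $\{h_i\}$ of $\mathcal H$ and let $\xi=(\xi_i)\sim\mu:=\mathsf N(0,1)^{\otimes\mathbb N}$. By the Itô--Nisio theorem, $\Phi(\xi):=\sum_i\xi_ih_i$ converges $\mu$-a.s.\ in $B$ and has law $\gamma$. Whenever $\xi-\eta\in\ell^2$, the difference $\Phi(\xi)-\Phi(\eta)=\sum_i(\xi_i-\eta_i)h_i$ lies in $\mathcal H$. Step~1 then gives
\[
\|\Phi(\xi)-\Phi(\eta)\|_B\le\sigma_\gamma\|\xi-\eta\|_{\ell^2}.
\]
Let $\mathcal F_d=\sigma(\xi_1,\dots,\xi_d)$ and $f_d:=\mathbb E[f\circ\Phi\mid\mathcal F_d]$. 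Define $\nu_d:=\Phi_\sharp\bigl(f_d\,\mu\bigr)$. Three facts are needed about these approximations. First, by Jensen's inequality for the convex function $x\log x$, $D_{\mathrm{KL}}(f_d\mu\,\Vert\,\mu)\le D_{\mathrm{KL}}(f\circ\Phi\,\mu\,\Vert\,\mu)=D_{\mathrm{KL}}(\nu\,\Vert\,\gamma)$. Second, $f_d$ depends only on $(\xi_1,\dots,\xi_d)$. Third, $f_d\to f\circ\Phi$ in $L^1(\mu)$ by martingale convergence (uniformly integrable since $f\log f$ is integrable), so $\nu_d\to\nu$ in total variation, hence weakly.

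\emph{Step 3: finite-dimensional transport.} The $d$-dimensional marginal of $f_d\mu$ has density $f_d$ with respect to $\mathsf N(0,I_d)$. Talagrand's inequality on $\mathbb R^d$ therefore yields a coupling $\pi_d$ of this marginal with $\mathsf N(0,I_d)$ such that $\int|x-y|^2\,\d\pi_d\le2D_{\mathrm{KL}}(\nu\,\Vert\,\gamma)$. This can be cited or proved via Otto--Villani or the Knothe map. Extend $\pi_d$ by taking a common tail $(\xi_i)_{i>d}$ for both sides. This is a legitimate coupling of $f_d\mu$ and $\mu$, because under $f_d\mu$ the tail is still i.i.d.\ standard Gaussian and independent of the first $d$ coordinates. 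Under this coupling $\xi-\eta\in\ell^2$, since the two sequences differ in only finitely many coordinates. Pushing forward by $\Phi\times\Phi$ and using Step~2 gives
\[
W_2^2(\nu_d,\gamma)\le2\sigma_\gamma^2D_{\mathrm{KL}}(\nu\,\Vert\,\gamma).
\]

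\emph{Step 4: the limit.} Since $\nu_d\to\nu$ weakly and $\gamma$ is fixed, lower semicontinuity of $W_2$ under weak convergence yields $W_2^2(\nu,\gamma)\le\liminf_dW_2^2(\nu_d,\gamma)$. This follows from tightness of the couplings plus Portmanteau for the lower semicontinuous cost $\|x-y\|_B^2$. The expected main obstacle is this limit, which is the reason the approximations are built through the Gaussian coordinates rather than by transporting directly on $B$. The delicate points are the tail-gluing argument, which keeps the cost finite even though $\xi\notin\ell^2$ almost surely, and checking the uniform integrability behind the $L^1$ martingale convergence. Both rest on $f\log f\in L^1(\gamma)$.
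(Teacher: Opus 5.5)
Your argument is correct, and it is a genuinely different route: the paper gives no proof of Theorem~\ref{thm:riedel-gaussian-t2} at all. It imports the result from Riedel's Corollary~1.3, pointing to Talagrand and Otto--Villani for its origins.

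Your proof rebuilds the standard mechanism behind such Banach-space statements. That mechanism is a Cameron--Martin-cost Talagrand inequality, followed by contraction through the embedding bound $\|h\|_B\le\sigma_\gamma\|h\|_{\mathcal H}$. You apply these in a convenient order. First you contract at each finite level $d$, where the shared Gaussian tail makes $\xi-\eta$ finitely supported, so the Cameron--Martin cost is automatically finite. Only then do you pass to the limit in $B$, where the cost $\|x-y\|_B^2$ is continuous and lower semicontinuity of $W_2$ is routine. This avoids handling the extended-valued, merely lower semicontinuous cost $\|\xi-\eta\|_{\ell^2}^2$ on $\mathbb R^{\mathbb N}$ that a direct infinite-dimensional Cameron--Martin argument would face. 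The price is that you do not retain the stronger Cameron--Martin-cost inequality in the limit.

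The citation is shorter. Your version makes the constant transparent: it is the finite-dimensional constant $2$ times the squared norm of $\mathcal H\hookrightarrow B$. In the Hilbert setting the paper actually uses (Lemma~\ref{lem:hilbert-gaussian-t2}), your route becomes nearly self-contained. Take $h_i=\sqrt{\lambda_i}\mathbf E_i$; then the almost-sure convergence in your Step~2 is exactly Lemma~\ref{lem:trace_class_gaussian_cm_kl}, with no It\^o--Nisio needed. Your Step~1 becomes $\|C^{1/2}g\|_{\mathcal Z}\le\|C\|_{\mathrm{op}}^{1/2}\|g\|_{\mathcal Z}$, which gives $\sigma_\gamma^2=\|C\|_{\mathrm{op}}$ directly.

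Two small points.
\begin{itemize}
\item The finiteness $\sigma_\gamma^2<\infty$ is asserted by the statement, and you take it for granted. It follows from $\sigma_\gamma^2\le\int_B\|x\|_B^2\,\mathrm{d}\gamma(x)$ together with Fernique's theorem.
\item The uniform integrability in your Step~2 does not need $f\log f\in L^1$. Any closed martingale $\mathbb E[g\mid\mathcal F_d]$ with $g\in L^1$ is uniformly integrable. Its limit is $f\circ\Phi$ because $f\circ\Phi$ is $\sigma(\xi_1,\xi_2,\dots)$-measurable. The entropy bound is needed only in the Jensen step.
\end{itemize}
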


For Gaussian measures on the latent Hilbert space, the weak variance in this inequality reduces to the operator norm of the covariance.

\begin{lemma}[Hilbert-space specialization]
\label{lem:hilbert-gaussian-t2}
Let $\mathcal Z$ be a separable Hilbert space and let
\[
    \mu=\mathsf N(m,C),
\]
where $m\in\mathcal Z$ and $C:\mathcal Z\to\mathcal Z$ is positive, self-adjoint, and trace-class.
Then, for every $\nu\in\mathcal P_2(\mathcal Z)$,
\[
    W_2^2(\nu,\mu)
    \le
    2\|C\|_{\mathrm{op}}
    D_{\mathrm{KL}}(\nu\,\Vert\,\mu).
\]
Here $W_2$ is computed with respect to the Hilbert norm $\|\cdot\|_{\mathcal Z}$.
\end{lemma}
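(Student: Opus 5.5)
The plan is to reduce to the centered case and then apply Theorem~\ref{thm:riedel-gaussian-t2} with $B=\mathcal Z$. If $D_{\mathrm{KL}}(\nu\,\Vert\,\mu)=+\infty$ there is nothing to prove, so we assume it is finite.

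\emph{Step 1: reduction to the centered case.} Let $\tau_m(x):=x-m$, a bijective isometry of $\mathcal Z$ with measurable inverse $\tau_{-m}$. Then $(\tau_m)_\sharp\mu=\gamma:=\mathsf N(0,C)$.

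For the Kullback--Leibler divergence, the data-processing inequality applied to $\tau_m$ and then to $\tau_{-m}$ gives
\[
    D_{\mathrm{KL}}\bigl((\tau_m)_\sharp\nu\,\Vert\,\gamma\bigr)=D_{\mathrm{KL}}(\nu\,\Vert\,\mu).
\]
The same identity also follows directly from $\frac{\d(\tau_m)_\sharp\nu}{\d\gamma}=\frac{\d\nu}{\d\mu}\circ\tau_{-m}$.

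For the Wasserstein distance, the map $\pi\mapsto(\tau_m\times\tau_m)_\sharp\pi$ is a bijection from $\Pi(\nu,\mu)$ onto $\Pi\bigl((\tau_m)_\sharp\nu,\gamma\bigr)$. It preserves $\int\|x-y\|^2\,\d\pi$ because $\tau_m(x)-\tau_m(y)=x-y$. Hence $W_2(\nu,\mu)=W_2\bigl((\tau_m)_\sharp\nu,\gamma\bigr)$, and it suffices to prove the claim when $m=0$.

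\emph{Step 2: apply Riedel's inequality.} Since $C$ is positive, self-adjoint and trace-class, Lemma~\ref{lem:trace_class_gaussian_cm_kl} shows that $\gamma=\mathsf N(0,C)$ is a centered Gaussian measure on the separable Banach (indeed Hilbert) space $\mathcal Z$, with Cameron--Martin space $\operatorname{Range}(C^{1/2})$. Thus $(\mathcal Z,\mathcal H_C,\gamma)$ is a Gaussian Banach space. Theorem~\ref{thm:riedel-gaussian-t2} then yields
\[
    W_2^2\bigl((\tau_m)_\sharp\nu,\gamma\bigr)\le 2\sigma_\gamma^2\,D_{\mathrm{KL}}\bigl((\tau_m)_\sharp\nu\,\Vert\,\gamma\bigr),
\]
with $W_2$ taken in the Hilbert norm.

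\emph{Step 3: identify the weak variance.} By the Riesz representation theorem, every $\ell\in\mathcal Z^\ast$ with $\|\ell\|\le1$ has the form $\ell=\langle f,\cdot\rangle_{\mathcal Z}$ with $\|f\|_{\mathcal Z}\le1$. The covariance computation in Step~II of Lemma~\ref{lem:trace_class_gaussian_cm_kl} gives $\int\langle f,x\rangle^2\,\d\gamma(x)=\langle Cf,f\rangle_{\mathcal Z}$. Therefore
\[
    \sigma_\gamma^2=\sup_{\|f\|\le1}\langle Cf,f\rangle_{\mathcal Z}=\|C\|_{\mathrm{op}}.
\]
The last equality is the standard numerical-radius identity for positive self-adjoint operators: the supremum equals the top of the spectrum, which for a compact positive operator is $\max_i\lambda_i=\|C\|_{\mathrm{op}}$. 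Combining Steps~1--3 gives $W_2^2(\nu,\mu)\le 2\|C\|_{\mathrm{op}}\,D_{\mathrm{KL}}(\nu\,\Vert\,\mu)$.

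The only point needing care is Step~1, namely that both $W_2$ and the KL divergence are invariant under the deterministic shift. Everything else is a direct substitution into Riedel's theorem, and the assumption $\nu\in\mathcal P_2$ is needed only so that the left-hand side is finite and meaningful.
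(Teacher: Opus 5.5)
Your proposal is correct and follows the paper's proof essentially step for step. You reduce to the centered case by translation invariance of $W_2$ and relative entropy, apply Riedel's inequality with $B=\mathcal Z$, and identify the weak variance $\sigma_\gamma^2=\sup_{\|h\|_{\mathcal Z}\le 1}\langle Ch,h\rangle_{\mathcal Z}=\|C\|_{\mathrm{op}}$ via the Riesz representation; the only difference is that you spell out the translation-invariance step (pushforward of couplings and the data-processing argument), which the paper asserts in a single line.
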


Combining this transportation bound with the fact that Wasserstein distance controls differences of means gives the KL mean-control result used in the main text.

\begin{proof}
It suffices to consider the centered case $m=0$, since translation preserves both Wasserstein distance and relative entropy. Let
\[
    \gamma=\mathsf N(0,C).
\]
By Theorem~\ref{thm:riedel-gaussian-t2}, it remains to identify the weak variance $\sigma_\gamma^2$.

Since $\mathcal Z$ is a Hilbert space, every continuous linear functional $\ell\in\mathcal Z^\ast$ has the form
\[
    \ell_h(x)=\langle h,x\rangle_{\mathcal Z}
\]
for a unique $h\in\mathcal Z$, with
\[
    \|\ell_h\|_{\mathcal Z^\ast}=\|h\|_{\mathcal Z}.
\]
If $X\sim\mathsf N(0,C)$, then
\[
    \int_{\mathcal Z}\ell_h(x)^2\,\d\gamma(x)
    =\mathbb E\langle h,X\rangle_{\mathcal Z}^2
    =\langle Ch,h\rangle_{\mathcal Z}.
\]
Therefore
\[
\begin{aligned}
    \sigma_\gamma^2
    &=
    \sup_{\ell\in\mathcal Z^\ast,\ \|\ell\|_{\mathcal Z^\ast}\le 1}
    \int_{\mathcal Z}\ell(x)^2\,\d\gamma(x)\sup_{\|h\|_{\mathcal Z}\le 1}
    \langle Ch,h\rangle_{\mathcal Z}.
\end{aligned}
\]
Since $C$ is positive and self-adjoint,
\[
    \sup_{\|h\|_{\mathcal Z}\le 1}
    \langle Ch,h\rangle_{\mathcal Z}=\|C\|_{\mathrm{op}}.
\]
Thus
\[
    \sigma_\gamma^2=\|C\|_{\mathrm{op}}.
\]
Substituting this into Theorem~\ref{thm:riedel-gaussian-t2} gives
\[
    W_2^2(\nu,\mathsf N(0,C))
    \le
    2\|C\|_{\mathrm{op}}
    D_{\mathrm{KL}}(\nu\,\Vert\,\mathsf N(0,C)).
\]
The translated case $\mu=\mathsf N(m,C)$ follows by applying the same result to the translated measures.
\end{proof}

\begin{proof}[Proof of Lemma \ref{lem:kl_latent_mean_control}]
Let
\[
    P_\mathbf{z}:=P_\theta(\cdot\,|\,\mathbf{z})
    =
    \mathsf N(T_\theta(\mathbf z),\alpha_\theta(\mathbf z)^2K).
\]
By Lemma~\ref{lem:hilbert-gaussian-t2},
\[
    W_2^2(Q,P_{\mathbf z})\le
    2\|\alpha_\theta(\mathbf{z})^2K\|_{\mathrm{op}}
    D_{\mathrm{KL}}(Q\,\Vert\,P_{\mathbf z}).
\]
Since
\[
    \|\alpha_\theta(\mathbf{z})^2K\|_{\mathrm{op}}
    =\alpha_\theta(\mathbf{z})^2\|K\|_{\mathrm{op}},
\]
we have
\[
    W_2(Q,P_\mathbf{z})
    \le
    \alpha_\theta(\mathbf{z})
    \sqrt{
    2\|K\|_{\mathrm{op}}
    D_{\mathrm{KL}}(Q\,\Vert\,P_\mathbf{z})
    }.
\]
It remains to observe that Wasserstein distance controls the distance between means. Indeed, for
any coupling $(\mathbf Y, \mathbf Y')$ of $Q$ and $P_\mathbf z$,
\[
\begin{aligned}
    \|m_Q-T_\theta(\mathbf{z})\|_{\mathcal Z}
    =
    \left\|
    \mathbb E \mathbf Y-\mathbb E\mathbf Y'
    \right\|_{\mathcal Z}\le
    \mathbb E\|\mathbf Y-\mathbf Y'\|_{\mathcal Z}\le
    \left(
    \mathbb E\|\mathbf Y-\mathbf Y'\|_{\mathcal Z}^2
    \right)^{1/2}.
\end{aligned}
\]
Taking the infimum over all couplings gives
\[
    \|m_Q-T_\theta(\mathbf z)\|_{\mathcal Z}
    \le
    W_2(Q,P_{\mathbf z}).
\]
Combining the two estimates proves the claim.
\end{proof}

We now apply Lemma~\ref{lem:kl_latent_mean_control} along the encoded reference trajectory and propagate the resulting one-step latent control through the deterministic mean rollout. Let
\[
    \{\mathbf U_n^\star\}_{n=0}^N\subset\mathcal U
\]
be a reference physical trajectory and recall our notations
\[
    \mathbf Z_n^\star:=m_\phi(\mathbf U_n^\star),
    \qquad
    \bar{\mathbf U}_n:=D_\psi(\mathbf Z_n^\star),
    \qquad
    \delta_n:=\|\bar{\mathbf U}_n-\mathbf U_n^\star\|_{\mathcal U}.
\]
For each $n=0,\ldots,N-1$, let
\[
    \Xi_n\sim\mathsf N(0,K_\phi(\mathbf U_n^\star)).
\]
The dynamic KL quantity is
\[
    \kappa_n^2
    :=
    \mathbb E_{\Xi_n}
    \left[
    D_{\mathrm{KL}}
    \left(
    Q_\phi(\cdot\mid \mathbf U_{n+1}^\star)
    \,\Vert\,
    P_\theta(\cdot\mid \mathbf Z_n^\star+\Xi_n)
    \right)
    \right],
\]
and the latent encoder-noise sensitivity of the transition map is
\[
    \eta_n
    :=\left(\mathbb E_{\Xi_n}
    \left[\left\|T_\theta(\mathbf Z_n^\star+\Xi_n)
    -T_\theta(\mathbf Z_n^\star)
    \right\|_{\mathcal Z}^2
    \right]\right)^{1/2}.
\]

\begin{proof}[Proof of Proposition \ref{prop:population_kl_rollout}]
\textit{Step I.} We first bound the latent one-step defect
\[
    \|T_\theta(\mathbf Z_n^\star)-\mathbf Z_{n+1}^\star\|_{\mathcal Z}.
\]
For each realization of $\Xi_n$, apply Lemma~\ref{lem:kl_latent_mean_control} with
\[
    Q=Q_\phi(\cdot\mid \mathbf U_{n+1}^\star),
    \qquad
    \mathbf{Z}=\mathbf Z_n^\star+\Xi_n.
\]
Since the mean of $Q_\phi(\cdot\mid \mathbf U_{n+1}^\star)$ is
\[
    m_\phi(\mathbf U_{n+1}^\star)=\mathbf Z_{n+1}^\star,
\]
we obtain
\begin{equation}
\bigl\|\mathbf Z_{n+1}^\star
    -T_\theta(\widetilde{\mathbf Z}_n)
\bigr\|_{\mathcal Z}\leq
\alpha_\theta(\widetilde{\mathbf Z}_n)
\sqrt{
        2\|K\|_{\mathrm{op}}
        D_{\mathrm{KL}}
        \left(
            Q_\phi(\cdot\mid\mathbf U_{n+1}^\star)
            \,\Vert\,
            P_\theta(\cdot\mid\widetilde{\mathbf Z}_n)
        \right)
    }.
    \label{eq:proof_mean_control}
\end{equation}
Using $\alpha_\theta\leq\bar\alpha$, squaring and taking expectation over both the reference trajectory and $\Xi_n$ gives
\begin{equation}
\left(\mathbb E\left[\bigl\|
\mathbf Z_{n+1}^\star-T_\theta(\widetilde{\mathbf Z}_n)
\bigr\|_{\mathcal Z}^2\right]\right)^{1/2}
\leq\ol{\alpha}\sqrt{2\Vert K\Vert_\mathrm{op}}\,\kappa_n.
\label{eq:proof_kl_mean_population}
\end{equation}
Now write
\[
    T_\theta(\mathbf Z_n^\star)-\mathbf Z_{n+1}^\star
    =
    \bigl[
        T_\theta(\mathbf Z_n^\star)
        -
        T_\theta(\widetilde{\mathbf Z}_n)
    \bigr]
    +
    \bigl[
        T_\theta(\widetilde{\mathbf Z}_n)
        -
        \mathbf Z_{n+1}^\star
    \bigr].
\]
Taking the $L^2$ norm over the joint distribution and applying Minkowski's inequality together with \eqref{eq:proof_kl_mean_population} yields
\[
    \left(
        \mathbb E
        \left[
            \left\|
                T_\theta(\mathbf Z_n^\star)
                -
                \mathbf Z_{n+1}^\star
            \right\|_{\mathcal Z}^2
        \right]
    \right)^{1/2}
    \leq
    \eta_n+\ol{\alpha}\sqrt{2\Vert K\Vert_\mathrm{op}}\kappa_n,
\]
which proves \eqref{eq:population_latent_one_step}.

\item\textit{Step II.} We next derive the latent rollout bound. Define
\[
    e_n
    :=
    \left(
        \mathbb E
        \left[
            \|\widehat{\mathbf Z}_n-\mathbf Z_n^\star\|_{\mathcal Z}^2
        \right]
    \right)^{1/2}.
\]
Note that $e_0=0$ since the rollout is initialized at the encoder mean. For every realization of the reference trajectory,
\begin{align*}
    \|\widehat{\mathbf Z}_{n+1}-\mathbf Z_{n+1}^\star\|_{\mathcal Z}
    &=
    \|T_\theta(\widehat{\mathbf Z}_n)-\mathbf Z_{n+1}^\star\|_{\mathcal Z}
    \\
    &\leq
    \|T_\theta(\widehat{\mathbf Z}_n)
      -T_\theta(\mathbf Z_n^\star)\|_{\mathcal Z}
    +
    \|T_\theta(\mathbf Z_n^\star)
      -\mathbf Z_{n+1}^\star\|_{\mathcal Z}.
\end{align*}
Taking the population $L^2$ norm and using Minkowski's inequality, Lipschitz continuity of $T_\theta$, and \eqref{eq:population_latent_one_step}, we obtain
\[
    e_{n+1}
    \leq
    \Lambda_T e_n+\eta_n+\ol{\alpha}\sqrt{2\Vert K\Vert_\mathrm{op}}\,\kappa_n.
\]
Iterating this recursion from $e_0=0$ gives
\begin{equation}
    e_n
    \leq
    \sum_{j=0}^{n-1}
    \Lambda_T^{\,n-1-j}
    \left(\eta_j+\ol{\alpha}\sqrt{2\Vert K\Vert_\mathrm{op}}\,\kappa_j\right).\label{eq:population_latent_rollout}
\end{equation}
For the physical rollout, use $\widehat{\mathbf U}_n=D_\psi(\widehat{\mathbf Z}_n)$ and insert the decoded reference state $D_\psi(\mathbf Z_n^\star)$:
\begin{align*}
    \|\widehat{\mathbf U}_n-\mathbf U_n^\star\|_{\mathcal U}
    &\leq
    \|D_\psi(\widehat{\mathbf Z}_n)
      -D_\psi(\mathbf Z_n^\star)\|_{\mathcal U}
    +
    \|D_\psi(\mathbf Z_n^\star)-\mathbf U_n^\star\|_{\mathcal U}.
\end{align*}
Taking the population $L^2$ norm and using Lipschitz continuity of $D_\psi$ yields
\[
\left(\mathbb E
\|\widehat{\mathbf U}_n-\mathbf U_n^\star\|_{\mathcal U}^2
\right)^{1/2}
\leq\Lambda_D e_n+\delta_n.
\]
Substituting \eqref{eq:population_latent_rollout} proves \eqref{eq:population_kl_physical_rollout}.

\item\textit{Step III.} It remains to incorporate the predictive loss. Define the clean decoded one-step error
\[
    q_n:=
    \left(\mathbb E\left[\left\|D_\psi(T_\theta(\mathbf Z_n^\star))-\mathbf U_{n+1}^\star\right\|_{\mathcal U}^2\right]\right)^{1/2}.
\]
For every realization of the reference trajectory and the Gaussian perturbations,
\begin{align*}
    &
    \left\|
        D_\psi(T_\theta(\mathbf Z_n^\star)) - \mathbf U_{n+1}^\star
    \right\|_{\mathcal U}
    \\
    &\quad\leq
    \left\|
        D_\psi(T_\theta(\mathbf Z_n^\star)) -D_\psi(T_\theta(\widetilde{\mathbf Z}_n))
    \right\|_{\mathcal U}
    +
    \left\|D_\psi(T_\theta(\widetilde{\mathbf Z}_n)) - D_\psi(\widetilde{\mathbf Z}_n^+)
    \right\|_{\mathcal U}
    +
    \left\|
        D_\psi(\widetilde{\mathbf Z}_n^+)
        - \mathbf U_{n+1}^\star
    \right\|_{\mathcal U}.
\end{align*}
Taking the joint $L^2$ norm and applying Minkowski's inequality gives
\begin{equation}
q_n\leq\Lambda_D\eta_n+\rho_n+\epsilon_n.\label{eq:population_predictive_one_step}
\end{equation}
We now obtain a second bound for the physical rollout error at time $n$. Since
\[
    \widehat{\mathbf U}_n=D_\psi(T_\theta(\widehat{\mathbf Z}_{n-1})),
\]
by Minkowski's inequality and Lipschitz continuity of $D_\psi$ and $T_\theta$, we have
\begin{align*}
    \left(
        \mathbb E
        \|\widehat{\mathbf U}_n-\mathbf U_n^\star\|_{\mathcal U}^2
    \right)^{1/2}&\leq \left(
        \mathbb E
        \|D_\psi(T_\theta(\wh{\mathbf Z}_n))-D_\psi(T_\theta(\mathbf Z_n^\star))\|_{\mathcal U}^2
    \right)^{1/2}+\left(
        \mathbb E
        \left[
            \left\|
                D_\psi(T_\theta(\mathbf Z_n^\star))
                -
                \mathbf U_{n+1}^\star
            \right\|_{\mathcal U}^2
        \right]
    \right)^{1/2}\\
    &\leq
    \Lambda_D\Lambda_T e_{n-1}+q_{n-1}.
\end{align*}
Using \eqref{eq:population_latent_rollout} at time $n-1$ and
\eqref{eq:population_predictive_one_step},
\begin{align}
    &
    \left(
        \mathbb E
        \|\widehat{\mathbf U}_n-\mathbf U_n^\star\|_{\mathcal U}^2
    \right)^{1/2}
    \leq
    \Lambda_D\sum_{j=0}^{n-2}\Lambda_T^{\,n-1-j} \left(\eta_j+\bar\alpha\sqrt{2\|K\|_{\mathrm{op}}}\,\kappa_j\right)+\Lambda_D\eta_{n-1}+\epsilon_{n-1}+\rho_{n-1}.
    \label{eq:proof_predictive_route}
\end{align}
Both \eqref{eq:proof_predictive_route} and \eqref{eq:population_kl_physical_rollout} are valid upper bounds for the same population rollout error. Taking the smaller of their final terms yields \eqref{eq:population_refined_rollout}.
\end{proof}

For comparison, we conclude with the generic autoregressive amplification bound \eqref{eq:generic_rollout} used to separate the effect of recursive deployment from mechanisms specific to variational latent dynamics.

\begin{proof}[Proof of Proposition \ref{prop:generic_rollout}]
Define the population rollout error
\[
e_n
:=
\left(
\mathbb E
\left[
\|\widehat{\mathbf U}_n-\mathbf U_n^\star\|_{\mathcal U}^2
\right]
\right)^{1/2}.
\]
Since $\widehat{\mathbf U}_0=\mathbf U_0^\star$, we have $e_0=0$. For every
$n\geq0$,
\begin{align*}
\|\widehat{\mathbf U}_{n+1}-\mathbf U_{n+1}^\star\|_{\mathcal U}
&=
\|F_\vartheta(\widehat{\mathbf U}_n)-\mathbf U_{n+1}^\star\|_{\mathcal U}
\\
&\leq
\|F_\vartheta(\widehat{\mathbf U}_n)
-F_\vartheta(\mathbf U_n^\star)\|_{\mathcal U}
+\|F_\vartheta(\mathbf U_n^\star)-\mathbf U_{n+1}^\star\|_{\mathcal U}.
\end{align*}
Taking the population $L^2$ norm and applying Minkowski's inequality together
with the Lipschitz continuity of $F_\vartheta$ gives
\[
e_{n+1}\leq\Lambda_F e_n+\epsilon_n^{\mathrm{dir}}.
\]
Iterating this recursion from $e_0=0$ yields
\[
e_n\leq\sum_{j=0}^{n-1}
\Lambda_F^{\,n-1-j}\epsilon_j^{\mathrm{dir}},
\]
which proves \eqref{eq:generic_rollout}.
\end{proof}
The preceding results concern the deterministic mean-map rollout used at inference. The next subsection gives a complementary high-probability analysis of stochastic transition rollouts around this mean trajectory.

\subsection{High-Probability Stochastic Envelope}
\label{app:stochastic_envelope}
The main analysis in \S\ref{sec:err_var_dyn} concerns the deterministic mean-map rollout used at inference. Here we give a complementary high-probability characterization of stochastic rollouts obtained by sampling from the learned transition kernel. We first establish the required Gaussian concentration bounds and then propagate them through the latent dynamics. The following result is similar to the conclusion of \cite{hsu2012tail}.

\begin{lemma}[Two-sided Gaussian norm concentration in a Hilbert space]
\label{lem:hilbert-gaussian-tail}
Let $\mathcal Z$ be a separable Hilbert space and let
\[
    \mathbf Z\sim \mathsf N(0,C),
\]
where $C:\mathcal Z\to\mathcal Z$ is positive, self-adjoint, and
trace-class. Then, for every $t\ge0$,
\begin{equation}
    \mathbb P\!\left(
        \|\mathbf Z\|_{\mathcal Z}^2
        \ge
        \operatorname{Tr}(C)
        +2\sqrt{\operatorname{Tr}(C^2)t}
        +2\|C\|_{\mathrm{op}}t
    \right)
    \le e^{-t},
    \label{eq:gaussian_norm_upper_tail}
\end{equation}
and
\begin{equation}
    \mathbb P\!\left(
        \|\mathbf Z\|_{\mathcal Z}^2
        \le
        \operatorname{Tr}(C)
        -2\sqrt{\operatorname{Tr}(C^2)t}
    \right)
    \le e^{-t}.
    \label{eq:gaussian_norm_lower_tail}
\end{equation}
Consequently, for $q\in(0,1/2)$, with probability at least $1-2q$,
\[
    r_C^{-}(q)
    \le
    \|\mathbf Z\|_{\mathcal Z}
    \le
    r_C^{+}(q),
\]
where
\begin{align}
    r_C^{-}(q)
    &:=
    \left[
        \operatorname{Tr}(C)
        -2\sqrt{\operatorname{Tr}(C^2)\log(1/q)}
    \right]_+^{1/2},
    \label{eq:gaussian_lower_radius}
    \\
    r_C^{+}(q)
    &:=
    \left[
        \operatorname{Tr}(C)
        +2\sqrt{\operatorname{Tr}(C^2)\log(1/q)}
        +2\|C\|_{\mathrm{op}}\log(1/q)
    \right]^{1/2}.
    \label{eq:gaussian_upper_radius}
\end{align}
Moreover, let $P:\mathcal Z\to\mathcal Z$ be any orthogonal projection satisfying $\operatorname{rank}(I-P)\le1$. Define
\begin{equation}
    r_C^{\perp}(q)
    :=
    \left[
        \operatorname{Tr}(C)
        -\|C\|_{\mathrm{op}}
        -2\sqrt{\operatorname{Tr}(C^2)\log(1/q)}
    \right]_+^{1/2}.
    \label{eq:gaussian_projected_lower_radius}
\end{equation}
Then
\begin{equation}
    \mathbb P\!\left(
        \|P\mathbf Z\|_{\mathcal Z}
        < r_C^{\perp}(q)
    \right)
    \le q.
    \label{eq:gaussian_projected_lower_tail}
\end{equation}
\end{lemma}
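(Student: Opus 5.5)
The plan is to reduce everything to the Karhunen--Lo\`eve representation used in Lemma~\ref{lem:trace_class_gaussian_cm_kl} and Lemma~\ref{lem:moment_gaussian}, and then run a Laurent--Massart style Chernoff argument on a weighted sum of $\chi^2_1$ variables.

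\textbf{Setup and cumulant bound.} Write $\mathbf Z=\sum_i\sqrt{\lambda_i}\xi_i\mathbf E_i$, so that
\[
\|\mathbf Z\|_{\mathcal Z}^2=\sum_i\lambda_i\xi_i^2 ,
\]
with $\sum_i\lambda_i=\operatorname{Tr}C$, $\sum_i\lambda_i^2=\operatorname{Tr}(C^2)$ and $\max_i\lambda_i=\|C\|_{\mathrm{op}}$. From the cumulant generating function computed in the proof of Lemma~\ref{lem:moment_gaussian}, the centered variable $S:=\|\mathbf Z\|^2-\operatorname{Tr}C$ satisfies
\[
\log\mathbb E e^{tS}=\sum_i\Bigl(-\tfrac12\log(1-2t\lambda_i)-t\lambda_i\Bigr).
\]
I will use the elementary scalar inequality $-\tfrac12\log(1-2u)-u\le u^2/(1-2u)$ for $0\le u<1/2$. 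It gives
\[
\log\mathbb E e^{tS}\le \frac{t^2\operatorname{Tr}(C^2)}{1-2t\|C\|_{\mathrm{op}}}\qquad\text{for }0\le t<1/(2\|C\|_{\mathrm{op}}).
\]
For the lower tail I will use $-\tfrac12\log(1+2u)+u\le u^2$ for $u\ge0$, which gives
\[
\log\mathbb E e^{-tS}\le t^2\operatorname{Tr}(C^2)\qquad\text{for all }t\ge0.
\]
Both scalar inequalities follow by comparing power series term by term. The infinite sums are justified by monotone convergence, since $\operatorname{Tr}C<\infty$.

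\textbf{Chernoff step.} The upper bound is sub-gamma with variance factor $v=2\operatorname{Tr}(C^2)$ and scale $c=2\|C\|_{\mathrm{op}}$. The standard sub-gamma inversion (Boucheron--Lugosi--Massart) then gives $\mathbb P(S\ge \sqrt{2vt}+ct)\le e^{-t}$, which is exactly \eqref{eq:gaussian_norm_upper_tail}. The lower bound is sub-Gaussian with variance factor $v$, so $\mathbb P(S\le -\sqrt{2vt})\le e^{-t}$, which gives \eqref{eq:gaussian_norm_lower_tail}. Setting $t=\log(1/q)$, taking square roots (with the positive part handling the case where the lower threshold is negative), and applying a union bound yields the two-sided statement with radii $r_C^\pm(q)$.

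\textbf{Projected bound.} Here I will not try to bound $\|(I-P)\mathbf Z\|$ separately. Instead I apply the lower tail directly to the Gaussian element $P\mathbf Z\sim\mathsf N(0,PCP)$. Since $\operatorname{rank}(I-P)\le1$, write $I-P=\mathbf e\otimes\mathbf e$ for a unit vector $\mathbf e$, or $I-P=0$. Two facts are needed:
\[
\operatorname{Tr}(PCP)=\operatorname{Tr}C-\langle C\mathbf e,\mathbf e\rangle\ge\operatorname{Tr}C-\|C\|_{\mathrm{op}},
\qquad
\operatorname{Tr}((PCP)^2)=\|PCP\|_{\mathrm{HS}}^2\le\|C\|_{\mathrm{HS}}^2=\operatorname{Tr}(C^2).
\]
The second holds because $\|P\|_{\mathrm{op}}\le1$. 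Substituting these into \eqref{eq:gaussian_norm_lower_tail} for $P\mathbf Z$ with $t=\log(1/q)$ makes the threshold only smaller, which gives \eqref{eq:gaussian_projected_lower_tail}. The strict versus non-strict inequality is harmless.

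\textbf{Main obstacle.} I expect the only delicate point to be the scalar log-inequalities and the sub-gamma inversion with the exact constants $2\sqrt{\cdot}$ and $2\|C\|_{\mathrm{op}}$. If a direct power-series comparison does not close, I would instead optimize $t$ explicitly, choosing $t=\frac{\sqrt{x}}{\cdots}$ as in Laurent--Massart's Lemma~1, and check that the constants match.
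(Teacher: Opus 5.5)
Your proposal is correct and follows the paper's proof essentially step for step. It uses the same Karhunen--Lo\`eve reduction, the same two scalar log-inequalities giving the sub-gamma and sub-Gaussian MGF bounds, and the same $P\mathbf Z\sim\mathsf N(0,PCP)$ trace and Hilbert--Schmidt argument for the projected bound; the only difference is that the paper picks $s=\sqrt t/(\sqrt v+2L\sqrt t)$ and $s=\sqrt{t/v}$ explicitly, while you cite the Boucheron--Lugosi--Massart inversion, which is the same computation. One small fix: the inequality $\log(1+2u)\ge 2u-2u^2$ is needed for all $u\ge 0$, but the Taylor series of $\log(1+2u)$ diverges for $u>1/2$, so term-by-term comparison cannot justify it; instead note that $x\mapsto\log(1+x)-x+x^2/2$ vanishes at $0$ and has derivative $x^2/(1+x)\ge 0$.
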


\begin{proof}
Since $C$ is positive, self-adjoint, and trace-class, there exists an orthonormal basis $\{E_i\}_{i\ge1}$ of $\mathcal Z$ and eigenvalues $\lambda_i\ge0$ such that
\[
    CE_i=\lambda_iE_i,
    \qquad
    \sum_{i=1}^{\infty}\lambda_i
    =
    \operatorname{Tr}(C)<\infty.
\]
By the Karhunen-Lo\`eve expansion,
\[
    \mathbf Z
    =
    \sum_{i=1}^{\infty}
    \sqrt{\lambda_i}\,\xi_i E_i,
    \qquad
    \xi_i\overset{\mathrm{i.i.d.}}{\sim}\mathsf N(0,1),
\]
and hence
\[
    \|\mathbf Z\|_{\mathcal Z}^2
    =
    \sum_{i=1}^{\infty}\lambda_i\xi_i^2
    \qquad\text{a.s.}
\]
Set
\[
    S:=\|\mathbf Z\|_{\mathcal Z}^2,
    \qquad
    \mu:=\operatorname{Tr}(C),
    \qquad
    v:=\operatorname{Tr}(C^2),
    \qquad
    L:=\|C\|_{\mathrm{op}}.
\]
If $v=0$, then $C=0$ and the claims are immediate, so assume $v>0$. 

\item (i) For the upper tail, for every $0\le s<1/(2L)$,
\[
\begin{aligned}
    \log\mathbb E e^{s(S-\mu)}
    &=
    \sum_{i=1}^{\infty}
    \left[
        -s\lambda_i
        -\frac12\log(1-2s\lambda_i)
    \right]                                                     \\
    &\le
    \sum_{i=1}^{\infty}
    \frac{s^2\lambda_i^2}{1-2s\lambda_i}
    \le
    \frac{s^2v}{1-2sL},
\end{aligned}
\]
where we used
\[
    -\frac12\log(1-2x)-x
    \le
    \frac{x^2}{1-2x},
    \qquad 0\le x<\frac12.
\]
Chernoff's inequality therefore gives
\[
    \mathbb P(S-\mu\ge a)
    \le
    \exp\!\left(
        -sa+\frac{s^2v}{1-2sL}
    \right).
\]
Taking
\[
    a=2\sqrt{vt}+2Lt,
    \qquad
    s=
    \frac{\sqrt t}{\sqrt v+2L\sqrt t},
\]
yields
\[
    \mathbb P
    \left(
        S-\mu\ge2\sqrt{vt}+2Lt
    \right)
    \le e^{-t},
\]
which proves \eqref{eq:gaussian_norm_upper_tail}.

\item (ii) For the lower tail, for every $s\ge0$,
\[
\begin{aligned}
    \log\mathbb E e^{-s(S-\mu)}
    &=
    \sum_{i=1}^{\infty}
    \left[
        s\lambda_i
        -\frac12\log(1+2s\lambda_i)
    \right]\le
    s^2\sum_{i=1}^{\infty}\lambda_i^2
    =s^2v,
\end{aligned}
\]
where we used
\[
    \log(1+2x)\ge2x-2x^2,
    \qquad x\ge0.
\]
Thus, for $a>0$,
\[
    \mathbb P(S-\mu\le-a)
    \le
    \exp(-sa+s^2v).
\]
Choosing
\[
    a=2\sqrt{vt},
    \qquad
    s=\sqrt{\frac{t}{v}},
\]
gives
\[
    \mathbb P
    \left(
        S-\mu\le-2\sqrt{vt}
    \right)
    \le e^{-t},
\]
which proves \eqref{eq:gaussian_norm_lower_tail}. Taking $t=\log(1/q)$ and applying a union bound gives the two-sided interval.

\item (iii) It remains to prove \eqref{eq:gaussian_projected_lower_tail}. Since $P$ is bounded and linear,
\[
    P\mathbf Z\sim\mathsf N(0,PCP).
\]
Because $\operatorname{rank}(I-P)\le1$,
\[
    \operatorname{Tr}(PCP)
    \ge
    \operatorname{Tr}(C)-\|C\|_{\mathrm{op}},
\]
while
\[
    \operatorname{Tr}\!\left((PCP)^2\right)
    =
    \|PCP\|_{\mathrm{HS}}^2
    \le
    \|C\|_{\mathrm{HS}}^2
    =
    \operatorname{Tr}(C^2).
\]
Applying \eqref{eq:gaussian_norm_lower_tail} to $P\mathbf Z$ with $t=\log(1/q)$ therefore yields
\[
    \mathbb P\!\left(
        \|P\mathbf Z\|_{\mathcal Z}
        <
        r_C^\perp(q)
    \right) \le q,
\]
which concludes the proof.
\end{proof}

The preceding lemma controls both the magnitude of each Gaussian transition perturbation and the component that remains after projecting out one potentially canceling direction. We now use these estimates to obtain a two-sided latent envelope around the deterministic mean-map rollout.

\begin{proposition}[Two-sided stochastic envelope around the mean-map rollout]
\label{prop:two_sided_latent_envelope}
Let $\mathcal U$ be the physical state space and let $\mathcal Z$ be a separable Hilbert space. Let $K:\mathcal Z\to\mathcal Z$ be positive, self-adjoint, and trace-class. Given an initial physical state $\mathbf U_0\in\mathcal U$, define the mean-map latent rollout by
\begin{equation}
    \widehat{\mathbf Z}_0=m_\phi(\mathbf U_0),
    \qquad
    \widehat{\mathbf Z}_{n+1}=T_\theta(\widehat{\mathbf Z}_n),
    \qquad n=0,\ldots,N-1,
    \label{eq:mean_map_rollout}
\end{equation}
and the corresponding physical rollout by
\[
    \widehat{\mathbf U}_0=\mathbf U_0,
    \qquad
    \widehat{\mathbf U}_n
    =D_\psi(\widehat{\mathbf Z}_n),
    \qquad n=1,\ldots,N.
\]
Starting from the same physical initial condition, consider the stochastic latent rollout
\begin{equation}
    \widetilde{\mathbf Z}_0
    =
    m_\phi(\mathbf U_0),
    \qquad
    \widetilde{\mathbf Z}_{n+1}
    =
    T_\theta(\widetilde{\mathbf Z}_n)
    +
    \alpha_\theta(\widetilde{\mathbf Z}_n)\mathbf G_n,
    \qquad
    \mathbf G_n
    \overset{\mathrm{i.i.d.}}{\sim}
    \mathsf N(0,K),
    \label{eq:stochastic_latent_rollout}
\end{equation}
with physical states
\[
    \widetilde{\mathbf U}_0=\mathbf U_0,
    \qquad
    \widetilde{\mathbf U}_n
    =
    D_\psi(\widetilde{\mathbf Z}_n),
    \qquad n=1,\ldots,N.
\]
Assume that $T_\theta$ is $\Lambda_T$-Lipschitz and $D_\psi$ is $\Lambda_D$-Lipschitz on the relevant latent regions, and that
\[
    0<\underline{\alpha}\le\alpha_\theta(\mathbf Z)\le\bar\alpha
\]
throughout the stochastic rollout tube. Then the following hold.

\begin{enumerate}[label=(\alph*)]
\item \emph{One-step conditional spread.}
For every $n=0,\ldots,N-1$ and $q\in(0,1/2)$, conditionally on $\widetilde{\mathbf Z}_n$, with probability at least $1-2q$,
\begin{equation}
    \underline{\alpha}
    r_K^-(q)
    \le
    \left\|
        \widetilde{\mathbf Z}_{n+1}
        -
        T_\theta(\widetilde{\mathbf Z}_n)
    \right\|_{\mathcal Z}
    \le
    \bar\alpha r_K^+(q).
    \label{eq:conditional_transition_band}
\end{equation}

\item \emph{Two-sided latent envelope and physical upper envelope.}
For any $\gamma\in(0,1)$, with probability at least $1-\gamma$, simultaneously for all $n=1,\ldots,N$, the latent ensemble deviation satisfies
\begin{equation}
    \underline{\alpha}
    r_K^\perp\!\left(\frac{\gamma}{2N}\right)
    \le
    \|\widetilde{\mathbf Z}_n-\widehat{\mathbf Z}_n\|_{\mathcal Z}
    \le
    \bar\alpha
    r_K^+\!\left(\frac{\gamma}{2N}\right)
    \sum_{j=0}^{n-1}
    \Lambda_T^{\,n-1-j}.
    \label{eq:two_sided_latent_envelope}
\end{equation}
Consequently,
\begin{equation}
    \left\|
        \widetilde{\mathbf U}_n
        -
        \widehat{\mathbf U}_n
    \right\|_{\mathcal U}
    \le
    \Lambda_D\bar\alpha
    r_K^+\!\left(\frac{\gamma}{2N}\right)
    \sum_{j=0}^{n-1}
    \Lambda_T^{\,n-1-j}.
    \label{eq:physical_stochastic_envelope}
\end{equation}
The latent lower envelope is nontrivial whenever
\begin{equation}
    \operatorname{Tr}(K)-\|K\|_{\mathrm{op}}
    >
    2\sqrt{
        \operatorname{Tr}(K^2)
        \log\!\left(\frac{2N}{\gamma}\right)
    }.
    \label{eq:nontrivial_lower_envelope}
\end{equation}
\end{enumerate}
\end{proposition}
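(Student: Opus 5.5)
Part~(a) follows directly from Lemma~\ref{lem:hilbert-gaussian-tail}. Condition on the $\sigma$-algebra $\mathcal F_n:=\sigma(\mathbf G_0,\ldots,\mathbf G_{n-1})$. It determines $\widetilde{\mathbf Z}_n$, and $\mathbf G_n$ is independent of it. The increment is $\widetilde{\mathbf Z}_{n+1}-T_\theta(\widetilde{\mathbf Z}_n)=\alpha_\theta(\widetilde{\mathbf Z}_n)\mathbf G_n$, whose norm is $\alpha_\theta(\widetilde{\mathbf Z}_n)\|\mathbf G_n\|_{\mathcal Z}$. Applying the two-sided interval of Lemma~\ref{lem:hilbert-gaussian-tail} with $C=K$ gives $r_K^-(q)\le\|\mathbf G_n\|_{\mathcal Z}\le r_K^+(q)$ with conditional probability at least $1-2q$. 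Multiplying by the bounds $\underline\alpha\le\alpha_\theta\le\bar\alpha$, which hold in the tube, yields \eqref{eq:conditional_transition_band}.

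For part~(b), I set $q=\gamma/(2N)$ and define two families of bad events, each indexed by $n=0,\ldots,N-1$. The first is $A_n=\{\|\mathbf G_n\|_{\mathcal Z}>r_K^+(q)\}$, whose probability is at most $q$ by \eqref{eq:gaussian_norm_upper_tail} with $t=\log(1/q)$. The second family, $B_n$, is defined below. I will show that each $B_n$ also has probability at most $q$. A union bound over the $2N$ events then gives total failure probability at most $2Nq=\gamma$.

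\emph{Upper envelope.} On the complement of $\bigcup_n A_n$, let $d_n:=\|\widetilde{\mathbf Z}_n-\widehat{\mathbf Z}_n\|_{\mathcal Z}$, so that $d_0=0$. The Lipschitz property of $T_\theta$ and the triangle inequality give
\[
d_{n+1}\le\|T_\theta(\widetilde{\mathbf Z}_n)-T_\theta(\widehat{\mathbf Z}_n)\|_{\mathcal Z}+\alpha_\theta(\widetilde{\mathbf Z}_n)\|\mathbf G_n\|_{\mathcal Z}\le\Lambda_T d_n+\bar\alpha\, r_K^+(q).
\]
Iterating this recursion, exactly as in the proof of Proposition~\ref{prop:generic_rollout}, gives the right-hand side of \eqref{eq:two_sided_latent_envelope}. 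Applying the $\Lambda_D$-Lipschitz decoder to $\widetilde{\mathbf U}_n-\widehat{\mathbf U}_n=D_\psi(\widetilde{\mathbf Z}_n)-D_\psi(\widehat{\mathbf Z}_n)$ for $n\ge1$ then gives \eqref{eq:physical_stochastic_envelope}.

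\emph{Lower envelope.} This is the step I expect to be the main obstacle. The deterministic drift difference $\mathbf a_n:=T_\theta(\widetilde{\mathbf Z}_n)-T_\theta(\widehat{\mathbf Z}_n)$ could partially cancel the noise, so a crude reverse triangle inequality is not enough. The remedy is to project out the direction of $\mathbf a_n$. Let $P_n$ be the orthogonal projection onto $\{\mathbf a_n\}^\perp$, taking $P_n=I$ if $\mathbf a_n=0$; then $\operatorname{rank}(I-P_n)\le1$. Since $P_n\mathbf a_n=0$ and $\|P_n\|_{\mathrm{op}}\le1$,
\[
d_{n+1}=\|\mathbf a_n+\alpha_\theta(\widetilde{\mathbf Z}_n)\mathbf G_n\|_{\mathcal Z}\ge\alpha_\theta(\widetilde{\mathbf Z}_n)\|P_n\mathbf G_n\|_{\mathcal Z}\ge\underline\alpha\,\|P_n\mathbf G_n\|_{\mathcal Z}.
\]
Define $B_n=\{\|P_n\mathbf G_n\|_{\mathcal Z}<r_K^\perp(q)\}$.

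The subtlety is that $P_n$ is random. However, $P_n$ is $\mathcal F_n$-measurable while $\mathbf G_n$ is independent of $\mathcal F_n$. I therefore apply \eqref{eq:gaussian_projected_lower_tail} conditionally on $\mathcal F_n$, with $P=P_n$ frozen, to get $\mathbb P(B_n\mid\mathcal F_n)\le q$. Taking expectations gives $\mathbb P(B_n)\le q$. On the complement of $\bigcup_n B_n$ this proves the left inequality of \eqref{eq:two_sided_latent_envelope} for all $n=1,\ldots,N$. The union bound described above gives overall probability at least $1-\gamma$.

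Finally, by the definition \eqref{eq:gaussian_projected_lower_radius}, $r_K^\perp(\gamma/(2N))>0$ exactly when the bracket there is positive at $\log(1/q)=\log(2N/\gamma)$. This is condition \eqref{eq:nontrivial_lower_envelope}.
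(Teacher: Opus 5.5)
Your proposal is correct and follows essentially the same route as the paper's proof. Part (a) conditions and applies the two-sided Gaussian norm interval; part (b) uses a Lipschitz recursion for the upper envelope and, for the lower envelope, projects onto the orthogonal complement of $T_\theta(\widetilde{\mathbf Z}_n)-T_\theta(\widehat{\mathbf Z}_n)$, applying the projected lower tail conditionally on $\mathcal F_n$. The differences are cosmetic: you take a single union bound over $2N$ events instead of splitting $\gamma/2+\gamma/2$, you bound $\|\mathbf G_n\|_{\mathcal Z}$ before iterating rather than after, and you condition on $\mathcal F_n$ rather than on $\widetilde{\mathbf Z}_n$ in part (a).
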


\begin{proof}
For part (a), condition on $\widetilde{\mathbf Z}_n$. Since
$\mathbf G_n$ is independent of the preceding transition noises,
\[
    \widetilde{\mathbf Z}_{n+1}
    -
    T_\theta(\widetilde{\mathbf Z}_n)
    =
    \alpha_\theta(\widetilde{\mathbf Z}_n)\mathbf G_n.
\]
The amplitude $\alpha_\theta(\widetilde{\mathbf Z}_n)$ is fixed under this conditioning. Applying Lemma~\ref{lem:hilbert-gaussian-tail} to
$\mathbf G_n\sim\mathsf N(0,K)$ gives
\begin{equation*}
    \alpha_\theta(\widetilde{\mathbf Z}_n)
    r_K^-(q)
    \le
    \left\|
        \widetilde{\mathbf Z}_{n+1}
        -
        T_\theta(\widetilde{\mathbf Z}_n)
    \right\|_{\mathcal Z}
    \le
    \alpha_\theta(\widetilde{\mathbf Z}_n)
    r_K^+(q).
\end{equation*}
Since $\underline{\alpha}\le\alpha_\theta(\mathbf Z)\le\bar\alpha$, we obtain \eqref{eq:conditional_transition_band}.

For part (b), define the latent ensemble deviation
\[
    \mathbf E_n
    :=\widetilde{\mathbf Z}_n-\widehat{\mathbf Z}_n.
\]
Let
\[
    \mathcal F_n
    :=
    \sigma(\mathbf G_0,\ldots,\mathbf G_{n-1}),
    \qquad
    \mathcal F_0:=\{\varnothing,\Omega\}.
\]
Then $\widetilde{\mathbf Z}_n$ and $\mathbf E_n$ are
$\mathcal F_n$-measurable, while $\mathbf G_n$ is independent of $\mathcal F_n$. Since the two rollouts are initialized from the same encoder mean,
\[
    \mathbf E_0=0.
\]
Subtracting \eqref{eq:mean_map_rollout} from
\eqref{eq:stochastic_latent_rollout} gives
\begin{equation}
    \mathbf E_{n+1}
    =
    \underbrace{
        T_\theta(\widetilde{\mathbf Z}_n)
        -
        T_\theta(\widehat{\mathbf Z}_n)
    }_{\mathbf D_n}
    +
    \alpha_\theta(\widetilde{\mathbf Z}_n)\mathbf G_n,
    \label{eq:ensemble_deviation_recursion}
\end{equation}
where $\mathbf D_n$ is $\mathcal F_n$-measurable.

For the upper envelope, Lipschitz continuity of $T_\theta$ and
$\alpha_\theta\le\bar\alpha$ give
\[
    \|\mathbf E_{n+1}\|_{\mathcal Z}
    \le
    \Lambda_T\|\mathbf E_n\|_{\mathcal Z}
    +
    \bar\alpha\|\mathbf G_n\|_{\mathcal Z}.
\]
Iterating from $\mathbf E_0=0$ yields
\begin{equation}
    \|\mathbf E_n\|_{\mathcal Z}
    \le
    \bar\alpha
    \sum_{j=0}^{n-1}
    \Lambda_T^{\,n-1-j}
    \|\mathbf G_j\|_{\mathcal Z}.
    \label{eq:upper_envelope_recursion}
\end{equation}
By Lemma~\ref{lem:hilbert-gaussian-tail} and a union bound, with probability at least $1-\gamma/2$,
\[
    \|\mathbf G_j\|_{\mathcal Z}
    \le
    r_K^+\!\left(\frac{\gamma}{2N}\right),
    \qquad
    j=0,\ldots,N-1.
\]
Substitution into \eqref{eq:upper_envelope_recursion} proves the upper
bound in \eqref{eq:two_sided_latent_envelope} simultaneously for all
$n\le N$.

For the lower envelope, fix $n\in\{0,\ldots,N-1\}$ and let $P_n$ be the orthogonal projection onto $\{\mathbf D_n\}^\perp$, with $P_n=I$ when $\mathbf D_n=0$. Then $P_n$ is $\mathcal F_n$-measurable,
$\operatorname{rank}(I-P_n)\le1$, and $P_n\mathbf D_n=0$. Applying
$P_n$ to \eqref{eq:ensemble_deviation_recursion} gives
\[
    P_n\mathbf E_{n+1}=\alpha_\theta(\widetilde{\mathbf Z}_n)P_n\mathbf G_n.
\]
Hence
\[
    \|\mathbf E_{n+1}\|_{\mathcal Z}\ge
    \|P_n\mathbf E_{n+1}\|_{\mathcal Z}=
    \alpha_\theta(\widetilde{\mathbf Z}_n)
    \|P_n\mathbf G_n\|_{\mathcal Z}\ge
    \underline{\alpha}\,
    \|P_n\mathbf G_n\|_{\mathcal Z}.
\]
Conditionally on $\mathcal F_n$, the projection $P_n$ is fixed and $\mathbf G_n\sim\mathsf N(0,K)$ remains independent of
$\mathcal F_n$. Therefore,
\[
    \mathbb P\!\left(
        \|P_n\mathbf G_n\|_{\mathcal Z}
        <
        r_K^\perp\!\left(\frac{\gamma}{2N}\right)
        \,\middle|\,
        \mathcal F_n
    \right)
    \le
    \frac{\gamma}{2N}
\]
by \eqref{eq:gaussian_projected_lower_tail}. Taking expectations and applying a union bound over $n=0,\ldots,N-1$ shows that, with probability
at least $1-\gamma/2$,
\[
    \|\mathbf E_n\|_{\mathcal Z}
    \ge
    \underline{\alpha}
    r_K^\perp\!\left(\frac{\gamma}{2N}\right),
    \qquad
    n=1,\ldots,N.
\]
Intersecting the upper- and lower-envelope events gives probability at least $1-\gamma$ and proves
\eqref{eq:two_sided_latent_envelope}.

Finally, for every $n\ge1$, Lipschitz continuity of the decoder gives
\[
\begin{aligned}
    \left\|
        \widetilde{\mathbf U}_n
        -
        \widehat{\mathbf U}_n
    \right\|_{\mathcal U}
    &=
    \left\|
        D_\psi(\widetilde{\mathbf Z}_n)
        -
        D_\psi(\widehat{\mathbf Z}_n)
    \right\|_{\mathcal U} \\
    &\le
    \Lambda_D
    \|\mathbf E_n\|_{\mathcal Z},
\end{aligned}
\]
and \eqref{eq:physical_stochastic_envelope} follows from the latent upper
envelope. The condition
\eqref{eq:nontrivial_lower_envelope} is precisely the condition under which
$r_K^\perp(\gamma/(2N))>0$.
\end{proof}

\paragraph{Relation to the deterministic rollout analysis.}
The stochastic-envelope result complements the deterministic rollout analysis in \S\ref{sec:err_var_dyn}. Proposition~\ref{prop:population_kl_rollout} characterizes the error of the mean-map rollout relative to the reference PDE trajectory, with the latent transition mismatch controlled by the variational KL term and the encoder-noise sensitivity. In contrast, Proposition~\ref{prop:two_sided_latent_envelope} conditions on the initial physical state and characterizes the spread induced by sampling from the learned Gaussian transition around this mean-map trajectory. Thus, the two results respectively describe the location and the stochastic spread of the learned latent dynamics.

Notably, both bounds are governed by the same transition-stability factor $\Lambda_T$. In the deterministic analysis, powers of $\Lambda_T$ amplify local transition discrepancies relative to the reference trajectory; in the stochastic analysis, they amplify the Gaussian perturbations around the mean-map rollout. The upper envelope therefore shows that stochastic spread remains controlled whenever the mean transition is sufficiently stable, while the lower latent envelope shows that, when the transition covariance has sufficient variance outside any single direction, sampled rollouts do not collapse onto the mean-map trajectory. These results characterize the spread generated by the learned transition distribution, but do not imply that this spread is calibrated to the true prediction error.

\section{Benchmark Details}\label{app:benchmark_details}

This appendix provides the detailed specifications of the three fluid-dynamics benchmarks used in \S\ref{sec:experiments}, including their governing equations, data generation, numerical solvers, and evaluation metrics. Across all benchmarks, the main extrapolation occurs along the temporal direction, with test rollouts extending substantially beyond the training horizon.

\subsection{1D compressible Euler equation} 
We consider the 1D compressible Euler equations, a canonical inviscid model for compressible fluid dynamics. The system describes the evolution of density, momentum, and total energy through conservation laws. In conservative form, the equation is
\begin{equation}
    \partial_t\mathbf{U} + \partial_x \mathbf{F}(\mathbf{U}) = 0,
    \qquad x\in \mathbb{T}_L,
    \label{eq:euler1d_conservative}
\end{equation}
where the conserved variables and flux are given by
\begin{equation}
    \mathbf{U} =
    \begin{pmatrix}
        \rho \\
        \rho u \\
        E
    \end{pmatrix},
    \qquad
    \mathbf{F}(\mathbf{U}) =
    \begin{pmatrix}
        \rho u \\
        \rho u^2 + p \\
        u(E+p)
    \end{pmatrix},
    \label{eq:euler1d_flux}
\end{equation}
where $\rho(t,x)$ denotes the fluid density, $u(t,x)$ denotes the velocity, $p(t,x)$ denotes the pressure, and $E(t,x)$ denotes the total energy density. We close the system with the ideal gas equation of state
\begin{equation*}
    p = (\gamma-1)
    \left(
        E - \frac{1}{2}\rho u^2
    \right),
    \label{eq:euler1d_eos}
\end{equation*}
where  $\gamma = 1.4$ is the constant adiabatic index. Equivalently, the system can be written as
\begin{equation*}
\begin{cases}
    \partial_t \rho + \partial_x(\rho u) &= 0,\\
    \partial_t(\rho u) + \partial_x(\rho u^2+p) &= 0,\\
    \partial_t E + \partial_x\bigl(u(E+p)\bigr) &= 0.
\end{cases}
\label{eq:euler1d_expanded}
\end{equation*}
The Euler system contains no explicit viscous or thermal diffusion. Therefore, wave-like structures are not smoothed by physical dissipation, and the dynamics can develop sharp gradients, contact discontinuities, rarefaction waves, and shocks.

In our experiments, we use smooth periodic initial conditions on $\mathbb{T}_L=\bbR/L\mathbb{Z}$ and represent the state by the primitive variables
\begin{equation}
    \mathbf{V}(t,x) = \bigl(\rho(t,x), u(t,x), p(t,x)\bigr),\quad t\in [0,T],\ x\in\mathbb{T}_L.
\end{equation}
The learning task is to approximate the time-$h$ solution map
\begin{equation}
    \mathcal{S}_{h}:
    \bigl(\rho(t),u(t),p(t)\bigr)
    \mapsto
    \bigl(\rho(t+h),u(t+h),p(t+h)\bigr),
    \label{eq:euler1d_solution_map}
\end{equation}
which is then rolled out autoregressively. This benchmark provides a compact test of deterministic compressible wave dynamics. Compared with dissipative 1D equations such as Burgers' equation, the absence of viscosity makes long-horizon prediction more sensitive to phase errors and accumulated wave-interaction errors. At the same time, the one-dimensional setting avoids the full cost of multidimensional compressible flow, making it useful as a controlled inviscid-fluid benchmark.

\paragraph{Evaluation metrics.} For the Euler benchmark, we report rollout-aggregated relative $L^1$ and $L^2$ errors separately for density, velocity, and pressure, together with their channel average. Let $u_i^n$ and $\widehat u_i^n$ denote the reference and predicted fields for trajectory $i$ at rollout step $n$. The relative errors are computed as
\[
    \operatorname{Rel}\text{-}L^1
    =\frac{1}{N_{\mathrm{test}}}
    \sum_{i=1}^{N_{\mathrm{test}}}
    \frac{
        \sum_{n=1}^{T}\|\widehat u_i^n-u_i^n\|_{L^1}
    }{
        \sum_{n=1}^{T}\|u_i^n\|_{L^1}
    },\quad 
    \operatorname{Rel}\text{-}L^2
    =\frac{1}{N_{\mathrm{test}}}
    \sum_{i=1}^{N_{\mathrm{test}}}
    \frac{
        \left(
            \sum_{n=1}^{T}\|\widehat u_i^n-u_i^n\|_{L^2}^2
        \right)^{1/2}
    }{
        \left(
            \sum_{n=1}^{T}\|u_i^n\|_{L^2}^2
        \right)^{1/2}
    }.
\]
The $L^2$ metric emphasizes larger-amplitude pointwise errors, while $L^1$ is less dominated by a small number of large local discrepancies and is therefore useful when sharp fronts or discontinuous structures are present. We do not report $H^1$ error for Euler because the inviscid dynamics may develop shocks and contact discontinuities, for which derivative-based errors become highly sensitive to grid resolution, numerical shock thickness, and small spatial shifts of the discontinuity.

\subsubsection{Data Generation for the 1D Euler Equations}
\label{app:euler_data_generation}

\paragraph{Initial conditions.}
For each trajectory, the density, pressure, and velocity fields are initialized from independent periodic
Gaussian random fields on $\mathbb{T}_L$. At first, independent periodic random fields $g_\rho,g_p,g_u$ are generated spectrally with power
\[
    S_k=\exp\left(
        -2\pi^2\ell_{\mathrm{GRF}}^2\frac{k^2}{L^2}
    \right),\qquad\ell_{\mathrm{GRF}}=1,
\]
with the zero mode removed. Each realization is subsequently centered and normalized to unit root-mean-square amplitude: 
\[
    g\leftarrow
    \frac{g-\frac{1}{n_x}\sum_{j=1}^{n_x}g(x_j)
    }{\left[\frac{1}{n_x}\sum_{j=1}^{n_x}
    \left(g(x_j)-\frac{1}{n_x}\sum_{m=1}^{n_x}g(x_m)\right)^2
    \right]^{1/2}+10^{-8}
    }.
\]
Then the primitive conditions are constructed as
\[
\rho(0,x)=\rho_0\left(1+a_\rho g_\rho(x)\right),\quad p(0,x)=p_0\left(1+a_p g_p(x)\right),\quad u(0,x)=M c_0 g_u(x),
\]
where
\[
    \rho_0=p_0=1,\qquad
    a_\rho=0.15,\qquad
    a_p=0.10,\qquad
    c_0=\sqrt{\frac{\gamma p_0}{\rho_0}},
\]
and the trajectory-level Mach amplitude is sampled as
$M\sim\operatorname{Uniform}(0.05,0.35)$. The physical correlation length $\ell_{\mathrm{GRF}}=1$ is fixed across all domain lengths, so increasing $L$ enlarges the domain while preserving the local scale of the initial perturbations.

\paragraph{Spatial and temporal discretization.}
We use
\[
\begin{array}{c|cccc}
    L & 5 & 10 & 15 & 20\\
    \hline
    n_x & 1024 & 2048 & 3072 & 4096
\end{array}
\]
so that the grid spacing is fixed at $\Delta x=5/1024$ across all settings. No spatial downsampling is applied. Trajectories are recorded every $\Delta t_{\mathrm{data}}=0.1$. Training and validation trajectories cover $t\in[0,1.5]$, while the long-horizon test trajectories extend to $T_{\mathrm{test}}=10$.

\paragraph{Numerical solver.}
We solve the Euler equations using a first-order finite-volume scheme with the local Lax-Friedrichs (Rusanov) flux \citep{rusanov1962calculation} and periodic boundary conditions. The internal time step is chosen adaptively using a CFL number of $0.45$, with a maximum solver step of $10^{-3}$. Density and pressure are positivity-clipped when necessary to avoid invalid numerical states. Numerical integration is performed in double precision and the trajectory data are stored in single precision.

\paragraph{Dataset splits.}
For each domain length, we generate 1200 training trajectories and 300 validation trajectories over $t\in[0,1.5]$, together with an independently sampled test set of 200 trajectories over $t\in [0,10]$. Training, validation, and test initial conditions are drawn from the same GRF-based distribution. Thus, the primary distribution shift considered in this benchmark is the substantially
longer temporal horizon at test time rather than a change in the initial-condition distribution.

\subsection{2D compressible fluid dynamics} 
We consider the 2D compressible Navier-Stokes equations following the Compressible Fluid Dynamics (CFD) setting used in PDEBench \citep{takamoto2022pdebench}. The system describes the evolution of density, velocity, pressure, and total energy for a viscous compressible fluid. 

Specifically, let $\rho(t,x)$ denote the mass density, $\mathbf{v}(t,x)=(v_1(t,x),v_2(t,x))$ the velocity field, $p(t,x)$ the gas pressure, and $\epsilon = p/(\Gamma-1)$ the internal energy density, where $\Gamma=5/3$ is the heat capacity ratio. The governing equations are
\begin{equation}
\begin{cases}
    \partial_t \rho + \nabla\cdot(\rho \mathbf{v}) = 0,\\
    \displaystyle\rho\left(\partial_t \mathbf{v} + \mathbf{v}\cdot\nabla \mathbf{v}\right)
    = -\nabla p + \eta \Delta\mathbf{v}
    + \left(\zeta+\frac{\eta}{3}\right)\nabla(\nabla\cdot \mathbf{v}),\vspace{0.2cm}\\
    \displaystyle\partial_t\left(\epsilon+\frac{1}{2}\rho |\mathbf{v}|^2\right)
    +
    \nabla\cdot\left[\left(\epsilon+p+\frac{1}{2}\rho |\mathbf{v}|^2\right)\mathbf{v}
        - \mathbf{v}\cdot\sigma^\prime
    \right]
    =0.
\end{cases}
\label{eq:cfd2d}
\end{equation}
Here $\eta$ and $\zeta$ denote the shear and bulk viscosity coefficients, respectively, and $\sigma'$ is the viscous stress tensor. Because viscosity governs the dissipation of flow structures, larger values of $\eta$ and $\zeta$ cause the random initial perturbations to decay more rapidly, with the density, pressure, and velocity fields eventually approaching nearly spatially uniform states over the long simulation horizon. To retain nontrivial dynamics beyond the early stage of the trajectory, we therefore focus on a low-viscosity, low-Mach-number regime,
\[
    \mu=\zeta=10^{-8}, \qquad M=0.1.
\]
In this regime, spatial structures and coupled density-pressure-velocity interactions remain dynamically active well beyond the training interval rather than rapidly relaxing to a near-stationary state. The fixed Mach number $M=0.1$ maintains a stable low-Mach regime while avoiding strong shocks that could make the comparison overly sensitive to numerical-solver artifacts.

We consider the equations on a two-dimensional periodic domain and represent the learned state using the primitive variables
\[
    \mathbf{U}(t,x)
    =
    \bigl(\rho(t,x),v_1(t,x),v_2(t,x),p(t,x)\bigr).
\]
Under this setting, the learning task is to approximate the time-$h$ solution map
\[
    \mathcal S_h:
    \bigl(\rho(t),v_1(t),v_2(t),p(t)\bigr)
    \mapsto
    \bigl(\rho(t+h),v_1(t+h),v_2(t+h),p(t+h)\bigr),
\]
which is then applied autoregressively over the prediction horizon. The benchmark therefore tests long-horizon prediction of coupled multi-field dynamics, where errors in density, velocity, and pressure can interact and propagate through both nonlinear transport and viscous effects.

\paragraph{Evaluation metrics.} For the two-dimensional compressible-flow benchmark, we report
rollout-aggregated relative $L^2$ and $H^1$ errors for density, pressure, and velocity, together with their channel average. For
$\mathcal X\in\{L^2,H^1\}$, the metric is
\[
    \operatorname{Rel}\text{-}\mathcal X
    =
    \frac{1}{N_{\mathrm{test}}}
    \sum_{i=1}^{N_{\mathrm{test}}}
    \frac{
        \left(
            \sum_{n=1}^{T}
            \|\widehat u_i^n-u_i^n\|_{\mathcal X}^2
        \right)^{1/2}
    }{
        \left(
            \sum_{n=1}^{T}
            \|u_i^n\|_{\mathcal X}^2
        \right)^{1/2}
    },
\]
with
\[
    \|u\|_{H^1}^2=\|u\|_{L^2}^2+\|\nabla u\|_{L^2}^2.
\]
For the velocity field $\mathbf v=(v_x,v_y)$, the component norms are combined as
\[
    \|\mathbf v\|_{\mathcal X}^2=\|v_x\|_{\mathcal X}^2+\|v_y\|_{\mathcal X}^2.
\]
The relative $L^2$ error measures overall field accuracy, whereas the $H^1$ error additionally penalizes errors in spatial derivatives and is therefore more sensitive to the preservation of interfaces, gradients, and fine spatial structure. Unlike the inviscid Euler benchmark, the viscous compressible-flow trajectories remain sufficiently regular for the derivative-sensitive $H^1$ metric to be meaningful.

\subsubsection{Data Generation for 2D Compressible Fluid Dynamics}
\label{app:cfd2d_data_generation}
\paragraph{Initial conditions.}
For each trajectory, density, pressure, and the two velocity components are initialized from independent periodic Gaussian random fields on $\mathbb{T}^2$. A correlation length
\[
    \ell\sim\operatorname{Uniform}(0.05,0.15)
\]
is sampled once per trajectory and shared across the four physical channels. Conditional on $\ell$, independent random fields $f_\rho,f_{v_x},f_{v_y},f_p$ are generated spectrally:
\[
    \wh{f}(\mathbf k)=S_\ell(\mathbf k)^{1/2}\xi_{\mathbf k},\quad S_\ell(\mathbf{k})=\exp\left(-2\pi^2\ell^2|\mathbf{k}|^2\right),\quad\mathbf k = (k_x,k_y)\in\mathbb{Z}^2\backslash\{(0,0)\}.
\]
with the zero Fourier mode removed, $\wh{f}(0,0)=0$. This construction corresponds to a centered stationary Gaussian random field with radial a basis function kernel. Each realization is subsequently normalized to zero spatial mean and unit empirical variance.

After normalization, the primitive initial conditions are generated from
\[
\begin{aligned}
    &\rho(0,\mathbf{x}) = \rho_0 + 0.1\,f_\rho(\mathbf{x}),\quad p(0,\mathbf{x}) = p_0 \left(1+0.05\,f_p(\mathbf{x})\right),\\
    &v_x(0,\mathbf{x}) = Mc_0f_{v_x}(\mathbf{x}),\quad v_y(0,\mathbf{x}) = Mc_0f_{v_y}(\mathbf{x}),
\end{aligned}
\]
where the background density $\rho_0=1$, the background pressure $p_0=1/\Gamma$, the sound velocity $c_0=\sqrt{\Gamma p_0/\rho_0}=1$, and the Mach number $M=0.1$. To avoid invalid numerical states, density and pressure are positivity-clipped by $10^{-6}$ when necessary. 

\paragraph{Numerical solver.}
The equations are solved on a uniform $256\times256$ periodic grid. Inviscid fluxes are computed using an HLLC approximate Riemann solver \citep{toro1994restoration} with second-order MUSCL reconstruction \citep{van1979towards} and a minmod slope limiter. Viscous and heat-conduction terms are discretized using second-order centered finite differences, and time integration is performed using a two-stage strong-stability-preserving Runge-Kutta method \citep{shu1988efficient}. The internal time step is selected adaptively using an acoustic CFL number of $0.45$ together with the corresponding diffusive stability restriction. Density and pressure  positivity floors are applied after each Runge-Kutta stage. Numerical evolution is performed in double precision. The resulting trajectories are spectrally downsampled to $64\times64$, which is the resolution used for all learning experiments.

\paragraph{Temporal sampling and dataset splits.}
Snapshots are recorded every $\Delta t_{\mathrm{data}}=0.1$. Training and validation trajectories cover $t\in[0,1]$ and contain 10 one-step transitions. We generate 1200 training trajectories and 300 validation trajectories. The independently sampled test set contains 200 trajectories over $t\in[0,10]$, corresponding to 100 autoregressive prediction steps. Training, validation, and test initial conditions follow the same distribution, so the primary extrapolation in this benchmark is along the temporal direction.

\subsection{2D Incompressible Navier-Stokes equation}

We consider the 2D incompressible Navier-Stokes equations on the periodic domain $\mathbb{T}^2$. In velocity form,
\begin{equation}
    \partial_t\mathbf{u}
    +\mathbf{u}\cdot\nabla\mathbf{u}
    =
    -\nabla p+\nu\Delta\mathbf{u}+\mathbf{f}_u,
    \qquad
    \nabla\cdot\mathbf{u}=0,
    \label{eq:ns2d_velocity}
\end{equation}
where $\mathbf{u}(t,x)=(u_1(t,x),u_2(t,x))\in\mathbb{R}^2$ is the velocity field, $p(t,x)$ is the pressure, $\nu>0$ is the kinematic viscosity, and $\mathbf{f}_u$ is an external body force. Following \citet{li2020fourier, li2022transformer,chen2024positional}, we work with the scalar vorticity
\begin{equation*}
    \omega
    =
    \nabla\times\mathbf{u}
    =
    \partial_xu_2-\partial_yu_1.
\end{equation*}
Taking the curl of \eqref{eq:ns2d_velocity} eliminates the pressure and gives
\begin{equation}
    \partial_t\omega+\mathbf{u}\cdot\nabla\omega
    =
    \nu\Delta\omega+f,
    \qquad
    f=\nabla\times\mathbf{f}_u,
    \label{eq:ns2d_vorticity}
\end{equation}
where the velocity is recovered nonlocally from vorticity through the stream function:
\begin{equation}
    \mathbf{u}=\nabla^\perp\psi,
    \qquad
    -\Delta\psi=\omega,
    \qquad
    \nabla^\perp\psi=(\partial_y\psi,-\partial_x\psi).
    \label{eq:ns2d_biot_savart}
\end{equation}

In our experiments, the forcing field is time-independent, so the dynamics take the form
\begin{equation}
    \partial_t\omega+\mathbf{u}\cdot\nabla\omega
    =
    \nu\Delta\omega+f,
    \qquad
    \mathbf{u}=\nabla^\perp(-\Delta)^{-1}\omega.
    \label{eq:ns2d_forced}
\end{equation}
The learning task is to approximate the conditioned time-$\Delta t$ solution map
\begin{equation}
    \mathcal{S}_{\Delta t}:
    (\omega(t),f)\mapsto\omega(t+\Delta t),
    \label{eq:ns2d_solution_map}
\end{equation}
and to apply the learned map autoregressively over the test horizon.

This benchmark is challenging because the predicted vorticity determines the velocity field that transports future vorticity. Thus, small off-manifold errors in $\omega$ can induce nonlocal velocity errors, which then feed back into the advection term $\mathbf{u}\cdot\nabla\omega$. Under weak viscosity and persistent forcing, these errors may accumulate over long horizons and generate localized high-frequency artifacts. We therefore evaluate not only pointwise rollout accuracy, but also spectral and derivative-sensitive diagnostics such as enstrophy and palinstrophy.

\paragraph{Evaluation metrics.}
We report rollout-aggregated relative $L^2$ and $H^1$ errors of the vorticity field using the same
definitions as in the two-dimensional compressible-flow benchmark. The $L^2$ error measures
field-level accuracy, while the $H^1$ error is additionally sensitive to spatial gradients.

We further evaluate enstrophy and palinstrophy,
\[
    \mathsf{Ens}(\omega)
    =
    \frac12\|\omega\|_{L^2}^2,
    \qquad
    \mathsf{Pal}(\omega)
    =
    \frac12\|\nabla\omega\|_{L^2}^2,
\]
which respectively measure the overall vorticity magnitude and the strength of vorticity gradients.
For either scalar diagnostic
$\mathsf q\in\{\mathsf{Ens},\mathsf{Pal}\}$, we compute
\[
    \operatorname{RelErr}_{\mathsf q}
    =
    \frac{1}{N_{\mathrm{test}}}
    \sum_{i=1}^{N_{\mathrm{test}}}
    \frac{
        \left(
            \sum_{n=1}^{T}
            |\mathsf q(\widehat\omega_i^n)-\mathsf q(\omega_i^n)|^2
        \right)^{1/2}
    }{
        \left(
            \sum_{n=1}^{T}
            |\mathsf q(\omega_i^n)|^2
        \right)^{1/2}
    }.
\]
Finally, we compare the isotropic energy spectra of the predicted and reference flows. Let $E_i^n(k)$ and $\widehat E_i^n(k)$ denote the radially aggregated spectral energies at wavenumber shell $k$. The spectral error is
\[
    \operatorname{SpecErr}
    =
    \frac{1}{N_{\mathrm{test}}}
    \sum_{i=1}^{N_{\mathrm{test}}}
    \frac{
        \left(
            \sum_{n=1}^{T}\sum_k
            |\widehat E_i^n(k)-E_i^n(k)|^2
        \right)^{1/2}
    }{
        \left(
            \sum_{n=1}^{T}\sum_k
            |E_i^n(k)|^2
        \right)^{1/2}
    }.
\]
Together, these metrics assess field accuracy, spatial regularity, physically relevant flow statistics, and the distribution of energy across spatial scales.

\subsubsection{Data Generation for 2D Incompressible Navier-Stokes Benchmark}
\label{app:ns2d_data_generation}

\paragraph{Initial-vorticity distribution.}
For all viscosities and forcing families, the initial vorticity is sampled on the mean-zero subspace as
\[
    \omega_0
    \sim
    \mathsf{N}\left(
        0,\,
        7^3(-\Delta+49I)^{-5/2}
    \right).
\]
This distribution has the same spectral decay as the Gaussian prior used in \citet{li2020fourier}, with a larger overall amplitude to reduce the mismatch between the weak initial vorticity fields and the higher-amplitude states produced later under persistent forcing. The zero Fourier mode is removed to enforce zero spatial mean.

\paragraph{Trajectory-dependent forcing fields.}
Unlike the original FNO benchmark in \citet{li2020fourier}, which uses a fixed forcing function, we independently sample one time-independent forcing field for each trajectory. We consider two forcing families.
\begin{itemize}[topsep=0pt, itemsep=0pt]
\item For the \textsc{GRF} family, the forcing is sampled from a periodic squared-exponential Gaussian random field with spectral power
\[
    S_\ell(\mathbf{k})
    \propto
    \exp\left(-2\pi^2\ell^2|\mathbf{k}|^2\right),
\]
where the correlation length $\ell$ is sampled independently for each trajectory. The zero mode is
removed and each realization is rescaled to a randomly sampled target $L^\infty$ amplitude.

\item For the \textsc{Wave} family, the forcing is a sparse mixture of periodic Fourier modes,
\[
    f(\mathbf{x})
    =
    \sum_{j=1}^{J}
    a_j
    \cos\left(
        2\pi\mathbf{k}_j\cdot\mathbf{x}
        +\phi_j
    \right),
\]
with $J\in\{1,2,3,4\}$. The wavevectors have nonzero integer components with maximum magnitude three, the phases are sampled uniformly from $[0,2\pi)$, and the coefficients decay proportionally to $|\mathbf{k}_j|^{-2}$. Each realization is projected to zero mean and rescaled to a randomly sampled target $L^\infty$ amplitude.
\end{itemize}

The forcing distributions are summarized in Table~\ref{tab:ns2d_forcing_settings}. For the high-viscosity regime $\nu=10^{-3}$, stronger viscous dissipation causes the trajectories to approach a slowly varying regime more rapidly. We therefore use stronger and more spatially correlated forcing to retain nontrivial dynamics over the long simulation horizon.

\begin{table}[t]
    \centering
    \caption{
    Forcing distributions used for the two-dimensional incompressible Navier-Stokes datasets. Amplitude ranges refer to the target $L^\infty$ norm of each sampled forcing field.
    }
    \label{tab:ns2d_forcing_settings}
    \begin{tabular}{ccccc}
        \toprule
        Viscosity & Forcing & Length scale & Number of modes & Amplitude range \\
        \midrule
        $10^{-3}$ & \textsc{GRF} & $[0.05,0.25]$ & -- & $[0.10,0.30]$ \\
        $10^{-3}$ & \textsc{Wave} & -- & $1$--$4$ & $[0.10,0.30]$ \\
        $10^{-4},10^{-5}$ & \textsc{GRF} & $[0.05,0.15]$ & -- & $[0.08,0.20]$ \\
        $10^{-4},10^{-5}$ & \textsc{Wave} & -- & $1$--$4$ & $[0.08,0.16]$ \\
        \bottomrule
    \end{tabular}
\end{table}

\paragraph{Numerical solver.}
Trajectories are generated on a $256\times256$ periodic grid
using a pseudospectral method \citep{orszag1971numerical,canuto1986spectral}. The velocity and vorticity gradients are computed spectrally, while the nonlinear advection term is evaluated in physical space and transformed back to Fourier space. A two-thirds
spectral mask is applied to de-alias the nonlinear term.

Diffusion is advanced using a Crank-Nicolson discretization
\citep{crank1947practical}, while advection and forcing are treated explicitly. For an internal time step $\delta t$, the Fourier-space update is
\[
    \left(
        1+\frac{\delta t\,\nu}{2}|\mathbf{k}|^2
    \right)
    \widehat\omega^{\,m+1}_{\mathbf{k}}
    =
    \left(
        1-\frac{\delta t\,\nu}{2}|\mathbf{k}|^2
    \right)
    \widehat\omega^{\,m}_{\mathbf{k}}
    -
    \delta t\,
    \widehat{\mathbf{v}^m\cdot\nabla\omega^m}_{\mathbf{k}}
    +
    \delta t\,\widehat f_{\mathbf{k}}.
\]
We use $\delta t=10^{-4}$, following the standard FNO Navier-Stokes benchmark \citep{li2020fourier}. The resulting advective CFL number remains below $0.03$ across all settings. Snapshots are recorded every one physical time unit, and the zero vorticity mode is removed after each step. The simulated vorticity and forcing fields are then spectrally truncated from $256\times256$ to $64\times64$ for learning.

\paragraph{Training and evaluation datasets.}
For each viscosity regime, a single model is trained jointly on both forcing families. The training set
contains 1,600 trajectories, equally divided between \textsc{GRF} and \textsc{Wave} forcing, and
the validation set contains 400 trajectories with the same split. At test time, the model is evaluated separately on the two forcing families.

The training horizons are $T_{\mathrm{train}}=10,12$ and $8$ for $\nu=10^{-3},10^{-4}$ and $10^{-5}$, respectively, while the corresponding test horizons are $T_{\mathrm{test}}=50,30$ and $20$. Because snapshots are stored at unit intervals, these correspond directly to 10, 12, and 8 training transitions and 50, 30, and 20 autoregressive test steps. Initial-vorticity and forcing fields are sampled independently across the training, validation, and test sets from the same viscosity-dependent distributions. Thus, the primary extrapolation is again along the temporal direction.

\section{Implementation Details}\label{app:implementation_details}

This appendix gives the architectural and optimization details for the models compared in \S\ref{sec:experiments}. All models are implemented in PyTorch and trained as one-step predictors on adjacent snapshots. At test time, all methods are rolled out autoregressively from the exact initial condition, using deterministic mean predictions. 

\subsection{Common Data Handling and Model Selection}
For each benchmark, the stored trajectories are converted into one-step training pairs
\[
    (\mathbf u_n,\mathbf f,\mathbf u_{n+1}),
\]
where $\mathbf f$ is the trajectory-dependent forcing field when present and a zero-valued compatibility channel for the unforced compressible benchmarks. The same one-step pairs, trajectory validation loader, and train/validation split are used for all compared methods on a fixed benchmark.

All models are optimized with AdamW, learning rate $10^{-4}$, weight decay $10^{-5}$, and gradient clipping with maximum norm $1$. We also use a StepLR scheduler with decay factor $0.5$ every 100 epochs. All models are trained for 500 epochs and batch size 8. Validation is performed after each epoch. The checkpoint used for reporting is selected by the validation autoregressive rollout relative $L^2$ error when a trajectory validation loader is available; otherwise it is selected by the one-step validation loss.

For multichannel compressible states, the one-step prediction and reconstruction losses use channel-weighted mean-squared error, the channel weights are set to the inverse empirical variance of each physical channel on the training trajectories and normalized to have unit mean. This prevents high-variance channels from dominating the optimization objective. For the scalar vorticity benchmark no channel reweighting is needed.

\paragraph{Rollout safeguard.}
The reported autoregressive evaluations use deterministic model outputs. During validation and model selection, we optionally clip a single-step increment in $L^\infty$ norm to avoid numerical overflow from already-diverged checkpoints. The clip thresholds used in the reported runs are reported in Table~\ref{tab:implementation_hyperparameters}.

\subsection{FNO Backbone}
The direct FNO baseline \citep{li2020fourier} maps the current physical state to the next physical state,
\[
    \widehat{\mathbf u}_{n+1}=\mathbf u_n+\mathcal F_\theta(\mathbf u_n,\mathbf f,\mathbf x),
\]
where $\mathbf x$ denotes the periodic grid-coordinate channels. The residual form is used in all reported experiments. The model first lifts the concatenated input channels by a $1\times1$ convolution, applies a stack of Fourier layers with pointwise local convolutions and GELU activations, and projects back to the physical channels by two $1\times1$ convolutions. All FNO runs use width 64 and six Fourier layers. The 1D Euler experiments retain 32 one-dimensional Fourier modes, while the 2D compressible-flow and incompressible Navier-Stokes experiments retain $16\times16$ modes.

\subsection{Deterministic Latent Baseline}
FNO-AE uses the same spatial latent representation as VAMO but removes all stochastic components. The encoder is a resolution-preserving residual convolutional network
\[
    E_\phi:\mathbf u_n\mapsto \mathbf z_n
    \in\mathbb R^{C_z\times N_{\mathrm{res}}},
\]
with four residual blocks and hidden width 64. We use $C_z=32$ latent channels for the 1D Euler benchmark and $C_z=16$ for both the 2D compressible fluid dynamics and incompressible Navier-Stokes benchmarks. For multichannel states, the encoder input contains both the physical fields and centered finite-difference gradient features; for scalar vorticity, the same construction is applied channelwise. The decoder has the mirrored residual-convolutional structure and output dimension equal to the number of physical state channels.

The deterministic latent transition is a residual FNO,
\[
    \mathbf z_{n+1}=\mathbf z_n+\mathcal G_\theta(\mathbf z_n,\mathbf f,\mathbf x),
\]
using the same retained modes and layer counts as the corresponding VAMO latent transition. The deterministic latent objective is
\[
    \mathcal L_{\mathrm{AE}}
    =
    \mathcal L_{\mathrm{step}}
    +\lambda_{\mathrm{rec}}\mathcal L_{\mathrm{rec}},
\]
where $\mathcal L_{\mathrm{step}}$ is the one-step prediction loss and $\mathcal L_{\mathrm{rec}}$ reconstructs the current state from the encoded latent. The deterministic latent runs use $\lambda_{\mathrm{rec}}=1$.

\subsection{VAMO Architecture and Structured Noise}
VAMO uses the same encoder, decoder, and residual latent FNO mean transition as FNO-AE, but the encoder outputs both a mean field and a log-variance field:
\[
    (\mu_\phi(\mathbf u),\ell_\phi(\mathbf u))
    =\mathrm{Enc}_\phi(\mathbf u).
\]
During training, the posterior log-variance is clamped to $[-8,2]$ before sampling. A posterior latent sample is generated by
\[
    \mathbf z_n=\mu_\phi(\mathbf u_n)+\exp\left(\frac12\ell_\phi(\mathbf u_n)\right)
    \odot \varepsilon_{\mathrm{enc}},
\]
where $\varepsilon_{\mathrm{enc}}$ is a spectrally filtered standard Gaussian field. This filtering is implemented by FFT as
\[
    \widehat{\varepsilon}(\mathbf k)
    =
    \left(1+\ell^2 |2\pi\mathbf k|^2\right)^{-s/2}
    \widehat{\xi}(\mathbf k),
\]
with orthonormal FFT normalization. This is the discrete implementation of the Laplacian-resolvent covariance described in \S\ref{subsec:structured_latent_perturbations}. The spectral decay exponent is $s=2$ in all VAMO runs.

The latent prior mean is the residual FNO transition
\[
    \mu_p=T_\theta(\mathbf z_n),
\]
and the transition noise amplitude is a scalar predicted separately for each sample. The amplitude head consists of two convolutional layers with GELU activations, global spatial averaging, and a final $1\times1$ convolution. In the multichannel implementation used for the reported Euler, compressible-flow, and incompressible Navier-Stokes experiments, the final scalar is passed through a softplus and then shifted and clipped:
\[
    \alpha_\theta(\mathbf z,\mathbf f)
    =\min\{\alpha_{\max},\, \alpha_{\min}
    +\operatorname{softplus}(a_\theta(\mathbf z,\mathbf f))\}.
\]
The softplus component is initialized to $0.10$. Thus the initial effective amplitude before clipping is approximately $0.1001$ for Euler and CFD2D, where $\alpha_{\min}=10^{-4}$, and $0.11$ for NS2D, where $\alpha_{\min}=10^{-2}$.

The stochastic transition sample is
\[
    \mathbf z_{n+1}
    =
    \mu_p
    +
    \alpha_\theta(\mathbf z_n,\mathbf f)\varepsilon_{\mathrm{tr}},
\]
where $\varepsilon_{\mathrm{tr}}$ is generated by the same spectral filter with the transition correlation length. During training, the CFD2D and NS2D VAMO samplers add a spectral variance floor $10^{-2}$ before taking the square root of the filter variance. Inference uses the deterministic mean only: the initial state is encoded as $\mu_\phi(\mathbf u_0)$, the latent is advanced by $T_\theta$, and each latent state is decoded by $D_\psi$.

The VAMO training loss is
\[
    \mathcal L_{\mathrm{VAMO}}
    =\mathcal L_{\mathrm{step}}
    +\beta_{\mathrm{KL}}\mathcal L_{\mathrm{KL}}
    +\lambda_{\mathrm{rec}}\mathcal L_{\mathrm{rec}} .
\]
The prediction and reconstruction terms are the same channel-weighted MSE terms used for the deterministic latent baseline. The KL term compares the encoded posterior at the next state with the predicted latent prior. In the numerical implementation this KL is evaluated using the encoder's pointwise diagonal posterior variance and the transition's scalar per-sample variance $\alpha_\theta^2$; the spectral covariance is used for  reparameterized sampling of the stochastic latent states.

\begin{table}[t]
    \centering
    \caption{
    Main implementation hyperparameters from the experiment configurations. 
    }
    \label{tab:implementation_hyperparameters}
    \setlength{\tabcolsep}{4pt}
    \resizebox{\textwidth}{!}{
    \begin{tabular}{lccccc}
        \toprule
        Setting
        & \makecell{1D Euler\\ $L\in\{5,10,15\}$}
        & \makecell{1D Euler\\ $L=20$}
        & \makecell{2D CFD\\ $\nu=\eta=10^{-8}$}
        & \makecell{2D Navier-Stokes\\ $\nu=10^{-3}$}
        & \makecell{2D Navier-Stokes\\ $\nu\in\{10^{-4},10^{-5}$\}} \\
        \midrule
        \# Training trajectories
        & 1200 & 1200 & 1200 & 1600 & 1600 \\
        \# Validation trajectories
        & 300 & 300 & 300 & 400 & 400 \\
        Resolution $N_{\rm res}$ & \{1024, 2048, 3072\} & 4096 & $64\times 64$ & $64\times 64$ & $64\times 64$\\
        Batch size
        & 8 & 8 & 8 & 8 & 8 \\
        \# Epochs
        & 500 & 500 & 500 & 500 & 500 \\
        Training transitions $N_{\rm train}$
        & 15 & 15 & 10 & 10 & \{12, 8\} \\
        FNO width
        & 64 & 64 & 64 & 64 & 64 \\
        \# FNO layers
        & 6 & 6 & 6 & 6 & 6 \\
        Fourier modes
        & 32 & 32 & $16\times16$ & $16\times16$ & $16\times16$ \\
        \# Latent channels $C_z$
        & 32 & 32 & 16 & 16 & 16 \\
        \# Encoder/decoder blocks
        & 4 / 4 & 4 / 4 & 4 / 4 & 4 / 4 & 4 / 4 \\
        Encoder filter $(\ell_\mathrm{enc},s)$
        & $(0.01,2)$ & $(0.01,2)$ & $(0.01,2)$ & $(0.01,2)$ & $(0.01,2)$\\
        Transition filter $(\ell_\mathrm{tr},s)$
        & $(0.05,2)$ & $(0.05,2)$ & $(0.1,2)$ & $(0.05,2)$ & $(0.05,2)$ \\
        Amplitude range $[\alpha_{\min},\alpha_{\max}]$
        & $[10^{-4},5]$ & $[10^{-4},5]$ & $[10^{-4},5]$ & $[10^{-2},5]$ & $[10^{-2},5]$ \\
        Loss weights $(\beta_{\mathrm{KL}},\lambda_{\mathrm{rec}})$
        & $(10^{-3},1)$ & $(10^{-2},1)$ & $(10^{-4},1)$ & $(10^{-3},1)$ & $(10^{-2},1)$ \\
        FNO+Noise lift std.
        & 0.02 & 0.02 & 0.02 & 0.1 & 0.1 \\
        FNO+Noise lift length
        & 0.05 & 0.05 & 0.1 & 0.05 & 0.05 \\
        Rollout increment clip
        & 5 & 5 & 10 & 10 & 10 \\
        \bottomrule
    \end{tabular}
    }
\end{table}

\subsection{FNO+Noise Baseline and Model Complexity}
FNO+Noise uses the same architecture, optimizer, data loaders, and model selection rule as the direct FNO baseline. The only change is that during training we add spectrally filtered Gaussian noise to the lifted FNO feature field before the Fourier layers:
\[
    h \leftarrow h+\sigma_{\mathrm{lift}}\varepsilon_{\mathrm{lift}},
\]
where $\varepsilon_{\mathrm{lift}}$ is sampled with the same FFT filter family as above and spectral decay $s=2$. The experiments use $\sigma_{\mathrm{lift}}=0.02$ for Euler1D and CFD2D, and $\sigma_{\mathrm{lift}}=0.1$ for NS2D, which are comparable to the VAMO counterparts. The corresponding lift-noise correlation lengths are reported in  Table~\ref{tab:implementation_hyperparameters}. This noise is disabled during validation and test rollout. The baseline therefore tests generic feature-space noise injection without a latent posterior, learned transition uncertainty, or KL alignment term.

\paragraph{Model size.}
For completeness, Table~\ref{tab:model_params} reports the number of trainable parameters for the models used in each benchmark. The FNO+Noise baseline has exactly the same trainable architecture as the direct FNO baseline; its additional stochastic perturbation introduces no learnable parameters. Similarly, FNO-AE and VAMO share the same encoder-transition-decoder backbone. The additional parameters in VAMO arise only from the encoder variance output and the lightweight transition amplitude head.

\begin{table}[t]
    \centering
    \caption{
    Number of trainable parameters for the evaluated models. FNO+Noise has the same architecture and parameter count as FNO. Counts are independent of the spatial resolution because all models are fully convolutional
    or neural-operator based.
    }
    \label{tab:model_params}
    \small
    \begin{tabular}{lrrr}
        \toprule
        Benchmark & FNO & FNO-AE & FNO-VAMO \\
        \midrule
        1D Euler
        & 1.607M & 1.945M & 1.979M \\
        2D compressible fluid dynamics
        & 25.200M & 25.801M & 25.817M \\
        2D incompressible Navier-Stokes
        & 25.200M & 25.796M & 25.811M \\
        \bottomrule
    \end{tabular}
\end{table}

\section{Additional Experimental Results}
\subsection{Comparison with Autoregressive Curriculum Training}
\label{app:curriculum_training}

Autoregressive curriculum training has been used to reduce the discrepancy between teacher-forced training and autoregressive deployment by progressively exposing a model to longer rollouts generated from its own predictions \citep{li2022transformer,hagnberger2025calm}. To assess whether a deterministic baseline trained with explicit multistep rollouts can match VAMO's long-horizon performance, we construct a curriculum-trained FNO baseline for the two more challenging Navier-Stokes regimes, $\nu=10^{-4}$ and $\nu=10^{-5}$. The comparison between standard and curriculum-trained FNO isolates the effect of the training strategy, as the underlying architecture and evaluation protocol remain unchanged.

\paragraph{Curriculum setting.} Specifically, let $K$ denote the current curriculum length. Starting from the ground-truth initial state, the model is applied autoregressively for the first $K$ steps, with each prediction fed back as the input to the next step. For the remaining steps in the training trajectory, the model uses ground-truth inputs as in standard teacher forcing. Thus, increasing $K$ gradually shifts the objective from one-step prediction toward full autoregressive rollout training. The curriculum begins with $K=1$, which is maintained for the first 100 epochs. The rollout length is then increased linearly until it reaches the full training horizon, namely $K=12$ for $\nu=10^{-4}$ and $K=8$ for $\nu=10^{-5}$, at the end of 500 training epochs.

The curriculum baseline uses the same optimizer settings and FNO architecture as the standard
FNO baseline, with width 64, 6 Fourier layers, and $16\times16$ retained Fourier modes. At test time, all methods are evaluated identically: starting from the initial condition, each model is rolled out autoregressively and without intermediate ground-truth correction until $T_{\mathrm{test}}$. Thus, the comparison isolates the effect of the curriculum training strategy
while keeping the model architecture and evaluation protocol unchanged.

Table~\ref{tab:curriculum_training} reports relative $L^2$ and $H^1$ errors over both the training horizon $[0,T_{\mathrm{train}}]$ and the full rollout horizon $[0,T_{\mathrm{test}}]$. We restrict this comparison to these two metrics because its purpose is to assess predictive accuracy and spatial regularity under different training strategies; the additional physical diagnostics are studied separately in the main experiments.

\begin{table}[t]
\centering
\caption{
Comparison with autoregressive curriculum training on the two more challenging Navier-Stokes regimes. Errors are evaluated over the training horizon $[0,T_{\mathrm{train}}]$ and the full rollout horizon $[0,T_{\mathrm{test}}]$. All entries report mean $\pm$ standard deviation over 200 test trajectories.
}
\resizebox{\textwidth}{!}{%
\begin{tabular}{llllcccc}
\toprule
\multicolumn{4}{c}{Evaluation setting} &
\multicolumn{2}{c}{Training horizon} &
\multicolumn{2}{c}{Full horizon} \\
\cmidrule(lr){1-4}
\cmidrule(lr){5-6}
\cmidrule(lr){7-8}
Viscosity & Forcing & Method & Training
& Rel.\,$L^2$
& Rel.\,$H^1$
& Rel.\,$L^2$
& Rel.\,$H^1$ \\
\midrule

\multirow{6}{*}{$\nu=10^{-4}$}
& \multirow{3}{*}{GRF}
& FNO & Standard
& $0.0123_{\,\pm0.0341}$ & $0.1013_{\,\pm0.3245}$ & $0.4068_{\,\pm0.5094}$ & $3.1094_{\,\pm4.0204}$ \\
&
& FNO & Curriculum
& $0.0093_{\,\pm0.0053}$ & $0.0513_{\,\pm0.0357}$ & $0.1693_{\,\pm0.1981}$ & $0.9812_{\,\pm1.2503}$ \\
&
& VAMO & Standard
& $\mathbf{0.0057}_{\,\pm0.0017}$ & $\mathbf{0.0218}_{\,\pm0.0070}$ & $\mathbf{0.0403}_{\,\pm0.0562}$ & $\mathbf{0.1139}_{\,\pm0.1547}$ \\
\cmidrule(lr){2-8}

& \multirow{3}{*}{\textsc{Wave}}
& FNO & Standard
& $0.0118_{\,\pm0.0363}$ & $0.0828_{\,\pm0.2990}$ & $0.3570_{\,\pm0.6102}$ & $1.6131_{\,\pm2.7778}$ \\
&
& FNO & Curriculum
& $0.0109_{\,\pm0.0162}$ & $0.0470_{\,\pm0.0786}$ & $0.2096_{\,\pm0.2991}$ & $0.5641_{\,\pm0.8087}$ \\
&
& VAMO & Standard
& $\mathbf{0.0058}_{\,\pm0.0031}$ & $\mathbf{0.0182}_{\,\pm0.0110}$ & $\mathbf{0.1198}_{\,\pm0.1785}$ & $\mathbf{0.2212}_{\,\pm0.2680}$ \\
\midrule

\multirow{6}{*}{$\nu=10^{-5}$}
& \multirow{3}{*}{GRF}
& FNO & Standard
& $\mathbf{0.0132}_{\,\pm0.0050}$ & $\mathbf{0.1186}_{\,\pm0.0502}$ & $0.2987_{\,\pm0.4330}$ & $2.2621_{\,\pm3.6585}$ \\
&
& FNO & Curriculum
& $0.0197_{\,\pm0.0044}$ & $0.1668_{\,\pm0.0421}$ & $0.1503_{\,\pm0.2719}$ & $0.7867_{\,\pm0.7627}$ \\
&
& VAMO & Standard
& $0.0139_{\,\pm0.0018}$ & $0.1238_{\,\pm0.0230}$ & $\mathbf{0.0841}_{\,\pm0.0560}$ & $\mathbf{0.3858}_{\,\pm0.1561}$ \\
\cmidrule(lr){2-8}

& \multirow{3}{*}{\textsc{Wave}}
& FNO & Standard
& $\mathbf{0.0122}_{\,\pm0.0059}$ & $\mathbf{0.0998}_{\,\pm0.0472}$ & $0.2993_{\,\pm0.4083}$ & $1.9685_{\,\pm3.1219}$ \\
&
& FNO & Curriculum
& $0.0184_{\,\pm0.0052}$ & $0.1453_{\,\pm0.0524}$ & $0.1899_{\,\pm0.1456}$ & $0.6596_{\,\pm0.4390}$ \\
&
& VAMO & Standard
& $0.0126_{\,\pm0.0024}$ & $0.1041_{\,\pm0.0294}$ & $\mathbf{0.1469}_{\,\pm0.1127}$ & $\mathbf{0.4509}_{\,\pm0.1981}$ \\
\bottomrule
\end{tabular}}
\label{tab:curriculum_training}
\end{table}

As shown in Table~\ref{tab:curriculum_training}, curriculum training substantially improves the FNO baseline over the full rollout horizon, reducing its relative $L^2$ errors by $36.6\%$--$58.4\%$ and its relative $H^1$ errors by $65.0\%$--$68.4\%$ across the four evaluation settings. Nevertheless, VAMO achieves the lowest full-horizon errors in all cases, further reducing the $L^2$ and $H^1$ errors of FNO-Curriculum by $22.6\%$--$76.2\%$ and $31.6\%$--$88.4\%$, respectively. VAMO also exhibits consistently smaller standard deviations over the full rollout horizon, indicating substantially lower variability across test trajectories and more reliable long-horizon predictions. Notably, for $\nu=10^{-5}$, the improved full-horizon performance of FNO-Curriculum is accompanied by larger training-horizon errors than standard FNO under both forcing families. This suggests an accuracy-stability tradeoff rather than a uniform improvement across the temporal regimes represented during training and deployment.

Overall, autoregressive curriculum training considerably improves the deterministic FNO baseline by exposing it to model-generated states, but it does not close the gap to VAMO in any of the four full-horizon comparisons. Notably, VAMO attains stronger long-horizon performance using only a local variational objective, whereas FNO-Curriculum explicitly optimizes progressively longer autoregressive rollouts up to the full training horizon. Nevertheless, the two mechanisms are potentially complementary. A promising extension is to derive a rollout-aware multistep variational objective for VAMO and incorporate a rollout-length curriculum into its prediction terms. Such an extension could further reduce the mismatch between local training and long-horizon autoregressive deployment while retaining structured variational regularization of the latent dynamics.

\subsection{Additional Quantitative Results}
\label{app:additional_quantitative}

We provide additional quantitative results complementing the aggregate comparisons and temporal error analysis in \S\ref{sec:experiments}. In particular, we report per-trajectory error distributions for the 1D Euler benchmark and the $\nu=10^{-3}$ incompressible
Navier-Stokes setting, together with the remaining temporal error trends and physical diagnostics
omitted from the main text for brevity.

\paragraph{1D Euler equations.}
Figures~\ref{fig:euler1d_error_distribution1}--\ref{fig:euler1d_error_distribution2} show the distributions of per-trajectory full-rollout relative $L^2$ and $L^1$ errors for all four domain lengths. Across the evaluated settings, VAMO generally exhibits lower typical errors and reduced upper tails, with the clearest separation appearing in the more challenging large-domain regime. For $L=20$, in particular, the deterministic latent baseline develops a pronounced heavy tail associated with unstable rollouts, whereas the VAMO error distribution remains substantially more concentrated. These results complement the aggregate errors in Table~\ref{tab:euler1d_results} by showing that the long-horizon improvement is reflected across the test distribution rather than being driven by a small number of trajectories.

Figure~\ref{fig:euler_trend_2} further reports the temporal evolution of the Euler rollout errors for the additional domain-length settings. The same pattern observed in \S\ref{sec:error_growth} persists: the methods can remain comparable during the early rollout, while their behavior separates as prediction proceeds beyond the training horizon. VAMO exhibits slower late-time error growth, consistent with the full-rollout results in Table~3 and the representative settings shown in the main text.

\begin{figure}
    \centering
    \begin{subfigure}{\linewidth}
        \centering
        \caption{1D Euler per-trajectory rollout errors, domain length $L=5$.\vspace{0.1cm}}
        \includegraphics[width=\linewidth]{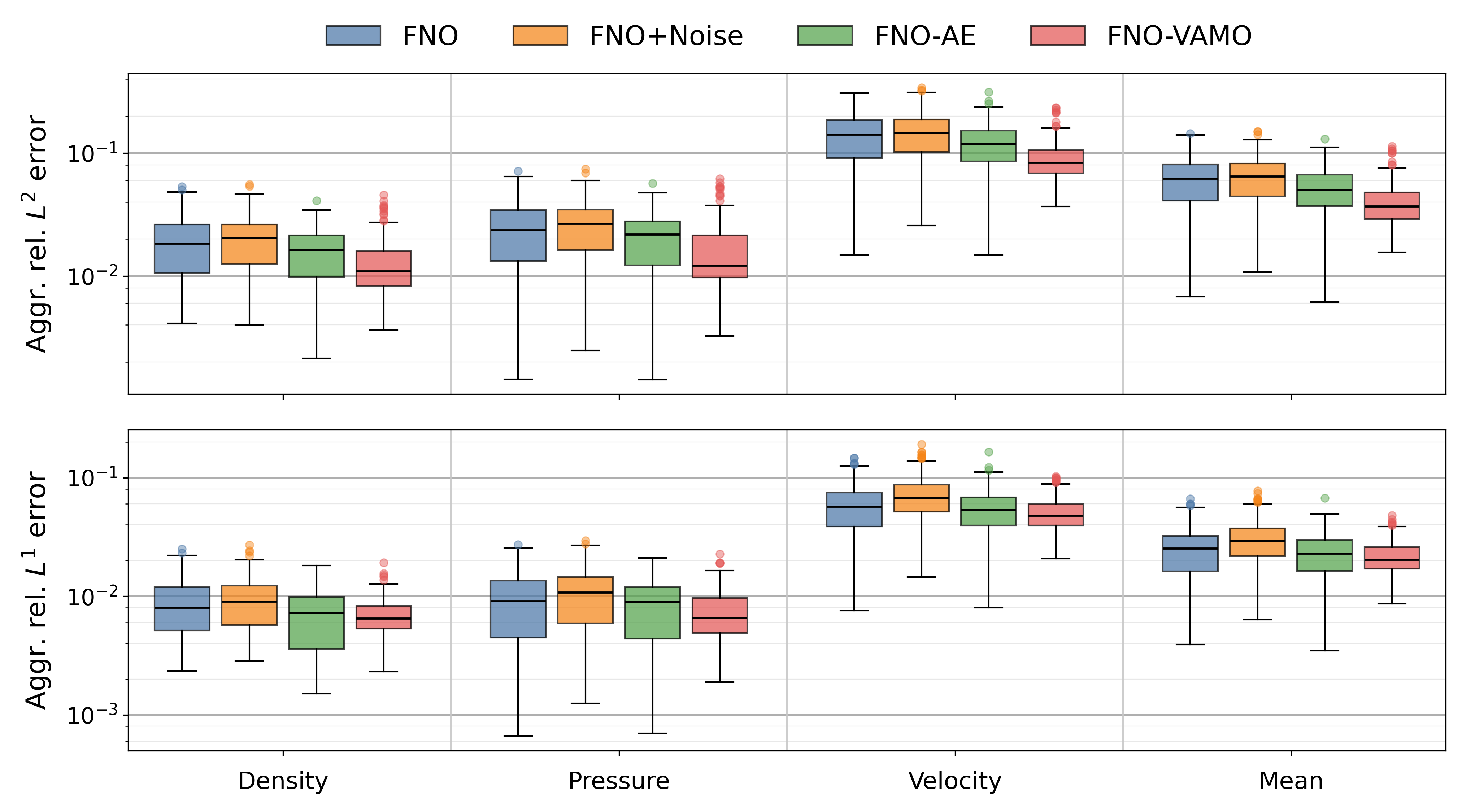}\vspace{-0.2cm}
        \label{fig:euler1d_l5_error_distribution}
    \end{subfigure}
    \begin{subfigure}{\linewidth}
        \centering
        \caption{1D Euler per-trajectory rollout errors, domain length $L=10$.\vspace{0.1cm}}
        \includegraphics[width=\linewidth]{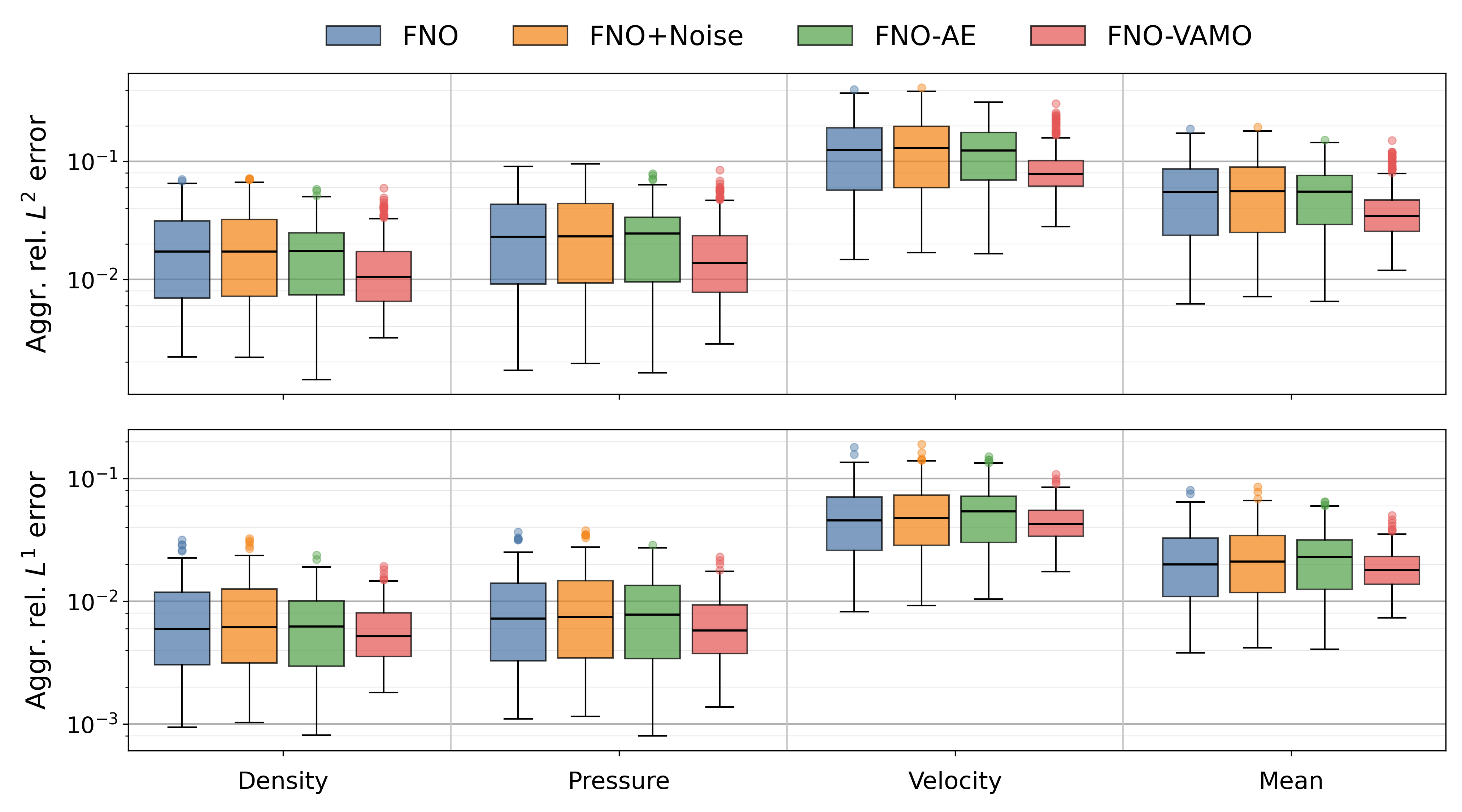}\vspace{-0.2cm}
        \label{fig:euler1d_l10_error_distribution}
    \end{subfigure}
    \caption{
    Distribution of per-trajectory aggregate relative $L^2$ and $L^1$ errors for the 1D Euler benchmark. The logarithmic scale highlights both the typical error and the heavy upper tails associated with unstable rollouts. 
    }
    \label{fig:euler1d_error_distribution1}
\end{figure}

\begin{figure}
    \centering
    \begin{subfigure}{\linewidth}
        \centering
        \caption{1D Euler per-trajectory rollout errors, domain length $L=15$.\vspace{0.1cm}}
        \includegraphics[width=\linewidth]{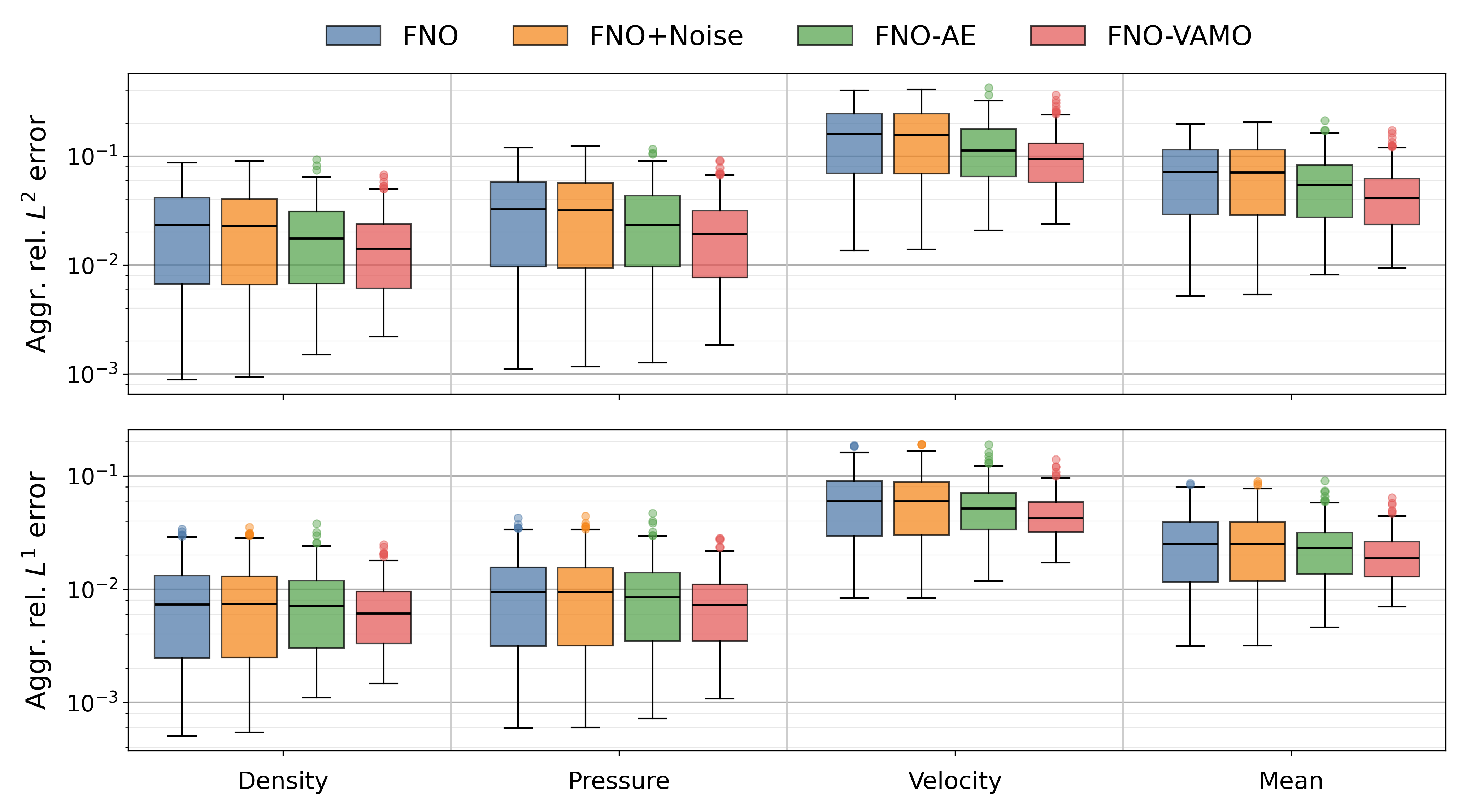}\vspace{-0.2cm}
        \label{fig:euler1d_l15_error_distribution}
    \end{subfigure}
    \begin{subfigure}{\linewidth}
        \centering
        \caption{1D Euler per-trajectory rollout errors, domain length $L=20$.\vspace{0.1cm}}
        \includegraphics[width=\linewidth]{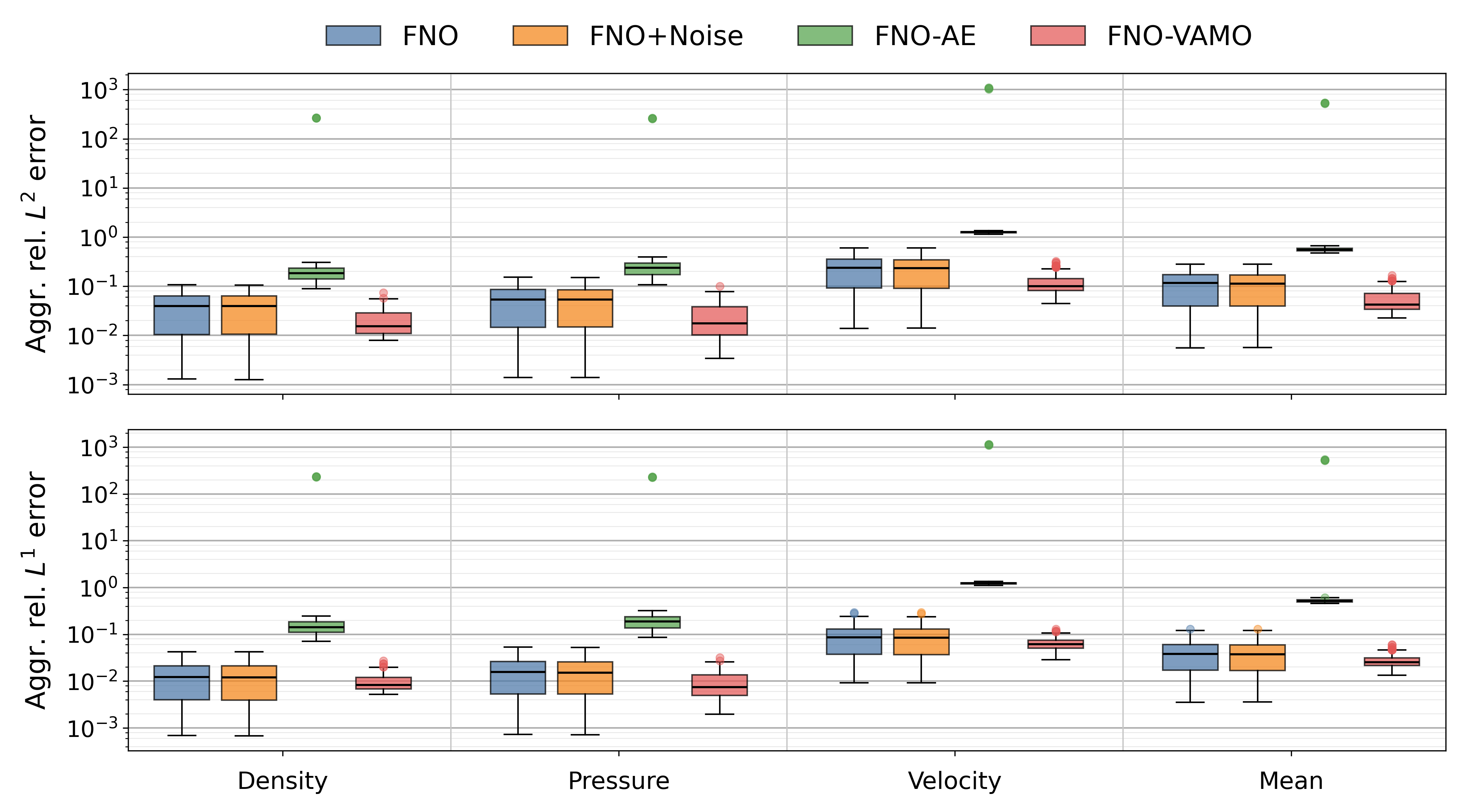}\vspace{-0.2cm}
        \label{fig:euler1d_l20_error_distribution}
    \end{subfigure}
    \caption{
    Distribution of per-trajectory aggregate relative $L^2$ and $L^1$ errors for the 1D Euler benchmark. The logarithmic scale highlights both the typical error and the heavy upper tails associated with unstable rollouts. 
    }
    \label{fig:euler1d_error_distribution2}
\end{figure}

\begin{figure}
    \centering
    \begin{subfigure}[h]{\linewidth}
        \centering
        \includegraphics[width=1.0\linewidth]{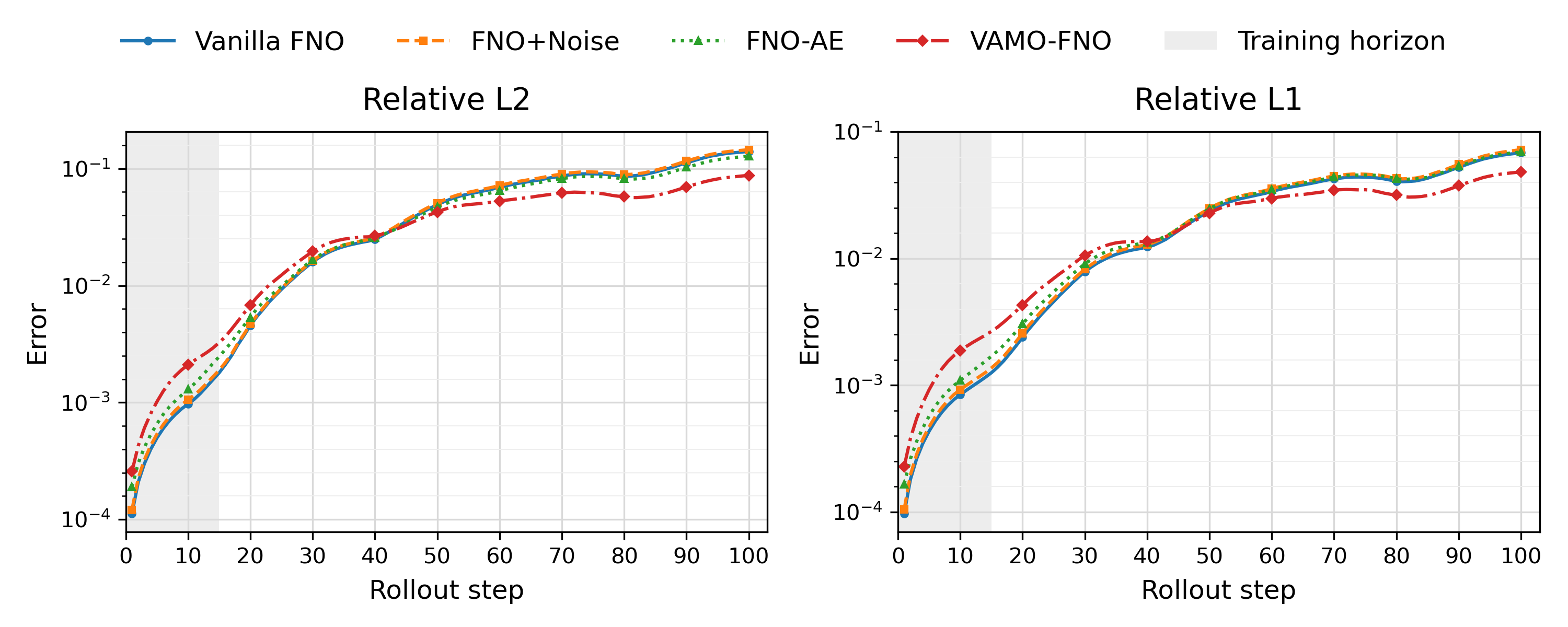}
        \caption{Domain length $L=10$.}
        \label{fig:eulerl10}
    \end{subfigure}

    \vspace{0.5em}
    
    \begin{subfigure}[h]{\linewidth}
        \centering
        \includegraphics[width=1.0\linewidth]{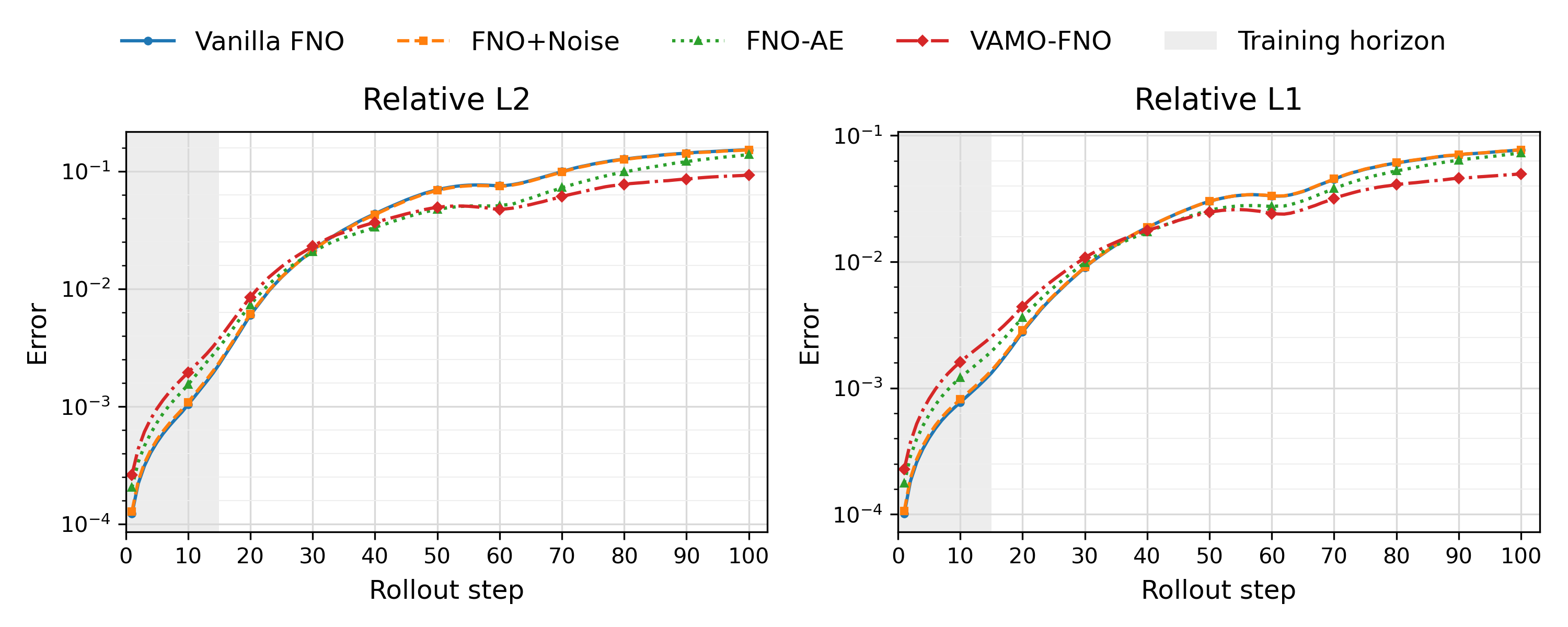}
        \caption{Domain length $L=20$.}
        \label{fig:eulerl15}
    \end{subfigure}
    \caption{Error trend for 1D Euler Benchmarks with $L\in\{10,15\}$.}
    \label{fig:euler_trend_2}
\end{figure}

\paragraph{2D incompressible Navier-Stokes equations.}
We additionally report results for the more dissipative $\nu=10^{-3}$ regime, which is omitted from the detailed temporal analysis in the main text. Figure~\ref{fig:ns2d_nu3_error_distribution} shows the per-trajectory full-rollout relative $L^2$ and $H^1$ error distributions under both GRF and Wave forcing. VAMO maintains low typical errors and comparatively tight distributions for both forcing families. The deterministic latent baseline is substantially less stable and exhibits pronounced heavy-tailed errors, while the direct and noise-injection baselines remain more stable but generally less accurate than VAMO.

The corresponding temporal errors are shown in Figure~\ref{fig:ns2d_nu3_trend}. Under both forcing families, VAMO limits the growth of field-level, derivative-sensitive, and physical-statistic errors throughout the long rollout. The advantage is particularly clear relative to the deterministic latent model, whose errors grow rapidly after the training horizon. Compared with the lower-viscosity regimes studied in the main text, the direct FNO and FNO+Noise baselines are more competitive at $\nu=10^{-3}$, consistent with the stronger physical dissipation in this setting.

Finally, Figure~\ref{fig:ns2d_physical_diagnostics_nu3} compares the enstrophy and palinstrophy trajectories for representative $\nu=10^{-3}$ test samples. The deterministic latent baseline frequently develops severe late-time inflation in both statistics, while the direct baselines exhibit smaller but still visible deviations on
some trajectories. VAMO more closely tracks the scale and temporal evolution of the reference statistics across both forcing families. Together with the results for $\nu=10^{-4}$ and $\nu=10^{-5}$ in \S\ref{sec:physical_diagnostics}, these observations show that the improved physical-statistic stability of VAMO persists across the full range of viscosities considered in our experiments.

\begin{figure}
    \centering
    \includegraphics[width=\linewidth]{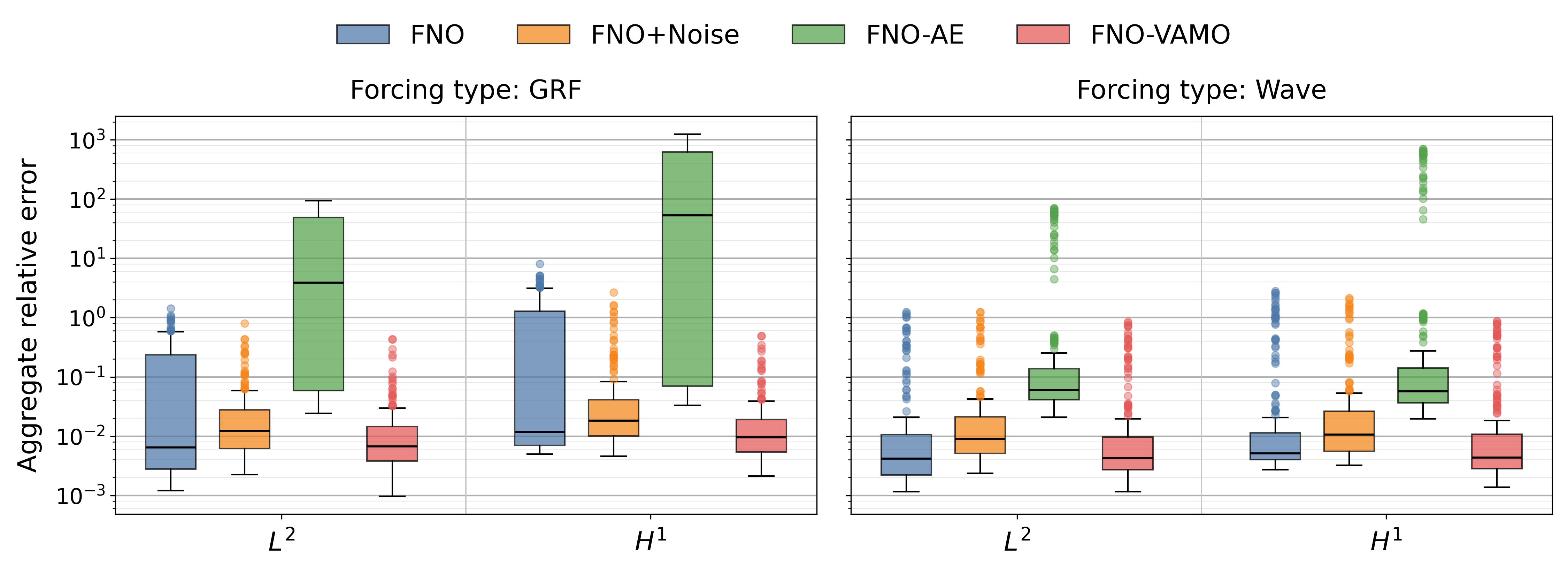}
    \caption{
    Distribution of per-trajectory full-rollout relative $L^2$ and $H^1$ errors for the 2D incompressible Navier-Stokes benchmark, $\nu=10^{-3}$. Results are shown separately for GRF and \textsc{Wave} forcing. The logarithmic scale highlights both the typical error and the heavy upper tails associated with unstable rollouts.}
    \label{fig:ns2d_nu3_error_distribution}
\end{figure}

\begin{figure}
    \centering
    \begin{subfigure}[t]{\linewidth}
        \centering
        \includegraphics[width=1.0\linewidth]{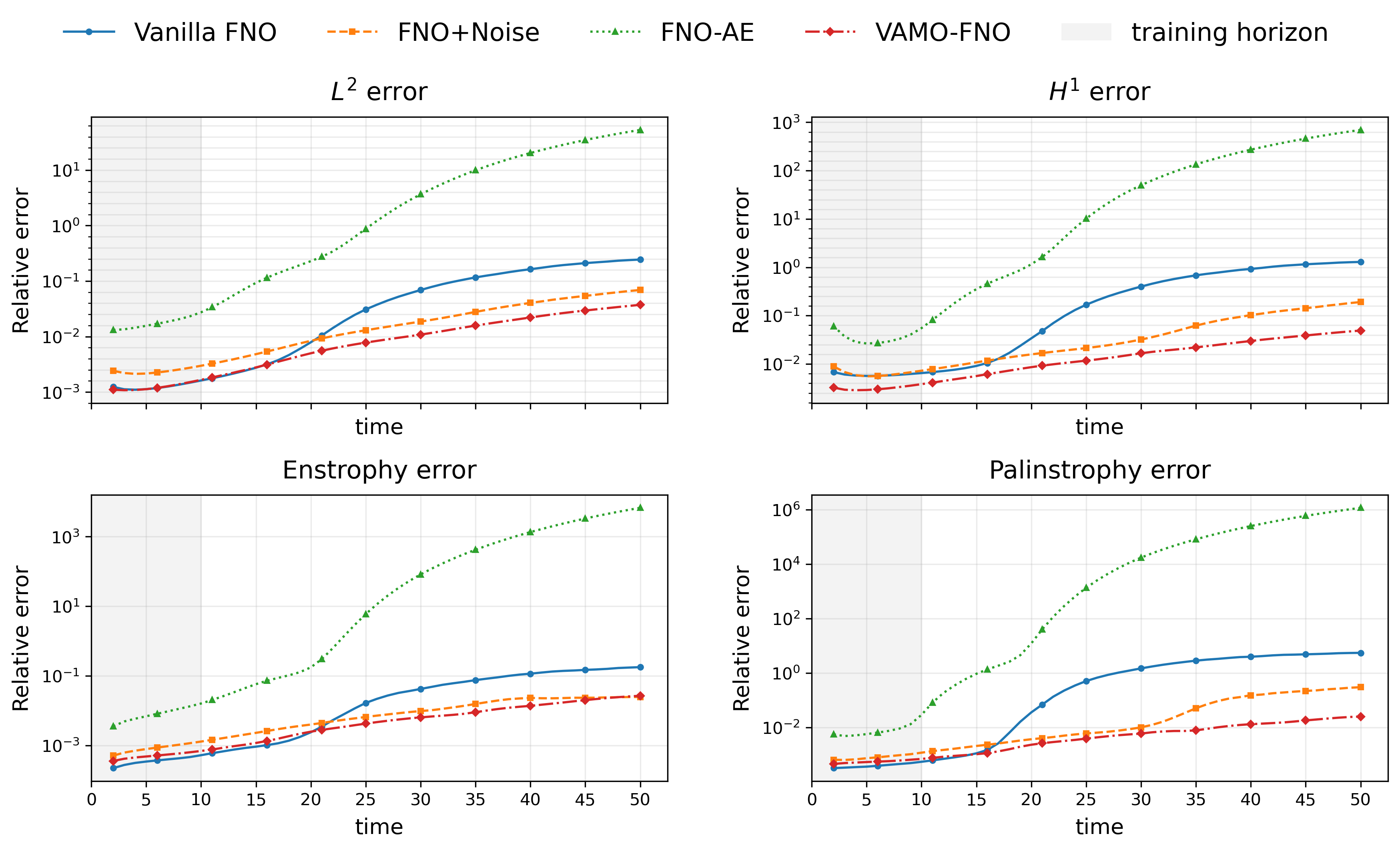}
        \caption{\textsc{GRF} forcing type.}
        \label{fig:nu3grf}
    \end{subfigure}

    \vspace{0.5em}
    
    \begin{subfigure}[t]{\linewidth}
        \centering
        \includegraphics[width=1.0\linewidth]{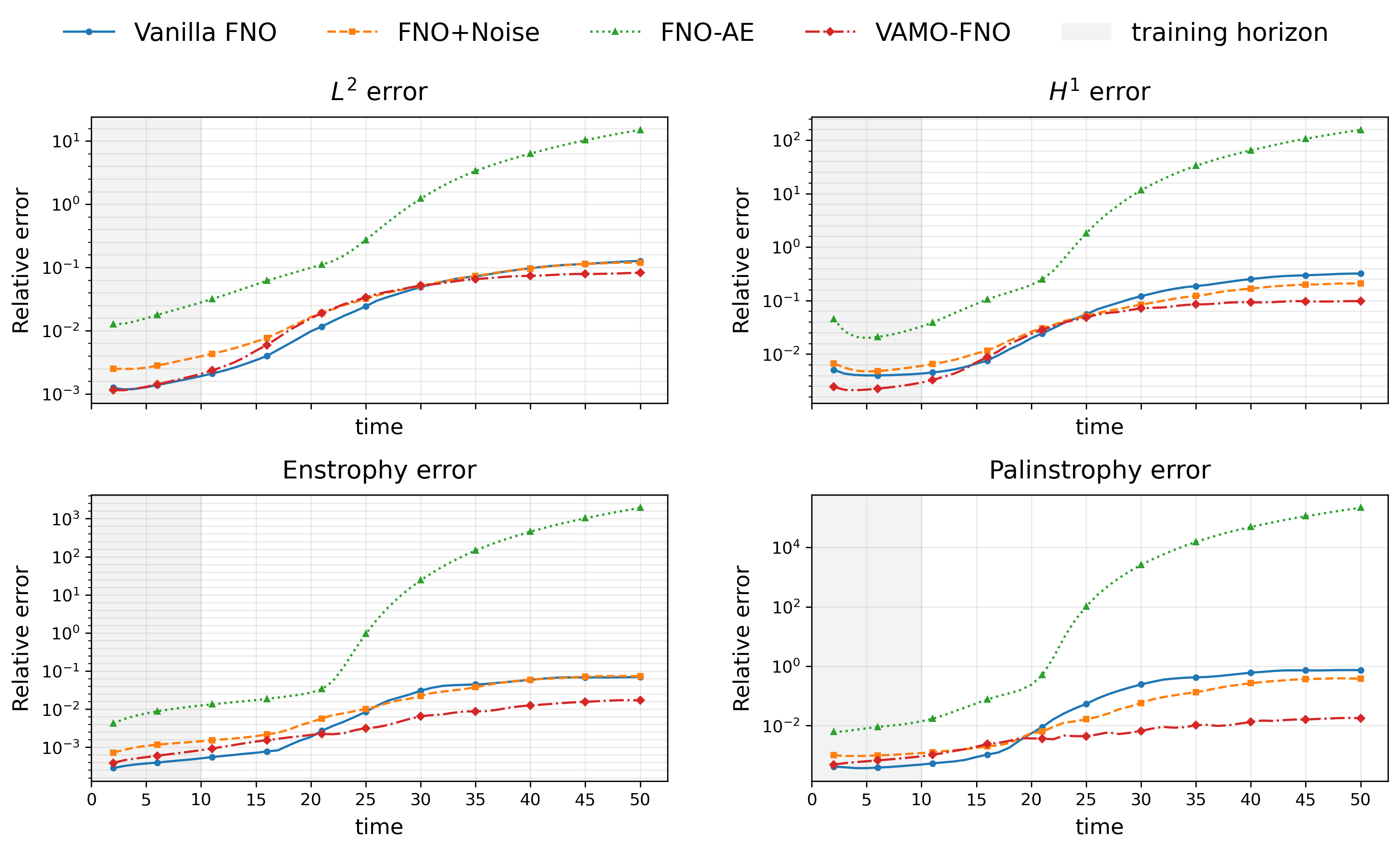}
        \caption{\textsc{Wave} forcing type.}
        \label{fig:nu3wave}
    \end{subfigure}
    \caption{Error trend for 2D incompressible NS Benchmark with $\nu=10^{-3}$.}
    \label{fig:ns2d_nu3_trend}
\end{figure}

\begin{figure}[p]
    \centering

    \begin{subfigure}{\linewidth}
        \centering
        \includegraphics[width=\linewidth]
        {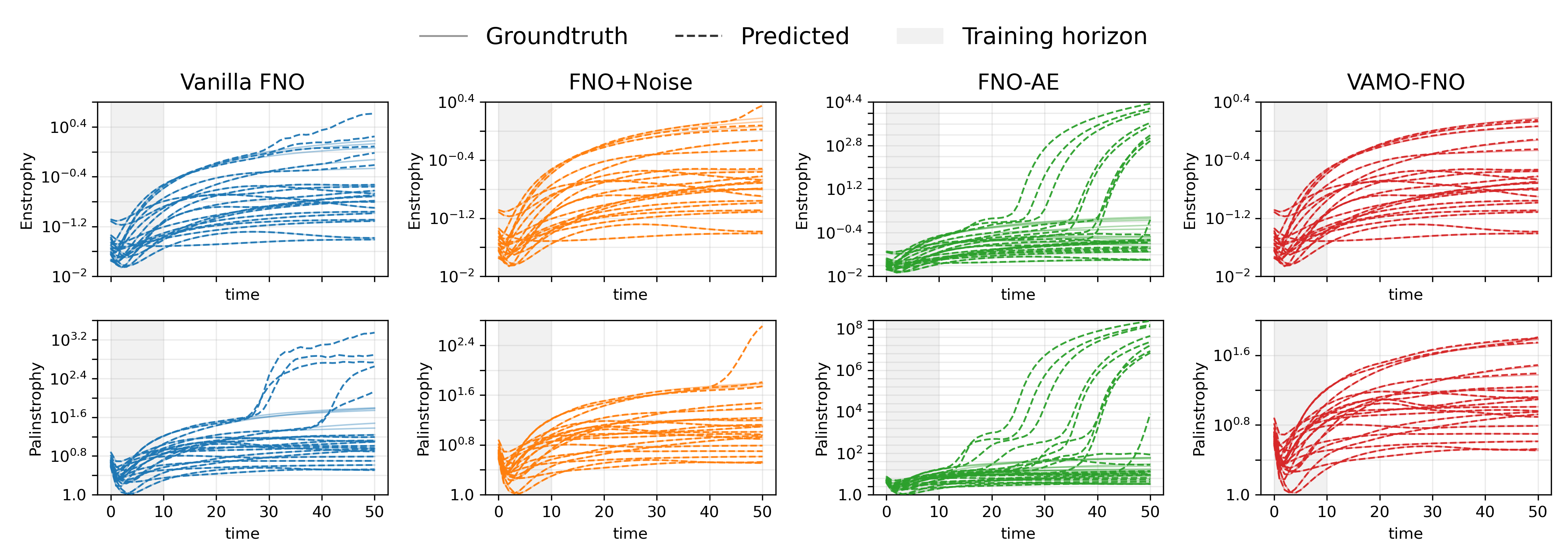}
        \caption{Viscosity $\nu=10^{-3}$, \textsc{GRF} forcing.}
        \label{fig:nsdiag_nu3_grf}
    \end{subfigure}

    \vspace{0.5em}

    \begin{subfigure}{\linewidth}
        \centering
        \includegraphics[width=\linewidth]
        {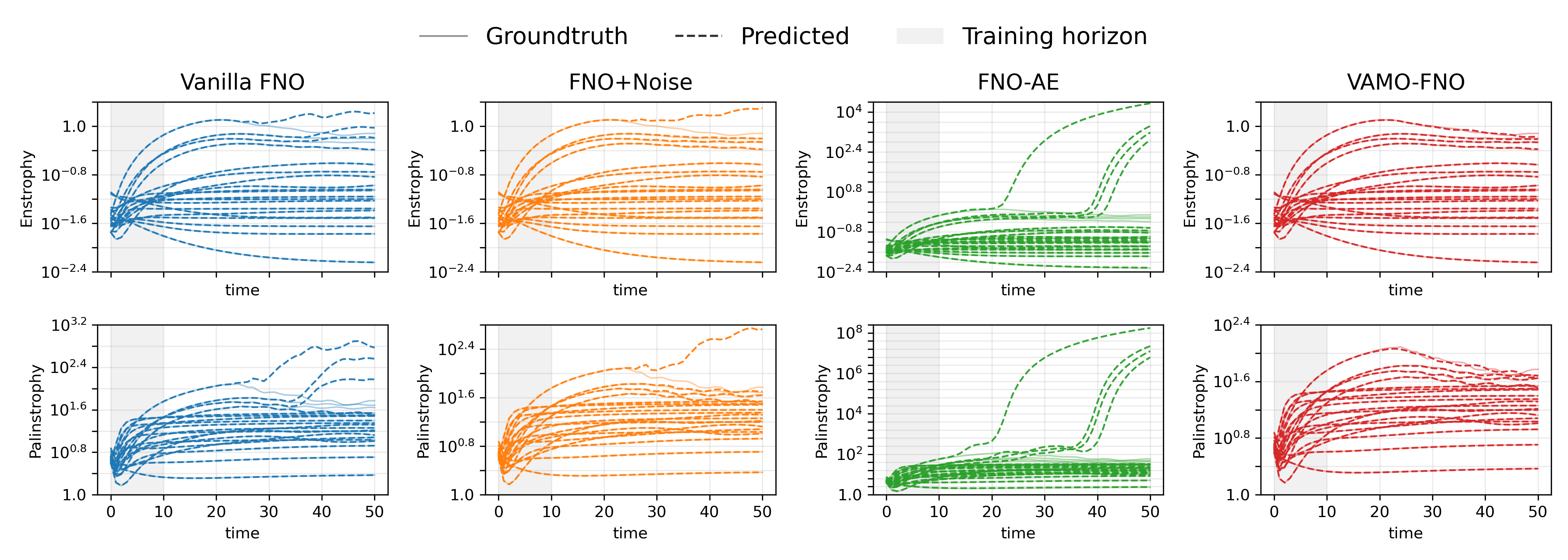}
        \caption{Viscosity $\nu=10^{-3}$, \textsc{Wave} forcing.}
        \label{fig:nsdiag_nu3_wave}
    \end{subfigure}

    \caption{
    Physical diagnostics for the 2D incompressible
    Navier-Stokes benchmark. Each panel compares enstrophy and palinstrophy along 16 ground-truth and predicted trajectories. Solid translucent curves denote ground truth, dashed curves denote predictions, and the shaded region marks the training horizon.
    }
    \label{fig:ns2d_physical_diagnostics_nu3}
\end{figure}

\subsection{Additional Qualitative Results}
\label{app:additional_qualitative}

We provide additional representative autoregressive rollouts from the test sets to complement the
quantitative results in \S\ref{sec:experiments}. The visualizations illustrate how the accumulated rollout errors
manifest in the predicted physical fields and highlight the qualitative differences among the
evaluated models.

\paragraph{1D Euler equations.}
Figures~\ref{fig:euler1d_L5_qualitative}--\ref{fig:euler1d_L20_qualitative} show representative trajectories for the four domain lengths $L\in\{5,10,15,20\}$. At early rollout times, all methods generally reproduce the large-scale wave structure of the reference solution. As the rollout proceeds, differences become increasingly visible around sharp fronts and interacting waves. The deterministic latent baseline is particularly susceptible to overshoots and oscillatory artifacts, with the degradation becoming more pronounced for the larger domains. The direct FNO and FNO+Noise baselines remain more stable but accumulate visible phase and amplitude errors at later times. VAMO remains more closely aligned with the reference density, velocity, and pressure profiles over the full prediction horizon, consistent with the error trends reported in \S\ref{sec:error_growth}.

\paragraph{2D compressible fluid dynamics.}
Figure~\ref{fig:cfd2d_qualitative} presents a representative rollout of the 2D compressible fluid dynamics benchmark. The differences among the methods are modest near the beginning of the trajectory but become clearer beyond the training horizon. The baseline predictions progressively develop larger spatial errors and fine-scale artifacts, particularly in the velocity fields, whereas VAMO better preserves the coherent spatial organization of the density, velocity, and pressure fields throughout the rollout. These qualitative differences agree with the substantially lower full-horizon $L^2$ and $H^1$ errors reported in Table~4.

\paragraph{2D incompressible Navier--Stokes equations.}
Figures~\ref{fig:ns2d_nu3_qualitative}--\ref{fig:ns2d_nu5_qualitative} provide representative Navier-Stokes rollouts for $\nu\in\{10^{-3},10^{-4},10^{-5}\}$ under both GRF and Wave forcing. In each panel, all predictions are shown using the corresponding ground-truth color scale, allowing deviations in vorticity magnitude and spatial structure to be compared directly. The baseline methods generally track the reference dynamics during the early rollout but can develop increasingly pronounced small-scale and oscillatory artifacts at later times. This behavior is most severe for the deterministic latent baseline, which becomes unstable in several examples. The direct FNO and FNO+Noise models remain more controlled but still exhibit substantial late-time deviations in the more challenging trajectories.

In contrast, VAMO more consistently preserves the dominant vorticity structures and avoids the severe high-frequency artifacts observed in the unstable baseline rollouts. The qualitative behavior is consistent across viscosity levels and forcing families and complements the aggregate field errors, temporal error trends, and physical-statistic diagnostics reported in the main text. Together, these examples provide a spatial view of the central empirical finding: the primary advantage of VAMO emerges during prolonged autoregressive evolution, where the variational formulation helps limit the accumulation of unphysical structure.

\begin{figure}[p]
    \centering
    \includegraphics[width=\linewidth]
    {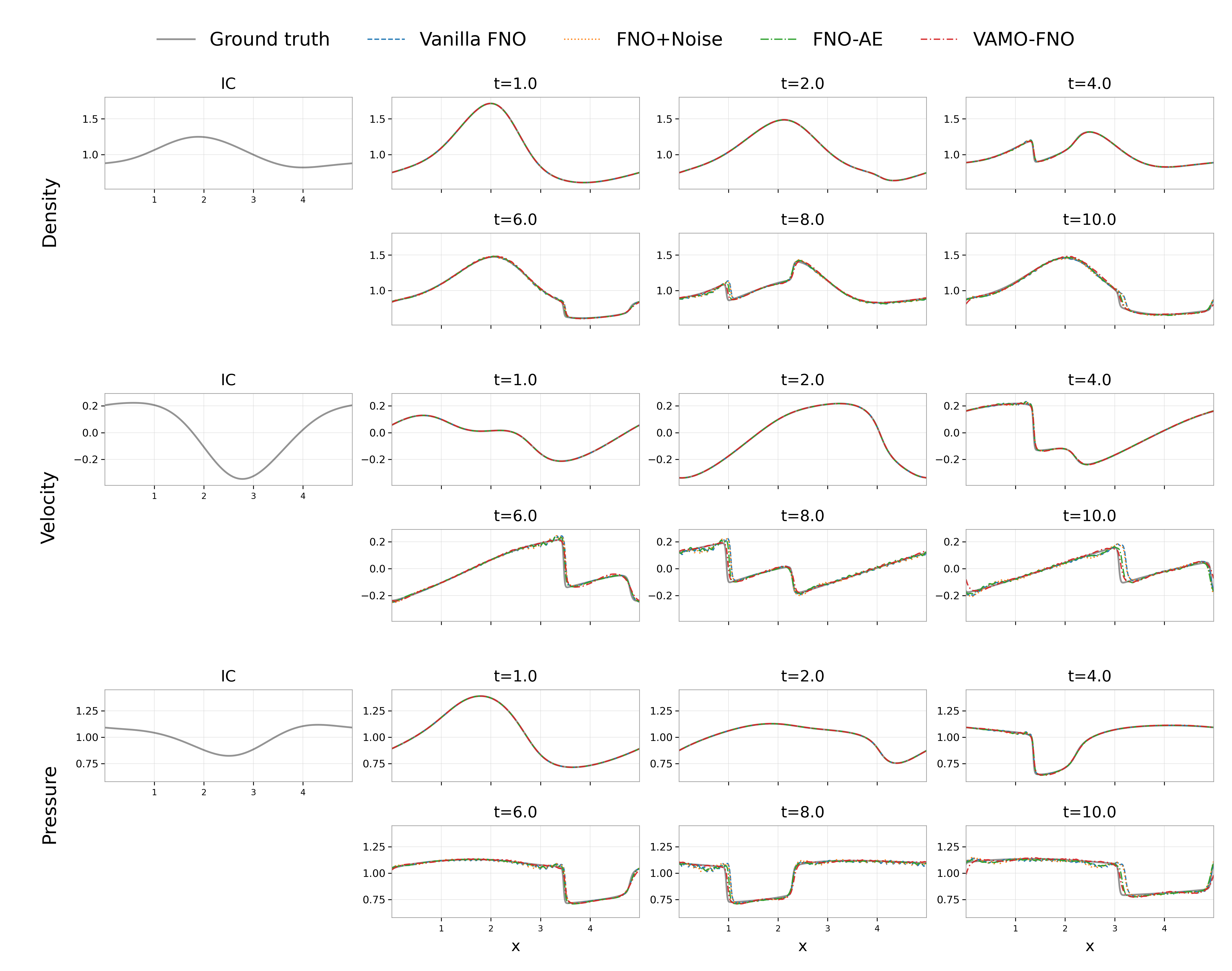}
    \caption{
    Representative autoregressive rollout for the 1D Euler equations with $L=5$. Rows show density $\rho$, velocity $u$, and pressure $p$, while columns correspond to selected rollout times. The solid gray curve denotes the ground truth, and the colored dashed curves denote predictions from the evaluated models. All methods remain accurate at early times, whereas the deterministic latent baseline develops increasingly pronounced overshoots and oscillatory errors around sharp wave structures during the later rollout. VAMO remains closely aligned with the reference trajectory over the full prediction horizon.
    }
    \label{fig:euler1d_L5_qualitative}
\end{figure}

\begin{figure}[p]
    \centering
    \includegraphics[width=\linewidth]
    {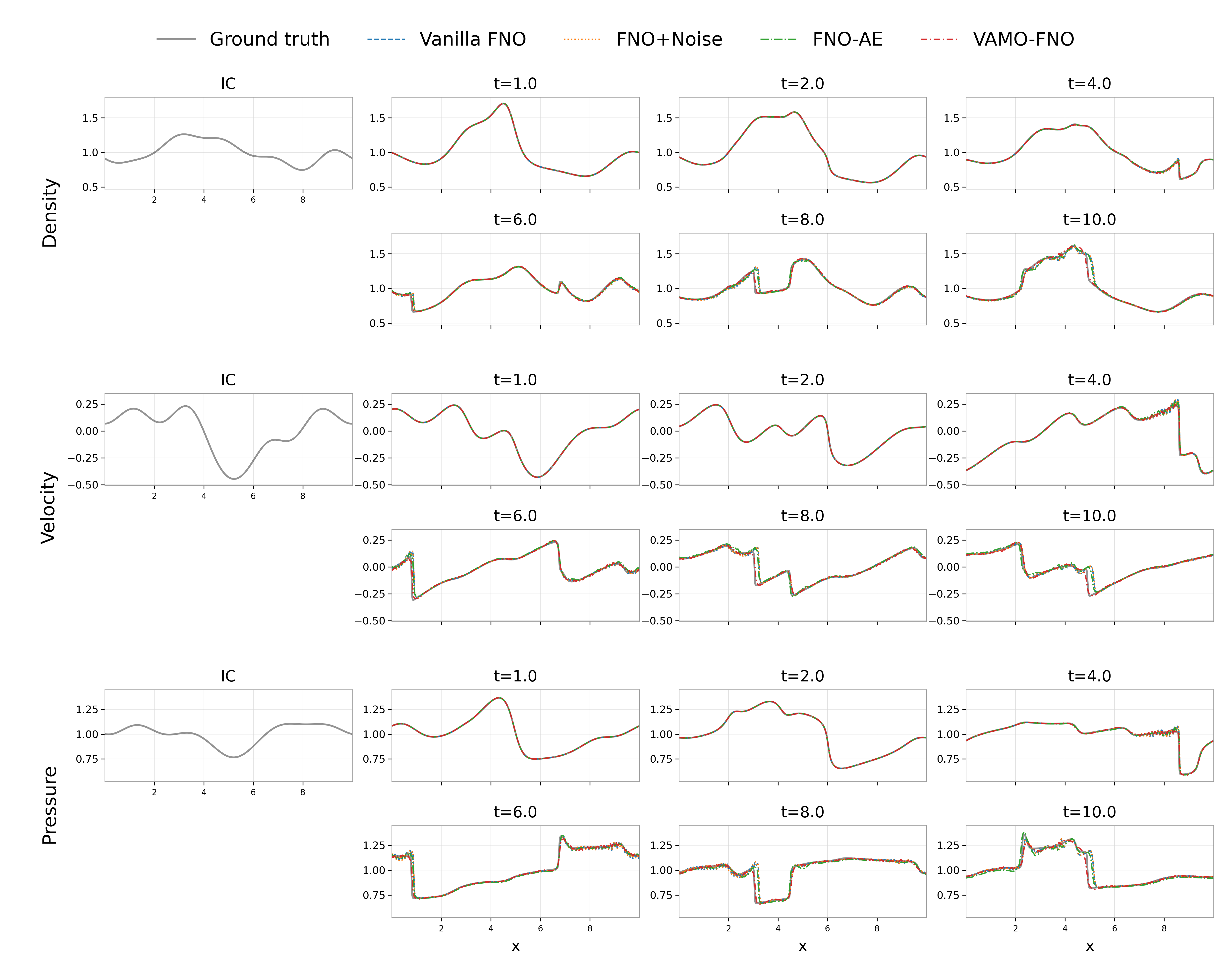}
    \caption{
    Representative autoregressive rollout for the 1D Euler equations with $L=10$. Rows show density $\rho$, velocity $u$, and pressure $p$, while columns correspond to selected rollout times. The solid gray curve denotes the ground truth, and the colored dashed curves denote predictions from the evaluated models. All methods remain accurate at early times, whereas the deterministic latent baseline develops increasingly pronounced overshoots and oscillatory errors around sharp wave structures during the later rollout. VAMO remains closely aligned with the reference trajectory over the full prediction horizon.
    }
    \label{fig:euler1d_L10_qualitative}
\end{figure}

\begin{figure}[p]
    \centering
    \includegraphics[width=\linewidth]
    {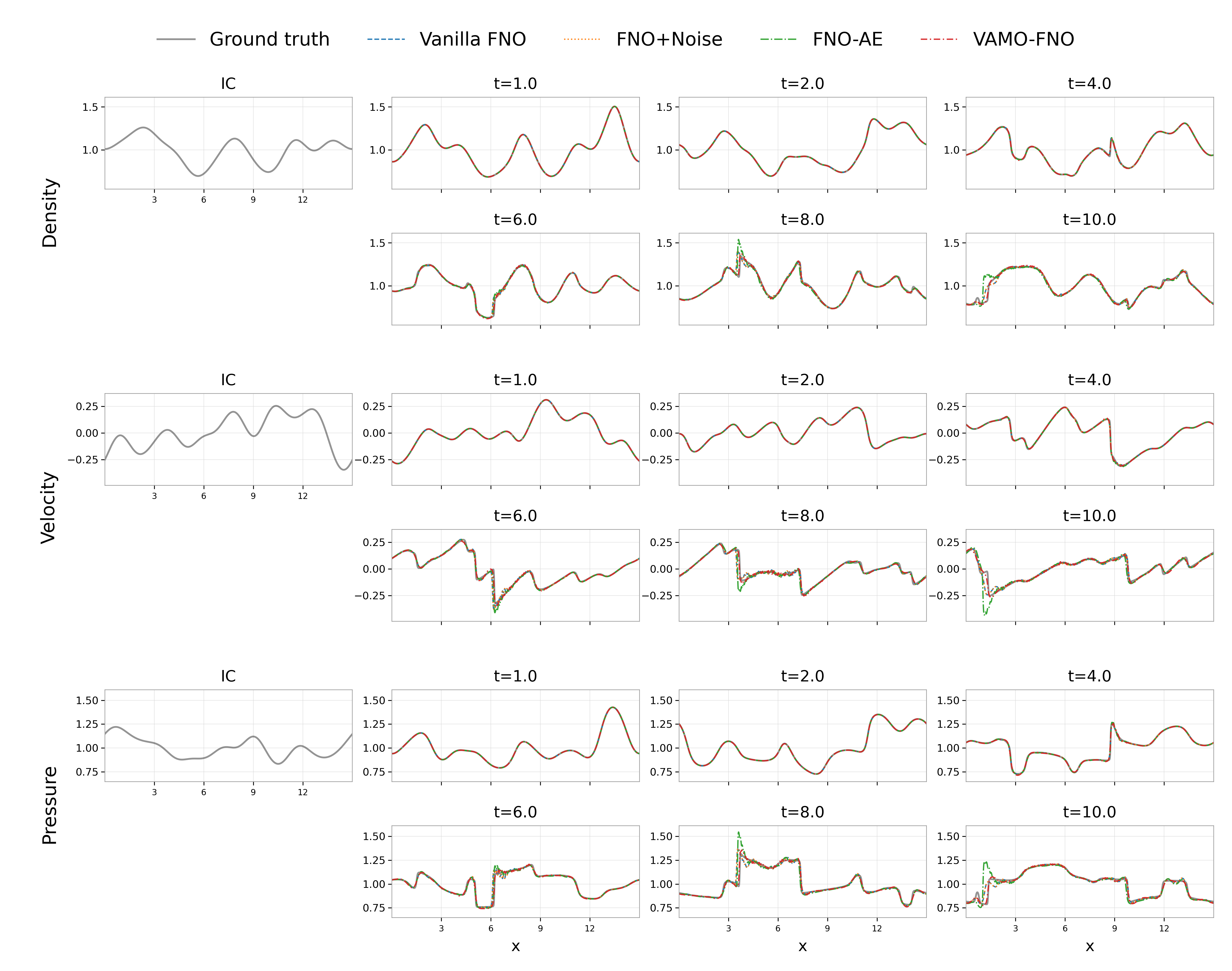}
    \caption{
    Representative autoregressive rollout for the 1D Euler equations with $L=15$. Rows show density $\rho$, velocity $u$, and pressure $p$, while columns correspond to selected rollout times. The solid gray curve denotes the ground truth, and the colored dashed curves denote predictions from the evaluated models. All methods remain accurate at early times, whereas the deterministic latent baseline develops increasingly pronounced overshoots and oscillatory errors around sharp wave structures during the later rollout. VAMO remains closely aligned with the reference trajectory over the full prediction horizon.
    }
    \label{fig:euler1d_L15_qualitative}
\end{figure}

\begin{figure}[p]
    \centering
    \includegraphics[width=\linewidth]
    {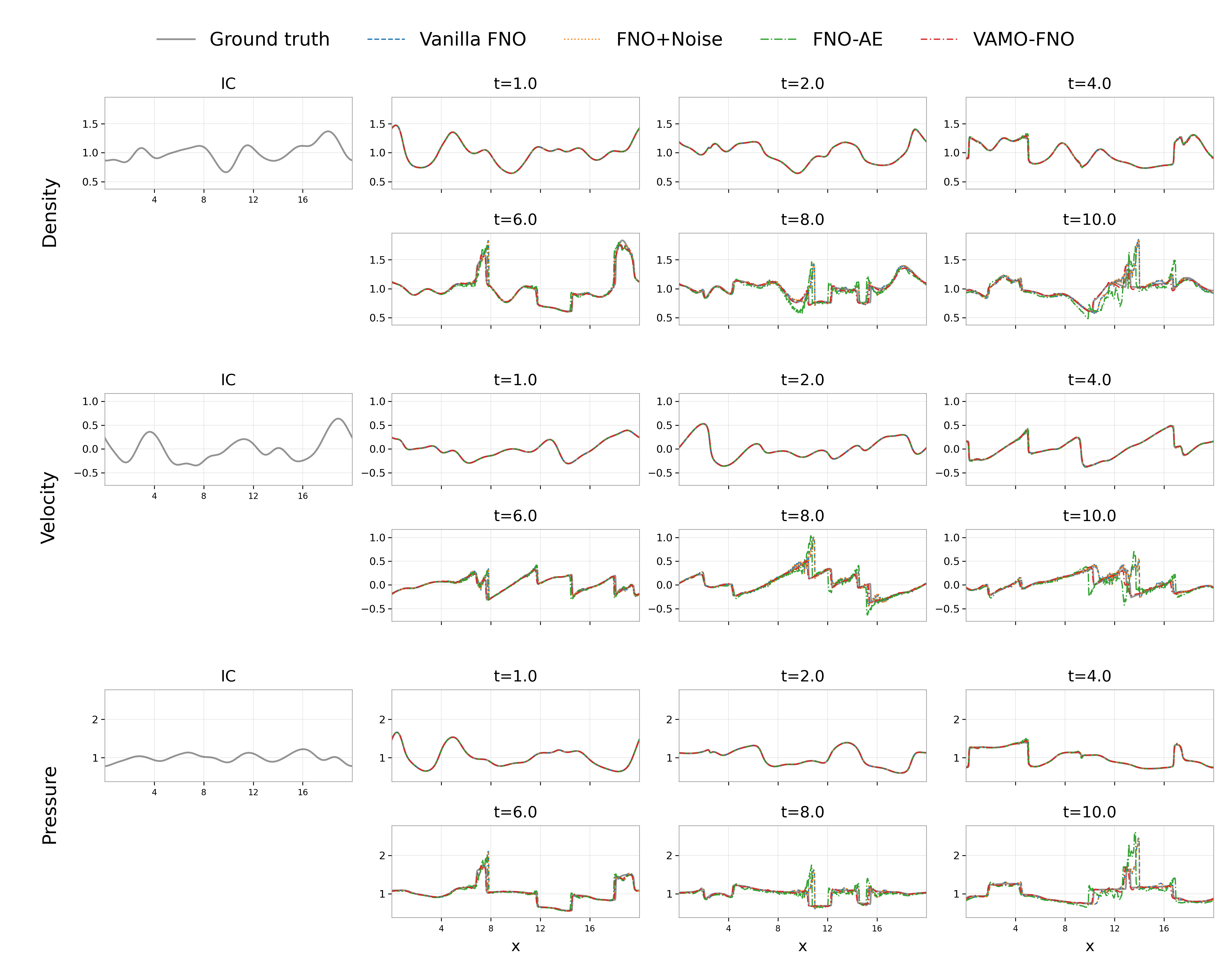}
    \caption{
    Representative autoregressive rollout for the 1D Euler equations with $L=20$. Rows show density $\rho$, velocity $u$, and pressure $p$, while columns correspond to selected rollout times. The solid gray curve denotes the ground truth, and the colored dashed curves denote predictions from the evaluated models. All methods remain accurate at early times, whereas the deterministic latent baseline develops increasingly pronounced overshoots and oscillatory errors around sharp wave structures during the later rollout. VAMO remains closely aligned with the reference trajectory over the full prediction horizon.
    }
    \label{fig:euler1d_L20_qualitative}
\end{figure}

\newpage 
\begin{figure}[ht]
    \centering
    \includegraphics[width=0.96\linewidth]
    {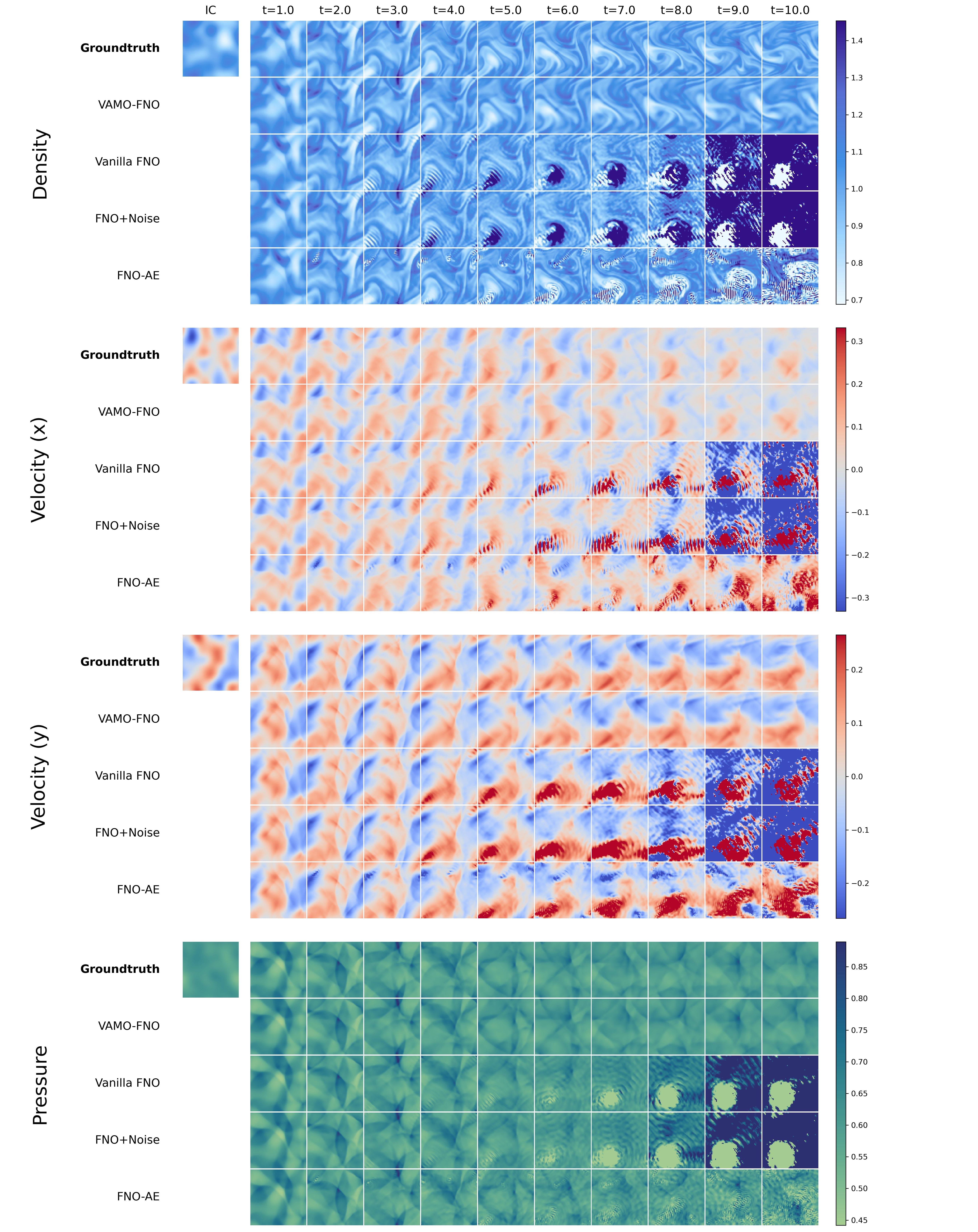}
    \caption{
    Representative autoregressive rollout for the 2D compressible fluid-dynamics benchmark. Rows compare the ground truth with predictions from FNO, FNO+Noise, FNO-AE, and FNO-VAMO for density, the two velocity components, and pressure.
    }
    \label{fig:cfd2d_qualitative}
\end{figure}

\begin{figure}[p]
    \centering

    \begin{subfigure}[t]{\linewidth}
        \centering
        \includegraphics[width=\linewidth]
        {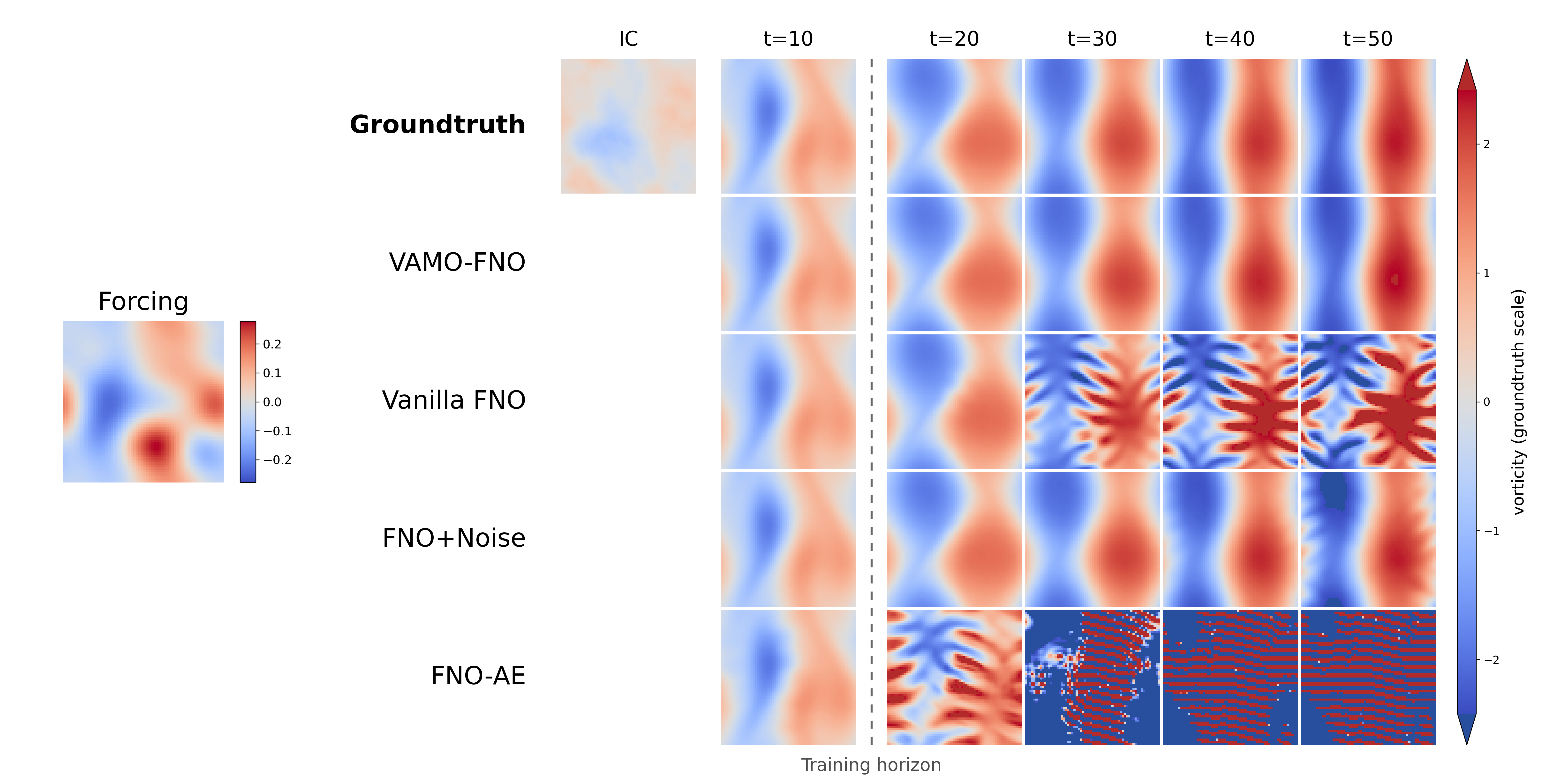}
        \caption{$\nu=10^{-3}$, \textsc{GRF} forcing, Sample 177.}
        \label{fig:ns2d_nu3_grf_trajectory}
    \end{subfigure}

    \vspace{0.8em}

    \begin{subfigure}[t]{\linewidth}
        \centering
        \includegraphics[width=\linewidth]
        {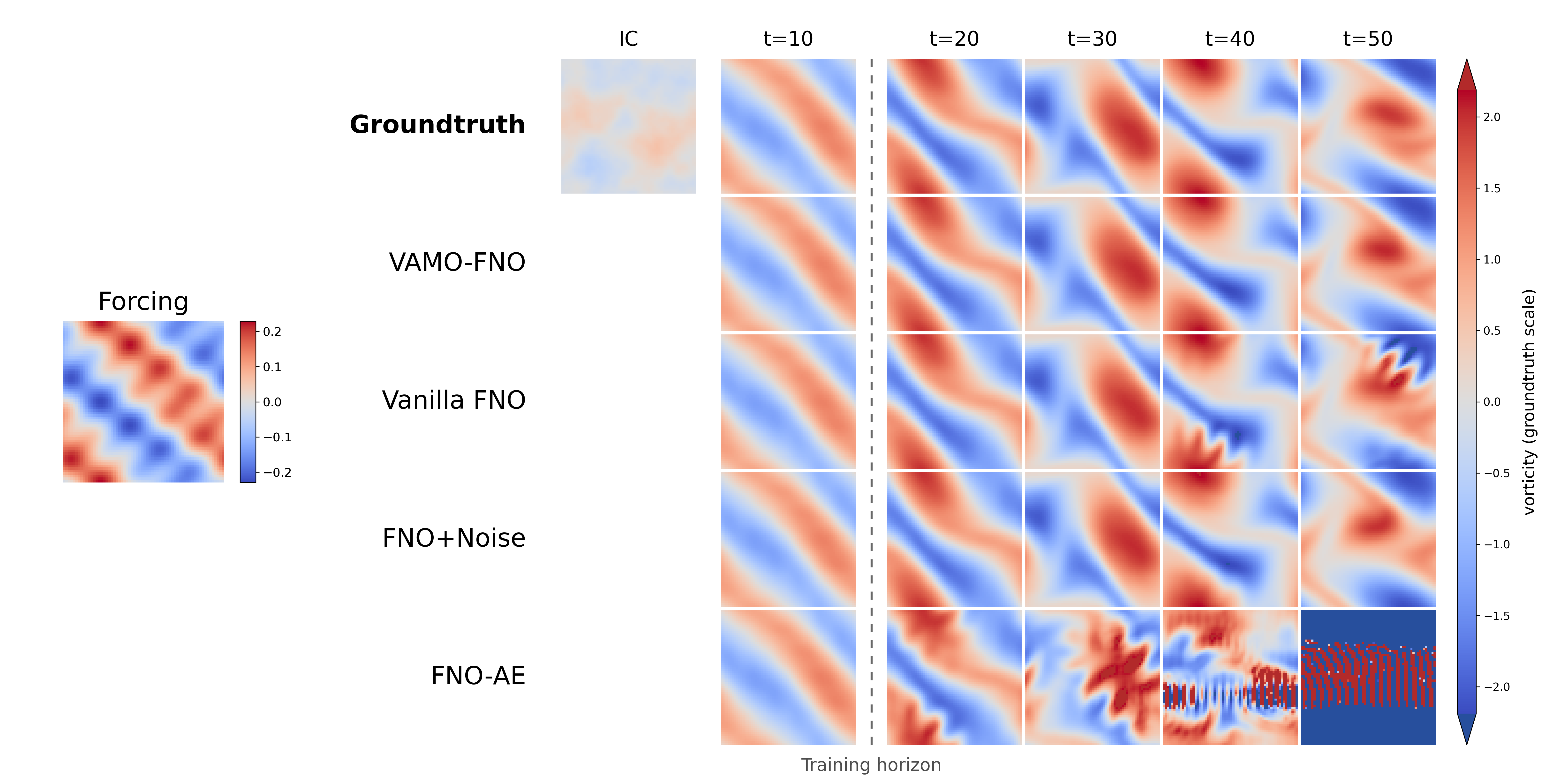}
        \caption{$\nu=10^{-3}$, \textsc{Wave} forcing, Sample 70.}
        \label{fig:ns2d_nu3_wave_trajectory}
    \end{subfigure}

    \caption{
    Representative autoregressive rollouts for the 2D incompressible Navier-Stokes equations with $\nu=10^{-3}$. Each panel shows the initial condition, the trajectory-specific forcing, the ground-truth vorticity, and predictions from the four evaluated methods at selected times. All predicted snapshots use the corresponding ground-truth color scale.
    }
    \label{fig:ns2d_nu3_qualitative}
\end{figure}

\begin{figure}[p]
    \centering

    \begin{subfigure}[t]{\linewidth}
        \centering
        \includegraphics[width=\linewidth]
        {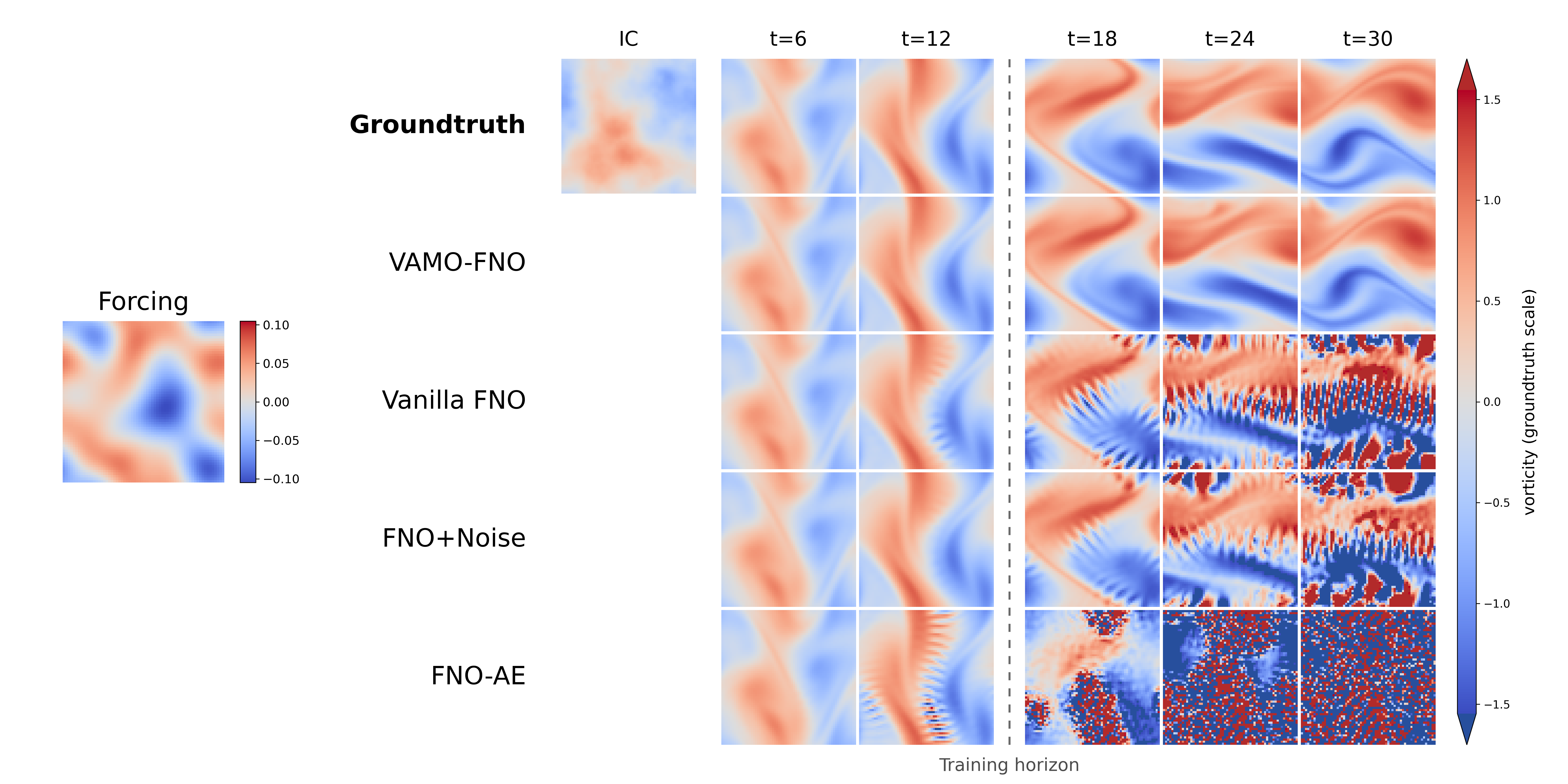}
        \caption{$\nu=10^{-4}$, \textsc{GRF} forcing, Sample 0.}
        \label{fig:ns2d_nu4_grf_trajectory}
    \end{subfigure}

    \vspace{0.8em}

    \begin{subfigure}[t]{\linewidth}
        \centering
        \includegraphics[width=\linewidth]
        {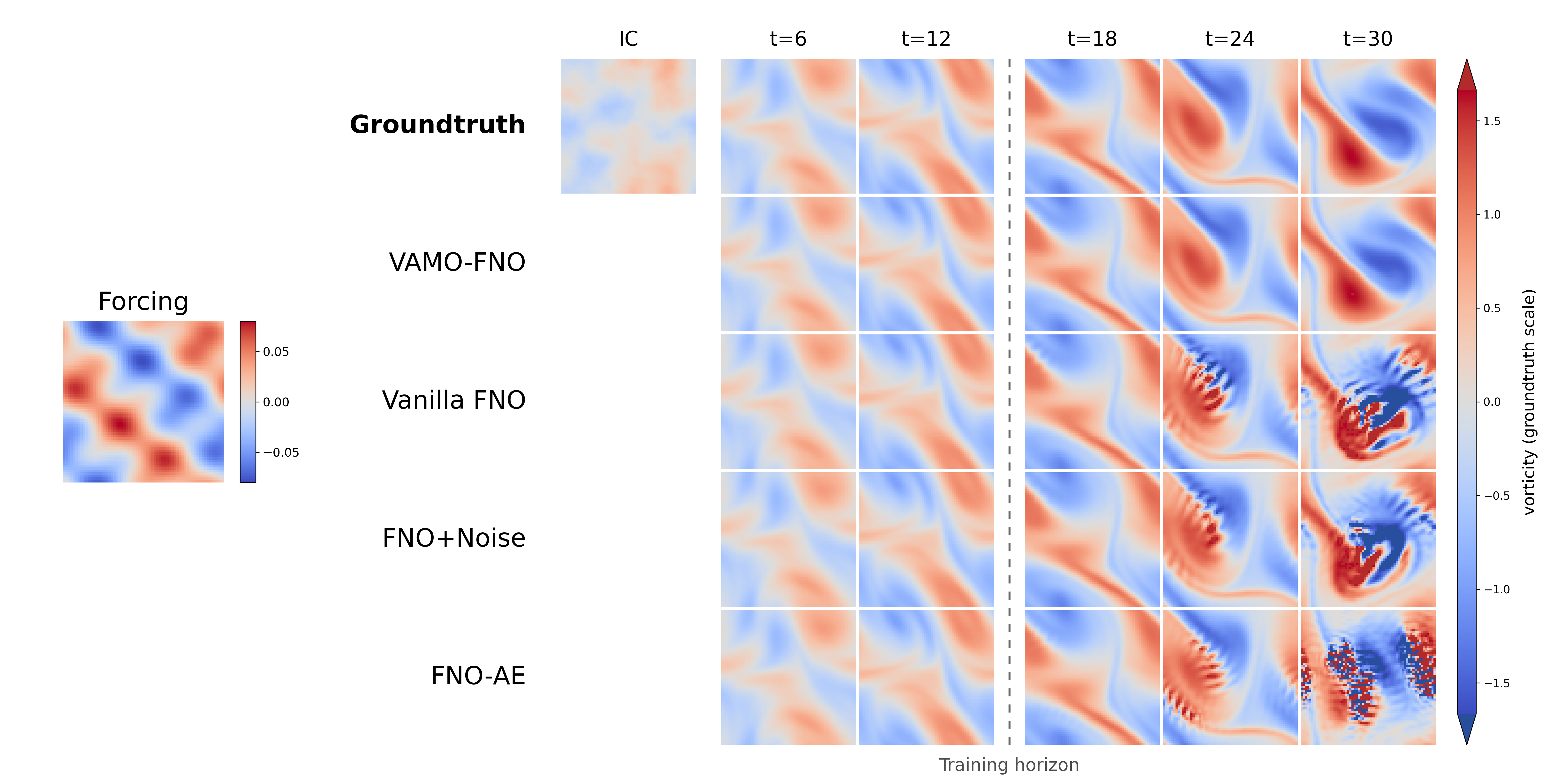}
        \caption{$\nu=10^{-4}$, \textsc{Wave} forcing, Sample 136.}
        \label{fig:ns2d_nu4_wave_trajectory}
    \end{subfigure}

    \caption{
    Representative autoregressive rollouts for the 2D incompressible Navier-Stokes equations with $\nu=10^{-4}$. Each panel shows the initial condition, the trajectory-specific forcing, the ground-truth vorticity, and predictions from the four evaluated methods at selected times. All predicted snapshots use the corresponding ground-truth color scale.
    }
    \label{fig:ns2d_nu4_qualitative}
\end{figure}

\begin{figure}[p]
    \centering

    \begin{subfigure}[t]{\linewidth}
        \centering
        \includegraphics[width=\linewidth]
        {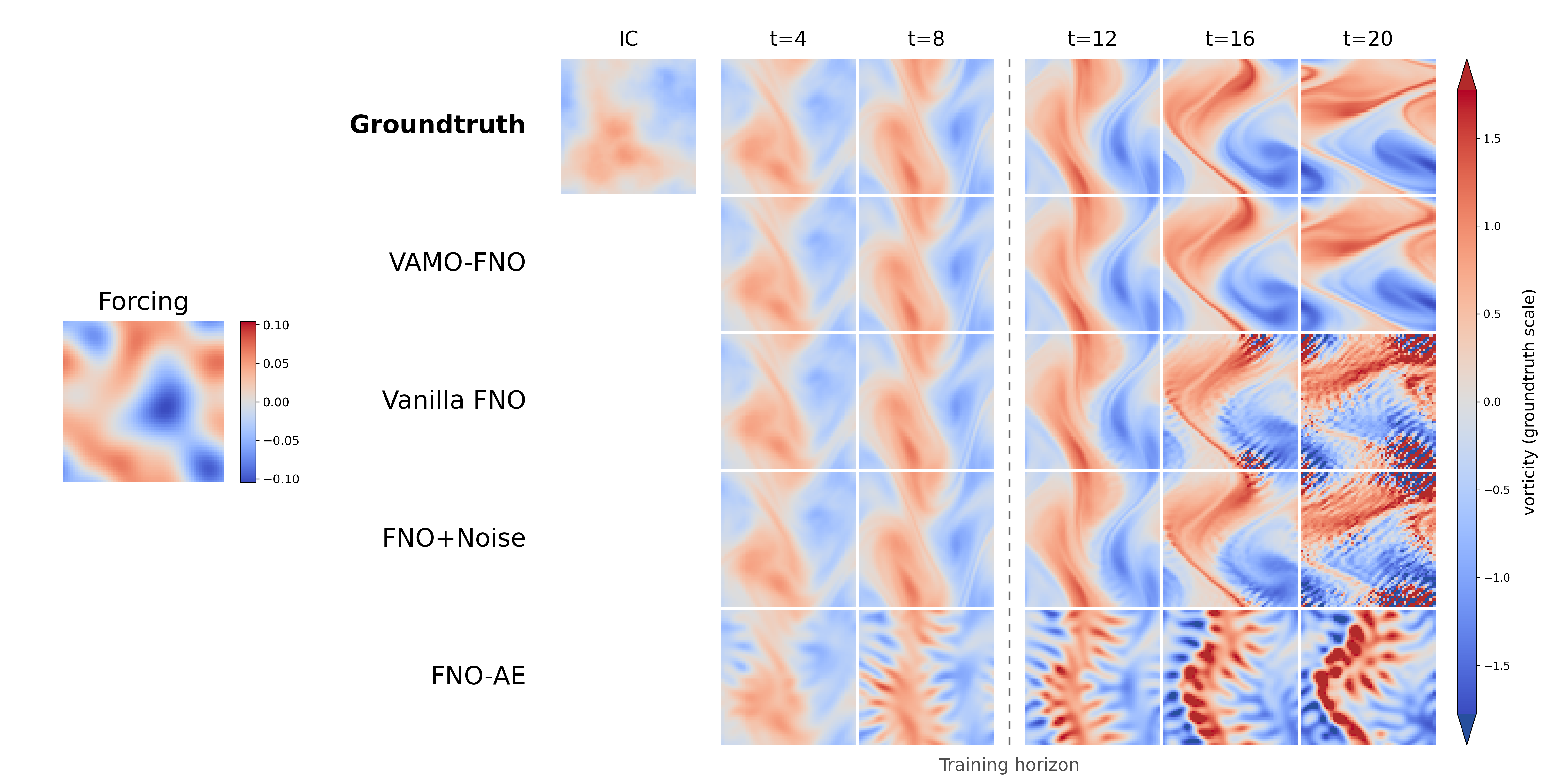}
        \caption{$\nu=10^{-5}$, \textsc{GRF} forcing, Sample 0.}
        \label{fig:ns2d_nu5_grf_trajectory}
    \end{subfigure}

    \vspace{0.8em}
    
    \begin{subfigure}[t]{\linewidth}
        \centering
        \includegraphics[width=\linewidth]
        {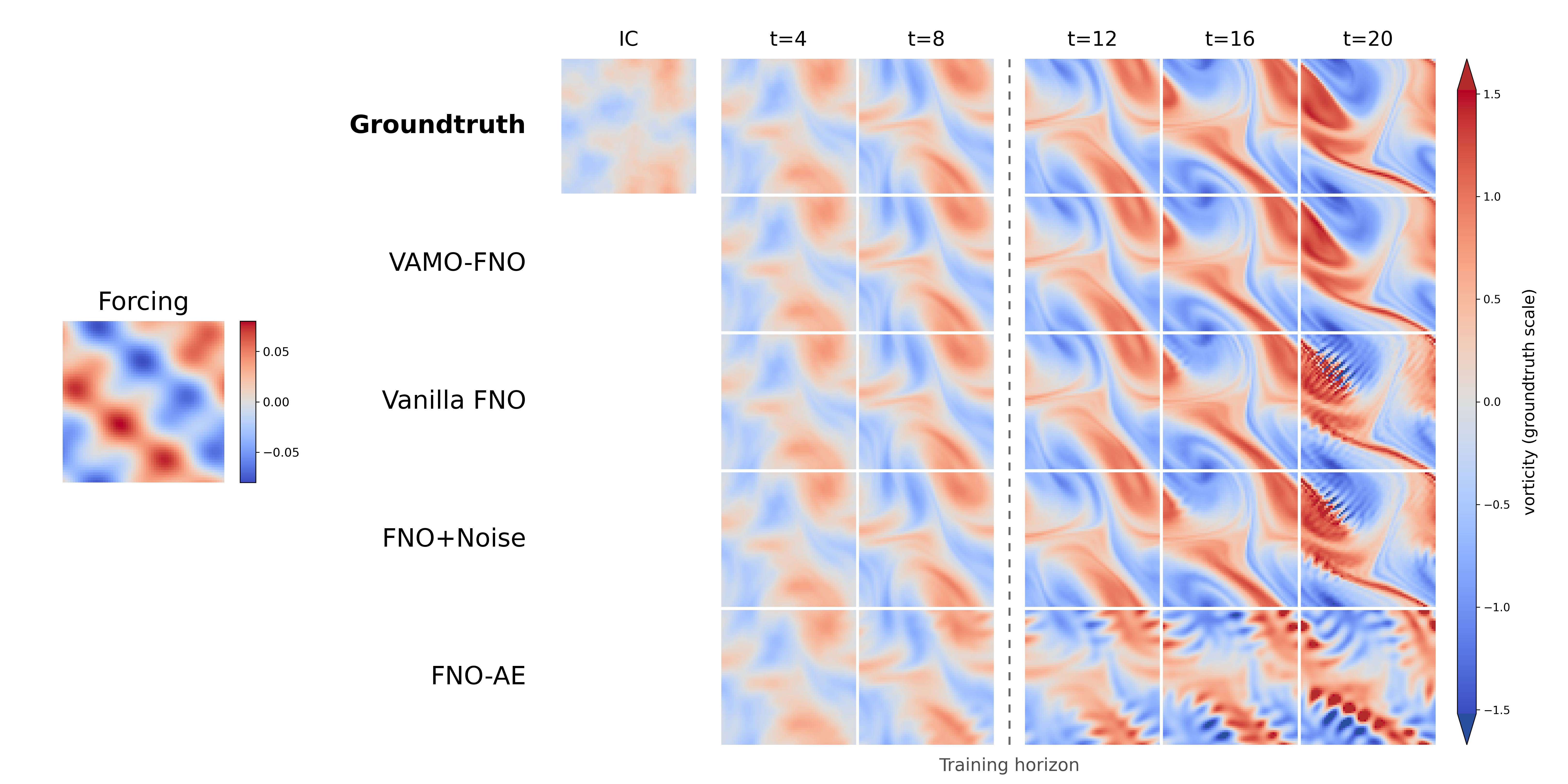}
        \caption{$\nu=10^{-5}$, \textsc{Wave} forcing, Sample 136.}
        \label{fig:ns2d_nu5_wave_trajectory}
    \end{subfigure}
    \caption{
    Representative autoregressive rollouts for the 2D incompressible Navier--Stokes equations with $\nu=10^{-5}$. Each panel shows the initial condition, the trajectory-specific forcing, the ground-truth vorticity, and predictions from the four evaluated methods at selected times. All predicted snapshots use the corresponding ground-truth color scale. 
    }
    \label{fig:ns2d_nu5_qualitative}
\end{figure}

\end{document}